\def\eqref#1{equation~\ref{#1}}
\def\1{\bm{1}}
\DeclareMathAlphabet{\mathsfit}{\encodingdefault}{\sfdefault}{m}{sl}
\SetMathAlphabet{\mathsfit}{bold}{\encodingdefault}{\sfdefault}{bx}{n}
\newcommand{\E}{\mathbb{E}}
\DeclareMathOperator*{\argmax}{arg\,max}
\DeclareMathOperator{\sign}{sign}
\newtheorem{theorem}{Theorem}[section]
\newtheorem{lemma}{Lemma}[section]
\newtheorem{assumption}{Assumption}%[section]
\newtheorem{parametrization}{Parametrization}[section]
\newlength{\commentindent}
\renewcommand{\algorithmiccomment}[1]{\unskip\hfill\makebox[\commentindent][l]{//~#1}\par}
\LetLtxMacro{\oldalgorithmic}{\algorithmic}
\renewcommand{\algorithmic}[1][0]{%
  \oldalgorithmic[#1]%
  \renewcommand{\ALC@com}[1]{%
    \ifnum\pdfstrcmp{##1}{default}=0\else\algorithmiccomment{##1}\fi}%
}
\DeclareMathOperator{\sig}{Sigmoid}
 \DeclareMathOperator{\polylog}{polylog}
  \DeclareMathOperator{\polyloglog}{polyloglog}
    \DeclareMathOperator{\polylogloglog}{polylogloglog}
 \DeclareMathOperator{\poly}{poly}
\title{Dissecting adaptive methods in GANs}
\author{Samy Jelassi$^{1}$, David Dobre$^{2}$, Arthur Mensch$^{3}$, Yuanzhi Li$^{4}$, Gauthier Gidel$^{2,5}$}
\date{%
    $^1$Princeton University\\
    $^2$ Mila \& Universit\'{e} de Montréal\\
    $^3$ Deepmind\\
    $^4$ Carnegie Mellon University\\
    $^5$ Canada CIFAR AI Chair
}
\begin{document}

\maketitle

\begin{abstract}
  Adaptive methods are a crucial component widely used for training generative adversarial networks (GANs). 
While there has been some work to pinpoint the “marginal value of adaptive methods” in standard tasks, it remains unclear why they are still critical for GAN training. 
In this paper, we formally study how adaptive methods help train GANs; inspired by the grafting method proposed in \cite{agarwal2020disentangling}, we separate the magnitude and direction components of the Adam updates, and graft them to the direction and magnitude of SGDA updates respectively. 
By considering an update rule with the magnitude of the Adam update and the normalized direction of SGD, we empirically show that the adaptive magnitude of Adam is key for GAN training. 
This motivates us to have a closer look at the class of normalized stochastic gradient descent ascent (nSGDA) methods in the context of GAN training. 
We propose a synthetic theoretical framework to compare the performance of nSGDA and SGDA for GAN training with neural networks. 
We prove that in that setting, GANs trained with nSGDA recover all the modes of the true distribution, whereas the same networks trained with SGDA (and any learning rate configuration) suffer from mode collapse. 
The critical insight in our analysis is that normalizing the gradients forces the discriminator and generator to be updated at the same pace. 
We also experimentally show that for several datasets, Adam's performance can be recovered with nSGDA methods.

\end{abstract}
\vspace{-4mm}
\section{Introduction}
\vspace{-1mm}
% It is commonly accepted that adaptive algorithms are required to train modern neural network architectures in various deep learning tasks. 
Adaptive algorithms have become a key component in training modern neural network architectures in various deep learning tasks. 
Minimization problems that arise in natural language processing \citep{vaswani2017attention}, fMRI \citep{zbontar2018fastmri}, or min-max problems such as generative adversarial networks (GANs) \citep{goodfellow2014generative} almost exclusively use adaptive methods, and it has been empirically observed that Adam \citep{kingma2014adam} yields a solution with better generalization than stochastic gradient descent (SGD) in such problems \citep{choi2019empirical}. 
Several works have attempted to explain this phenomenon in the minimization setting. 
Common explanations are that adaptive methods train faster \citep{zhou2018convergence}, escape flat ``saddle-point''--like plateaus faster \citep{orvieto2021vanishing}, or handle heavy-tailed stochastic gradients better \citep{zhang2019adaptive, gorbunov2022clipped}.
However, much less is known about why adaptive methods are so critical for solving min-max problems such as GANs. 

Several previous works attribute this performance to the superior convergence speed of adaptive methods.
For instance, \cite{liu2019towards} show that an adaptive variant of Optimistic Gradient Descent \citep{daskalakis2017training} converges faster than SGDA for a class of non-convex, non-concave min-max problems. 
However, contrary to the minimization setting, convergence to a stationary point is not guaranteed, nor is it even a requirement to ensure a satisfactory GAN performance. 
\cite{mescheder2018training} empirically shows that popular architectures such as Wasserstein GANs (WGANs) \citep{arjovsky2017wasserstein} do not always converge, and yet they produce realistic images. 
We support this observation with our own experiments in \autoref{sec:motivation} (see Fig.~\ref{fig:adam_cv}.) 
This finding motivates the central question in this paper: \emph{what factors of Adam contribute to better quality solutions than SGDA when training GANs?} 

In this paper, we investigate why GANs trained with adaptive methods outperform those trained using stochastic gradient descent ascent (SGDA).
Directly analyzing Adam is challenging due to the highly non-linear nature of its gradient oracle and its path-dependent update rule. 
%Inspired by the grafting 
%Inspired by works in the minimization setting that show evidence that Adam (with some specific hyperparameters) can be modeled by signSGD  \citep{bernstein2018signsgd,balles2020geometry} in certain contexts, we show evidence that the training dynamics of Adam can be closely related to normalized SGDA (nSGDA). 
Inspired by the grafting approach in \citep{agarwal2020disentangling}, we disentangle the adaptive magnitude and direction of Adam and show evidence that an algorithm made of the adaptive magnitude of Adam and the direction of SGDA (Ada-nSGDA) recovers the performance of Adam in GANs. 
The adaptive magnitude in Adam is thus key for the performance, and the standard nSGDA direction . 
%We show that an algorithm made of the the adaptive magnitude of Adam and direction of SGDA (Ada-nSGDA) recovers the performance of Adam in GANs. We further show that normalized SGDA (nSGDA) actually recovers is crucial for GAN performance and that n
Our contributions are as follows:

\begin{itemize}[leftmargin=*, itemsep=1pt, topsep=1pt, parsep=1pt]
    \item In \autoref{sec:motivation}, we present the Ada-nSGDA algorithm and the standard normalized SGDA (nSGDA). We further show that for some architectures and datasets, nSGDA can be used to model the dynamics of the performance of Adam in GAN training.
    \item In \autoref{sec:theory}, we prove that for a synthetic dataset consisting of two modes, a model trained with SGDA suffers from \textit{mode collapse} (producing only a single type of output), while a model trained with nSGDA does not. This provides an explanation for why GANs trained with nSGDA outperform those trained with SGDA.
    \item In \autoref{sec:num_exp}, we empirically confirm that Ada-nSDGA recovers the performance of Adam when using different GAN architectures on a wide range of datasets.
    % \item[--] In Section ****, we empirically validate the assumptions made in the theory section. We verify that a GAN trained with nSGDA and \textit{constant step-sizes} still produces high quality samples.
\end{itemize}
% \textbf{Key theoretical insights}: The training  dataset we construct is made of examples sampled from two modes. A successful GAN is one that generates samples from  either mode.  In a nutshell, a model trained with SGDA suffers from \textit{mode collapse} i.e.\ it produces a single type of output. This is explained by the fact that when using SGDA and any step-size configuration, either $G$ or $D$ updates much faster than its counterpart. By normalizing the gradients as done in nSGDA, we \emph{force} $D$ and $G$ to update at the \emph{same speed} throughout training. Thus, whenever $D$ learns a mode of the distribution, $G$ learns it right after, which makes both of them learn all the modes of the distribution separately.  

Our key theoretical insight is that when using SGDA and any step-size configuration, either the generator $G$ or discriminator $D$ updates much faster than its counterpart.
By normalizing the gradients as done in nSGDA, $D$ and $G$ are forced to update at the same speed throughout training.
The consequence is that whenever $D$ learns a mode of the distribution, $G$ learns it right after, which makes both of them learn all the modes of the distribution separately.

\vspace{-2mm}

\subsection{Related work}
%\vspace{-2mm}

\textbf{Adaptive methods in games optimization.} Several works designed adaptive algorithms and analyzed their convergence to show their benefits relative to SGDA e.g. in variational inequality problems, \cite{gasnikov2019adaptive,antonakopoulos2019adaptive,bach2019universal,antonakopoulos2020adaptive,liu2019towards,barazandeh2021solving}.  
\cite{heusel2017gans} show that Adam locally converges to a Nash equilibrium in the regime where the step-size of the discriminator is much larger than the one of the generator. 
Our work differs as we do not focus on the convergence properties of Adam, but rather on the fit of the trained model to the \textit{true} (and not empirical) data distribution.

\textbf{Statistical results in GANs.} Early works studied whether GANs memorize the training data or actually learn the distribution \citep{arora2017generalization,liang2017well,feizi2017understanding,zhang2017discrimination,arora2018gans,bai2018approximability,dumoulin2016adversarially}.  
Some works explained GAN performance through the lens of optimization.  
\cite{lei2020sgd,balaji2021understanding} show that GANs trained with SGDA converge to a global saddle point when the generator is one-layer neural network and the discriminator is a specific quadratic/linear function.  
Our contribution differs as i) we construct a setting where SGDA converges to a locally optimal min-max equilibrium but still suffers from mode collapse, and
% nSGDA does not necessarily converge and yet, it recovers the true distribution. 
ii) we have a more challenging setting since we need at least a degree-3 discriminator to learn the distribution, which is discussed in \autoref{sec:theory}. 

%our discriminator is a degree-3 function and the training objective thus involves tensor decomposition -- when the discriminators are no more than quadratic, the training objective reduces to solving an Cholesky decomposition of
%a matrix
%xhibit a setting where convergence is not needed to obtain a 

%\textcolor{red}{To be included? Since the discrim-
%inators are no more than quadratic, the training objective reduces to solving an Cholesky decomposition of
%a matrix. Besides, our discriminator is a degree-3 function and the training objective thus involves tensor decomposition.}

\textbf{Normalized gradient descent.} Introduced by \cite{nesterovngd}, normalized gradient descent has been widely used in minimization problems. %the minimization setting. 
Normalizing the gradient remedies the issue of iterates being stuck in flat regions such as spurious local minima or saddle points \citep{hazan2015beyond,levy2016power}.  
Normalized gradient descent methods outperforms their non-normalized counterparts in multi-agent coordination \citep{cortes2006finite} and deep learning tasks \citep{cutkosky2020momentum}. 
Our work considers the min-max setting and shows that nSGDA outperforms SGDA as it forces discriminator and generator to update at  same rate.
\vspace{-2mm}
%consists in doing a gradient descent step where the gradient is divided by its norm.
%textcolor{red}{Note that this normalization is different from weight
%scalings -- such as Xavier \citep{glorot2010understanding} and Kaiming \citep{he2015delving} weight inits -- or control over activition statistics such as batch normalization \citep{ioffe2015batch}.}

% \textbf{Notations.}\; Throughout the paper, we use  $\mathcal{Y}$ to refer to  the generator's $\mathcal{V}$ parameters and the discriminator's $\mathcal{W}$ parameters.

\begin{figure*}[tbp]
    \vspace{-.2cm}
    \hspace{0.5cm}
    \begin{subfigure}{0.45\textwidth}
     \includegraphics[width=1.1\linewidth]{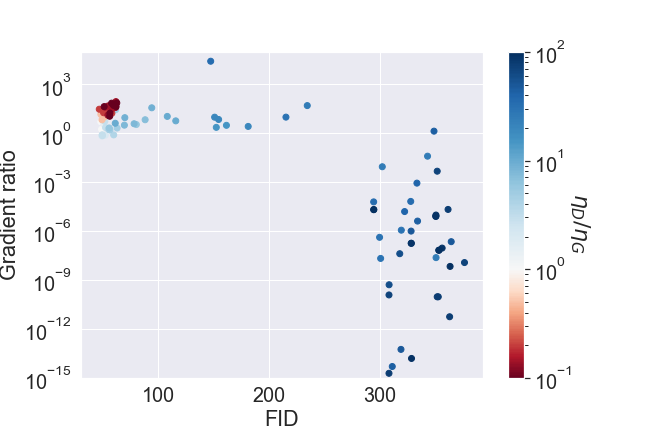}
     \caption{\footnotesize 
        Each circle corresponds to a specific step-size configuration $\eta_D/\eta_G$. The best-performing models  have step-size ratios between $10^{-1}$ and $1$, and do not converge. As  $\eta_D/\eta_G$ increases, the models perform worse but get closer to an equilibrium.}\label{fig:ratiofid}
    \end{subfigure}
    \hfill
    \begin{subfigure}{0.4\textwidth}
    \includegraphics[width=1.1\linewidth]{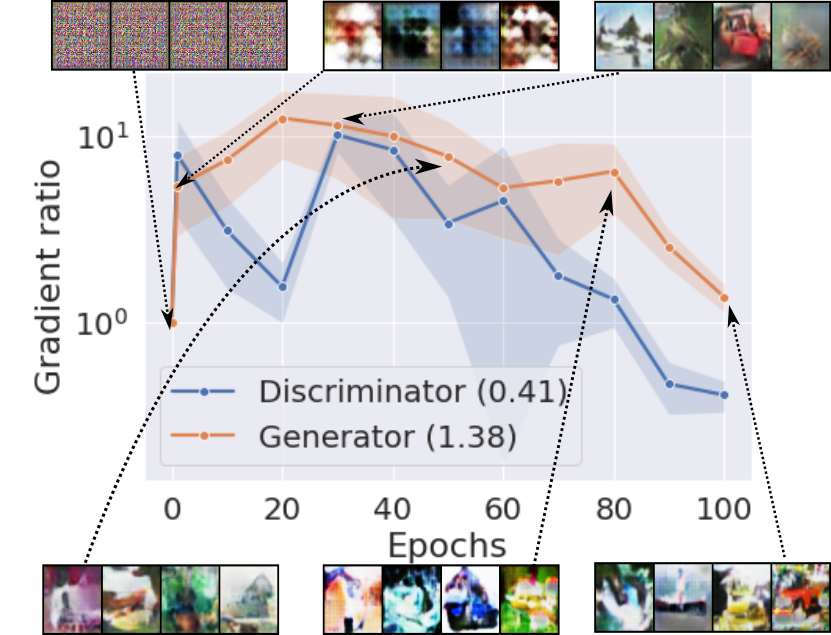} 
    \caption{ \footnotesize  
           shows that during training, the gradient ratio of a well-performing GAN approximately stays constant to 1. We also display the images produced by the model during training.} \label{fig:adam_cv}
    \end{subfigure}
    \hspace{1cm}
    \vspace{-.05cm}
    \caption{\small Gradient ratio against FID score (a) and number of epochs (b) obtained with DCGAN on CIFAR-10. This ratio is equal to $\|\mathrm{grad}_G^{(t)}\|_2/\|\mathrm{grad}_G^{(0)}\|_2+\|\mathrm{grad}_D^{(t)}\|_2/\|\mathrm{grad}_D^{(0)}\|_2$, where $ \mathrm{grad}_G^{(t)}$ (resp. $ \mathrm{grad}_D^{(t)}$)  and $\mathrm{grad}_G^{(0)}$ (resp. $ \mathrm{grad}_D^{(0)}$) are the current and initial gradients of $G$ (resp. $D$). Note that $\|\cdot\|_2$ refers to the sum of all the parameters norm in a network.
     For all the plots, the models are trained for 100 epochs using a batch-size 64. For (b), the results are averaged over 5 seeds. } \label{fig:image_sgd}
     \vspace{-2mm}
\end{figure*}

\subsection{Background}

\textbf{Generative adversarial networks.} Given a training set sampled from some target distribution $\mathcal{D}$, a GAN learns to generate new data from this distribution. 
The architecture is comprised of two networks: a generator that maps points in the latent space $\mathcal{D}_z$ to samples of the desired distribution, and a discriminator which evaluates these samples by comparing them to samples from $\mathcal{D}$. 
% More formally, the generator is a mapping $G_{\mathcal{V}}\colon\mathbb{R}^k\rightarrow\mathbb{R}^d$ where $\mathcal{V}$ is some parameter set. 
% Generally, the latent variables are sampled from the normal distribution. 
% On the other hand, the discriminator is a mapping $D_{\mathcal{W}}\colon\mathbb{R}^d\rightarrow \mathbb{R}$ where $\mathcal{W}$ is some parameter set. 
% In this section, one can think of these parameter sets as made of matrices and vectors. 
More formally, the generator is a mapping $G_{\mathcal{V}}\colon\mathbb{R}^k\rightarrow\mathbb{R}^d$ and the discriminator is a mapping $D_{\mathcal{W}}\colon\mathbb{R}^d\rightarrow \mathbb{R}$, where $\mathcal{V}$ and $\mathcal{W}$ are their corresponding parameter sets.
% Generally, the latent variables are sampled from the normal distribution.
To find the optimal parameters of these two networks, one must solve a min-max optimization problem of the form
% \begin{equation}\label{eq:gan_problem} 
% \begin{aligned}
%      \hspace{-.4cm}\min_{\mathcal{V}}\max_{\mathcal{W}}\quad f(\mathcal{V},\mathcal{W}),
% \end{aligned}
% \end{equation}
% where $f\colon \mathbb{R}^{k}\times \mathbb{R}^d\rightarrow \mathbb{R}$ is some loss function to be specified in \autoref{sec:theory}.
% The GAN problem is a min-max optimization problem defined as
\begin{equation}
    \begin{aligned}\label{eq:gan_formulation_theory}
    \min_{\mathcal{V}} \max_{\mathcal{W}} &\;\mathbb{E}_{X\sim p_{data}}[\log (D_{\mathcal{W}}(X))] +\mathbb{E}_{z\sim p_z}[\log(1-D_{\mathcal{W}}(G_{\mathcal{V}}(z)))] :=f(\mathcal{V},\mathcal{W}),
\end{aligned}
\tag{GAN}
\end{equation}
where $p_{data}$ is the distribution of the training set, $p_z$ the latent distribution, $G_{\mathcal{V}}$ the generator and $D_{\mathcal{W}}$ the discriminator.
Contrary to minimization problems where convergence to a local minimum is \emph{required} for high generalization, we empirically verify  that most of the well-performing GANs do not converge to a stationary point.

\paragraph{Convergence and performance are decorrelated in GANs.} We support this observation through the following experiment.
We train a DCGAN \citep{radford2015unsupervised} using Adam and set up the step-sizes for $G$ and $D$ as $\eta_D,\eta_G$, respectively. 
Note that $D$ is usually trained faster than $G$ i.e. $\eta_D\geq \eta_G.$  \autoref{fig:ratiofid} displays the GAN convergence measured by the ratio of gradient norms and the GAN's performance measured in FID score \citep{heusel2017gans}. 
We observe that when $\eta_D/\eta_G$ is close to $1$, the algorithm does not converge, and yet, the model produces high-quality solutions. 
On the other hand, when $\eta_D/\eta_G\gg 1$, the model converges to an equilibrium; a similar statement has been proved by \cite{jin2020local} and \cite{fiez2020gradient} in the case of SGDA. 
However, the trained GAN produces low-quality solutions at this equilibrium, so simply comparing the convergence speed of adaptive methods and SGDA cannot explain the performance obtained with adaptive methods.

%\begin{figure*}[tbp]
%\vspace{-.3cm}
%\begin{subfigure}{0.33\textwidth}
% \includegraphics[width=1.\linewidth]{} \caption{}\label{fig:graft_fid}
%\end{subfigure}
%\begin{subfigure}{0.33\textwidth}
%\hspace{.1cm}
% \includegraphics[width=1.\linewidth]{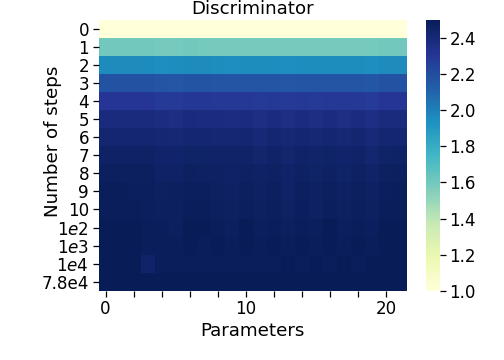}
%\caption{}\label{fig:graft_disc}
%\end{subfigure}
%\begin{subfigure}{0.33\textwidth}
%\hspace{.1cm}

%\includegraphics[width=1.\linewidth]{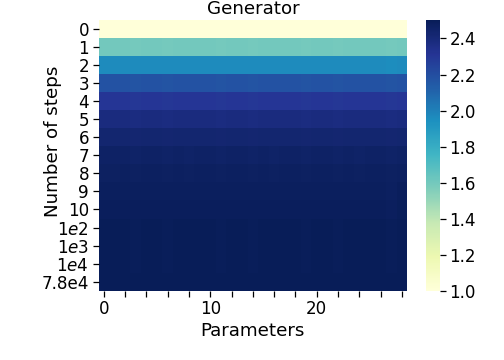}
%\caption{}\label{fig:graft_gen}
%\end{subfigure}
%\vspace{-2mm}
%\caption{\small (a) FID score against the number of epochs of a Resnet WGAN-GP trained on CIFAR-10 with Adam, AdaLR, and AdaDir. AdaLR performs slightly better than Adam while AdaDir performs very poorly. (b)-(c) displays the fluctuations of AdaLR's adaptive magnitude. We plot the ratio $\|\mathbf{A}_{\mathcal{Y}}^{(t)}\|_2/\|\mathbf{A}_{\mathcal{Y}}^{(0)}\|_2$ for each discriminator's (b) and generator's (c) parameters. At early stages, this ratio barely increases and remains constant after 10 steps. We train for 100 epochs using a batch-size 64 and results are averaged over 5 seeds.}\label{fig:graft_lr}

%\vspace{-.5cm}
%\end{figure*}

\textbf{SGDA and adaptive methods.} The most simple algorithm to solve the  min-max (\ref{eq:gan_formulation_theory})  is SGDA, which is defined as follows: 
\begin{align}
    \mathcal{W}^{(t + 1)} &= \mathcal{W}^{(t)} +  \eta_D  \mathbf{M}_{\mathcal{W},1}^{(t)} ,\quad \mathcal{V}^{(t + 1)} = \mathcal{V}^{(t)} - \eta_G  \mathbf{M}_{\mathcal{V},1}^{(t)}\,, 
\end{align}
where $\mathbf{M}_{\mathcal{W},1}^{(t)},\mathbf{M}_{\mathcal{V},1}^{(t)}$ are the first-order momentum gradients as defined in \autoref{alg:adaptive}. While this method has been used in the first GANs \citep{radford2015unsupervised}, most modern GANs are trained with adaptive methods such as Adam \citep{kingma2014adam}.

%\todo{this transition is a bit choppy}
%\textcolor{orange}{In this paper, we focus on understanding the impact of Adam \citep{kingma2014adam} to train GANs (\ref{eq:gan_problem}).}\todo{can maybe cut this line?}
The definition of this algorithm  for game optimizations is given in \autoref{alg:adaptive}.
The hyperparameters $\beta_1, \beta_2\in [0,1)$ control the weighting of the exponential moving average of the first and second-order moments.
In many deep-learning tasks, practitioners have found that setting $\beta_2=0.9$ works for most problem settings.
It has been empirically observed that having no momentum (i.e., $\beta_1 \approx 0$) is optimal for many popular architectures \citep{karras2020analyzing,brock2018large}. Thus, in what follows, we only consider the case where $\beta_1=0$.
%As we aim to understand the role of adaptivity, an important baseline is that stochastic gradient descent ascent (SGDA) should be recovered in the update of \autoref{alg:adaptive} when the denominator $\sqrt{M_{\mathcal{Y},2}+\epsilon}$ is omitted.}\todo{do we test this? or omit?}

Optimizers such as Adam (\autoref{alg:adaptive}) are \textit{adaptive} because they keep updating step-sizes while training the model. 
%There are two components that contribute to the update step: the explicit step size schedules $\{(\eta_G^{(t)},\eta_D^{(t)})\}$ that the practitioner manually sets, and the implicit coordinate-wise normalization by the adaptive second moment estimation.
%Note that the latter induces an update direction that is drastically different from the original gradient direction as well as a different adaptive magnitude $\|\mathbf{A}_{\mathcal{Y}}^{(t)}\|_2$.
There are two components that contribute to the update step: the adaptive magnitude $\|\mathbf{A}_{\mathcal{Y}}^{(t)}\|_2$%\todo{not going to mention coordinate-wise parameter scaling?} 
and the adaptive direction $\mathbf{A}_{\mathcal{Y}}^{(t)}/\|\mathbf{A}_{\mathcal{Y}}^{(t)}\|_2$.
The two components are entangled and it remains unclear how they contribute to the superior performance of adaptive methods relative to SGDA in GANs.
% , generator's parameters  and   discriminator's ones.  %Throughout the paper, we use $\mathcal{Y}$ when we indistinguishably refer to the parameters of the generator $\mathcal{V}$ and the discriminator $\mathcal{W}.$

% \vspace{-1.7cm}
\begin{algorithm}[tbp]
    \begin{algorithmic}
        \small
        \STATE \textbf{Input}: initial points $\mathcal{W}^{(0)},\mathcal{V}^{(0)}$, step-size schedules $\{(\eta_G^{(t)},\eta_D^{(t)})\}$ , hyperparameters $\{\beta_1,\beta_2,\varepsilon\}$. \\
        \vspace{.1cm}
        Initialize $\mathbf{M}_{\mathcal{W},1}^{(0)}$, $\mathbf{M}_{\mathcal{W},2}^{(0)}$, $\mathbf{M}_{\mathcal{V},1}^{(0)}$ and $\mathbf{M}_{\mathcal{V},2}^{(0)}$ to zero.
        \vspace{.1cm}
        \FOR{$t=0\dots T-1$}
        \vspace{.1cm}
        \STATE Receive stochastic gradients $\mathbf{g}_{\mathcal{W}}^{(t)},\mathbf{g}_{\mathcal{V}}^{(t)}$ evaluated at ${\mathcal{W}}^{(t)}$ and ${\mathcal{V}}^{(t)}$.
        \vspace{.1cm}
        \STATE Update  for $\mathcal{Y}\in\{\mathcal{W},\mathcal{V}\}$: %, $\ell \in [2]$: 
        $ \mathbf{M}_{\mathcal{Y},1}^{(t+1)}=\beta_1\mathbf{M}_{\mathcal{Y},1}^{(t)}+\mathbf{g}_{\mathcal{Y}}^{(t)} $ and $ \mathbf{M}_{\mathcal{Y},2}^{(t+1)}=\beta_2\mathbf{M}_{\mathcal{Y},2}^{(t)}+{\mathbf{g}_{\mathcal{Y}}^{(t)}}^2. $
        \vspace{.2cm}
        \STATE Compute gradient oracles for $Y\in \{V,W\}$:
        $\mathbf{A}_{\mathcal{Y}}^{(t+1)}=\nicefrac{\mathbf{M}_{\mathcal{Y},1}^{(t+1)}}{\sqrt{\mathbf{M}_{\mathcal{Y},2}^{(t+1)}+\varepsilon}}.$
        \vspace{.2cm}
        \STATE Update:  
            $\mathcal{W}^{(t+1)}=\mathcal{W}^{(t)}+ \eta_D^{(t)}\mathbf{A}_{\mathcal{W}}^{(t+1)} , \qquad
            \mathcal{V}^{(t+1)}=\mathcal{V}^{(t)}- \eta_G^{(t)}\mathbf{A}_{\mathcal{V}}^{(t+1)}.$
        \ENDFOR
        % \textbf{return} $\mathcal{W}^{(T)},\mathcal{V}^{(T)}$. 
    \end{algorithmic}
\caption{Adam \citep{kingma2014adam} for games. 
All operations on vectors are element-wise. 
}\label{alg:adaptive}
\end{algorithm}

\section{nSGDA as a model to analyze Adam in GANs}\label{sec:motivation}
In this section, we show that normalized stochastic gradient descent-ascent (nSGDA) is a suitable proxy to study the learning dynamics of Adam.

To decouple the adaptive magnitude and direction in Adam, we adopt the step-size grafting approach proposed by %\todo{decouple the contribution of the adaptive magnitude and direction, we...?}
\cite{agarwal2020disentangling}.
At each iteration, we compute stochastic gradients, pass them to two optimizers $\mathcal{A}_1,\mathcal{A}_2$ and make a grafted step that combines the \textit{magnitude} of $\mathcal{A}_1$'s step and \textit{direction} of $\mathcal{A}_2$'s step. 
%Since signSGD (which updates in the direction of SGD) is a popular model for Adam in the minimization case \citep{bernstein2018signsgd,balles2020geometry},
We focus on the optimizer defined by grafting the Adam magnitude onto the SGDA direction, i.e:
\begin{equation}\label{eq:nsgda_def}
    \begin{aligned}
     \hspace{-.5cm}\mathcal{W}^{(t+1)}&=\mathcal{W}^{(t)}+\eta_D^{(t)}\|\mathbf{A}_{\mathcal{W}}^{(t)}\|_2\frac{ \mathbf{g}_{\mathcal{W}}^{(t)}}{\|  \mathbf{g}_{\mathcal{W}}^{(t)}\|_2+\varepsilon},  
        \quad
        \mathcal{V}^{(t+1)}= \mathcal{V}^{(t)}-\eta_G^{(t)}\|\mathbf{A}_{\mathcal{V}}^{(t)}\|_2\frac{ \mathbf{g}_{\mathcal{V}}^{(t)}}{\|  \mathbf{g}_{\mathcal{V}}^{(t)}\|_2+\varepsilon},
\end{aligned}
\end{equation}
where $\mathbf{A}_{\mathcal{V}}^{(t)},\mathbf{A}_{\mathcal{W}}^{(t)}$ are the Adam gradient oracles as in \autoref{alg:adaptive} and $\bm{g}_{\mathcal{V}}^{(t)},\bm{g}_{\mathcal{W}}^{(t)}$ the stochastic gradients. 
We refer to this algorithm as \textit{Ada-nSGDA} (combining the Adam magnitude and SGDA direction). 
There are two natural implementations for nSDGA. 
In the \emph{layer-wise} version, $\mathcal{Y}^{(t)}$ is a single parameter group (typically a layer in a neural network), and the updates are applied to each group. 
In the \emph{global} version, $\mathcal{Y}^{(t)}$ contains all of the model's weights. 
%\textcolor{red}{Note that in our numerical experiments, we set up nSGDA with the layer-wise version --the global nSGDA perform approximately as well as their layer-wise counterparts.}\todo{I think we can just cut this}

In Fig.~\ref{fig:curves}, we show that Ada-nSGDA and Adam appear to have similar learning dynamics in terms of the FID score.
Both Adam and Ada-nSGDA significantly outperform SGDA as well as AdaDir, which is the alternate case of (\ref{eq:nsgda_def}) where we instead graft the magnitude of the SGDA update to the direction of the Adam update. 
AdaDir diverged after a single step so we omit it in Fig.~\ref{fig:curves}.
This confirms that the critical component of Adam is the \emph{adaptive magnitude}, and that the standard update direction recovered by SGDA is sufficient to recover a good solution.
However, from a theoretical perspective, directly analyzing Ada-nSGDA is difficult due to the adaptive magnitudes  $\|\mathbf{A}_{\mathcal{V}}^{(t)}\|_2,\|\mathbf{A}_{\mathcal{W}}^{(t)}\|_2$. 
Therefore, in \autoref{sec:theory}, we analyze normalized SGDA (nSGDA) which is Ada-nSGDA (\ref{eq:nsgda_def}) where we omit the adaptive magnitudes.  
%\begin{equation}\label{eq:nsgda_def_true}
%    \begin{aligned}
%     \hspace{-.5cm}\mathcal{W}^{(t+1)}&=\mathcal{W}^{(t)}+\eta_D^{(t)}\frac{ \mathbf{g}_{\mathcal{W}}^{(t)}}{\|  \mathbf{g}_{\mathcal{W}}^{(t)}\|_2+\varepsilon},  
%        \quad
        %\mathcal{V}^{(t+1)}= \mathcal{V}^{(t)}-\eta_G^{(t)}\frac{ \mathbf{g}_{\mathcal{V}}^{(t)}}{\|  \mathbf{g}_{\mathcal{V}}^{(t)}\|_2+\varepsilon},
    %\end{aligned}
%\end{equation}
%\todo{can we drop this and just use \ref{eq:nsgda_def}?}
Although nSGDA does not consider the adaptive magnitude, it still recovers the performance of Adam for some architectures such as WGAN-GP \citep{arjovsky2017wasserstein} as we show in Fig.~\ref{fig:curves}. 
This may come from the fact that the adaptive magnitudes stay within a constant range and do not  fluctuate across time, as shown in Figs. \ref{fig:gen_graft_lr} and \ref{fig:disc_graft_lr}.

\begin{figure*}[tbp]
    \begin{subfigure}{0.3\textwidth}
        \includegraphics[width=\linewidth]{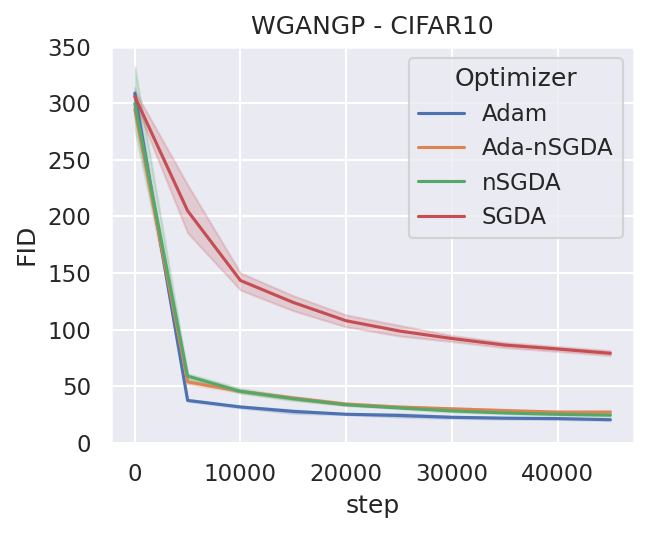}
        \vspace{-5mm}
        \caption{}\label{fig:cifar10_wgan_curve}
    \end{subfigure}
    \begin{subfigure}{0.325\textwidth}
        \includegraphics[width=1.\linewidth]{images/generator_lr_graft.png}
        \vspace{-5mm}
        \caption{}\label{fig:gen_graft_lr}
    \end{subfigure}
    \begin{subfigure}{0.325\textwidth}
        \includegraphics[width=1.\linewidth]{images/discriminator_lr_graft.png}
        \vspace{-5mm}
        \caption{}\label{fig:disc_graft_lr}
    \end{subfigure} 

    \vspace{-0.3cm}
    \caption{
        \small 
        (a) shows the FID training curve for a WGAN-GP ResNet, averaged over 5 seeds.
        We see that Ada-nSGDA and nSGDA have very similar performance to Adam for a WGAN-GP.
        (b, c) displays the fluctuations of Ada-nSGDA adaptive magnitude. 
        We plot the ratio $\|\mathbf{A}_{\mathcal{Y}}^{(t)}\|_2/\|\mathbf{A}_{\mathcal{Y}}^{(0)}\|_2$ for each discriminator's (b) and generator's (c) layers. 
        At early stages, this ratio barely increases and remains constant after 10 steps.
        }\label{fig:curves}
\vspace{-3mm}
\end{figure*}

\section{Why does nSGDA perform better than SGDA in GANs?}\label{sec:theory}

% We present our theoretical findings illustrating the benefits of nSGDA against SGDA in \S~\ref{sec:theory}, and provide more experiments showing that nSGDA competes with Adam on a wide range of architectures and datasets in \autoref{sec:num_exp} which validates our hypothesis that nSGDA is a valid model to theoretically understand the impact of Adam in GAN training.

%\vspace{-3mm}
In \autoref{sec:motivation}, we numerically showed that nSGDA recovers the performance Adam.
Similar to other optimization works in minimization, we use nSGDA as a model to understand Adam in GANs.
%To the best of our knowledge, there are no \emph{theoretical} results that demonstrates the importance of adaptive magnitude in GANs performance. 
%We  to answer for what dataset and GAN model does nSGDA perform better when compared to SGDA.
Our goal is to construct a dataset and model where we can prove that a model trained with nSGDA generates samples from the true training distribution while SGDA fails. %  to do it.
To this end, we consider a dataset where the underlying distribution consists of two modes, defined as vectors $u_1, u_2$, that are slightly correlated (See \autoref{ass:data}) and consider the standard GANs' training objective. 
We show that a GAN trained with SGDA using any reasonable\footnote{Reasonable simply means that the learning rates are bounded to prevent the training from diverging.} step-size configuration suffers from \emph{mode collapse} (\autoref{thm:sgda}); it only outputs samples from a single mode which is a weighted average of $u_1$ and $u_2$.  
Conversely, nSGDA-trained GANs learn the two modes separately (\autoref{thm:nsgda}).

\paragraph{Notation} We set the GAN 1-sample loss $L_{\mathcal{V},\mathcal{W}}^{(t)}(X,z)=\log(D_{\mathcal{W}}^{(t)}(X)) + \log(1 - D_{\mathcal{W}}^{(t)}(G_{\mathcal{V}}^{(t)}(z))).$ 
We denote $\mathbf{g}_{\mathcal{Y}}^{(t)}=\nabla_{\mathcal{Y}} L_{\mathcal{V},\mathcal{W}}^{(t)}(X, z)$ as the 1-sample stochastic gradient. We use the asymptotic complexity notations when defining the different constants e.g.\ $\mathrm{poly}(d)$ refers to any polynomial in the dimension $d$, $\mathrm{polylog}(d)$ to any polynomial in $\log(d)$, and $o(1)$ to a constant $\ll d$. 
We denote $a \propto b$ for vectors $a$ and $b$ in $\mathbb{R}^d$ if there is a positive scaling factor $c > 0$ such that $\|a - cb\|_2 = o(\| b \|_2)$.

\subsection{Setting}\label{sec:setting}

In this section, we present the setting to sketch our main results in \autoref{thm:sgda} and \autoref{thm:nsgda}. 
We first define the distributions for the training set and latent samples, and specify our GAN model and the algorithms we analyze to solve (\ref{eq:gan_formulation_theory}). 
Note that for many assumptions and theorems below, we present informal statements which are sufficient to capture the main insights. 
The precise statements  can be found in \autoref{app:technical}.

Our synthetic theoretical framework considers a bimodal data distribution with two correlated modes: %which captures some of the realistic structure of images. 
\begin{assumption}[$p_{data}$ structure]\label{ass:data} 
    Let $\gamma=\frac{1}{\mathrm{polylog}(d)}$. We assume that the modes are correlated. 
    This means that $\langle u_1, u_2 \rangle = \gamma >0$ and the generated data point $X$ is either $X=u_1$ or $X=u_2.$
\end{assumption}
% % Each sample from $p_{data}$ consists of an input data $X$ generated as follows:}
% \begin{enumerate}%[leftmargin=*, itemsep=1pt, topsep=1pt, parsep=1pt]
%     \item Sample $s = (s_1,s_2)$ from the distribution $\mathcal{S}$ defined as $s_i \in [0,1]$, $ \mathbb{P}[s_i=1]\geq 1/2$ for $i\in \{1,2\}$ and $\| s\|_0 \geq 1$. Note that $s_1$ and $s_2$ are not necessarily independent.
%         \vspace{.1cm}
%     \item Construct a data-point  $X=s_1u_1+s_2u_2\in\mathbb{R}^d.$
% \end{enumerate}
% By construction, samples from $p_{data}$ are linear combinations of $u_1$ and $u_2.$ 
% We additionally make the following assumption on the coefficients $(s_i)$ and modes $(u_i)$:

%\textcolor{red}{Maybe we should explain why such settings could allow nSGDA to perform better than SGDA?}

 Next, we define the latent distribution $p_z$ that $G_{\mathcal{V}}$ samples from and maps to $p_{data}$.
%  To generate data as in the training dataset, $G_{\mathcal{V}}$ samples from the latent distribution defined as follows. 
 Each sample from $p_{z}$ consists of a data-point $z$ that is a binary-valued vector $z\in\{0,1\}^{m_G}$, where $m_G$ is the number of neurons in $G_{\mathcal{V}}$, and has non-zero support, i.e.\ $\| z\|_0 \geq 1$.
 Although the typical choice of a latent distributions in GANs is either Gaussian or uniform, we choose $p_z$ to be a binary distribution because it models the weights' distribution of a hidden layer of a deep generator; \cite{allen2021forward} argue that the distributions of these hidden layers are sparse, non-negative, and non-positively correlated.  
 We now make the following assumptions on the coefficients of $z$:
 
\begin{assumption}[$p_{z}$ structure]\label{ass:latent_dist}
    Let $z\sim p_{z}$.  
    We assume that with probability $1-o(1)$, there is only one non-zero entry in $z$. 
    The probability that the entry $i\in[m_G]$ is non-zero is $\Pr[z_i = 1 ] = \Theta(1/m_G).$  
\end{assumption}
In \autoref{ass:latent_dist}, the output of $G_{\mathcal{V}}$ is only made of one mode with probability $1 - o(1)$.
This avoids summing two of the generator's neurons, which may cause mode collapse.
%\todo{these are the generator neurons?}

To learn the target distribution  $p_{data}$, we use a linear generator $G_{\mathcal{V}}$ with $m_G$ neurons and a non-linear neural network with $m_D$ neurons:
\begin{equation}
    G_{\mathcal{V}}(z)=Vz = \sum_{i=1}^{m_G} v_i z_i\,,
    \qquad 
     D_{\mathcal{W}}(X) =\mathrm{sigmoid}\Big( a   \sum_{i =1}^{m_D}\langle w_i, X \rangle^3   +  
     \frac{b}{\sqrt{d}} \Big).
\end{equation}
where $V = [v_1^{\top}, v_2^{\top}, \cdots, v_{m_G}^{\top}]\in\mathbb{R}^{m_G\times d}$, $z\in \{0,1\}^{m_G}$, $W=[w_1^{\top},\dots,w_{m_D}^{\top}]\in\mathbb{R}^{m_D\times d}$, and $a,b\in\mathbb{R}$.
Intuitively, $G_{\mathcal{V}}$ outputs linear combinations of the modes $v_i$. 
% Thus, the set of parameters for the discriminator is $\mathcal{W}=\{W,a,b\}$. 
% We set $\lambda = 1/\sqrt{d}$ in (\ref{eq:disc}) to ensure that the weights and the bias in the discriminator learn at the same speed.%, which is a popular choice in practice \textcolor{red}{@Yuanzhi: do you have a citation for this?}. 
%\todo{I merged these paragraphs}
We choose a cubic activation as it is the smallest monomial degree for the discriminator's non-linearity that is sufficient for the generator to recover the modes $u_1, u_2$.\footnote{\cite{li2020making} show that when using linear or quadratic activations, the generator can fool the discriminator by only matching the first and second moments of $p_{data}$.}

We now state the SGDA and nSGDA algorithms used to solve the GAN training problem (\ref{eq:gan_formulation_theory}).  % using SGDA and nSGDA.
For simplicity, we set the batch-size to 1.   %\todo{How do we justify this in light of us trying to explain adaptive methods? This will probably be an easy criticism from reviewers} 
The resultant update rules for SGDA and nSGDA are:\footnote{In the nSGDA algorithm defined in (\ref{eq:nsgda_def}), the step-sizes were time-dependent.
Here, we assume for simplicity that the step-sizes $\eta_{D},\eta_G>0$ are \emph{constant}.}

\underline{SGDA}: at each step $t >0,$ sample $X\sim p_{data}$ and $z\sim p_z$ and update as
\begin{equation}
    \begin{aligned}\label{eq:SGDA}
        \mathcal{W}^{(t + 1)} &= \mathcal{W}^{(t)} +  \eta_D  \mathbf{g}_{\mathcal{W}}^{(t)} ,\quad \mathcal{V}^{(t + 1)} = \mathcal{V}^{(t)} - \eta_G  \mathbf{g}_{\mathcal{V}}^{(t)}, 
\end{aligned}
\end{equation}

%\textcolor{blue}{Let's perhaps call this basic version of nSGDA to distinguish from the more general version}
\underline{nSGDA}: at each step $t >0,$ sample $X\sim p_{data}$ and $z\sim p_z$ and update as
\begin{equation}
    \begin{aligned}\label{eq:nSGDA}
        \mathcal{W}^{(t + 1)} &= \mathcal{W}^{(t)} +  \eta_D \tfrac{ \mathbf{g}_{\mathcal{W}}^{(t)}}{\|   \mathbf{g}_{\mathcal{W}}^{(t)} \|_2 },\quad
        \mathcal{V}^{(t + 1)} = \mathcal{V}^{(t)} - \eta_G\tfrac{ \mathbf{g}_{\mathcal{V}}^{(t)} }{\|     \mathbf{g}_{\mathcal{V}}^{(t)}\|_2 }.
\end{aligned}
\end{equation}

Compared to the versions of SGDA and Ada-nSGDA that we introduced in \autoref{sec:motivation}, we have the same algorithms except that we set $\beta_1=0$ and omit $\varepsilon$ in (\ref{eq:SGDA}) and (\ref{eq:nSGDA}).
Lastly, we detail how to set the optimization parameters for SGDA and nSGDA in (\ref{eq:SGDA}) and (\ref{eq:nSGDA}). 
\begin{parametrization}[Informal]\label{parametrization}
    When running SGDA and nSGDA on~(\ref{eq:gan_formulation_theory}), we set:
    \vspace{.1cm}
     
    \hspace{.1cm}-- \textbf{Initialization}:   $b^{(0)}=0,$ and  $a^{(0)},\;w_i^{(0)} (i\in [m_D]),\; v_j^{(0)} (j\in [m_G])$ are initialized with a Gaussian with small variance.  %($1/\poly(d)$).
    
    \vspace{.1cm}
    
    \hspace{.1cm}-- \textbf{Number of iterations}: we run SGDA for $t\leq T_0$ iterations where $T_0$ is the first iteration such that the algorithm converges to an approximate first order local minimum.
    %\textcolor{red}{such that the norm of the gradient of the loss is  smaller than $1/\poly(d)$, so the algorithm converges to an approximate first order local minimal. } 
    For nSGDA, we run for $T_1=\Tilde{\Theta}(1/\eta_D)$ iterations.
    %so that the algorithm  converges to a locally optimal min-max equilibrium
    %\textcolor{red}{Why is it locally optimal?}
    \vspace{.1cm}   
    
    \hspace{.1cm}-- \textbf{Step-sizes}: For SGDA, $\eta_D,\eta_G\in (0,\frac{1}{\mathrm{poly}(d)})$ can be arbitrary. For nSGDA, $\eta_D\in (0,\frac{1}{\mathrm{poly}(d)}]$, and $\eta_G$ is slightly smaller than $\eta_D.$  
    
    \vspace{.1cm}
    
    \hspace{.1cm}-- \textbf{Over-parametrization}: For SGDA,  $m_D, m_G = \mathrm{polylog}(d)$ are arbitrarily chosen i.e. $m_D$ may be larger than $m_G$ or the opposite. For nSGDA, we set $m_D = \log(d)$ and $m_G = 2\log(d).$
\end{parametrization}
%\textcolor{red}{\autoref{parametrization} decribes our choice of the parameters in an informal manner. We defer a more technical statement to the Appendix. }
%\autoref{parametrization} corresponds to usual choices of initialization and optimization hyper-parameters.
%With respect to initialization, the discriminator's weights are sampled from a standard normal and its bias is set to zero.  
% With respect to the number of iterations, o
Our theorem holds when running SGDA for any (polynomially) possible number of iterations; after $T_0$ steps, the gradient becomes inverse polynomially small and SGDA essentially stops updating the parameters. 
Additionally, our setting allows any step-size configuration for SGDA i.e. larger, smaller, or equal step-size for $D$ compared to $G$. 
% To our knowledge, this setting has never been studied in GAN performance. 
Note that our choice of step-sizes for nSGDA is the one used in practice, i.e. $\eta_D$ slightly larger than $\eta_G.$

\subsection{Main results}

We state our main results on the performance of models trained using SGDA (\ref{eq:SGDA}) and nSGDA (\ref{eq:nSGDA}). 
We show that nSGDA learns the modes of the distribution $p_{data}$ while SGDA does not.

%\textcolor{blue}{Need to call this theorem sketched, because the $\propto$ is not mathematically defined and people might not know what do you mean -- $a \propto b$ does  not mean that $a$ is a scaling of $b$, it actually means $a$ is approximately equal to the scaling of $b$. You can define formally what it means by $\propto$ here. You can say we denote $a \propto b$ for vectors $a$ and $b$ in $\mathbb{R}^d$ if there is a positive scaling factor $c > 0$ such that $\|a - cb\|_2 = o(\| b \|_2)$}
\begin{theorem}[Informal]\label{thm:sgda} 
    Consider a training dataset and a latent distribution as described above and let \autoref{ass:data} and \autoref{ass:latent_dist} hold. 
    Let $T_0$, $\eta_G,\eta_D$ and the initialization be as defined in \autoref{parametrization}. 
    Let $t$ be such that $t\leq T_0.$ 
    Run SGDA on the GAN problem defined in (\ref{eq:gan_formulation_theory}) for $t$ iterations with step-sizes $\eta_G,\eta_D$. Then, with probability at least $1 - o(1)$, the generator outputs  for all $z \in \{0, 1\}^{m_G}$:
    \begin{align}
G_{\mathcal{V}}^{(t)}(z) \propto \begin{cases}
            u_1 + u_2 & \text{if } \eta_D\geq \eta_G\\
            \xi^{(t)}(z) & \text{otherwise}
         \end{cases},
    \end{align}
    %G_{\mathcal{V}}^{(t)}(z) = \alpha^{(t)}(z) (u_1 + u_2) + \xi^{(t)}(z),
    where $\xi^{(t)}(z)\in\mathbb{R}^d$ is some vector that is not correlated to any of the modes. % i.e.\ for all $\ell \in [2]$, $\cos(\xi^{(t)}(z), u_{\ell}) = o(1)$ for every $z \in \{0,1\}^{m_G}$.
    Formally, $\forall \ell \in [2]$, $\cos(\xi^{(t)}(z), u_{\ell}) = o(1)$ for all $z \in \{0,1\}^{m_G}$.
    %In the specific case where $\eta_D = \frac{\sqrt{d}\eta_G}{\mathrm{polylog}(d)}$,  the model mode collapses i.e. $\|\xi^{(T_0)}(z)\|_2 = o(\alpha^{(T_0)}(z))$. 
\end{theorem}

% \todo{See the appendix for the proof.}
A formal proof can be found in \autoref{app:sgda1}. 
\autoref{thm:sgda} indicates that when training with SGDA and any step-size configuration, the generator either does not learn the modes at all ($G_{\mathcal{V}}^{(t)}(z) =\xi^{(t)}(z)$) or learns an average of the modes ($G_{\mathcal{V}}^{(t)}(z)  \propto u_1 + u_2$).
%\todo{update for new notation in eq 8: learns an average of the modes (when $\eta_D \ge \eta_G$), or no modes at all (when $\eta_D < \eta_G$).}%\emph{for any $t \leq T_0$}. 
The theorem holds \emph{for any} time $t\leq T_0$ which is the iteration where SGDA converges to an approximate first-order locally optimal min-max equilibrium. Conversely, nSGDA succeeds in learning the two modes separately:

\begin{theorem}[Informal]\label{thm:nsgda} 
    Consider a training dataset and a latent distribution as described above and let \autoref{ass:data} and \autoref{ass:latent_dist} hold. 
    Let $T_1$, $\eta_G,\eta_D$ and the initialization as defined in \autoref{parametrization}. 
    Run nSGDA on (\ref{eq:gan_formulation_theory}) for $T_1$ iterations with step-sizes $\eta_G,\eta_D$.
    Then, the generator learns both modes $u_1,u_2$ i.e., for $\ell\in\{1,2\}$,
    \begin{align}
         \Pr{}_{z\sim p_z}[G_{\mathcal{V}}^{(T_1)}(z)\propto u_{\ell}] \quad \text{is non-negligible}.
    \end{align}
    \vspace*{-.3cm}
%\begin{equation*}
%    \Pr{}_{z\sim p_z}\left( \big\|\tfrac{G_{\mathcal{V}}^{(T_1)}(z)}{\|G_{\mathcal{V}}^{(T_1)}(z)\|_2} - u_{\ell} \big\|_2 = o(1)\right) = \tilde{\Omega}(1).
%\end{equation*}
\end{theorem}
A formal proof can be found in \autoref{app:nsgda_app}.
\autoref{thm:nsgda} indicates that when we train a GAN with nSGDA in the regime where the discriminator updates slightly faster than the generator (as done in practice), the generator successfully learns the distribution containing the direction of both modes.

We implement the setting introduced in \autoref{sec:setting} and validate \autoref{thm:sgda} and \autoref{thm:nsgda} in Fig. \ref{fig:theory}. 
Fig. \ref{fig:relative_grad} displays the relative update speed $\eta\| \mathbf{g}_{\mathcal{Y}}^{(t)} \|_2 / \|\mathcal{Y}^{(t)} \|_2$, where $\mathcal{Y}$ corresponds to the parameters of either $D$ or $G$.
    Fig. \ref{fig:disc_weights} shows the correlation  $\langle w_i^{(t)} , u_{\ell } \rangle / \| w_i^{(t)} \|_2$ between \emph{one} of $D$'s neurons and a mode $u_{\ell}$ and Fig. \ref{fig:gen_weights} the correlation $\langle v_j^{(t)} , u_{\ell } \rangle / \| v_j^{(t)} \|_2$ between $G$'s neurons and  $u_{\ell}$. 
    We discuss the interpretation of these plots to the next section.

\subsection*{Why does SGDA suffer from mode collapse and nSGDA learn the modes?} \label{sec:overview}

\begin{figure*}[tbp]  
    \vspace{-.3cm}
    \begin{subfigure}{0.24\textwidth}%0.33
         \includegraphics[width=1\linewidth]{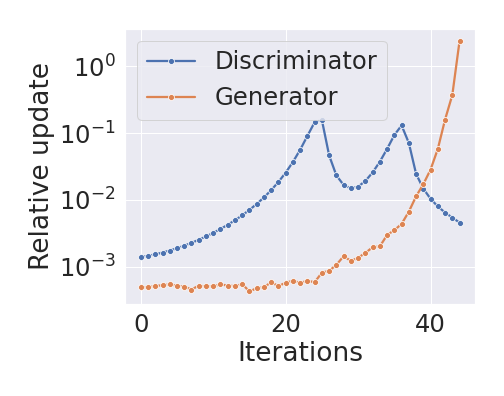}
         \vspace{-7mm}
         \caption{\scriptsize Relative gradients}\label{fig:relative_grad}
    \end{subfigure}
    \begin{subfigure}{0.24\textwidth}%0.33
        \hspace{.1cm}
        \includegraphics[width=1\linewidth]{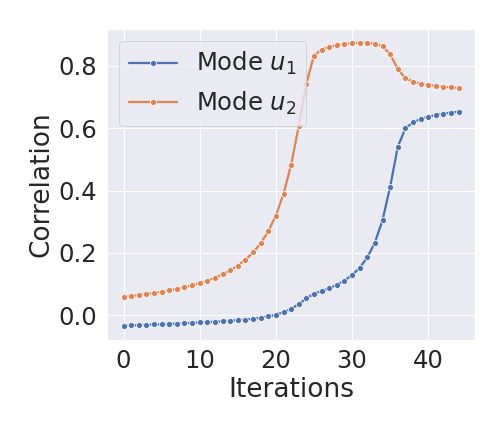}
        \vspace{-7mm}
        \caption{\scriptsize $D$ weight correlation}\label{fig:disc_weights}
    \end{subfigure}
    \begin{subfigure}{0.24\textwidth}%0.33
        \hspace{.1cm}
        \includegraphics[width=1\linewidth]{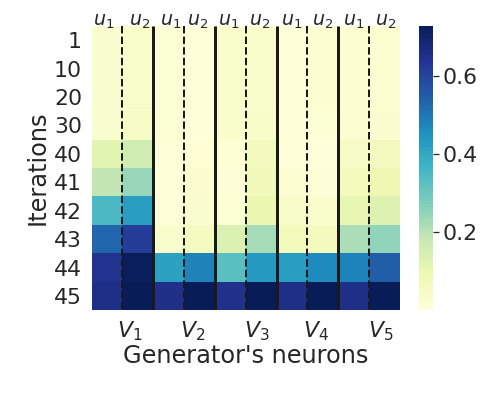}
        \vspace{-7mm}
        \caption{\scriptsize $\mathcal{V}$ learned via SGDA}\label{fig:gen_weights}
    \end{subfigure}
    \begin{subfigure}{0.24\textwidth}%0.33
        \hspace{.1cm}
        \includegraphics[width=1\linewidth]{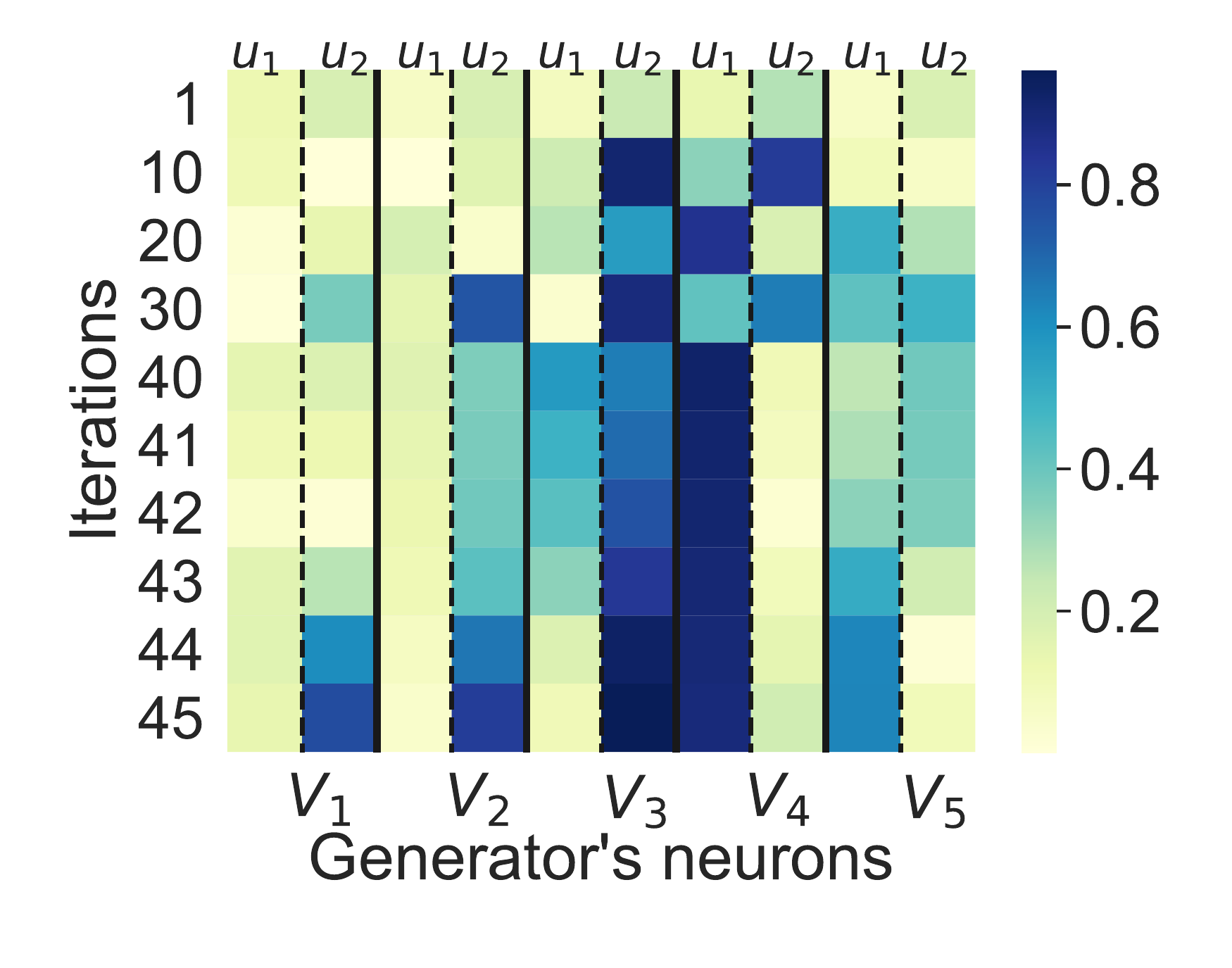}
        \vspace{-7mm}
        \caption{\scriptsize $\mathcal{V}$ learned via nSGDA}\label{fig:fgdngjrg}
    \end{subfigure}
    \vspace{-2mm}
    \caption{\footnotesize 
        (a) shows the relative gradient updates for SGDA. $D$ first updates its weights while $G$ does not move until iteration 20, then $G$ moves its weights. 
        (b) shows the correlation for one neuron of $D$ (with maximal correlation to $u_2$ at initialization) with the modes $u_1,u_2$ during the learning process of SGDA.
        (c, d) shows the correlations of the neurons of $G$ with the modes when trained with SGDA and nSGDA respectively. 
        This shows that for SGDA (c), the model ultimately learns the weighted average $u_1+u_2.$ 
        For nSGDA, we see from (d) that one of the neurons ($V_4$) is highly correlated with $u_1$ and another one ($V_3$) is correlated with $u_2.$  
        % For this experiment, $m_D = m_G = 5, d = 1000$.
    }
    \vspace{-.5cm}
    
    % Samy:
        %     (a) Relative gradient updates; $D$ first updates its weights while $G$ does not move, then at iteration 20, $G$ moves its weights. 
        % (b) Shows the correlation of the neurons of $D$ with the modes $u_1,u_2$ during the learning process of SGDA. 
        % (c,d) Correlations of the neurons of $G$ with the modes in the case of SGDA and nSGDA. (b) shows the correlation with $u_1,u_2$ for only one neuron (the neuron that has maximal $\langle w_i^{(0)},u_2\rangle$) (c) the correlation of all the neurons of $G$ with the modes. THis latter show that the model  ultimately learns a weighted average $u_1+u_2.$ Lastly, in the case of nSGDA, (d) show that one of the neurons ($V_4$) of $G$ is highly correlated with $u_1$ and another one ($V_3$) is correlated with $u_2.$  For this experiment, $m_D = m_G = 5, d = 1000$. 
    %We see that each of $G$'s neuron \textit{solely} learns one of the modes when trained with nSGDA.
    %\textcolor{red}{TODO: in the subcaptions, describe figs with math a bit more i.e. write explicit expression for relative gradients, weight correlation, and the metric used to produce (c, d)}
    %add what each figure says}
    \label{fig:theory}
    % \vspace{-.5cm}
\end{figure*}

% We now explain why SGDA suffers from mode collapse.  \autoref{fig:relative_grad} displays the relative update speed $\frac{\eta\| \mathbf{g}_{\mathcal{Y}}^{(t)} \|_2}{\|\mathcal{Y}^{(t)} \|_2}$, $\mathcal{Y}$ corresponding to the parameters of either  $D$ or $G$, \autoref{fig:disc_weights} the correlation  $\tfrac{\langle w_i^{(t)} , u_{\ell } \rangle}{\| w_i^{(t)} \|_2}  $ between \emph{one} of $D$'s neurons and $u_{\ell}$ and \autoref{fig:gen_weights} the correlation $\tfrac{\langle v_j^{(t)} , u_{\ell } \rangle}{\| v_j^{(t)} \|_2}$ between $G$'s neurons and  $u_{\ell}$.
%\todo{This section could be polished a bit (a bit jarring); its a lot better than earlier, just a bit more polish}
We now explain why SGDA suffers from mode collapse, which corresponds to the case where $\eta_D\geq \eta_G$. 
Our explanation relies on the interpretation of Figs.~\ref{fig:relative_grad}, \ref{fig:disc_weights}, and \ref{fig:gen_weights}, and on the updates around initialization that are defined as followed. 
There exists $i\in[m_D]$ such that $D$'s update is with high probability
\begin{align}\label{eq:disc_increaseSGDA}
   \mathbb{E}[w_i^{(t + 1)} |w_i^{(t)}]\approx w_i^{(t)} + \eta_D \sum_{l=1}^2 \mathbb{E}[\langle w_i^{(t)}, u_l \rangle^2 ]u_l  \,.
\end{align}
Thus, the weights of $D$ receive gradients directed by $u_1$ and $u_2$.
On the other hand, the weights of $G$ at early stages receive gradients directed by $w_j^{(t)}$:
\begin{align} \label{eq:update_GG1}
    v_i^{(t + 1)} \approx v_i^{(t)} + \eta_G \sum_{j}  \langle  v_i^{(t)},w_j^{(t)} \rangle^2 w_j^{(t)}.
\end{align}
% \vspace{-.4cm}
%\todo{\tiny refer to appendix for a proof for eq.\ref{eq:disc_increaseSGDA}, \ref{eq:update_GG1}}
We observe that the learning process in Figs.~\ref{fig:relative_grad}~\&~\ref{fig:disc_weights} has three distinct phases. %\todo{\tiny maybe flip fig 3a and 3b?}
In the first phase (iterations 1-20), $D$ learns one of the modes ($u_1$ or $u_2$) of $p_{data}$ (Fig.~\ref{fig:disc_weights}) and $G$ barely updates its weights (Fig.~\ref{fig:relative_grad}).
In the second phase (iterations 20-40), $D$ learns the weighted average $u_1+ u_2$ (Fig.~\ref{fig:disc_weights}) while $G$ starts moving its weights (Fig.~\ref{fig:relative_grad}).
In the final phase (iterations 40+), $G$ learns $u_1+u_2$ (Fig. \ref{fig:gen_weights}) from $D$. 
In more detail, the learning process is described as follows:%\todo{\tiny do we refer to Fig. \ref{fig:theory} any more? We still only discuss \ref{fig:disc_weights}}

\textbf{Phase 1} : At initialization, $w_j^{(0)}$ and $v_j^{(0)}$ are small. 
% \textcolor{red}{
% The expectation of (\ref{eq:disc_increaseSGDA}) is: 
% \begin{equation}\label{eq:expwitupd}
%     \mathbb{E}[w_i^{(t + 1)}]\approx \mathbb{E}[w_i^{(t)}] +  \eta_D \sum_{l=1}^2 \mathbb{E}[\langle w_i^{(t)}, u_l \rangle^2 ]u_l
% \end{equation}
Assume w.l.o.g. that $\langle w_i^{(0)},u_2\rangle > \langle w_i^{(0)},u_1\rangle$. 
Because of the $\langle w_i^{(t)}, u_l \rangle^2$ in front of $u_2$ in (\ref{eq:disc_increaseSGDA}), the parameter
% Since $\langle u_1,u_2\rangle \ll 1$, (\ref{eq:expwitupd}) simplifies as:
% \begin{align}
%   \hspace{-.2cm}  \mathbb{E}[\langle w_i^{(t + 1)},u_2\rangle]&\gtrapprox\mathbb{E}[\langle w_i^{(t)},u_2\rangle] +  \eta_D\mathbb{E}[\langle w_i^{(t)}, u_2 \rangle]^2.
% \end{align}
% }
%$\mathbb{E}[\langle w_i^{(t)}, u_2 \rangle ] $ is an increasing sequence.
%  Thus, (\ref{eq:disc_increase}) implies that  
 $w_i^{(t)}$ gradually  grows its correlation with $u_2$ (Fig. \ref{fig:disc_weights})  and $D$'s gradient norm thus increases (Fig.~\ref{fig:relative_grad}). 
 While  $\|w_j^{(t)}\| \ll 1 \, \forall j$, we have that $v_i^{(t)} \approx v_i^{(0)}$ (Fig.~\ref{fig:relative_grad}).
 
\textbf{Phase 2}: $D$ has learned $u_2$. 
Because of the sigmoid in the gradient of $w_i^{(t)}$ (that was negligible during Phase 1) and $\langle u_1,u_2\rangle = \gamma>0$, $w_i^{(t)}$ now mainly receives updates with direction $u_2$.
Since $G$ did not update its weights yet, the min-max problem (\ref{eq:gan_formulation_theory}) is approximately just a minimization problem with respect to $D$'s parameters. 
Since the optimum of such a problem is the weighted average $u_1+u_2$, $w_j^{(t)}$ slowly converges to this optimum.  
Meanwhile, $v_i^{(t)}$ start to receive some significant signal (Fig.~\ref{fig:relative_grad}) but mainly learn the direction $u_1 + u_2$ (Fig.~\ref{fig:gen_weights}), because $w_j^{(t)}$ is aligning with this direction.

\textbf{Phase 3:} The parameters of $G$ only receive gradient directed by $u_1 + u_2$. 
The norm of its relative updates stay large and $D$ only changes its last layer terms (slope $a$ and bias $b$).

% \subsection*{Why does }

In contrast to SGDA, nSGDA ensures that $G$ and $D$ always learn at the same speed with the updates:
\begin{align}\label{eq:disc_increase}
    w_i^{(t + 1)} \approx w_i^{(t)} + \eta_D \frac{\langle w_i^{(t)}, X \rangle^2 X}{\|\langle w_i^{(t)}, X \rangle^2 X\|_2}, \text{ and }
    v_i^{(t + 1)} \approx v_i^{(t)} + \eta_G \frac{\sum_{j} \langle w_j^{(t)}, v_i^{(t)} \rangle^2 w_j^{(t)} }{\|\sum_{j} \langle w_j^{(t)}, v_i^{(t)} \rangle^2 w_j^{(t)} \|_2}
\end{align}
\vspace{-.3cm}

No matter how large $\langle w_i^{(t)}, X \rangle$ is, $G$ still learns at the same speed with $D$. 
There is a tight window (iteration $25$, Fig.~\ref{fig:disc_weights}) where only one neuron of $D$ is aligned with $u_1$. 
This is when $G$ can also learn to generate $u_1$ by ``catching up'' to $D$ at that point, which avoids mode collapse.

\vspace{-2mm}

\section{Numerical performance of nSGDA}\label{sec:num_exp}
%\vspace{-3mm}

In \autoref{sec:motivation}, we present the Ada-nSGDA algorithm (\ref{eq:nsgda_def}) which corresponds to ``grafting'' the Adam magnitude onto the SGDA direction. 
In \autoref{sec:theory}, we construct a dataset and GAN model where we prove that a GAN trained with nSGDA can generate examples from the true training distribution, while a GAN trained with SGDA fails due to mode collapse.
We now provide more experiments comparing nSGDA and Ada-nSGDA with Adam on real GANs and datasets.
%\todo{@Samy I think what you had previously may be redundant given that we have the exps/discussion below; what do you think of leaving it like this?}
% \textcolor{red}{In light of the previous sections, this section has multiple objectives: 1) empirically validate that nSGDA competes with Adam for some architectures 2) for more recent architectures as StyleGAN2\citep{karras2020analyzing}, nSGDA produces low-quality solutions. We provide an explanation of why this happens and still show that Ada-nSGDA competes with Adam. This latter experiment highlights the importance of the adaptive magnitude of Adam in more recent GAN architectures. }
%In this section, we empirically verify that nSGDA has comparable performance to Adam when training real-world GANs  and results in higher quality samples than SGDA.

\begin{figure*}[tbp]
    \begin{subfigure}{0.245\textwidth}
        \includegraphics[width=1.\linewidth]{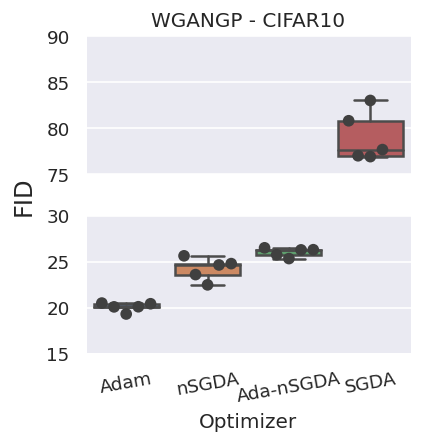} 
        \caption{}
        \label{fig:cifar10_training}
    \end{subfigure}
    \begin{subfigure}{0.245\textwidth}
        \includegraphics[width=1.\linewidth]{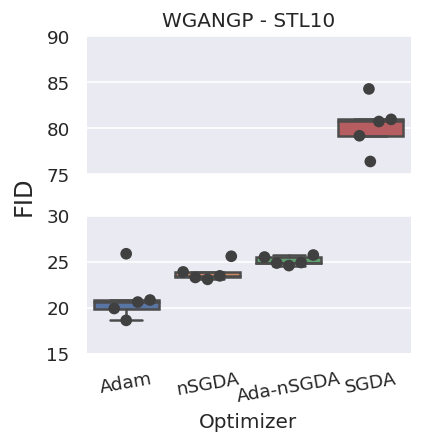}
        \caption{}
        \label{fig:stl10_training}
    \end{subfigure}
    \begin{subfigure}{0.245\textwidth}
        \includegraphics[width=1.\linewidth]{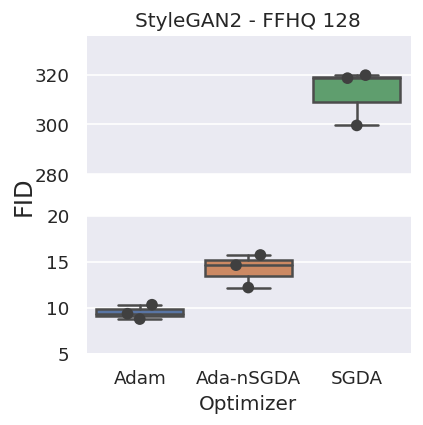}
        \caption{}
        \label{fig:ffhq_fid}
    \end{subfigure} 
    \begin{subfigure}{0.245\textwidth}
        \includegraphics[width=1.\linewidth]{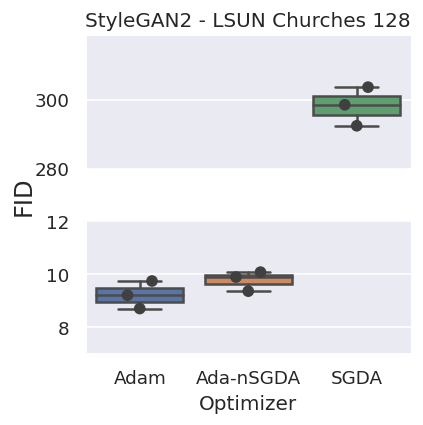}
        \caption{}
        \label{fig:lsun_fid}
    \end{subfigure} 
    \vspace{-0.7cm}
\caption{\small (a, b) are the final FID scores for a ResNet WGAN-GP model trained for 45k steps on CIFAR-10 and STL-10 respectively. (c, d) are the final FID scores for a StyleGAN2 model trained for 2600kimgs on FFHQ and LSUN Churches respectively. WGAN-GP results are obtained over 5 seeds, and StyleGAN2 results over 3 seeds.}\label{fig:wgangp}
\vspace{-3mm}
\end{figure*}

% \paragraph{Experimental setup.} 
% \paragraph{Setup.} 
We train a ResNet WGAN with gradient penalty on CIFAR-10 \citep{krizhevsky2009learning} and STL-10 \citep{coates2011analysis} with Adam, Ada-nSDGA, SGDA, as well as nSGDA with a fixed learning rate as done in \autoref{sec:theory}.
We use the default architectures and training parameters specified in \cite{gulrajani2017improved} ($\lambda_{GP}=10$, $n_{dis} = 5$, learning rate decayed linearly to 0 over 100k steps).
We also train a StyleGAN2 model \citep{karras2020analyzing} on FFHQ \citep{karras2019style} and LSUN Churches \citep{yu2016lsun} (both resized to $128 \times 128$ pixels) with Adam, Ada-nSGDA, and SGDA.
We use the recommended StyleGAN2 hyperparameter configuration for this resolution (batch size = 32, $\gamma = 0.1024$, map depth = 2, channel multiplier = 16384).
We use the Fr\'{e}chet Inception distance (FID) \citep{heusel2017gans} to quantitatively assess the performance of the model.  
% In all our experiments, 50k samples are randomly generated for each model to compute the FID. 
% As for the architectures, we choose StyleGAN2 \citep{karras2020analyzing}, Resnets \citep{he2016identity} from \cite{gidel2018variational} and use the default WGAN-GP loss \citep{gulrajani2017improved}. 
For each optimizer, we conduct a coarse log-space sweep over step sizes and optimize for FID. 
We train the WGAN-GP models for 2880 thousand images (kimgs) on CIFAR-10 and STL-10 (45k steps with a batch size of 64), and the StyleGAN2 models for 2600 kimgs on FFHQ and LSUN Churches.  
We average our results over 5 seeds for the WGAN-GP ResNets, and over 3 seeds for the StyleGAN2 models due to the computational cost associated with training GANs.

\paragraph{WGAN-GP}
Figures \ref{fig:cifar10_training} and \ref{fig:stl10_training} show the FID curves during training for the WGAN-GP model.  
We test Adam, Ada-nSGDA, AdaDir (that contains the adaptive direction), nSGDA and SGDA. 
The first remark is that Ada-nSGDA recovers the performance of Adam while AdaDir diverges (\texttt{NaN} loss) for these experiments and is hence omitted.
This means that the adaptive magnitude is key to the performance of adaptive methods. 
%\textcolor{red}{A second important point}\todo{I don't know if we want to highlight this, this was just more like a bonus} 
Additionally, nSGDA not only recovers the performance of Adam but obtains a final FID of $\sim$2-3 points lower than Ada-nSGDA. 
Such performance is possible because the adaptive magnitude%\todo{this doesn't really relate to the previous part?}
$\|\mathbf{A}_{\mathcal{Y}}^{(t)}\|_2$ stays within a constant range and does not fluctuate much (Figs. \ref{fig:gen_graft_lr}, \ref{fig:disc_graft_lr}) and a constant learning rate as in nSGDA is enough to recover Adam's performance. 
Thus, nSGDA is a valid model for Adam in the case of WGAN-GP.
In contrast, models trained with SGDA consistently perform significantly worse, with final FID scores $4\times$ larger than Adam.

\paragraph{StyleGAN2}
Figures \ref{fig:ffhq_fid} and \ref{fig:lsun_fid} show the FID curves during training for StyleGAN2. 
Similar to WGAN-GP, we observe that Ada-nSGDA recovers the Adam performance while SGDA performs much worse.
However, nSGDA was not able to recover the performance of Adam in this setting.
There are several hypotheses to explain this:
\begin{itemize}[leftmargin=*, itemsep=1pt, topsep=1pt, parsep=1pt]
    \item We observe in Figs. \ref{fig:disc_stylegan}, \ref{fig:gen_stylegan} that the adaptive magnitude the ratio $\|\mathbf{A}_{\mathcal{Y}}^{(t)}\|_2$ stays within a constant range but varies significantly across time. 
    This violates our theoretical setting of assuming a constant adaptive magnitude.
    % Therefore, it is hard for nSGDA to recover the performance of Ada-nSGDA.
    %It is therefore difficult to come up with an explicit scheduler that mimics the implicit scheduling of $\|\mathbf{A}_{\mathcal{Y}}^{(t)}\|_2$. 
    In contrast, $\|\mathbf{A}_{\mathcal{Y}}^{(t)}\|_2$ stays within a constant range and does not fluctuate across time in the WGAN-GP experiment (Figs. \ref{fig:gen_graft_lr}, \ref{fig:disc_graft_lr}). 
    \item The GAN problem formulation in the case of StyleGAN2 is very different from the one in WGAN-GP. Specifically, StyleGAN2 has a drastically different generator architecture than the ResNet generator used in the WGAN-GP experiments (utilizing weight demodulation), as well as using adaptive data augmentation and additional regularizers (such as a path-length regularization).
\end{itemize}
Because our theoretical setting does not capture these observations, we do not expect nSGDA to work in this setting.
% Because our theoretical setting does not capture these observations, we believe that thes results on StyleGAN2 do not contradict our theoretical findings.
% since the WGAN framework is much closer to the setting and hypotheses made in \autoref{sec:theory} than StyleGAN2.
% We believe that these results on StyleGAN2 are not in conflict with our theoretical findings because our setting does not capture these differences. 
% Indeed, WGAN does not have any extra regularizers. 
However in spite of these differences, Ada-nSGDA performed similarly to Adam indicating that the nSGDA direction that we theoretically study is still valid in modern real-world GAN architectures. 
% : the training curves were very similar between the two optimizers, and the final FID is only about $\sim$4 points higher for FFHQ, and less than 1 point higher for LSUN Churches.
%It is worth reiterating that nSGDA methods and SGD have the same direction but different magnitudes; these results support the hypothesis that adaptive methods outpe  rform SGDA due to their implicit step-size schedule.
% Given the differences between our theoretical setting and the two observations put forth above, 
% since the WGAN framework is much closer to the setting and hypotheses made in \autoref{sec:theory} than StyleGAN2.
% However because Ada-nSGDA 
We further validate our theory with additional experiments on DCGAN \citep{radford2015unsupervised} which more closely matches our theoretical setting in \autoref{sec:app_exp}, and find that nSGDA recovers the performance of Adam as in WGAN-GP. 

\begin{figure*}[tbp]
    \begin{subfigure}{0.325\textwidth}
        \includegraphics[width=.85\linewidth]{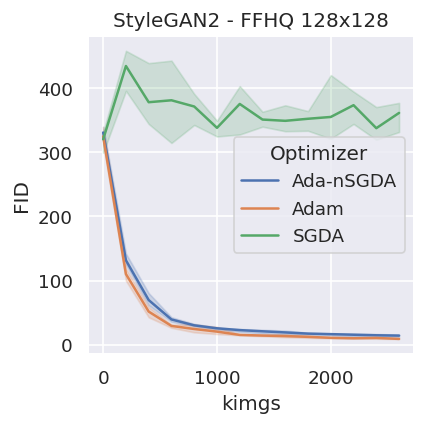} 
        %    \vspace{-.3cm}
        \caption{}
        \label{fig:ffhq_training}
    \end{subfigure}
    %\begin{subfigure}{0.245\textwidth}
        %    \hspace{1.cm}
    %    \includegraphics[width=1.\linewidth]{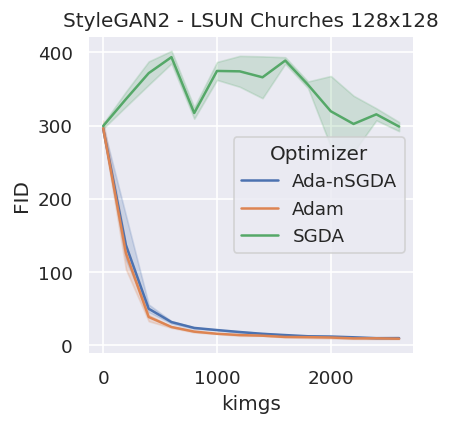}
        %  \vspace{-.3cm}
    %    \caption{}
    %    \label{fig:lsun_training}
    %\end{subfigure}
    \begin{subfigure}{0.325\textwidth}
        \hspace*{-.7cm}
        \includegraphics[width=1.2\linewidth]{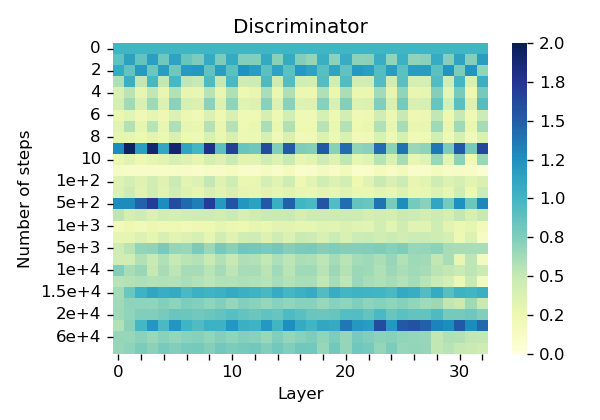}
        %  \vspace{-.3cm}
        \caption{}
        \label{fig:disc_stylegan}
    \end{subfigure} 
    \begin{subfigure}{0.325\textwidth}
        %\hspace{-.5cm}
        \includegraphics[width=1.2\linewidth]{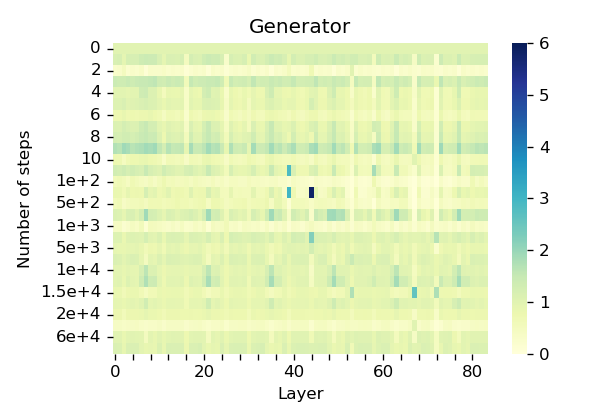}
        
          %\vspace*{-1cm}
        \caption{}
        \label{fig:gen_stylegan}
    \end{subfigure} 
    \vspace{-0.3cm}
    \caption{\small 
    (a) shows the FID curve for the StyleGAN2 model trained for 2600 kimgs on FFHQ. 
    (b, c) shows the fluctuations of the Ada-nSGDA adaptive magnitude. 
    Like Fig.~\ref{fig:curves}, we plot the ratio $\|\mathbf{A}_{\mathcal{Y}}^{(t)}\|_2/\|\mathbf{A}_{\mathcal{Y}}^{(0)}\|_2$ for each of the discriminator's (b) and generator's (c) layers. This ratio stays in a stable range but fluctuates a lot.
}\label{fig:stylegan}
\vspace{-3mm}
\end{figure*}

%\textcolor{red}{
%\paragraph{nSGDA competes with Adam.} We train a StyleGAN2 \citep{karras2020analyzing} and a WGAN-GP \citep{arjovsky2017wasserstein} on CIFAR-10 and LSUN-Churches with Adam, nSGDA and nSGDA. Figure **** shows the GAN performance measured in FID score \citep{heusel2017gans} obtained by the three trained models.  
%We observe that nSGDA consistently competes with Adam. Therefore, this experiment confirms our hypothesis that similarly to the minimization setting, nSGDA is a good model for Adam. As expected, we observe that the models trained with SGDA obtain much worse FID score. Details on the implementation may be found in \autoref{sec:num_exp}.
%}
 
\section{Conclusion}\label{sec:ccl}

Our work addresses the question of how adaptive methods improve the GAN performance.
We empirically showed that the adaptive magnitude of Adam is responsible for the high performance of adaptive methods in GANs.  
We further show that the adaptive magnitude is almost constant in some settings (for instance in the case of WGAN-GP), and nSGDA is able to recover the learning dynamics of Adam. 
We constructed a setting where we proved that nSGDA --thanks to its balanced updates-- recovers the modes of the true distribution while SGDA fails to do it. 
%Our work offers a complementary view to several works in the min-max optimization literature where the discriminator is much faster than the generator to converge to an equilibrium.
Our theory provides insights on the effectiveness of adaptive methods in architectures such as DCGAN or WGAN-GP, however the limitation of our work is that we do not fully capture all of the components in modern architectures such as StyleGAN2. 
% While our theory provides some insights on the effectiveness of adaptive methods in simpler architectures as DCGAN or WGAN, it does not fully capture this in modern architectures such as StyleGAN2. 
An exciting direction would be to find an optimization model that recovers the same perform for Adam on these modern GAN architectures.

\vfill
%Indeed, our empirical observations and theorems heavily rely on the fact that the batch size is small. However, nSGDA methods seem to not work well for large batch sizes and may not be suitable to some large-scale GANs such as BigGAN \citep{brock2018large}. It would be interesting to understand why adaptive methods are crucial in this case. %Another interesting direction would be to understand how adaptivity improves generalization for some NLP problems in the minimization setting. 

%papers that   %Indeed, while we highlight that historical feature amplification may be one of the reasons why momentum improves generalization. 
%We constructed a data distribution where historical feature amplification may explain the generalization improvement of momentum. However, it would be interesting to understand whether this phenomenon is the only reason or whether there are other mechanisms explaining momentum's benefits.%leading momentum to higher generalization. %feature amplification occurs for other distributions and  and whether it is necessary for having better generalization.
 %An interesting setting for this question is NLP where momentum is used to train large models as BERT \cite{devlin2018bert}. Lastly, our analysis is in the batch setting to isolate the generalization induced by momentum.  It would be interesting to understand how the stochastic noise and the momentum together contribute to the generalization of a neural network.

\bibliography{references}
\bibliographystyle{iclr2023_conference}
\pagebreak

\appendix

\section{Additional experiments}\label{sec:app_exp}

\subsection{Experiments with StyleGAN2 and WGAN-GP}

In this section, we put additional curves and images produced by WGAN and StyleGAN2. 

\begin{figure*}[h]
    \begin{subfigure}{0.45\textwidth}\centering
        \includegraphics[width=.8\linewidth]{images/new_exp/stylegan_lsun.png} 
        \caption{}
        \label{fig:lsun_curve1}
    \end{subfigure}
    \begin{subfigure}{0.45\textwidth}
    \centering
        \includegraphics[width=.8\linewidth]{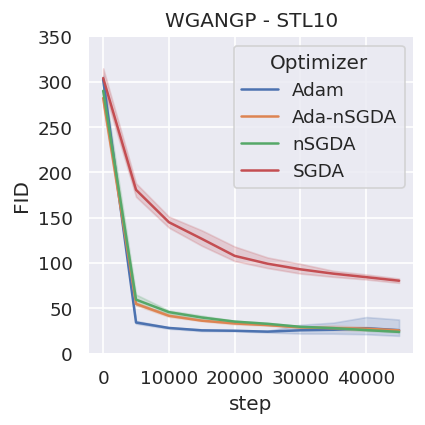}
        \caption{}
        \label{fig:stl10_curv1}
    \end{subfigure}
\caption{\ (a) is the FID curve of StyleGAN2 on LSUN-Churches and (b) the FID curve of WGAN on STL-10. These complement the figures of \autoref{sec:num_exp}.}\label{fig:wgangp_app}
\vspace{-3mm}
\end{figure*}

\begin{figure}
    \centering
    \includegraphics[width=\linewidth]{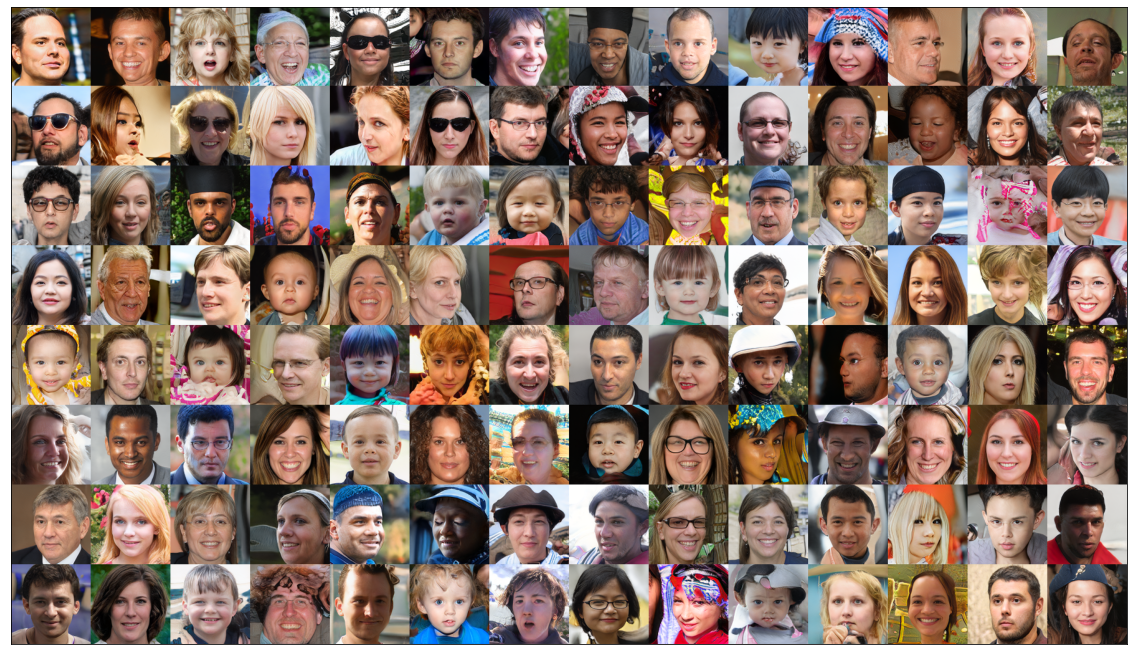}
    \caption{Images generated by a StyleGAN2 model trained with Adam for 2600 kimgs on FFHQ 128. Note that this is not convergence.}
    \label{fig:adam_ffhq}
\end{figure}

\begin{figure}
    \centering
    \includegraphics[width=\linewidth]{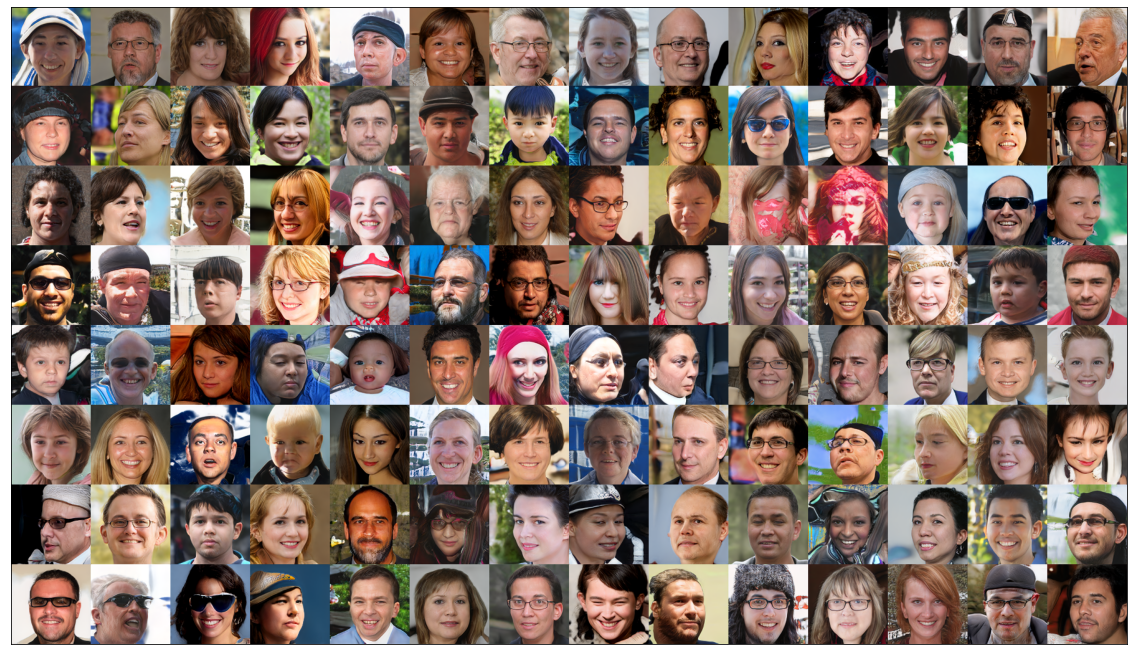}
    \caption{Images generated by a StyleGAN2 model trained with Ada-nSDGDA for 2600 kimgs on FFHQ 128. Note that this is not convergence.}
    \label{fig:adalr_ffhq}
\end{figure}

\begin{figure}
    \centering
    \includegraphics[width=\linewidth]{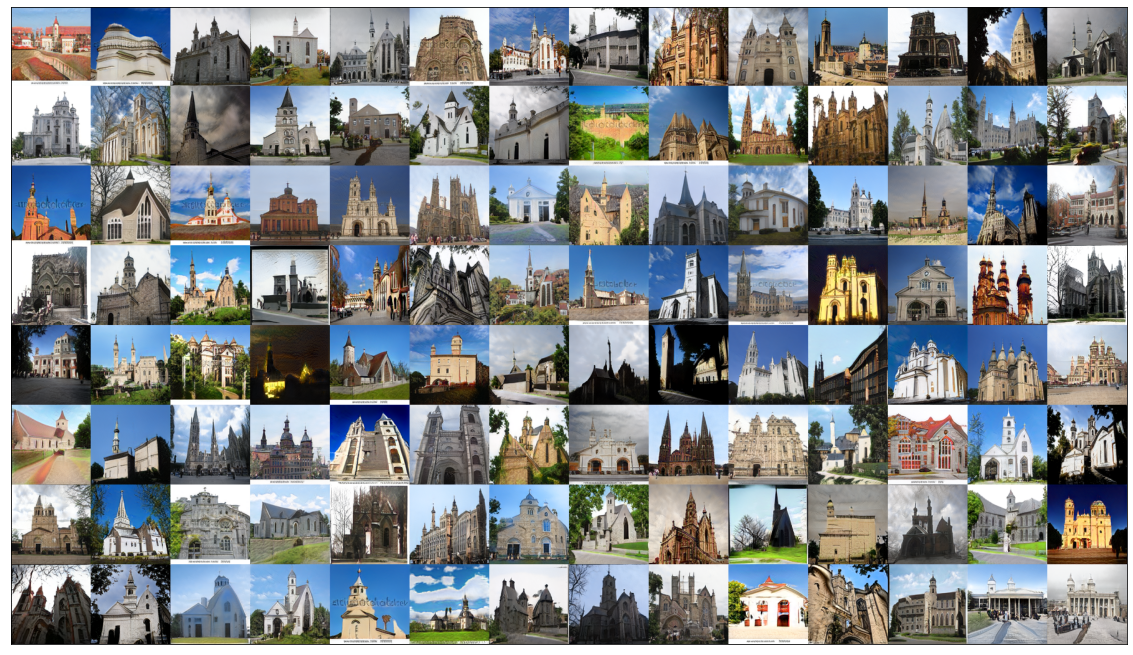}
    \caption{Images generated by a StyleGAN2 model trained with Adam for 2600 kimgs on LSUN Churches 128. Note that this is not convergence.}
    \label{fig:adam_lsun}
\end{figure}

\begin{figure}
    \centering
    \includegraphics[width=\linewidth]{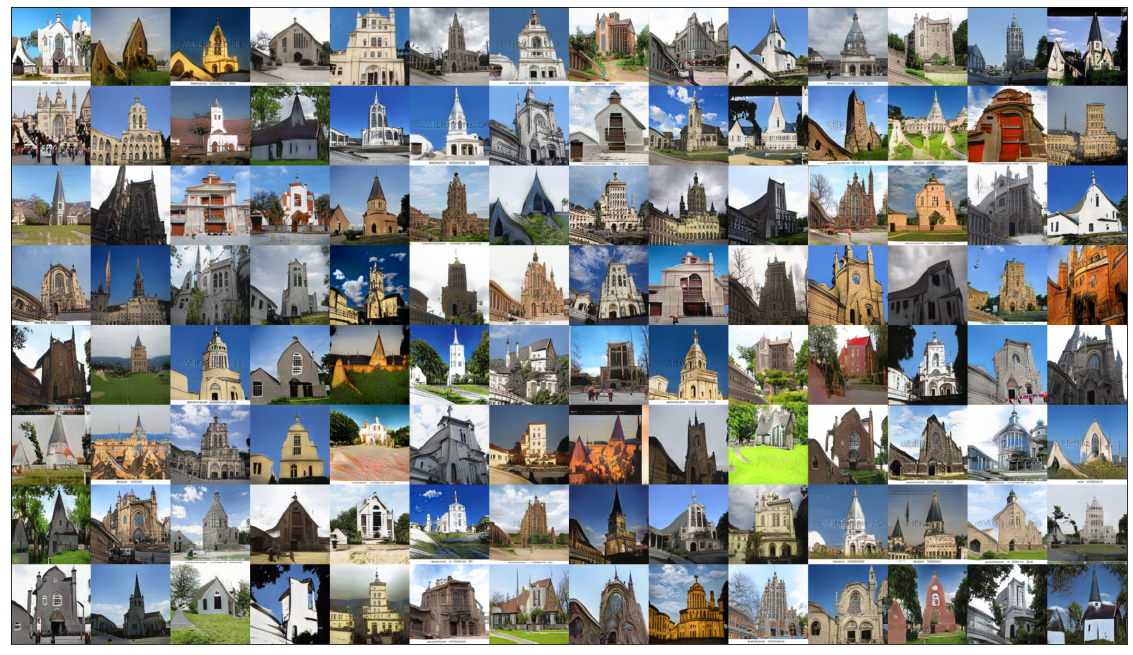}
    \caption{Images generated by a StyleGAN2 model trained with Ada-nSDGDA for 2600 kimgs on LSUN Churches 128. Note that this is not convergence.}
    \label{fig:adalr_lsun}
\end{figure}

\subsection{Experiments with DCGAN}

This section shows that experimental results obtained in \autoref{sec:num_exp} are also valid for other architectures such as DCGAN. Indeed, we observe that  nSGDA methods compete with Adam and nSGDA work when the batch size is small. In this section, lnSGDA refers to the layer-wise nSGDA and gnSGDA to the global nSGDA.

\begin{figure*}[h]
 \begin{subfigure}{0.245\textwidth}
       %\hspace{-.4cm}
         \includegraphics[width=1.\linewidth]{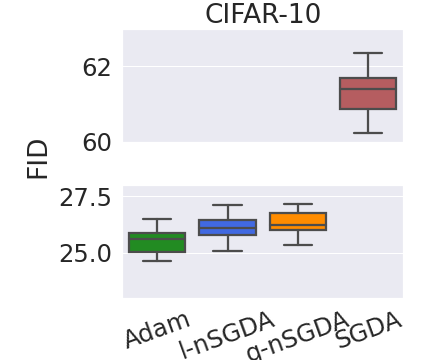} 
     %    \vspace{-.3cm}
        \caption*{(a) CIFAR-10}
        \label{fig:cifar10_final}
    \end{subfigure}
 \begin{subfigure}{0.245\textwidth}
       
        %    \hspace{1.cm}
         \includegraphics[width=1.\linewidth]{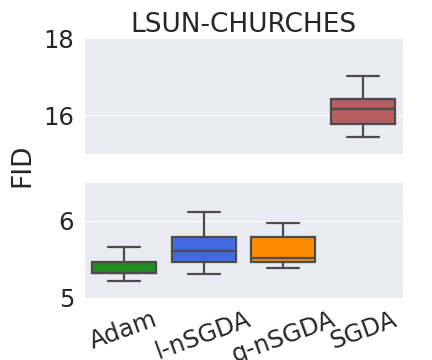}
       %  \vspace{-.3cm}
        \caption*{(b) LSUN Churches}
        \label{fig:lsun_final}
    \end{subfigure}
 \begin{subfigure}{0.245\textwidth}
       
             %\hspace{1.cm}
         \includegraphics[width=1.\linewidth]{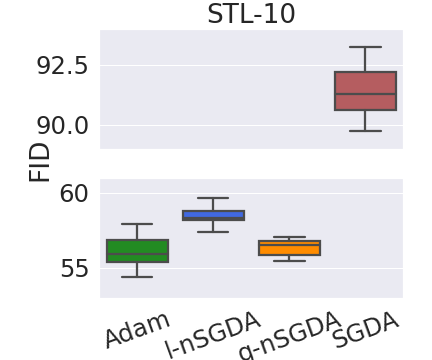}
       %  \vspace{-.3cm}
        \caption*{(c) STL-10}
        \label{fig:stl10_final}
    \end{subfigure} 
 \begin{subfigure}{0.245\textwidth}
       
             %\hspace{1.cm}
         \includegraphics[width=1.\linewidth]{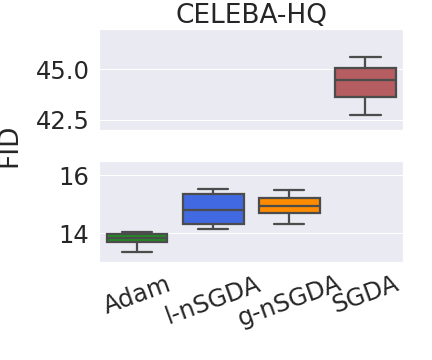}
       %  \vspace{-.3cm}
        \caption*{(d) Celeba-HQ}
        \label{fig:celeba_final}
    \end{subfigure} 
\caption{\small 
FID scores obtained when training a Resnet WGAN-GP using Adam, l-nSGDA, g-nSGDA, and SGD on different datasets. In all these datasets, l-nSGDA, g-nSGDA and Adam perform approximately as well. SGDA performs much worse.}\label{fig:final plots}
% \vspace{-3mm}
\end{figure*}

\begin{figure}[!h]
 \begin{minipage}{0.245\textwidth}
     %  \hspace{-.4cm}
         \includegraphics[width=1\linewidth]{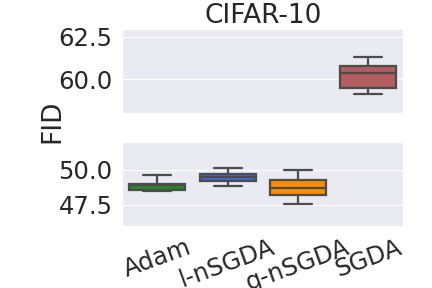} 
     %    \vspace{-.3cm}
        \caption*{(a) CIFAR-10}
    \end{minipage}
 \begin{minipage}{0.245\textwidth}
       
        %    \hspace{1.cm}
         \includegraphics[width=1\linewidth]{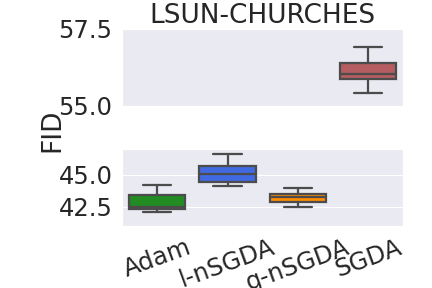}
       %  \vspace{-.3cm}
        \caption*{(b) LSUN Churches}
    \end{minipage}
 \begin{minipage}{0.245\textwidth}
       
             %\hspace{1.cm}
         \includegraphics[width=1.\linewidth]{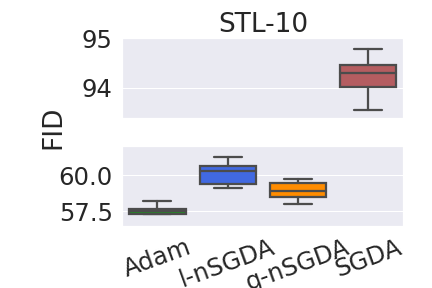}
       %  \vspace{-.3cm}
        \caption*{(c) STL-10}
    \end{minipage} 
 \begin{minipage}{0.245\textwidth}
       
             %\hspace{1.cm}
         \includegraphics[width=1.\linewidth]{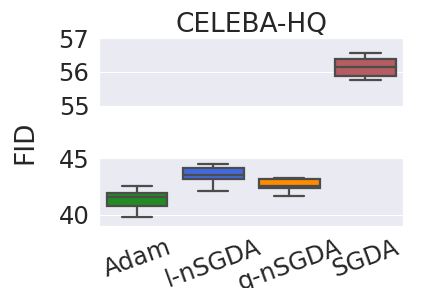}
       %  \vspace{-.3cm}
        \caption*{(d) Celeba-HQ}
    \end{minipage} 
\caption{\small 
FID scores obtained when training a DCGAN using Adam, lnSGDA, gnSGDA and SGD on different datasets. In all these datasets, lnSGDA, gnSGDA and Adam perform approximately as well. As expected, SGDA performs much worse than the other optimizers. The models are trained with batch-size 64 --which is the usual batch-size used for DCGAN.} 
\end{figure}

In this section, we display the images obtained when training the Resnet WGAN-GP model from \autoref{sec:num_exp}.

\begin{figure}[!h]
\begin{subfigure}{0.33\textwidth}
 \includegraphics[width=.95\linewidth]{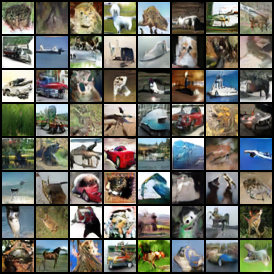} \caption{Adam} 
\end{subfigure}
\begin{subfigure}{0.33\textwidth}
%\hspace{.1cm}
 \includegraphics[width=.95\linewidth]{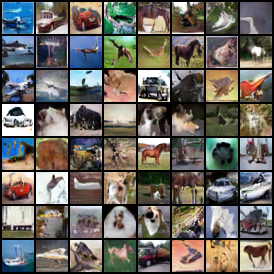}
\caption{lnSGDA} 
\end{subfigure}
\begin{subfigure}{0.33\textwidth}
\hspace{.1cm}

\includegraphics[width=.95\linewidth]{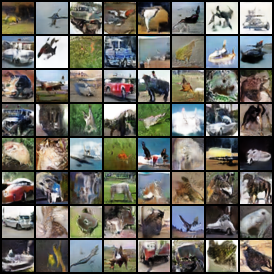}
\caption{gnSGDA} 
\end{subfigure}
\caption{\small  CIFAR-10 images generated by a Resnet WGAN-GP model} 
\end{figure}

\begin{figure}[!h]
\begin{subfigure}{0.33\textwidth}
 \includegraphics[width=.95\linewidth]{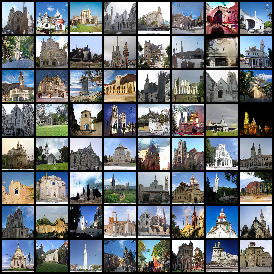} \caption{Adam} 
\end{subfigure}
\begin{subfigure}{0.33\textwidth}
%\hspace{.1cm}
 \includegraphics[width=.95\linewidth]{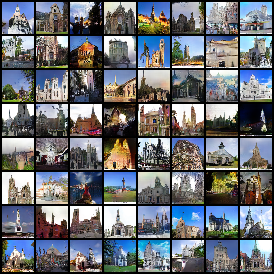}
\caption{lnSGDA} 
\end{subfigure}
\begin{subfigure}{0.33\textwidth}
\hspace{.1cm}

\includegraphics[width=.95\linewidth]{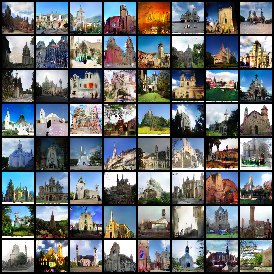}
\caption{gnSGDA} 
\end{subfigure}
\caption{\small  LSUN-Churches images generated by a Resnet WGAN-GP model} 
\end{figure}

\begin{figure}[!h]
\begin{subfigure}{0.33\textwidth}
 \includegraphics[width=.95\linewidth]{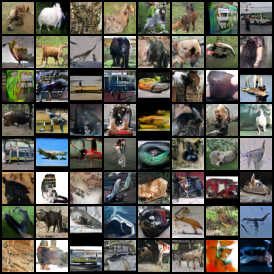} \caption{Adam} 
\end{subfigure}
\begin{subfigure}{0.33\textwidth}
%\hspace{.1cm}
 \includegraphics[width=.95\linewidth]{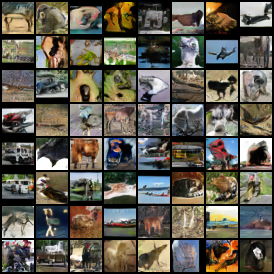}
\caption{lnSGDA} 
\end{subfigure}
\begin{subfigure}{0.33\textwidth}
\hspace{.1cm}

\includegraphics[width=.95\linewidth]{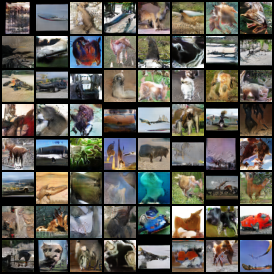}
\caption{gnSGDA} 
\end{subfigure}
\caption{\small  STL-10 images generated by a Resnet WGAN-GP model} 
\end{figure}

\begin{figure}[!h]
\begin{subfigure}{0.33\textwidth}
 \includegraphics[width=.95\linewidth]{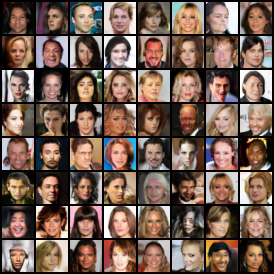} \caption{Adam} 
\end{subfigure}
\begin{subfigure}{0.33\textwidth}
%\hspace{.1cm}
 \includegraphics[width=.95\linewidth]{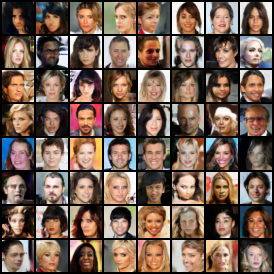}
\caption{lnSGDA} 
\end{subfigure}
\begin{subfigure}{0.33\textwidth}
\hspace{.1cm}

\includegraphics[width=.95\linewidth]{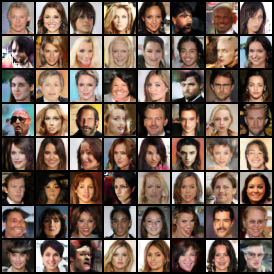}
\caption{gnSGDA} 
\end{subfigure}
\caption{\small  Celeba-HQ images generated by a Resnet WGAN-GP model} 
\end{figure}

\newpage
\section{Technical statements in the theory section}\label{app:technical}

In this section, we provide the technical version of the statements made in \autoref{sec:theory}.

\subsection{Setting}

The distribution $p_{data}$ we consider is more general than \autoref{ass:data} in the main paper.

\begin{assumption}[Data structure]\label{ass:data_technical} Let $\gamma=\frac{1}{\mathrm{polylog}(d)}$. The coefficients $s_1,s_2$ and modes $u_1,u_2$ of the distribution $p_{data}$ respect one of the following conditions: 
\begin{enumerate}[label=\textnormal{\arabic*.}]
    \item Correlated modes:  $\langle u_1, u_2 \rangle = \gamma$ and the generated data point is either $X=u_1$ or $X=u_2.$\label{ass:corrmode}
    
   \item Correlated coefficients: $\mathbb{P}[s_1=s_2=1]=\gamma$ and the modes are orthogonal, $\langle u_1, u_2\rangle =0$. %\hspace{.2cm}
\end{enumerate}
\end{assumption}

We now present a more technical version of \autoref{ass:latent_dist}.
\begin{assumption}[$p_{z}$ structure]\label{ass:latent_dist_app}
Let $z\sim p_{z}$.  
We assume that for $i,j\in[m_G]$, 
\begin{equation}
    \begin{aligned}\label{eq:latent_distribution}
    \Pr[z_i = 1 ] &= \Theta\left(\frac{1}{m_G} \right),\; \Pr[z_i= z_j = 1] &= \frac{1}{m_G^2\mathrm{polylog}(d)}
\end{aligned}
\end{equation}
\end{assumption}
 %$i, j \in [m_G]$ as
%$i,j\in [m_G]$ and $i\neq j$ as
%\vspace{-.3cm}

We set $\Pr[z_i= z_j = 1] = \frac{1}{m_G^2\mathrm{polylog}(d)}$ to ensure that that the output of the generator is only made of one mode with probability $1 - o(1)$.

In the proof, we actually consider a more complicated version of the discriminator  
\begin{equation}
\hspace*{-.2cm}
    \begin{aligned}\label{eq:disc_app}
    D_{\mathcal{W}}(X)&=\mathrm{sigmoid}\left( a   \sum_{i \in [m_D]} \sigma(\langle w_i, X \rangle)   +  \lambda b \right),\; \text{where } \sigma(z)&=\begin{cases}
                 z^3 & \text{if } |z|\leq \Lambda\\
                 3\Lambda^2z-2\Lambda^3 & \text{if } z>\Lambda\\
                 3\Lambda^2z+2\Lambda^3 & \text{otherwise}
              \end{cases},
\end{aligned}
\end{equation}

where $\Lambda=d^{0.2}$.  $\sigma(\cdot)$ is the truncated degree-3 activation function---it is thus made Lipschitz, which is only needed in the proof to \emph{deal} with the case where the generator is trained much faster than the discriminator. Note that this latter case is uncommon in practice.

We now present the technical version of \autoref{parametrization}.
\begin{parametrization}\label{parametrization_technical}
When running SGDA and nSGDA on~\ref{eq:gan_formulation_theory}, we set the parameters as

\vspace{.1cm}
 
   \hspace{.1cm}-- \textbf{Initialization}:   $b^{(0)}=0,\; a^{(0)} \sim \mathcal{N}\left(0, \frac{1}{m_D\mathrm{polylog}(d)} \right), \; w_i^{(0)} \sim \mathcal{N}\left(0, \frac{1}{d}\mathbf{I}\right),\; v_j^{(0)}\sim\mathcal{N}\left(0, \frac{1}{d^2}\mathbf{I}\right) $ for $i\in [m_D]$, $j\in[m_G].$  
   
   \vspace{.1cm}

\hspace{.1cm}-- \textbf{Number of iterations}: we run SGDA for $t\leq T_0$ iterations where $T_0$ is the first iteration such that $\|\nabla \mathbb{E}[L_{\mathcal{V}^{(T_0)},\mathcal{W}^{(T_0)}}(X,z)]\|_2\leq1/\mathrm{poly}(d)$. . 
For nSGDA, we run for $T_1=\Tilde{\Theta}\left(\frac{1}{\eta_D} \right)$iterations.
%so that the algorithm  converges to a locally optimal min-max equilibrium
%\textcolor{red}{Why is it locally optimal?}
   \vspace{.1cm}

\hspace{.1cm}-- \textbf{Step-sizes}: For SGDA, $\eta_D,\eta_G\in (0,\frac{1}{\mathrm{poly}(d)})$. For nSGDA, $\eta_D\in (0,\frac{1}{\mathrm{poly}(d)}]$, $\eta_G=\frac{\eta_D}{\mathrm{polylog}(d)}.$

\vspace{.1cm}

\hspace{.1cm}-- \textbf{Over-parametrization}: For SGDA,  $m_D, m_G = \mathrm{polylog}(d)$ are arbitrarily chosen i.e. $m_D$ may be larger than $m_G$ or the opposite. For nSGDA, we set $m_D = \log(d)$ and $m_G = \log(d)\log\log d$.

\end{parametrization}
 Regarding initialization, the  discriminator's weights are sampled from a standard normal and its bias is set to zero. The weights of the generator are initialized from a normal with variance $1/d^2$ (instead of the $1/d$ in standard normal). Such a choice is explained as follows. In practice, the target $X\sim p_{data}$ is a 1D image, thus has entries in $[0,1]^d$ and norm $O(\sqrt{d}).$ Yet, we sample the initial  generator's weights from $\mathcal{N}(0,\mathbf{I}_d/d)$ in this case.  In our case, since $\|u_i\|_2=1,$ the target $X=s_1u_1+s_2u_2$ has norm $O(1)$. Therefore, we scale down the variance in the normal distribution by a factor of $1/d$ to match the configuration encountered in practice. Therefore, we also set $\lambda = \frac{1}{\sqrt{d}\mathrm{polylog}(d)}$ in (\ref{eq:disc_app}) to ensure that the weights and the bias in the discriminator learn at the same speed.

\textbf{Remark}: In our theory, we consider the global version of nSGDA; $\|\mathbf{g}_{\mathcal{W}}^{(t)}\|_2$ in the update refers to $\|\mathbf{g}_{\mathcal{W}}^{(t)}\|_2 = \|\mathbf{g}_{a}^{(t)}\|_2+\|\mathbf{g}_{b}^{(t)}\|_2+\|\mathbf{g}_{W}^{(t)}\|_2$, where $\mathbf{g}_{a}^{(t)}$ is the stochastic gradient with respect to $a$, $\mathbf{g}_{b}^{(t)}$ with respect to $b$ and $\mathbf{g}_{W}^{(t)}$ with respect to $W.$

\subsection{Main results}

We state now the technical version of \autoref{thm:sgda} and \autoref{thm:nsgda}.

\begin{theorem}[SGDA suffers from mode collapse]\label{thm:sgda_technical} Let $T_0$, $\eta_G,\eta_D$ and the initialization as defined in \autoref{parametrization_technical}. Let $t$ be such that $t\leq T_0.$ Run SGDA for $t$ iterations with step-sizes $\eta_G,\eta_D$. Then, with probability at least $1 - o(1)$, for all $z \in \{0, 1\}^{m_G}$, we have:

\vspace*{-.5cm}

\begin{align*}
    G_{\mathcal{V}}^{(t)}(z) = \alpha^{(t)}(z) (u_1 + u_2) + \xi^{(t)}(z),
\end{align*}
where $\alpha^{(t)}(z) \in \mathbb{R}\; \text{ and }  \;\xi^{(t)}(z ) \in \mathbb{R}^d$ and for all $\ell \in [2]$, $|\langle \xi^{(t)}(z), u_{\ell} \rangle| = o(1)\|\xi^{(t)}(z)\|_2$ for every $z \in \{0,1\}^{m_G}$.
%that for all $i\in[m_G]$% satisfies one of the following for $i\in [m_G]$: %we must have one of the following cases:  For every $i \in [m_G]$:

%\vspace{.1cm}

%\hspace{.1cm} 1.  \mbox{either does not learn anything:  for every  $j\in[2]$, $|\langle u_j,  v_i^{(t)}  \rangle |\leq \tilde{O}(1/\sqrt{d}) \|v_i^{(t)}\|_2. $} \\

%\vspace{-.1cm}

% \hspace{.1cm}   2. or learns a weighted average of the modes i.e. $v_i^{(t)} = \alpha_1^{(t)} u_1 + \alpha_2^{(t)} u_2 + \beta^{(t)}$, where $\beta^{(t)}\in \mathbb{R}^d$ is such that $\beta^{(t)}\perp u_1,  u_2$ and $\alpha_1^{(t)},\alpha_2^{(t)}>0$ such that $\alpha_1^{(t)}\in [(1 - o(1)) \alpha_2^{(t)}, (1 + o(1)) \alpha_2^{(t)}]$.\\
 
 \vspace{-.1cm}

In the specific case where $\eta_D = \frac{\sqrt{d}\eta_G}{\mathrm{polylog}(d)}$,  the model mode collapses i.e. $\|\xi^{(T_0)}(z)\|_2 = o(\alpha^{(T_0)}(z))$. %for every $z \in \{0,1\}^{m_G}$. 

%the model is %guarantee to always output a mode collapsing solution: meaning that $\|\xi_{T_0}(z)\|_2 = o(\alpha_{T_0}(z))$ for every $z \in \{0,1\}^{m_G}$. 

%For all $i \in [m_G]$ $v_i^{(T_0)}= \alpha_{i, 1}^{(T_0)}u_1+\alpha_{i, 2}^{(T_0)}u_2+\beta_i^{(T_0)}$,  where $\alpha_{i, 1}^{(T_0)},\alpha_{i, 2}^{(T_0)}>0,\;\beta_i^{(t)}\in\mathbb{R}^d$ such that $\alpha_{i, 1}^{(T_0)} \in [1-o(1), 1 + o(1)] \alpha_{i, 2}^{(T_0)}$ and  $\|\beta_{i}^{(T_0)}\|_2=o(\alpha_{i, j}^{(T_0)})$ for $j\in[2].$ 

%\textcolor{red}{@Yuanzhi: why is the second paragraph of your theorem (in your draft) interesting? -- Becasue we want to show that for balanced weights, the generator indeed learns a mixture of modes, this corresponds to the mode collapsing of SGDA that we observe in practice}

%Besides, when the step-sizes $\eta_D, \eta_G$ satisfy  $\eta_G = \frac{\sqrt{d}}{\polylog(d) } \eta_D$, then w.p. $\geq 1- o(1)$ for $t = T_0$, we have that \emph{for all }$i \in [m_G]$ we can write
%$v_i^{(t)} = \alpha_1^{(t)} u_1 + \alpha_2^{(t)} u_2 + \beta^{(t)}$, where  $\alpha_i^{(t)} > 1$, $\|\beta_t\|_2 = o(\alpha_i^{(t)})$ and $\alpha_1^{(t)} = (1 \pm o(1)) \alpha_2^{(t)}$. 
\end{theorem}
\autoref{thm:sgda} indicates that with SGDA and any step-size configuration, the generator either does not learn the modes at all -- when $\alpha^{(t)}(z)=0,\;  G_{\mathcal{V}}^{(t)}(z) =\xi^{(t)}(z) $ -- or  learns an average of the modes -- when $\alpha^{(t)}(z)\neq 0,\; G_{\mathcal{V}}^{(t)}(z)  \approx \alpha^{(t)}(z)(u_1 + u_2)$.  

\begin{theorem}[nSGDA recovers  modes separately]\label{thm:nsgda_technical} Let $T_1$, $\eta_G,\eta_D$ and the initialization as defined in \autoref{parametrization_technical}. Run nSGDA for $T_1$ iterations with step-sizes $\eta_G,\eta_D$. Then, the generator learns both modes $u_1,u_2$ i.e., for $\ell\in\{1,2\}$ 
% with probability at least $\tilde{\Omega}(1)$ over the randomness of $z$,

\vspace*{-.4cm}

\begin{equation*}
    \Pr{}_{z\sim p_z}\left( \big\|\tfrac{G_{\mathcal{V}}^{(T_1)}(z)}{\|G_{\mathcal{V}}^{(T_1)}(z)\|_2} - u_{\ell} \big\|_2 = o(1)\right) = \tilde{\Omega}(1).
\end{equation*}
% \resizebox{0.5\hsize}{!}{%
%         \begin{math}
%         \begin{aligned}
%         \left\|\frac{G_{\mathcal{V}}^{(T_1)}(z)}{\|G_{\mathcal{V}}^{(T_1)}(z)\|_2} - u_{\ell} \right\|_2 = o(1),\quad  \text{for every } \ell \in [2].        \end{aligned}
%         \end{math}%      
%     }
%\resizebox{.9\linewidth}{!}{
%\begin{align*}
%    \left\|\frac{G_{\mathcal{V}}^{(T_1)}(z)}{\|G_{\mathcal{V}}^{(T_1)}(z)\|_2} - u_{\ell} \right\|_2 = o(1),\quad  \text{for every } \ell \in [2].
%\end{align*}}

%there exists $i,j\in [m_G]$   such that  $i\neq j$ and
%\begin{align*}
 %   v_i^{(T_1)}&=\alpha_1^{(T_1)}u_1+\beta_1^{(T_1)},\qquad v_j^{(T_1)}=\alpha_2^{(T_1)}u_2+\beta_2^{(T_1)},
%\end{align*}
% where $\alpha_1^{(T_1)},\alpha_2^{(T_1)}>0,\;\beta^{(t)}\in\mathbb{R}^d$ such that $\|\beta_j^{(T_1)}\|_2 =o(\alpha_j^{(T_1)})$ and $\alpha_j^{(T_1)}=\tilde{\Theta}(1)$, for $j\in [2].$
\end{theorem}

\title{Normalized GD vs GD in GAN training}
\author{}
\date{}%\today

%\begin{document}

%\maketitle

\section{Notations}
\label{appendix:theory}

Let us also write $\tau_b = \lambda$ as the scaling factor of the bias. We can easily observe that at every step, all of $w_i^{(t)}$ and $v_i^{(t)}$ lies in the span of $\{ w_j^{(0)}, v_j^{(0)}, u_1, u_2 \}$. Therefore, let us denote 
$$w_i^{(t)} = \sum_{j \in [m_D]} \alpha(w_i,  w_j, t) \frac{w_j^{(0)}}{\| w_j^{(0)} \|_2} +  \sum_{j \in [m_G]} \alpha(w_i,  v_j, t) \frac{v_j^{(0)}}{\| v_j^{(0)} \|_2} +  \sum_{j \in [2]} \alpha(w_i,  u_j, t) \frac{u_j}{\|u_j\|}  $$

and $v_i^{(t)}$ as $\alpha(v_i, *, t)$, where $\alpha(*, *, *) \in \mathbb{R}$.

Let us denote 
$$f(X) = a \left( \sum_{i \in [m_D]} \sigma(\langle w_i, X \rangle) \right) +  \tau_b b $$

as the function in discriminator without going through sigmoid, and define $h(X)=  \sum_{i \in [m_D]} \sigma(\langle w_i, X \rangle)$.

\paragraph{Gradient}

The gradient of $L(X, z)$ is given as:
$$\nabla_a L(X, z) = - \sig(- f(X)) h(X) + \sig(f(G(z))) h(G(z)) $$
$$\nabla_b L(X, z) = - \sig(- f(X))  + \sig(f(G(z)))  $$
$$\nabla_{w_i} L(X, z) = - \sig(- f(X)) a \sigma'(\langle w_i, X\rangle) X + \sig(f(G(z))) a \sigma'(\langle w_i, G(z) \rangle) G(z) $$
$$ \nabla_{v_i} L(X, z) = -  1_{z_i = 1} \sig(f(G(z))) a \sum_{j \in [m_D]} \sigma'(\langle w_i, G(z) \rangle) w_i $$

%Where the scaling factor $s_t = 1$ if $\|G^{(t)}(z)\|_2 \leq \Lambda$, otherwise it ensures that $\|G^{(t+ 1) }(z) \|_2= \|G^{(t)}(z)\|_2$. 

We use $a^{(t)}, b^{(t)}, w_i^{(t)}, v_i^{(t)}$ to denote the value of those weights at iteration $t$.

\textbf{{We use $a = b \pm c$ for $c \in \mathbb{R}^{*}$ to denote: (1). $a \in [b - c, b+c]$ if $a, b \in \mathbb{R}$, (2). $\| a - b \|_2 \leq c$ if $a , b$ are vectors. }}

For simplicity, we focus on the case when all $\Pr[z_i = 1]$ are equal. The other cases can be proved similarly (by replacing the $1/m_G$ factor in the generators update by the exact value of $\Pr[z_i = 1]$). 

\section{Initialization conditions and three regime of learning}

We first show the following Lemma regarding initialization:  

Let $$A_{i, \ell} = \frac{1}{2} \sigma'(\langle w_i^{(0)}, u_{\ell} \rangle) \sign(\langle w_i^{(0)}, u_{\ell} \rangle) $$
and $$B_{i, j} =  \frac{1}{m_G}\sigma'(\langle w_i^{(0)}, v_{j}^{(0)} \rangle) \sign (\langle w_i^{(0)}, v_{j}^{(0)} \rangle)$$
and $$C_{i, \ell} = \sigma'(\langle v_i^{(0)}, u_{\ell} \rangle) $$

Let $A = \max_{i \in [m_D], \ell \in [2]} A_{i, \ell}$, $B = \max_{i \in [m_D], j \in [m_G]} B_{i, j}  $, $C = \max_{i \in [m_G], \ell \in [2]} C_{i, \ell}$, we have: Using a corollary of Proposition $G.1$ in ~\cite{az1}:
\begin{lemma}\label{lem:init0}
For every $\eta_D, \eta_G > 0$, we have that: with probability at least $1 - o(1)$, we have that: $A  = \frac{\polyloglog(d)}{\sqrt{d}}, B = \frac{\polyloglog(d)}{d m_G}$. Moreover, with probability at least $1 - o(1)$, one and only one of the following holds:
\begin{enumerate}
    \item (Discriminator trains too fast): $\eta_G B < \frac{1}{\polylog(d)} \eta_D A$;
    \item (Balanced discriminator and generator): $\eta_G B > \frac{1}{\polylog(d)} \eta_D A$, $\eta_D A > \eta_G B (1 + \frac{1}{\polyloglog( d)})$;
    \item (Generator trains too fast): $\eta_D A < \eta_G B (1 - \frac{1}{\polyloglog(d)})$.
\end{enumerate}
\end{lemma}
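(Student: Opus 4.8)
The plan is to establish the two assertions in turn. For the sizes of $A$ and $B$: at initialization the relevant inner products are (conditionally) Gaussian with explicitly known variances, namely $\langle w_i^{(0)},u_\ell\rangle$ and $\langle v_i^{(0)},u_\ell\rangle$ have variance $\Theta(1/d)$, while $\langle w_i^{(0)},v_j^{(0)}\rangle$ is of strictly smaller order because in $\mathbb{R}^d$ the fixed directions $u_1,u_2$ are almost orthogonal to the low-dimensional span of the random $v_j^{(0)}$, together with the initialization scale of the generator weights. Each $A_{i,\ell}$ and $B_{i,j}$ is then an explicit, essentially $1$-Lipschitz function of such a Gaussian (this uses the behaviour of $\sigma'$ near the origin), and $A$, $B$ are maxima of $O(m_D)$ and $O(m_D m_G)$ of these quantities. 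Substituting the (at most poly-logarithmic) widths $m_D,m_G=\polylog(d)$ into the corollary of Proposition G.1 of~\cite{az1}, which handles exactly such maxima of functions of Gaussian inner products, yields $A=\polyloglog(d)/\sqrt d$ and $B=\polyloglog(d)/(d\,m_G)$, each with probability $1-o(1)$; a union bound combines the two events (the same tool also gives the order of $C$, which is not needed here).

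For the trichotomy, set $\rho:=\eta_D A/(\eta_G B)>0$. The three items say, respectively, $\rho>\polylog(d)$; $1+1/\polyloglog(d)<\rho<\polylog(d)$; and $\rho<1-1/\polyloglog(d)$. Once $d$ is large enough that $\polylog(d)>1+1/\polyloglog(d)$, these three intervals are pairwise disjoint, so \emph{at most} one item holds, and this holds deterministically. It therefore remains to show that, with probability $1-o(1)$, $\rho$ falls in the union of the three intervals, i.e. $\rho\notin[\,1-1/\polyloglog(d),\,1+1/\polyloglog(d)\,]$ (the only other exposed value, $\rho=\polylog(d)$, has probability $0$ because $\rho$ is continuously distributed, or is absorbed by reading item~1 non-strictly).

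This last anti-concentration bound is the crux. I would condition on the generator weights $v_j^{(0)}$ and on all inner products $\langle w_i^{(0)},v_j^{(0)}\rangle$, which determines $B$. Conditionally, each $\langle w_i^{(0)},u_\ell\rangle$ is still Gaussian of variance $\Theta(1/d)$, so $A$ --- a maximum of $\polylog(d)$ benign functions of such Gaussians --- has a conditional density of order $O(\sqrt d)$ near its typical scale $\Theta(\polyloglog(d)/\sqrt d)$ and is concentrated there up to relative error $\Theta(1/\log\log d)$. The event $\rho\in[1-\epsilon,1+\epsilon]$ is precisely the event that $A$ lands in an interval of width $2\epsilon\,\eta_G B/\eta_D$; on the high-probability event from the first part this has length $O(\epsilon\,\polyloglog(d)/\sqrt d)$ when $\eta_G B/\eta_D$ is on the same scale as $A$, and otherwise forces $A$ into a tail of its maximum, so the conditional probability is at most $O(\epsilon\,\polyloglog(d))$. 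Taking $\epsilon=1/\polyloglog(d)$ with this $\polyloglog$ chosen to be a large enough power of $\log\log d$ --- which the statement leaves us free to do --- makes this $o(1)$. Averaging over the conditioning and adding the $o(1)$ failure probability from the first part finishes the proof.

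\textbf{Main obstacle.} The delicate point is this final anti-concentration, for two intertwined reasons. First, $A$ and $B$ are \emph{not} independent, since both are built from the same hidden weights $w_i^{(0)}$; this forces the conditioning argument and the near-orthogonality estimate for $\langle u_\ell,v_j^{(0)}\rangle$. Second, the window in items 2--3 must be chosen narrow enough --- a sufficiently large inverse power of $\log\log d$, strictly smaller than the $\Theta(1/\log\log d)$ relative fluctuation of the Gaussian maximum --- to make the bad event have probability $o(1)$, yet wide enough that the separation $|\rho-1|\ge 1/\polyloglog(d)$ is still usable when regimes~2 and~3 are analyzed downstream; reconciling these is where the exact form of the corollary of Proposition G.1 of~\cite{az1} is really used.
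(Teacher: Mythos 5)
The paper does not actually supply a proof of this lemma; it is presented as a direct consequence of a corollary of Proposition~G.1 in~\cite{az1}, and the text gives no further derivation. Your proposal takes the same starting point — concentration and anti-concentration of maxima of (transformed) Gaussian inner products, as packaged by that corollary — and then fleshes out the trichotomy by a disjoint-intervals observation plus an anti-concentration bound for the ratio $\rho=\eta_D A/(\eta_G B)$. The overall architecture is sound: for the size estimates you correctly reduce $A$ and $B$ to maxima over $\polylog(d)$ indices; for the trichotomy the key is exactly the anti-concentration of $A$ in a multiplicative window around $\eta_G B/\eta_D$, and handling the $A$–$B$ dependence by conditioning on the generator weights and the cross inner products $\langle w_i^{(0)},v_j^{(0)}\rangle$ is the right move, since $u_1,u_2$ have only $\tilde O(1/d)$ projection onto $\operatorname{span}\{v_j^{(0)}\}$ so the relevant conditional law of $\langle w_i^{(0)},u_\ell\rangle$ is barely perturbed.

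Two small inaccuracies worth fixing, neither of which changes the conclusion. First, $A_{i,\ell}$ is not ``essentially $1$-Lipschitz'' in the Gaussian variable: with $\sigma'(x)=3x^2$ near the origin one has $A_{i,\ell}=\tfrac32\langle w_i^{(0)},u_\ell\rangle^2\,\mathrm{sign}(\cdot)$, which is quadratic. The correct bookkeeping is that $A$ is (up to constants) the square of a Gaussian maximum; its density at the typical location $\Theta(\log\log d/d)$ is $\Theta(d/\sqrt{\log\log d})$, and the window width is $\Theta(\epsilon\,\log\log d/d)$, giving the same $O(\epsilon\sqrt{\log\log d})$ conditional probability you arrive at. Second, the dichotomy between the ``same scale'' and ``tail'' cases for $\eta_G B/\eta_D$ should be stated as an either/or bound — $O(\epsilon\,\polyloglog(d))$ plus an additive $o(1)$ tail term from the concentration of the max — rather than a single multiplicative bound, but this again only affects the exposition. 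With those adjustments, the proposal is a reasonable reconstruction of the argument the paper elides by citing~\cite{az1}.
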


This Lemma implies that in case 2, $\eta_G = \tilde{\Theta}(\sqrt{d}) \eta_B$.

We will show the following induction hypothesis for each case. 
Intuitively, in case one we have the following learning process: (too powerful $D$).
\begin{enumerate}
\item At first $D$ starts to learn, then because of the learning rate of $G$ is too small, so $D$ just saturate the loss to make the gradient to zero. 
\end{enumerate}

In case two we have: (``balanced'' $D$ and $G$ but still not enough).
\begin{enumerate}
\item At first $D$ starts to learn one $u_j$ in each of the neuron.
\item However, the generator still could not catch up immediate after $D$ learns one $u_j$, so $D$ starts to a mixture of $u_1, u_2$ in its neurons since $u_1, u_2$ are  positively correlated. 

\item After that $G$ starts to learn, however since $D$ already stuck at the mixtures of $u_1, u_2$, so $G$ is only able to learn mixtures of $u_1, u_2$ as well. 
\end{enumerate}

In case three we have: (Too powerful $G$)
\begin{enumerate}
\item $G$ starts to learn without $D$ learning any meaningful signal yet, so $G$ aligns its outputs with the (close to random) weights of $D$ and just pushes the discriminator to zero. In this case, $G$ simply learns something random to fool $D$ instead  of learning the signals. 
\end{enumerate}

Moreover, similar to Lemma~\ref{lem:init0}, we also have the following condition regarding the gap between the top one and the second largest one in terms of correlation:
\begin{lemma} \label{lem:init}
Let $$i_D, \ell_D = \argmax_{i \in [m_D], \ell \in [2]} A_{i, 
\ell} $$

Let $$i_G, j_G = \argmax_{i \in [m_D], j \in [m_G]}B_{i, j} $$

%Let $$r_G =  \argmax_{i \in [m_G]} C_i $$

Then with probability at least $1 - o(1)$ over the random initialization, the following holds:
$$ \forall i, \ell \not= i_D, \ell_D: A_{i_D, \ell_D} \geq  A_{i, \ell} \left(1 + \frac{1}{\polyloglog(d)} \right)$$
$$ \forall i, j \not= i_G, j_G: B_{i_G, j_G} \geq  B_{i, j} \left(1 + \frac{1}{\polyloglog(d)} \right)$$

and
$$A = \frac{\Theta(\sqrt{\log \log (d)})}{\sqrt{d}}, \quad B, C = \frac{\Theta(\sqrt{\log \log (d)})}{d}$$
%$$ \forall i \not= r_G: C_{r_G} \geq  C_{i} \left(1 + \frac{1}{\polyloglog(d)} \right)$$
\end{lemma}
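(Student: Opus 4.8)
As with Lemma~\ref{lem:init0}, this is purely a statement about the random initialization, and the plan is to deduce it from (a corollary of) Proposition~G.1 of~\cite{az1}, which controls the maximum and the top-two spacing of a family of correlated Gaussians. The first step is the reduction: under the initialization the inner products $\langle w_i^{(0)},u_\ell\rangle$, $\langle w_i^{(0)},v_j^{(0)}\rangle$ and $\langle v_i^{(0)},u_\ell\rangle$ are, after conditioning on the (tightly concentrated) norms of the random neurons, centered and jointly Gaussian with explicit variances, and whp all of them are $o(1)$; on that scale the activation-derivative factors $\sigma'(z)\sign(z)$ and $\sigma'(z)$ appearing in $A_{i,\ell},B_{i,j},C_{i,\ell}$ are, up to constant factors, linear in the relevant inner product (resp. in its absolute value), by the near-origin behaviour of $\sigma'$ recorded in~\cite{az1}. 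Hence $\{A_{i,\ell}\}_{i\le m_D,\ell\le 2}$, $\{B_{i,j}\}_{i\le m_D,j\le m_G}$ and $\{C_{i,\ell}\}_{i\le m_G,\ell\le 2}$ become, up to constants, absolute values of jointly Gaussian families with known scales, to which Proposition~G.1 applies.

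Given this reduction, the magnitude claims $A=\Theta(\sqrt{\log\log d})/\sqrt d$ and $B,C=\Theta(\sqrt{\log\log d})/d$ follow from the standard two-sided estimate for the maximum of $N$ (block-)independent centered sub-Gaussians of scale $s$: with probability $1-o(1)$ this maximum lies in $[c_1,c_2]\cdot s\sqrt{\ln N}$, the upper bound by a union bound on the Gaussian tail and the lower bound because the events $\{X_i\ge c_1 s\sqrt{\ln N}\}$ each have probability $\gg 1/N$. Substituting the initialization variances and the widths $m_D,m_G$ and simplifying yields the stated orders, the extra $1/\sqrt d$ in $B$ and $C$ relative to $A$ coming from the scale of the generator neurons together with the $1/m_G$ prefactor in $B$. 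For the gap claims I would invoke the companion anti-concentration estimate: for $N$ i.i.d.\ Gaussians of scale $s$ one has $\Pr[X_{(1)}-X_{(2)}<\varepsilon\, s/\sqrt{\ln N}]=O(\varepsilon)$, so taking $\varepsilon=o(1)$ gives, whp, a top-two additive gap at least $s/(\sqrt{\ln N}\cdot\omega(1))$, hence a multiplicative gap $1+\Omega\big(1/(\ln N\cdot\omega(1))\big)=1+1/\polyloglog(d)$, and a fortiori the maximiser beats \emph{every} other entry by this factor; since $z\mapsto\sigma'(z)\sign(z)$ is monotone and up to constants linear on the $o(1)$ window, the same relative gap transfers to $\{A_{i,\ell}\}$ and $\{B_{i,j}\}$. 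A union bound over the $O(1)$ relevant events (three magnitudes, two gaps) keeps the total failure probability $o(1)$.

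The main obstacle is the dependence structure inside these extreme-value and spacing arguments. The two entries $A_{i,1},A_{i,2}$ of a fixed neuron, and --- more seriously --- the entire row $\{B_{i,j}\}_j$, are all functions of a single $w_i^{(0)}$, so one cannot treat the $2m_D$ (resp.\ $m_Dm_G$) variables as independent when arguing that the top entry is isolated; this is exactly what Proposition~G.1 is designed to handle, but it must be imported in the correct quantitative form rather than via a soft heuristic. The clean route is to condition on which neuron attains the maximum and split into the case ``runner-up in the same neuron'' --- a $2\times 2$ correlated-Gaussian computation for $A$ (two coordinates against an $m_G$-long row for $B$), where the \emph{positive} correlation $\langle u_1,u_2\rangle$ must be used to exclude both coordinates of one neuron being simultaneously near-extremal --- versus ``runner-up in a different neuron'', where the blocks are genuinely independent, and then take the worse bound. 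The residual bookkeeping (relaxing the true $\Theta(1/\ln N)$ relative spacing to the weaker stated $1/\polyloglog(d)$, and matching the $\Theta(\sqrt{\log\log d})$ constants against the paper's conventions for $m_D,m_G$) is routine.
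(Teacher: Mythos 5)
The paper gives no actual proof of this lemma; both here and for Lemma~\ref{lem:init0} it simply defers to a corollary of Proposition~G.1 of~\cite{az1}. Your sketch --- reduce the $A$, $B$, $C$ families to (block-correlated) Gaussian inner-product families, then apply extreme-value concentration for the magnitudes and top-two anti-concentration for the gaps, with Proposition~G.1 absorbing the within-block dependence --- is the natural reconstruction of that argument, so the overall route matches the paper's intent.

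Two points in your reduction are not right as written, though neither is fatal. First, the near-origin behaviour of $\sigma'$ in this paper is quadratic, not linear: the Stage~1 update formula gives $\sigma'(\langle w_i,X\rangle)=3\langle w_i,X\rangle^2$ at initialization scale, so $\sigma'(z)\sign(z)=3z\lvert z\rvert$ and $\sigma'(z)=3z^2$, which means $A_{i,\ell}$ and $B_{i,j}$ are \emph{odd-quadratic} and $C_{i,\ell}$ is \emph{quadratic} in the underlying Gaussian inner product, not linear. This does not affect the argmax identification or the gap claim --- a monotone power map preserves the argmax, and a relative spacing of order $1/\ln N$ among the Gaussians becomes a relative spacing of the same order after squaring --- but it does change the magnitude reduction: the maximum of the transformed family scales like $s^2\ln N$ rather than $s\sqrt{\ln N}$, so your calibration of the $\sqrt{\log\log d}$ factor against the initialization variance and the widths $m_D,m_G$ has to be redone under the correct power. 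Second, the remark that the positive correlation $\langle u_1,u_2\rangle$ ``must be used to exclude both coordinates of one neuron being simultaneously near-extremal'' has the logic reversed: positive correlation between $\langle w_i,u_1\rangle$ and $\langle w_i,u_2\rangle$ makes simultaneous extremality \emph{more} likely, and the within-row isolation has to come from the orthogonal residual of $u_2$ against $u_1$ (write $\langle w_i,u_2\rangle=\rho\langle w_i,u_1\rangle+\sqrt{1-\rho^2}\,Z$ with $Z$ independent, and run anti-concentration on $Z$), not from the correlation itself. With those two corrections the sketch is consistent with what Proposition~G.1 of~\cite{az1} is meant to deliver.
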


For simplicity, we also define $i^* = i_D$.

\section{Critical Lemma}

The proof heavily relies on the following Lemma about tensor power method, which is a corollary of Lemma C.19 in~\cite{az1}.
\begin{lemma} \label{lem:TPM}
For every $\delta \in (0, 0.1)$, every $ C > 10$ , for every sequence of $x_t, y_t > 0$ such that $x_0 > (1 + 10\delta) y_0 $, suppose there is a sequence of $S_t \in [0, C]$ such that  for $\eta \in \left(0, \frac{1}{\poly(C/ \delta)} \right)$:
$$x_{t + 1} \geq x_t + \eta S_t x_t^2$$
$$y_{t} \leq y_t + \eta S_t (1 + \delta) y_t^2$$

For every $\tau > 0$, let $T_0$ be the first iteration where $x_{t} > \tau$, then we must have:
$$y_{T_0} \leq \frac{y_0}{\poly(\delta)}$$

Moreover, if all $S_t \geq H$ for some $H > 0$, then $T_0 \leq O\left( \frac{1}{\eta H x_0} \right)$.

\end{lemma}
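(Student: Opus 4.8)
The plan is to pass to reciprocals and run two telescoping estimates in parallel: a (lossless) lower bound on $1/y_t$, and an upper bound on the cumulative ``learning budget'' $R_t:=\eta\sum_{s<t}S_s$ that $x$ must accumulate before it can cross $\tau$. The conclusion then falls out of the head start $x_0>(1+10\delta)y_0$, exactly as in the tensor power method. (If $x_0\ge\tau$ then $T_0=0$ and the claim is immediate since $\poly(\delta)<1$; otherwise we assume $T_0$ is finite, which in particular holds under the ``moreover'' hypothesis $S_t\ge H>0$.) For the $y$ half: from the $y$-recursion $y_{t+1}\le y_t\bigl(1+\eta S_t(1+\delta)y_t\bigr)$ and the exact inequality $\tfrac{1}{1+u}\ge 1-u$, we get $\tfrac{1}{y_{t+1}}\ge\tfrac{1}{y_t}-\eta S_t(1+\delta)$, and telescoping gives
\[
\frac{1}{y_t}\ \ge\ \frac{1}{y_0}-(1+\delta)\,R_t\qquad\text{for every }t .
\]

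For the $x$ half we need an \emph{upper} bound on $R_{T_0}$. For $t<T_0$ we have $x_t\le\tau$, so with $u=\eta S_tx_t\le\eta C\tau$ and the elementary bound $\tfrac{1}{1+u}\le 1-u+u^2=1-u(1-u)$, the $x$-recursion $x_{t+1}\ge x_t(1+\eta S_tx_t)$ yields
\[
\frac{1}{x_{t+1}}\ \le\ \frac{1}{x_t}-\eta S_t\,(1-\eta C\tau).
\]
Telescoping up to $t=T_0$ and using $x_{T_0}>0$ gives $R_{T_0}\le \dfrac{1}{x_0(1-\eta C\tau)}$; once $\eta C\tau\le\delta/4$ (which, since $\tau$ is polylog-bounded in every application here, follows from $\eta<1/\poly(C/\delta)$) this is $R_{T_0}\le(1+\delta/2)/x_0$.

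Combining the two bounds at $t=T_0$ and using $x_0>(1+10\delta)y_0$,
\[
\frac{1}{y_{T_0}}\ \ge\ \frac{1}{y_0}-(1+\delta)\frac{1+\delta/2}{x_0}\ \ge\ \frac{1}{y_0}\left(1-\frac{(1+\delta)(1+\delta/2)}{1+10\delta}\right)\ \ge\ \frac{c\,\delta}{y_0}
\]
for an absolute constant $c>0$, the last step using $\delta\in(0,0.1)$ to bound $\tfrac{(1+\delta)(1+\delta/2)}{1+10\delta}\le 1-c\delta$. Hence $y_{T_0}\le y_0/(c\delta)=y_0/\poly(\delta)$; as a by-product the chain shows $1/y_{T_0}>0$, i.e.\ $y$ does not blow up before $T_0$. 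For the ``moreover'' part, $S_t\ge H$ gives $R_{T_0}\ge\eta HT_0$, while the bound above gives $R_{T_0}\le 2/x_0$, so $T_0\le 2/(\eta Hx_0)=O\!\bigl(1/(\eta Hx_0)\bigr)$.

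The only genuinely delicate point is the upper bound on $R_{T_0}$: the entire argument lives inside the $\Theta(\delta)$ margin created by $x_0>(1+10\delta)y_0$, so the cumulative discretization error $\sum_{t<T_0}\eta^2S_t^2x_t\le\eta C\tau\,R_{T_0}$ must be shown to inflate $R_{T_0}$ by only a multiplicative $1+O(\delta)$. This is what forces $\eta$ to be polynomially small (and, strictly, mildly $\tau$-dependent; in the regime of interest $\tau$ is polylog-bounded, so it is absorbed into $\eta<1/\poly(C/\delta)$). A slightly more careful accounting — peeling off the overshoot step, or running the recursion in the rescaled ``time'' $R_t$ — removes this cosmetic $\tau$-dependence, but is not needed here; everything else is exact telescoping with no hidden slack.
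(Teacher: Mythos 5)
The paper does not prove this lemma; it is stated as a corollary of Lemma C.19 in \cite{az1} with no argument given, so there is no in-paper proof to compare against. Your proof is the standard and correct argument for tensor-power-method comparison: pass to reciprocals, telescope $1/y_t \geq 1/y_0 - (1+\delta)R_t$ from $1/(1+u)\geq 1-u$, upper-bound the accumulated budget $R_{T_0}$ by telescoping $1/x_t$ with the refined estimate $1/(1+u)\leq 1-u+u^2$, and collect the $\Theta(\delta)$ head start from $x_0>(1+10\delta)y_0$. The ``moreover'' clause via $R_{T_0}\geq \eta H T_0$ is also correct, and you correctly note that $T_0=0$ when $x_0\geq\tau$.

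You are right, and honest, to flag the $\tau$-dependence as the one genuinely delicate point: the step $R_{T_0}\leq 1/(x_0(1-\eta C\tau))$ needs $\eta C\tau<1$ (indeed $\eta C\tau\lesssim\delta$), which the stated hypothesis $\eta<1/\poly(C/\delta)$ does not on its face guarantee for arbitrary $\tau>0$. This is not a cosmetic issue: once $\eta S_t x_t\gtrsim 1$, each step costs a full $\eta C$ of budget but only shrinks $1/x_t$ by $O(1/x_t)$, so for $\tau$ super-exponentially large one can force $R_{T_0}$ to exceed $(1+O(\delta))/x_0$ and the reciprocal bound on $y_{T_0}$ degenerates. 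Strictly read, the lemma statement itself (``for every $\tau>0$'' with $\eta$ constrained only through $C,\delta$) is therefore too strong; the constraint on $\eta$ must also absorb a mild $\log\log\tau$ (and $x_0$) term, which it does in every application here since $\tau=\tilde{O}(\poly(d))$ and $\eta\leq 1/\poly(d)$. Your proof is thus correct in the regime the paper actually uses the lemma, and the overshoot bookkeeping you sketch (peeling the last step, or working in the rescaled time $R_t$) is the right way to tighten it if one wanted a fully $\tau$-uniform statement over the bounded range; everything else is exact and has no hidden slack.
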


Similar to the Lemma above, one can easily show the following auxiliary Lemma:
\begin{lemma} \label{lem:powerful233}

Suppose there are sequences $a_t, b_t \in \mathbb{R}^d$ such that $a_0, b_0 > 0$ with $a_0 < 0.82 b_0$. Suppose there exists a sequence of $C_t \in (0, d) $  such that 
$$a_{t + 1} \leq a_t -  \eta_D C_t b_t $$
$$b_{t + 1} \geq b_t -  1.0000001 \eta_D C_t a_t $$

Then we must have that  for every $t \leq T$ where $T$ is the first iteration such that $a_T \leq 0 $, then the following holds:
$$a_t = a_0 - \Theta\left( \eta \sum_{s \leq t - 1} C_s \right) $$
$$\sum_{s \leq t} |a_t C_t \eta_D | \leq 0.49 b_0$$

Moreover, if in addition that $a_0 < \frac{1}{C} b_0$ for any $C > 100$, then we must have:
$$\sum_{s \leq t} |a_t C_t \eta_D | \leq \frac{10}{C} b_0$$

\end{lemma}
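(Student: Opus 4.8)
The plan is to treat the recursion as a discretized, one-sided version of the rotation-type flow $\dot a = -C(t)b$, $\dot b = -\kappa C(t) a$, where we write $\kappa := 1.0000001$; this flow conserves $b^2 - \kappa a^2$, drives $a$ monotonically to $0$, and satisfies $\int_0^{T} C a\, dt = \kappa^{-1}\bigl(b_0 - \sqrt{b_0^2 - \kappa a_0^2}\bigr)$. Reading $a_t, b_t$ as positive reals (consistent with $a_0, b_0 > 0$, $a_0 < 0.82\, b_0$), the two conclusions are the discrete shadows of this: the first says $a$ decreases at rate $\Theta(\eta_D C_t b_t)$ with $b_t = \Theta(b_0)$, and the second is the bound on $\int C a\, dt$, which for $a_0 < 0.82\, b_0$ evaluates to below $0.43\, b_0 < 0.49\, b_0$. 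I would run the argument through two Lyapunov quantities: the linear $b_t - \kappa a_t$, to keep $a_t, b_t$ in the sign regime where the squared updates behave, and the quadratic $b_t^2 - \kappa a_t^2$, the discrete conserved quantity, which pins $b_t$ from below.

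First, by induction on $t \le T$, I would show $b_t - a_t > 0$ and $b_t - \kappa a_t \ge b_0 - \kappa a_0 > 0$ (the strict positivity using $a_0 < 0.82\, b_0 < \kappa^{-1} b_0$): taking the $b$-update minus $\kappa$ times the $a$-update gives $b_{t+1} - \kappa a_{t+1} \ge (b_t - \kappa a_t) + \kappa\eta_D C_t(b_t - a_t) \ge b_t - \kappa a_t$, and then $b_{t+1} - a_{t+1} \ge (b_0 - \kappa a_0) + (\kappa - 1)a_{t+1} > 0$ for $t+1 \le T-1$ (the terminal index is harmless because in the polynomially-small-$\eta_D$ regime in which the lemma is used one has $a_T = -o(1)$). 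In particular $b_t \ge \kappa a_t > 0$ and $a_t$ is strictly decreasing, so $T$ is finite. Using $b_t \ge \kappa a_t$ and $\eta_D C_t < \eta_D d < 1$ to see $b_t - \kappa\eta_D C_t a_t \ge 0$ and $a_t - \eta_D C_t b_t \ge 0$, I square both one-sided updates and combine to get
$$b_{t+1}^2 - \kappa a_{t+1}^2 \;\ge\; (b_t^2 - \kappa a_t^2)\,(1 - \kappa\eta_D^2 C_t^2),$$
and then $\prod_{s<t}(1 - \kappa\eta_D^2 C_s^2) \ge 1 - \kappa\eta_D^2\sum_{s<t}C_s^2 = 1 - o(1)$ (here $\eta_D$ polynomially small and $\sum C_s^2 = \poly(d)$ are used), giving $b_t \ge \sqrt{\kappa a_t^2 + \beta}$ with $\beta := (b_0^2 - \kappa a_0^2)(1 - o(1))$.

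For the first conclusion, summing the $a$-update gives $a_0 - a_t \ge \eta_D\sum_{s<t}C_s b_s \ge \sqrt\beta\,\eta_D\sum_{s<t}C_s$, and $a_0 < 0.82\, b_0$ forces $\sqrt\beta = \Theta(b_0)$, so $a_0 - a_t = \Omega(b_0)\cdot\eta_D\sum_{s<t}C_s$; the matching $O(\cdot)$ holds because in the underlying dynamics the decrease $a_s - a_{s+1}$ is $\Theta(\eta_D C_s b_s)$ with $b_s = \Theta(b_0)$, yielding $a_t = a_0 - \Theta\bigl(\eta_D\sum_{s\le t-1}C_s\bigr)$ (the constant absorbing $b_0$). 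For the movement bound, all terms with $s \le T-1$ are nonnegative and the $a$-update gives $\eta_D C_s a_s \le \frac{a_s}{b_s}(a_s - a_{s+1}) \le \frac{a_s}{\sqrt{\kappa a_s^2 + \beta}}(a_s - a_{s+1})$; since $a \mapsto a/\sqrt{\kappa a^2 + \beta}$ is increasing and bounded and $a_s$ decreases from $a_0$ to $\approx 0$, summing over $s$ is a Riemann sum for $\int_0^{a_0} a(\kappa a^2 + \beta)^{-1/2}\,da = \kappa^{-1}(\sqrt{\kappa a_0^2 + \beta} - \sqrt\beta) = \kappa^{-1}(b_0 - \sqrt{b_0^2 - \kappa a_0^2}) \pm o(1)\,b_0$, the error being $O(\max_s|a_s - a_{s+1}|) = o(1)$ because the per-step changes are polynomially small. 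With $a_0 < 0.82\, b_0$ and $\kappa = 1.0000001$ this is $< b_0(1 - \sqrt{1 - 0.6724\,\kappa}) + o(1)\,b_0 < 0.4277\, b_0 + o(1)\, b_0 \le 0.49\, b_0$ (the terminal term $|a_T|C_T\eta_D = o(1)$ is absorbed). In the "moreover" case $a_0 < b_0/C$, $C > 100$, it is simpler: $\frac{a_s}{\sqrt{\kappa a_s^2 + \beta}} \le \frac{a_0}{\sqrt\beta} = \frac{a_0}{b_0}(1 + o(1)) < \frac{1}{C}(1+o(1))$, so $\eta_D\sum_{s\le t}C_s a_s \le \frac{a_0}{b_0}(1+o(1))\sum_s(a_s - a_{s+1}) \le \frac{a_0^2}{b_0}(1+o(1)) < \frac{b_0}{C^2}(1+o(1)) \le \frac{10}{C}\, b_0$.

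The main obstacle is the constant in the movement bound: the crude estimate $a_s/b_s \le a_0/b_0 < 0.82$ only yields $\eta_D\sum_s C_s a_s < 0.82\, a_0 < 0.67\, b_0$, which is worse than the target $0.49\, b_0$, so one really needs the sharper $b_s$-dependent bound together with the explicit integral above, and controlling the Riemann-sum discretization error (and the terminal overshoot of $a_t$) is exactly where the standing smallness of $\eta_D$ must be invoked. A secondary point of care is getting the direction of the discrete conservation inequality for $b_t^2 - \kappa a_t^2$ right and verifying the sign conditions that license squaring the two one-sided updates.
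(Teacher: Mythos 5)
The paper gives no proof of this lemma at all; it is asserted with the remark ``Similar to the Lemma above, one can easily show the following auxiliary Lemma,'' so there is no argument of record to compare your approach against. Your conservation-law route — tracking $b_t - \kappa a_t$ to keep signs under control and $b_t^2 - \kappa a_t^2$ as the near-conserved quantity — is the natural way to prove this, and the arithmetic ($\kappa^{-1}(b_0 - \sqrt{b_0^2 - \kappa a_0^2}) < 0.4277\,b_0$ when $a_0 < 0.82\,b_0$, and $a_0^2/b_0$ for the ``moreover'' case) checks out; you are also right that the crude $a_s/b_s \le a_0/b_0$ bound is not sharp enough, which justifies going through the integral.

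That said, there are two points you should tighten, both traceable to the fact that the hypotheses are only one-sided. First, the product estimate $\prod_{s<t}(1 - \kappa\eta_D^2 C_s^2) = 1 - o(1)$ leans on a bound for $\eta_D^2\sum C_s^2$, which you get from $b_s \ge \sqrt{\beta}$; but $\beta$ is what the product estimate is supposed to establish. This is resolvable by a simultaneous induction (assume $b_r^2 - \kappa a_r^2 \ge \tfrac12(b_0^2 - \kappa a_0^2)$ for $r \le s$, use that to bound $\sum_{r\le s} C_r$ via the $a$-decrease, conclude the product is $\ge 1/2$ at step $s+1$), but as written the argument is circular. Second, and more substantively, the hypotheses $a_{t+1} \le a_t - \eta_D C_t b_t$ and $b_{t+1} \ge b_t - \kappa\eta_D C_t a_t$ do \emph{not} bound $a_s - a_{s+1}$ from above, so (a) the claimed $\Theta(\cdot)$ in the first conclusion only has the lower-bound half for free, (b) the Riemann-sum error $O(\max_s|a_s - a_{s+1}|)$ is not controlled by small $\eta_D$ alone, and (c) the terminal term $|a_T|C_T\eta_D$ can be arbitrary. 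You flag (a) and (c), but attributing (b) to the smallness of $\eta_D$ is not right: a one-sided hypothesis permits $a_1 = -1$ in a single step no matter how small $\eta_D$ is. In the paper's applications the updates are in fact two-sided (they come from SGD steps of size $\Theta(\eta_D C_t b_t)$), so under the intended reading your argument is correct; but an honest proof should either add the matching upper bound $a_t - a_{t+1} = O(\eta_D C_t b_t)$ (and $b_{t+1} - b_t = O(\eta_D C_t a_t)$) to the hypotheses or state explicitly that this is assumed.
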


In the end, we have the following comparison Lemma, whose proof is obvious:
\begin{lemma}\label{lem:comp}

Suppose $a_t, b_t > 0$ satisfies that $a_0, b_0 \leq 1$, and the update of $a_t, b_t$ is given as: For some values $C> 0$ and $C_t \in [ 0, \poly(d)]$: 
\begin{align}
    a_{t + 1} &= a_t + \eta_D C_t
    \\
    b_{t + 1} &= b_t + \eta_D \left[ \frac{1}{C}, 1 \right] \times C_t
\end{align}

Then let $T$ be the first iteration where $a_T \geq 2C$, we must have:
$$b_T \in \left[ 1, 2C + 1 \right]$$

\end{lemma}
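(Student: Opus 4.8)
The plan is a direct telescoping argument; no tensor‑power‑method machinery is needed here, which is why the authors call the proof obvious. First I would note that since $\eta_D>0$ and $C_t\ge 0$, the sequence $(a_t)$ is nondecreasing, so $T$ is well defined (assume $T<\infty$, otherwise the statement is vacuous), with $a_t<2C$ for all $t<T$ and $a_T\geq 2C$. Telescoping the $a$–recursion gives $a_T-a_0=\eta_D\sum_{s=0}^{T-1}C_s$, and telescoping the $b$–recursion while using that each increment $b_{s+1}-b_s$ lies in $\big[\frac{1}{C}\eta_D C_s,\ \eta_D C_s\big]$ gives
$$\frac{1}{C}\,(a_T-a_0)\ \leq\ b_T-b_0\ \leq\ a_T-a_0 .$$
Thus the whole statement reduces to bracketing $a_T-a_0$, together with the trivial bounds $0<a_0,b_0\leq 1$.

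For the upper bound, $b_T\leq b_0+(a_T-a_0)\leq 1+a_T$; and since $a_{T-1}<2C$ we have $a_T=a_{T-1}+\eta_D C_{T-1}\leq 2C+\eta_D C_{T-1}$, where the overshoot $\eta_D C_{T-1}$ is negligible because $\eta_D$ is taken polynomially small against the polynomial bound $C_t\leq\poly(d)$. Hence $b_T\leq 2C+1$, with room to spare. For the lower bound, $b_T\geq \frac{1}{C}(a_T-a_0)\geq \frac{1}{C}(2C-1)=2-\frac{1}{C}\geq 1$, using that $C\geq 1$ in the regime of interest (indeed $C$ is a growing quantity in the applications of this lemma). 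Combining the two bounds yields $b_T\in[1,2C+1]$.

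The argument has essentially no hard step; the only two points that deserve a word are (i) that the final $a$–update does not overshoot $2C$ by a non‑negligible amount, which is handled by the standing smallness assumption on $\eta_D$ relative to $\poly(d)$, and (ii) that $C$ is bounded below by a constant so that $2-\frac1C\geq 1$; both are guaranteed by the hypotheses on the learning rates and on $C$ as they are instantiated elsewhere in the proof.
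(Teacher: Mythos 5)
Your proof is correct and is exactly the telescoping argument the paper has in mind: the paper gives no written proof (it declares the lemma "obvious"), and a one-step comparison $\frac{1}{C}(a_T-a_0)\le b_T-b_0\le a_T-a_0$ together with $a_T\approx 2C$, $a_0\in(0,1]$, $b_0\in(0,1]$ immediately brackets $b_T$. You correctly flag the two points that need a word — that the final $a$-step cannot overshoot $2C$ by more than $\eta_D\cdot\poly(d)=o(1)$, and that $C\ge 1$ is implicit (the interval $[1,2C+1]$ is degenerate otherwise and every instantiation in the paper has $C=\polylog(d)$ or larger) — which is all the care this lemma requires.
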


Using this Lemma, we can directly prove the following Lemma:
\begin{lemma} \label{lem:powerful}

For every $\eta_D, \eta_G \in \left(0,  \frac{1}{\poly(d)} \right]$ such that $\eta_G = \eta_D \Gamma$ for $\Gamma = \tilde{\Theta}(\sqrt{d})$, suppose there are vectors $p_t, q_{i, t} \in \mathbb{R}^d$ ($i \in [m_G]$) and a value $a_t \in \mathbb{R}, H > 0$ satisfies that for a sequence of $H_{i, t } \in [H, 1]$ for $i\in [m_G]$, $G_t = \tilde{\Theta}(\sum_{i \in [m_G]} H_{i, t})$, a value $\tau = \tilde{O}(d^{-0.5})$, and a vector $\beta_t \in \text{span}\{u_1, u_2 \}$ with $\| \beta_t \|_2 = O(1)$: For all $i \in [m_G]$ and $t \geq 0$:
$$\|q_{i, 0}\|_2  = \tilde{\Theta}(d^{-0.49}),  \| p_0 \|_2 = \log^{\Theta(1)}(d),  0 < a_0 \leq 0.819 \| p_0 \|_2 \quad  \frac{\langle q_{i, 0}, p_0 \rangle}{\| q_{i, 0}\|_2, \| p_{i, 0} \|_2} \geq 1- o(1) $$

$$p_t = p_t - \eta_D  \sum_{i \in [m_G]} G_{i, t}  a_t \sigma'(\langle p_t, q_{i, t} \rangle) q_{i,t} + \tilde{O}(\eta_D a_t \gamma_t ) \beta_t $$
$$a_t = a_t - \eta_D  \sum_{i \in [m_G]}  G_{i, t} \sigma(\langle p_t, q_{i, t} \rangle)  \pm \tilde{O}(\eta_D \gamma_t ) $$
%+ \sum_{j \in [m_G]}\frac{1}{\polylog(d)} \sigma'(\langle p_t, q_{j, t} \rangle) 
$$q_{i, t} = \left(q_{i,t} +\eta_G H_{i, t}  a_t \left( \sigma'(\langle p_t, q_{i, t} \rangle)   + \sum_{j \in [m_G]} \gamma_{i, j, t} \sigma'(\langle p_t, q_{j, t} \rangle )  \right)p_t \pm \eta_G |a_t| \tilde{O}\left(\tau \| q_{i, t} \|_2 \right)^2 \right) $$

In addition, we have: $\gamma_{i, j, t} = \tilde{O}(1)$, and
$$\max_{i \in [m_G]} \| q_{i,t }\|_2 \in \left(0, \frac{1}{\polylog(d)} \right]\cup [\polylog(d), + \infty) \implies \forall i, j \in [m_G], H_{i, t} = \tilde{\Theta}(G_t), \gamma_{i,j, t} = \tilde{\Theta}(1)$$

%Where the scaling factor $r_t >0 $  satisfies: $r_t = 1 $ if $\| q_{i, t} \|_2 \leq O( \Lambda)$, when $\| q_{i, t} \|_2 = \Omega(\Lambda)$ it ensures that $\| q_{i, t + 1} \|_2 = \Theta(\Lambda)$.
%Otherwise it ensures that $\| q_{i, t + 1} \|_2 = \tilde{\Theta}(\Lambda)$.

Then we must have that: let $T$ be the first iteration where $a_T \leq 0$, we have: for every $t \leq T$: there is a scaling factor $\ell_t = \Theta(1)$ such that
$$\| p_t - \ell_t p_0\|_2 \leq o(1) \|p_0 \|_2, \quad \| \Pi_{\text{span}\{u_1, u_2, p_0 \}^{\bot}} (p_t - p_0)\|_2 \leq d^{-0.6} \|p_0 \|_2$$

Moreover, for every $i, j \in [m_G]$, $\|q_{i, t} \|_2 = \tilde{\Theta}(\|q_{j, t} \|_2)$ and
$\| q_{i, T} \|_2 \geq \tilde{\Theta}(\sqrt{\Gamma})$, and as long as $\max_{i \in [m_G]} \|q_{i, t} \|_2 \geq \polylog(d)$, we have that $a_t \|q_{i, t} \|_2 \geq \polylog(d)$.

Moreover,
$$ \| \Pi_{\text{span}\{u_1, u_2, p_0 \}^{\bot}} (q_{i,t} - q_{i, 0})\|_2 \leq d^{-0.6} \|q_{i, t} \|_2$$

%and 
%$$\langle q_{i, t}, p_t \rangle = \tilde{\Omega} ( \| q_{i, t}\|_2 )$$
\end{lemma}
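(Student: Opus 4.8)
The plan is to run one simultaneous induction over $t\in\{0,1,\dots,T\}$. Write $\hat{p}_0:=p_0/\|p_0\|_2$, $P_t:=\langle p_t,\hat{p}_0\rangle$, $Q_{i,t}:=\langle q_{i,t},\hat{p}_0\rangle$, and $\Pi^{\perp}:=\Pi_{\mathrm{span}\{u_1,u_2,p_0\}^{\perp}}$. The inductive hypothesis at time $t$ bundles: (a) $a_t>0$ and $\langle p_t,q_{i,t}\rangle>0$ for all $i$, so each $\sigma'(\langle p_t,q_{i,t}\rangle)=\langle p_t,q_{i,t}\rangle$ and the recursions are affine in the $\hat{p}_0$-coordinate; (b) $P_t=\Theta(\|p_0\|_2)$ (so $\ell_t:=P_t/\|p_0\|_2=\Theta(1)$), the $\mathrm{span}\{u_1,u_2\}$-part of $p_t-P_t\hat{p}_0$ is $o(\|p_0\|_2)$, and $\|\Pi^{\perp}p_t\|_2\le d^{-0.6}\|p_0\|_2$; (c) $\|q_{i,t}\|_2=\tilde{\Theta}(\|q_{j,t}\|_2)$ for all $i,j$, each $\|q_{i,t}\|_2$ is nondecreasing, and $\|\Pi^{\perp}(q_{i,t}-q_{i,0})\|_2\le d^{-0.6}\|q_{i,t}\|_2$; (d) the scalar invariants $P_t^2=\Theta(a_t^2)+\bigl(\|p_0\|_2^2-\Theta(a_0^2)\bigr)$ and $\sum_i Q_{i,t}^2+\tilde{\Theta}(\Gamma)a_t^2=\tilde{\Theta}(\Gamma a_0^2)$, both up to $(1\pm o(1))$. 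The base case is the stated initialization, the $t=0$ versions of (b)--(c) coming from $a_0\le0.819\|p_0\|_2$, the near-alignment $\langle q_{i,0},p_0\rangle/(\|q_{i,0}\|_2\|p_0\|_2)\ge1-o(1)$ (read with $o(1)$ a sufficiently small power of $d$, as the initialization guarantees), and $\|q_{i,0}\|_2=\tilde{\Theta}(d^{-0.49})$ for every $i$.

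For the inductive step I would first push $(p_t,a_t)$. By (a) and $\langle p_t,q_{i,t}\rangle=P_tQ_{i,t}(1\pm o(1))$, the $\hat{p}_0$-projections of the $p$- and $a$-updates form a coupled pair in $(a_t,P_t)$ driven by a common positive factor built from $\sum_iG_{i,t}Q_{i,t}^2$; since $a_t$ is driven down strictly faster than $P_t$ while $a_t<P_t$ and $a_0$ starts below threshold, this is the configuration of Lemma~\ref{lem:powerful233} (equivalently, after a change of variables absorbing the $\sigma'$-factor, of the tensor-power-method Lemma~\ref{lem:TPM}), whose conclusion is that $P_t$ drops by at most a bounded constant before $a_t$ hits $0$, giving $\ell_t=\Theta(1)$ and the first invariant in (d). For the off-$\hat{p}_0$ parts: the $\mathrm{span}\{u_1,u_2\}$-part of $p_t$ accumulates only the drifts $\tilde{O}(\eta_Da_s\gamma_s)\beta_s$ and the (small) $u$-parts of the $q_{i,s}$, whose total is $o(\|p_0\|_2)$ once one uses $|\langle\hat{p}_0,u_\ell\rangle|=\tilde{O}(d^{-1/2})$ (Lemma~\ref{lem:init}), $\|\beta_s\|_2=O(1)$, and the key estimate $\sum_s\eta_Da_s=\tilde{O}(1/\Gamma)$ --- which holds because, by (d), the $q$'s grow \emph{geometrically}, so the $a_s$-factor cancels between numerator and the per-step log-growth-rate and one is left with $\tilde{\Theta}\bigl((\eta_D/\eta_G)\log(Q_T/Q_0)\bigr)=\tilde{\Theta}(1/\Gamma)$. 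And $\Pi^{\perp}p_{t+1}=\Pi^{\perp}p_t-\eta_Da_t\sum_iG_{i,t}\langle p_t,q_{i,t}\rangle\Pi^{\perp}q_{i,t}$, so writing $\Pi^{\perp}q_{i,s}=\Pi^{\perp}q_{i,0}+\Pi^{\perp}(q_{i,s}-q_{i,0})$ and using (c) for the second term and the smallness of $\Pi^{\perp}q_{i,0}$ for the first, together with $\sum_s\eta_Da_sP_s\sum_iG_{i,s}Q_{i,s}^2=O(a_0)$ from the $a$-recursion, gives $\|\Pi^{\perp}p_{t+1}\|_2\le d^{-0.6}\cdot O(a_0)\le d^{-0.6}\|p_0\|_2$. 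This re-establishes (b).

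Next the $q_{i,t}$. By (a)--(b), $q_{i,t+1}=q_{i,t}+\eta_GH_{i,t}a_t\bigl(\langle p_t,q_{i,t}\rangle+\sum_j\gamma_{i,j,t}\langle p_t,q_{j,t}\rangle\bigr)p_t\pm\eta_G|a_t|\tilde{O}(\tau^2\|q_{i,t}\|_2^2)$, and with $p_t=\ell_tp_0\pm o(\|p_0\|_2)$, $\ell_t=\Theta(1)$, the $\hat{p}_0$-increment is $\eta_GH_{i,t}a_tP_t^2(Q_{i,t}+\sum_j\gamma_{i,j,t}Q_{j,t})(1\pm o(1))$, proportional to $Q_{i,t}$, hence geometric growth. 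Comparability $\|q_{i,t}\|_2=\tilde{\Theta}(\|q_{j,t}\|_2)$ propagates because the increments are $\tilde{\Theta}(\cdot)$ of one another: in the two extreme regimes $\max_i\|q_{i,t}\|_2\notin(1/\polylog(d),\polylog(d))$ the hypothesis supplies $H_{i,t},\gamma_{i,j,t}=\tilde{\Theta}(\cdot)$ directly, in the middle regime all $Q_{i,t}=\tilde{\Theta}(1)$ by carried-over comparability (this is also where Lemma~\ref{lem:comp} is used, to hold the $\|q_{i,t}\|_2$ within a constant factor of each other across the window). Summing $Q_{i,t+1}^2-Q_{i,t}^2$ over $i$ and telescoping against $a_0^2-a_t^2$ from the $a$-recursion --- here $\eta_G=\Gamma\eta_D$ and $\sum_iG_{i,t}Q_{i,t}^2\asymp(\sum_iH_{i,t})\max_iQ_{i,t}^2$ enter --- gives the second invariant in (d); at $t=T$, $a_T\le0$ forces $a_T\approx0$, so $\sum_iQ_{i,T}^2=\tilde{\Theta}(\Gamma a_0^2)=\tilde{\Theta}(\Gamma)$ (using $a_0=\tilde{\Theta}(\|p_0\|_2)$), hence $\|q_{i,T}\|_2=\tilde{\Theta}(\sqrt{\Gamma})$ by comparability. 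The bound $a_t\|q_{i,t}\|_2\ge\polylog(d)$ whenever $\max_i\|q_{i,t}\|_2\ge\polylog(d)$ is read off the same invariant, which makes $a_t^2Q_{i,t}^2\asymp Q_{i,t}^2\bigl(\tilde{\Theta}(\Gamma a_0^2)-\sum_jQ_{j,t}^2\bigr)/\Gamma\ge\polylog(d)$ on the relevant range. Finally $\Pi^{\perp}(q_{i,t}-q_{i,0})=\sum_{s<t}\eta_GH_{i,s}a_sP_s(Q_{i,s}+\sum_j\gamma_{i,j,s}Q_{j,s})(1\pm o(1))\Pi^{\perp}p_s\pm\sum_{s<t}\eta_G|a_s|\tilde{O}(\tau^2\|q_{i,s}\|_2^2)$; the first sum is $\tilde{\Theta}(Q_{i,t})\cdot\max_s\|\Pi^{\perp}p_s\|_2/\|p_0\|_2$ (converting the weighted sum via the $q$-recursion $\sum_{s<t}\eta_GH_{i,s}a_sP_s^2(Q_{i,s}+\cdots)=Q_{i,t}-Q_{i,0}$) and the second, again by geometric growth, is $\tilde{\Theta}(\tau^2\|q_{i,t}\|_2^2)$ with $\tau^2\|q_{i,t}\|_2\le\tilde{O}(d^{-1}\cdot d^{1/4})\ll d^{-0.6}$; together these give $\|\Pi^{\perp}(q_{i,t}-q_{i,0})\|_2\le d^{-0.6}\|q_{i,t}\|_2$.

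The hardest step will be closing the two orthogonal-leakage bounds together: the displays above show $\|\Pi^{\perp}p_t\|_2/\|p_0\|_2$ is controlled, up to a constant, by $\max_i\|\Pi^{\perp}(q_{i,t}-q_{i,0})\|_2/\|q_{i,t}\|_2$ (plus the small, fixed contribution of $\Pi^{\perp}q_{i,0}$), and vice versa, so the induction closes only if the composite feedback constant is genuinely $<1$ --- which forces one to bound $\|\Pi^{\perp}p_s\|_2$ not by its final value at every step but by its actual (partial-$a$-decrease-controlled) size, and which is precisely what makes the margins $\tau=\tilde{O}(d^{-1/2})$, $\Gamma=\tilde{\Theta}(\sqrt{d})$ and the exponent $0.6$ mutually consistent. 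The remaining items --- persistence of the $\sigma'$-activity claim (a), which follows since $p_t$ and every $q_{i,t}$ stay within $o(1)$ relative error of positive multiples of $\hat{p}_0$; the three-way case split on $\max_i\|q_{i,t}\|_2$ used to upgrade the $\tilde{O}(\cdot)$'s to $\tilde{\Theta}(\cdot)$'s; and checking that the slack between $0.819$ and the thresholds of Lemmas~\ref{lem:powerful233}, \ref{lem:comp}, \ref{lem:TPM} absorbs the $\pm\tilde{O}(\eta_D\gamma_t)$ discretization errors --- are routine bookkeeping once the above decomposition is in place.
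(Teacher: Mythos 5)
Your high-level plan --- decomposing $p_t$ and $q_{i,t}$ into a $p_0$-component, a $\mathrm{span}\{u_1,u_2\}$-component, and an orthogonal remainder; tracking the pair $(a_t,\,\text{$p_0$-coordinate of }p_t)$ via Lemma~\ref{lem:powerful233}; and using Lemma~\ref{lem:comp} and the three-way regime split on $\max_i\|q_{i,t}\|_2$ to propagate comparability --- is exactly the skeleton of the paper's argument. But there is a genuine gap at the center.

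You write $\sigma'(\langle p_t,q_{i,t}\rangle)=\langle p_t,q_{i,t}\rangle$ on the positive half-line, which would make $\sigma$ quadratic. The activation in this lemma is the (smoothed, truncated) cube: $\sigma(x)\approx x^3$, $\sigma'(x)\approx 3x^2$ for $|x|\lesssim\Lambda$, and $\sigma(x)\approx 3\Lambda^2 x$, $\sigma'(x)\approx 3\Lambda^2$ once $|x|\gtrsim\Lambda$. This is not a harmless normalization: it changes the shape of the dynamics, and it is the very reason the paper splits the analysis at the time when $\|q_{i,t}\|_2$ crosses $\Lambda$. Two of your key steps inherit this error. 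First, you assert the $Q_{i,t}$ grow geometrically and conclude $\sum_s\eta_Da_s=\tilde\Theta\bigl((\eta_D/\eta_G)\log(Q_T/Q_0)\bigr)=\tilde\Theta(1/\Gamma)$. Once $\|q_{i,t}\|_2\ge\Omega(\Lambda)$ the coupling coefficient $\sigma'(\langle p_t,q_{i,t}\rangle)\approx 3\Lambda^2$ is a constant, so $Q_{i,t+1}-Q_{i,t}\approx\eta_GH_{i,t}a_t\cdot 3\Lambda^2\bigl(1+\textstyle\sum_j\gamma_{i,j,t}\bigr)P_t$ carries no $Q_{i,t}$ factor on the right-hand side: the growth is additive, not geometric, the stated cancellation mechanism is wrong, and you lean on this same geometric assumption again to bound $\sum_s\eta_G|a_s|\tau^2\|q_{i,s}\|_2^2$. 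Second, and more structurally, a single induction treating the recursion as affine in the $\hat p_0$-coordinate throughout cannot absorb the cubic phase. Before any $\|q_{i,t}\|_2$ reaches $\Lambda$, the $a$- and $p$-decrements scale like $\eta_D G_{i,t}\|q_{i,t}\|_2^3$, which is not of the bilinear form $\eta_DC_tb_t$ that Lemma~\ref{lem:powerful233} requires, so that lemma does not apply there. The paper's fix is precisely what your sketch omits: it first shows via Lemma~\ref{lem:TPM} that the cubic phase lasts at most $\tilde O(d^{0.49}/\eta_G)$ steps and that the cumulative drift of $a_t$ and of the three components of $p_t$ over that window is $\tilde O(\eta_D\Lambda^2/\eta_G)=\tilde O(d^{-0.01})$ (Eqs.~\eqref{eq:c55}--\eqref{eq:c66}), and \emph{only then}, in the linear phase, invokes Lemma~\ref{lem:powerful233} on $(a_t,\alpha(t)\|p_0\|_2)$ with the now time-constant slope $3\Lambda^2$ inside $C_t$. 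Without that separation the invariant in your item (d) is not established by the telescoping you describe, and the closing of the orthogonal-leakage loop in your last paragraph has nothing to rest on.
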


\begin{proof}[proof of Lemma~\ref{lem:powerful}]

For simplicity we consider the case when $H = \tilde{\Omega}(1)$, the other cases follow similarly. 

To proof this result, we maintain the following decomposition of $p_t$ and $q_{i, t}$ as:

$$p_t = \alpha(t) p_0 + \beta(t) + \gamma(t)$$

Where $\beta(t) \in \text{span}\{u_1, u_2 \}$ and $\gamma(t) \bot \text{span}\{u_1, u_2, p_0\} $. Note that $\alpha(0) = 1, \beta(0) = \gamma(0) = 0$.

$$q_{i, t} = \alpha(i, t) p_0 + \beta(i, t) + \gamma(i, t)$$

Where $\beta(i, t) \in \text{span}{u_1, u_2}$ and $\gamma(i, t) \bot \text{span}\{p_0, u_1, u_2\}$.

%Now, let us denote $\rho(t) = \max_{i \in [m_G]} \frac{\|\gamma(i, t) \|_2}{|\alpha(i, t)|}$, and $\xi(t) = \frac{\|\gamma(t)\|_2}{ |\alpha(t)|  }$

We maintain the following induction hypothesis (which we will prove at the end): For some $\mu = 0.00001$ and $C_1 = d^{-0.1}  , C_2 = d^{-0.6}$, we have:
\begin{enumerate}

    \item Through out the iterations, $\alpha(t) \geq 0.5$ and $\| \beta(t) \|_2 \leq 0.5(1 -  \mu) C_1, \| \gamma(t)\|_2 \leq 0.5(1 -  \mu)C_2  $.
    \item $\alpha(i, t) \in ( 0, \tilde{O}\sqrt{\Gamma} )$ and $\|\beta(i, t) \|_2 \leq C_1 \alpha(i, t) +  \| \beta(i, 0) \|_2, \quad  \|\gamma(i, t) \|_2  \leq C_2 \alpha(i, t) + \|\gamma(i, 0)\|_2 $

\end{enumerate}

The induction hypothesis implies that through out the iterations, $\langle q_{i, t} , p_t \rangle = \tilde{\Omega}(\|q_{i, t}\|_2)$.

We can now write down the update of $a_t, \alpha's, \beta's$ and $\gamma's $ as:
\begin{align} \label{eq:01}
   a_{t + 1} =  a_t - \eta_D \left( \sum_{i \in [m_G]} G_{i, t}\sigma(\langle p_t, q_{i, t} \rangle)  \pm \tilde{O}(1) \right)
\end{align}

\begin{align} \label{eq:a1}
    \alpha(t + 1) =  \alpha(t) - \eta_D a_t \sum_{i \in [m_G]} G_{i, t}\sigma'(\langle p_t, q_{i, t} \rangle)\alpha(i, t)
\end{align}

\begin{align} \label{eq:b1}
    \beta(t + 1) =  \beta(t) - \eta_D a_t \sum_{i \in [m_G]} G_{i, t}\sigma'(\langle p_t, q_{i, t}\rangle)\beta(i, t)  \pm \tilde{O}(\eta_D  |a_t| )
\end{align}

\begin{align} \label{eq:c1}
    \gamma(t + 1) =  \gamma(t) - \eta_D a_t \sum_{i \in [m_G]} G_{i, t}\sigma'(\langle p_t, q_{i, t} \rangle)\gamma(i, t)  
\end{align}

By the induction hypothesis, we know that 
$$ \sigma'(\langle p_t, q_{j, t} \rangle) \geq  \tilde{\Omega} \left(\| q_{j, t} \|_2^2 \times \frac{\Lambda^2}{\Gamma} \right)$$

Moreover, we have that let $h_{i, t} := \left( \sigma'(\langle p_t, q_{i, t} \rangle)  +  \sum_{j \in [m_G]} \tilde{\Theta}( \sigma'(\langle p_t, q_{j, t} \rangle) )  \right) $
\begin{align} \label{eq:a2}
    \alpha(i, t + 1) = \left( \alpha(i, t) + \eta_G H_{i, t} a_t  h_{i, t} (1 \pm \tilde{O}(\tau^2\Lambda^2/\Gamma)  \alpha(t) \right) 
\end{align}

\begin{align} \label{eq:b2}
    \beta(i, t + 1) = \left( \beta(i, t) + \eta_G H_{i, t} a_t  h_{i, t} \left(\beta(t)  \pm \tilde{O}(\tau^2 \Lambda^2/\Gamma) \right) \right)
\end{align}

\begin{align} \label{eq:c2}
    \gamma(i, t + 1) =   \left( \alpha(i, t) + \eta_G H_{i, t} a_t  h_{i, t} \left(\gamma(t) \pm \tilde{O}(\tau^2\Lambda^2/\Gamma) \right) \right)
\end{align}

From these formula, we can easily that as long as (1).   $\alpha(t) \geq 0.5$ and $\| \beta(t) \|_2 \leq 0.5(1  -  \mu) C_1, \| \gamma(t)\|_2 \leq 0.5(1 -  \mu)C_2 $, (2). $C_1, C_2 = \tilde{\Omega}(\tau^2\Lambda^2/\Gamma)$, we must have that
$\alpha(i, t) > 0$ and $\|\beta(i, t) \|_2 \leq C_1 \alpha(i, t) +  \| \beta(i, 0) \|_2, \quad  \|\gamma(i, t) \|_2  \leq C_2 \alpha(i, t) + \|\gamma(i, 0)\|_2 $. Therefore, it remains to only prove (1) in the induction hypothesis.  Moreover, it is easy to observe that $\alpha(i, t) = \tilde{\Theta}(\alpha(j, t))$ for all $i, j \in [m_G]$ and all $t$. 

Now, we divide the update process into two stages:
\paragraph{Before all $ \|q_{i, t} \|_2  = \Omega(\Lambda)$. Let's call these iterations $[T_{1}]$}

Let us consider $T_{i, 1}$ such that for all $t \in [T_{i, 1}]$ when $q_{i, t} = O(\Lambda)$ and $a_t = \Omega(1)$. In these iterations, by the update rule, we have
$$q_{i, t} =q_{i,t} + \tilde{\Omega} (\eta_G)   \sigma'(\langle p_t, q_{i, t} \rangle) p_t \pm \tilde{O}(\eta_G \tau^2 \| q_{i, t} \|_2^2) $$

By the induction hypothesis, we can simplify the update as:
$$\langle q_{i, t}, p_0 \rangle \geq  \langle q_{i,t} , p_0 \rangle + \tilde{\Omega} \left( \eta_G   \sigma'( \langle q_{i, t}, p_0 \rangle )  \right) $$

%When $a_t = \Omega(1)$, we know that 
%$$\langle q_{i, t}, p_0 \rangle \geq  \langle q_{i,t} , p_0 \rangle + \tilde{\Omega} \left( \eta_G \sigma'( \langle q_{i, t}, p_0 \rangle )  \right) $$

Therefore, we know that $T_{i, 1} \leq \tilde{O } \left( \frac{d^{0.49}}{\eta_G } \right)$ and

\begin{align}
\sum_{t \leq T_{i, 1}} \sigma'( \langle q_{i, t}, p_0 \rangle ), \sum_{t \leq T_{i, 1}} \sigma'( \langle q_{i, t}, p_t \rangle ) \leq \tilde{O}\left( \frac{\Lambda}{\eta_G} \right)
\end{align}

Together with the  induction hypothesis, the fact that $\alpha(i, t) = \tilde{\Theta}(\alpha(j, t))$, the fact that $\sigma(\langle p_t, q_{i, t} \rangle) = \tilde{\Theta}(\sigma'(\langle p_t, q_{i, t} \rangle) \| q_{i, t} \|_2)$ and update formula Eq~\eqref{eq:01}~\eqref{eq:a1}~\eqref{eq:b1}~\eqref{eq:c1}, we know that for all $t \leq \max \{ T_{i, 1} \}$:
\begin{align}
    & a_t = a_0 \pm \tilde{O} \left( \frac{\eta_D \Lambda^2}{\eta_G} \right) = a_0 \pm \tilde{O}(d^{-0.01}) \label{eq:c55}
    \\
    &\alpha(t )  = \alpha(0) \pm \tilde{O} \left( \frac{\eta_D \Lambda^2}{\eta_G} \right) = \alpha(0) \pm \tilde{O}(d^{-0.01}) \label{eq:c66}
    \\
    & \| \beta(t) \|_2 \leq  \tilde{O} \left( \frac{\eta_D \Lambda^2}{\eta_G} \right)  C_1 + \tilde{O}\left( \frac{\eta_D \| \beta(i, 0)\|_2 \Lambda}{\eta_G} \right) \leq \tilde{O}(d^{-0.01} ) C_1
    \\
    & \| \gamma(t) \|_2  \leq  \tilde{O} \left( \frac{\eta_D \Lambda^2}{\eta_G} \right)  C_2  + \tilde{O}\left( \frac{\eta_D \| \gamma(i, 0) \|_2 \Lambda}{\eta_G} \right) \leq \tilde{O}(d^{-0.01})  C_2
\end{align}

\paragraph{When all $\| q_{i, t} \|_2 = \Omega(\Lambda)$:} In this case, since $\|p_0 \|_2 = \omega(1)$, we know that $\langle p_t, q_{i, t} \rangle = \omega(\Lambda)$, so $\sigma(\langle p_t, q_{i, t}  \rangle)$ acts on the linear regime, which means that:
$$ \sigma(\langle p_t, q_{i, t}  \rangle) = (1 \pm o(1)) 3 \Lambda^2 \langle p_t, q_{i, t} \rangle, \quad \sigma'(\langle p_t, q_{i, t}  \rangle) = (1  \pm o(1)) 3 \Lambda^2$$

Therefore, we know that:
\begin{align}
    &a_{t + 1} \leq a_t - (1 - o(1)) \eta_D \left( \sum_{i \in [m_G]}  G_{i, t} 3 \Lambda^2 \|q_{i, t} \|_2  \right) \alpha(t) \| p_0 \|_2
    \\
    &\alpha(t + 1) \| p_0 \|_2 \geq \alpha(t) \| p_0 \|_2 - (1 + o(1)) \eta_D\left( \sum_{i \in [m_G]}  G_{i, t} 3 \Lambda^2 \|q_{i, t} \|_2  \right) a_t
\end{align}

Now, using the fact that $a_0 \leq 0.819 \alpha(0)$ and with Eq~\eqref{eq:c55} and Eq~\eqref{eq:c66}, apply Lemma~\ref{lem:powerful233} we have that until $a_t \leq 0$,
\begin{align}\label{eq:c2}
   \sum_{t }  \eta_D \left( \sum_{i \in [m_G]}   G_{i, t} 3 \Lambda^2 \|q_{i, t} \|_2  \right) a_t  \leq 0.49 \| p_0 \|_2 
\end{align}

Plug in to the update rule:

\begin{align} \label{eq:a1}
 \alpha(t + 1) &=  \alpha(t)  \pm  (1 + o(1)) \eta_D a_t \sum_{i \in [m_G]} G_{i, t}3 \Lambda^2 \alpha(i, t)
 \\
 &= \alpha(t)  \pm  (1 + o(1)) \eta_D a_t \sum_{i \in [m_G]} G_{i, t}3 \Lambda^2  \frac{\| q_{ i, t }\|_2}{ \|p_0 \|}
\end{align}

\begin{align} \label{eq:b11}
    \| \beta(t + 1)  \|_2 & \leq  \| \beta(t) \|_2 +   (1 + o(1))\eta_D a_t \left( \sum_{i \in [m_G]} G_{i, t} 3 \Lambda^2 \beta(i, t)  + \tilde{O}(1) \right) 
    \\
   & \leq  \| \beta(t) \|_2 + \eta_D   (1 + o(1)) a_t \left( \sum_{i \in [m_G]} G_{i, t} 3 \Lambda^2 \frac{\| q_{i, t} \|_2 C_1}{ \| p_0 \|_2}  + \tilde{O}(1) \right) 
   \\
    &\leq  \| \beta(t) \|_2 + \eta_D   (1 + o(1)) a_t \left( \sum_{i \in [m_G]} G_{i, t} 3 \Lambda^2 \frac{\| q_{i, t} \|_2 C_1}{ \| p_0 \|_2}  \right) 
\end{align}

\begin{align} \label{eq:c11}
     \| \gamma(t + 1) \|_2 &\leq  \| \gamma(t) \|_2 + \eta_D   (1 + o(1)) a_t \sum_{i \in [m_G]} G_{i, t} 3 \Lambda^2 \gamma(i, t)
     \\
     & \leq   \| \gamma(t) \|_2 + \eta_D   (1 + o(1)) a_t \sum_{i \in [m_G]} G_{i, t} 3 \Lambda^2 \frac{\| q_{i, t} \|_2 C_2}{ \| p_0 \|_2} 
\end{align}

We directly complete the proof of the induction hypothesis using Eq~\eqref{eq:c2}.

Now it remains to prove that $\|q_{i, T} \|_2 = \Omega(\sqrt{\Gamma})$. Compare the update rule of $q_{i, t}$ and $a_t$ we have:
\begin{align} 
a_{t + 1} = a_t -\tilde{\Theta} (\eta_D) G_t \left( \sum_{i \in [m_G]}   \Lambda^2 \|q_{i, t} \|_2  \right)
\end{align}

and 
\begin{align}
   \sum_{i \in [m_G] }\| q_{i, t + 1}\|_2 =  \| q_{i, t}\|_2 + \tilde{\Theta} (\eta_G )G_t   \Lambda^2 a_t 
\end{align}

We can directly conclude that $\|q_{i, t} \|_2 \leq \tilde{O}\left(\sqrt{\Gamma} \right)$ and $\|q_{i, T} \|_2 = \tilde{\Omega}(\sqrt{\Gamma})$ . 

\end{proof}

\begin{lemma}\label{lem:powerful_777}
For every $\eta_D, \eta_G \in \left( 0, \frac{1}{\poly(d)} \right]$ such that $\eta_G = \eta_D \Gamma$ for $\Gamma \geq \tilde{\Omega}(\sqrt{d})$, suppose for sufficiently large $C = \poly(\log(d) m_D)$ there are vectors $\{q_{i, t} \}_{i \in [m_G]}$, $\{ p_i \}_{i \in [m_D]}$  in $\mathbb{R}^d$ such that $\|p_i \|_2 = 1, \langle p_i, p_{i'} \rangle \leq \tilde{O}(1/\sqrt{d})$ for $i, i'$, values $H_{i, j, t}, G_{i, t} \in  \left[\frac{1}{C^{2}}, C^{2 } \right]$ and a value $a_0 \geq  0$ satisfies that:
\begin{align}
    a_0 &= \frac{1}{\polylog(d)},  \| q_{j, 0} \|_2 = \tilde{\Theta}(\Lambda); \quad q_{j, 0} = \sum_{i \in [m_D]} a_i p_i + \xi_j, a_i \geq 0, \| \xi_i\|_2 \leq \frac{1}{C} \| q_{j, 0}\|_2
    \\
    a_{t + 1} &= a_t - \eta_D  \left(  H_{i, j, t} \sum_{i \in [m_D], j \in [m_G]} \sigma(\langle p_i, q_{j, t} \rangle) \right)
    \\
    q_{i,t + 1} &= q_{i, t} + \eta_G a_t G_{i, t}\sum_{j \in [m_D]}\left(  \sigma'(\langle p_j, q_{i, t} \rangle) \left(p_j \pm \frac{1}{C} \right) \right)
\end{align}

Then we must have: within $T = \tilde{O } \left(  \frac{\sqrt{\Gamma}}{\eta_G} \right)$ many iterations, we must have that $a_t \leq 0$ and $\max_{j \in [m_G]}\|q_{j, T} \|_2 = \tilde{\Theta}(\sqrt{\Gamma})$. Moreover, for every $t \leq T$, we have: for every $j \in [m_G]$,
$$\sum_{i \in [m_D]} \sigma(\langle p_i, q_{j, t} \rangle ) = \Omega \left( \max_{i \in [m_D]} \sigma'(\langle p_i, q_{j, t} \rangle) \| q_{j, t} \|_2 \right)$$

and 
$$ \max_{i \in [m_D]} \langle p_i, q_{j, t} \rangle \geq \left(1 - \frac{1}{C^{0.2}} \right) \| q_{j, t} \|_2$$

\end{lemma}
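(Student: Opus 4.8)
\medskip
\noindent\emph{Proof plan for Lemma~\ref{lem:powerful_777}.} The plan is to reduce the dynamics to a coupled pair of scalar recursions, one for $a_t$ and one for the common magnitude of the $q_{j,t}$'s, after establishing that each $q_{j,t}$ remains almost entirely aligned with a single $p_{i^\star(j)}$. Decompose $q_{j,t}=\sum_{i\in[m_D]}c_{i,j}(t)\,p_i+\xi_{j,t}$ with $\xi_{j,t}\perp\mathrm{span}\{p_i\}$; since the $p_i$ are near-orthonormal we have $\langle p_i,q_{j,t}\rangle=c_{i,j}(t)\pm\tilde O(d^{-1/2})\|q_{j,t}\|_2$ and $\|q_{j,t}\|_2^2=\sum_i c_{i,j}(t)^2+\|\xi_{j,t}\|_2^2\pm\tilde O(d^{-1/2})\|q_{j,t}\|_2^2$. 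As in the last phase of the proof of Lemma~\ref{lem:powerful}, $\sigma$ is a truncated cubic with knee at $\Lambda$, so $\sigma(x)=(1\pm o(1))3\Lambda^2 x$ and $\sigma'(x)=(1\pm o(1))3\Lambda^2$ once $|x|=\omega(\Lambda)$, while $\sigma(x)=x^3,\ \sigma'(x)=3x^2$ for $|x|\le\Lambda$. Because $\|q_{j,0}\|_2=\tilde\Theta(\Lambda)$ with $\Lambda^2=\tilde\Theta(1)$ sits exactly at this knee (recall $\|p_i\|_2=1$ here, so $\langle p_i,q_{j,t}\rangle\approx\|q_{j,t}\|_2$ along the dominant direction) and the $q$'s only grow, every relevant inner product is in the linear regime for $t\ge\tilde O(1/\eta_G)$, and it suffices to track the dominant inner product linearly and all other coordinates (which stay in the cubic regime) as negligible.

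First I would set up the alignment induction. We maintain that for each $j$ there is a fixed index $i^\star(j)$ with $\langle p_{i^\star(j)},q_{j,t}\rangle\ge(1-C^{-0.2})\|q_{j,t}\|_2$ (this holds at $t=0$ by the hypothesis with $C=\poly(\log(d)m_D)$ taken large; if one only has the $1+1/\polyloglog(d)$ gap of Lemma~\ref{lem:init}, one first passes through the short cubic-regime phase and invokes Lemma~\ref{lem:TPM} with $x_t=\langle p_{i^\star(j)},q_{j,t}\rangle$, the runner-up coordinate as $y_t$, $S_t=\tilde\Theta(G_{j,t}a_t)$ and threshold $\tau=\Lambda$). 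In the update $q_{j,t+1}=q_{j,t}+\eta_G a_t G_{j,t}\sum_i\sigma'(\langle p_i,q_{j,t}\rangle)(p_i\pm 1/C)$, the $p_{i^\star(j)}$ term contributes $\tilde\Theta(\eta_G a_t\Lambda^2)$ along $p_{i^\star(j)}$, whereas (i) each non-dominant $p_i$ term has $\sigma'(\langle p_i,q_{j,t}\rangle)=3\langle p_i,q_{j,t}\rangle^2=\tilde O(\Lambda^2/\sqrt d)+O(\Lambda^2/C)$ by the induction hypothesis plus non-orthogonality, and (ii) the $\pm 1/C$ perturbations and the $\langle p_i,p_{i'}\rangle$ cross-terms each contribute at most $\tilde O(d^{-1/2}+C^{-1})$ times the dominant increment. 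Telescoping over all steps up to $t$,
\[
\bigl\|q_{j,t}-c_{i^\star(j),j}(t)\,p_{i^\star(j)}\bigr\|_2\ \le\ \frac{\|q_{j,0}\|_2}{C}+\tilde O\!\left(\frac{1}{\sqrt d}+\frac{1}{C}\right)\bigl(\|q_{j,t}\|_2-\|q_{j,0}\|_2\bigr)\ \le\ \frac{\|q_{j,t}\|_2}{C^{0.2}},
\]
using $\polylog(d)/C\le C^{-0.2}$; in particular $i^\star(j)$ never changes, which proves the penultimate displayed claim, and $\sum_i\sigma(\langle p_i,q_{j,t}\rangle)\ge\sigma(c_{i^\star(j),j}(t))=\Theta(\Lambda^2\|q_{j,t}\|_2)=\Theta(\max_i\sigma'(\langle p_i,q_{j,t}\rangle)\|q_{j,t}\|_2)$, which proves the last displayed claim.

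Next I would extract the scalar dynamics. Plugging the alignment bounds and the linearization into the updates, for every $j\in[m_G]$,
\[
a_{t+1}=a_t-\eta_D\,\tilde\Theta(\Lambda^2)\textstyle\sum_j\|q_{j,t}\|_2,\qquad \|q_{j,t+1}\|_2=\|q_{j,t}\|_2+\eta_G\,\tilde\Theta(\Lambda^2)\,a_t,
\]
and all $\|q_{j,t}\|_2$ stay within $\tilde\Theta(\cdot)$ of one another since their updates agree up to the bounded multipliers $G_{j,t},H_{i,j,t}\in[C^{-2},C^2]$. Multiplying the first recursion by $a_t/\eta_D$ and summing the second (times $\|q_{j,t}\|_2/\eta_G$) over $j$ yields an approximate conservation law $a_t^2/\eta_D+\tilde\Theta(1)\sum_j\|q_{j,t}\|_2^2/\eta_G=a_0^2/\eta_D+\tilde\Theta(1)\sum_j\|q_{j,0}\|_2^2/\eta_G$. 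Since $\Gamma\ge\tilde\Omega(\sqrt d)$ the generator's initial term is lower order, so at the first time $T$ with $a_T\le 0$ (which is finite because $a$ decreases by $\tilde\Omega(\eta_D)$ each step) we get $\sum_j\|q_{j,T}\|_2^2=\tilde\Theta((\eta_G/\eta_D)a_0^2)=\tilde\Theta(\Gamma a_0^2)=\tilde\Theta(\Gamma)$, hence $\max_j\|q_{j,T}\|_2=\tilde\Theta(\sqrt\Gamma)$. For the timing: the conservation law gives $a_t\ge a_0/2$ as long as $\sum_j\|q_{j,t}\|_2^2\lesssim\tfrac34\Gamma a_0^2$, i.e.\ for a constant fraction of $[0,T]$, during which each $\|q_{j,t}\|_2$ grows by $\tilde\Theta(\eta_G\Lambda^2 a_0)=\tilde\Theta(\eta_G)$ per step, so it reaches $\tilde\Theta(\sqrt\Gamma)$ — and $a_t$ then hits $0$ — within $T=\tilde O(\sqrt\Gamma/\eta_G)$ steps; the comparison Lemmas~\ref{lem:comp} and~\ref{lem:powerful233} are used here in their $\tilde\Theta$-slack forms to absorb the arbitrary multipliers.

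The main obstacle is the alignment step: every iteration injects a push of relative size $1/C$ (plus the $\tilde O(d^{-1/2})$ non-orthogonality error) into the directions orthogonal to $p_{i^\star(j)}$, and one must show that over the full horizon $T=\tilde\Theta(\sqrt\Gamma/\eta_G)$ these accumulate to at most a $C^{-0.2}$-fraction of the simultaneously growing length $\|q_{j,t}\|_2$, and in particular that no non-dominant coordinate ever escapes the cubic (negligible) regime — if two coordinates entered the linear regime they would grow at the same additive rate and the alignment ratio would drift toward $1/\sqrt2$. This forces the alignment induction to be run in lock-step with the scalar growth estimate rather than afterwards. A secondary, routine nuisance is that the $m_D$-fold sums and the multipliers $G_{i,t},H_{i,j,t}$ give only $\tilde\Theta$-type control, handled by repeatedly using that all $q_{j,t}$ evolve within constant factors of each other.
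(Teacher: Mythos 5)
Your proposal follows the same overall route as the paper's proof: decompose each $q_{j,t}$ into its dominant $p_{i^\star(j)}$-component, a within-span remainder, and an off-span remainder; control the remainders by comparison; then reduce to coupled scalar recursions and close with an energy/comparison argument. Your explicit conservation law $a_t^2/\eta_D + \tilde\Theta(1)\sum_j\|q_{j,t}\|_2^2/\eta_G \approx \text{const}$ is in fact a cleaner way to phrase what the paper compresses into ``comparing this with the update rule of $r_{i,t}$,'' and the $T=\tilde O(\sqrt\Gamma/\eta_G)$ timing and $\max_j\|q_{j,T}\|_2=\tilde\Theta(\sqrt\Gamma)$ conclusions go through by your argument.

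The genuine gap is exactly where you flag ``the main obstacle,'' and your sketch does not resolve it. Your alignment induction hypothesis $\langle p_{i^\star(j)},q_{j,t}\rangle\ge(1-C^{-0.2})\|q_{j,t}\|_2$ only bounds the runner-up coordinate by $O(C^{-0.1})\|q_{j,t}\|_2$, and the step where you write $\sigma'(\langle p_i,q_{j,t}\rangle)=3\langle p_i,q_{j,t}\rangle^2 = \tilde O(\Lambda^2/\sqrt d)+O(\Lambda^2/C)$ is only valid while $|\langle p_i,q_{j,t}\rangle|\lesssim \Lambda/\sqrt C$, not for the whole horizon. The $\pm 1/C$ perturbation injects an increment of order $(1/C)\cdot$(dominant increment) into each non-dominant coordinate every step; summed over $T=\tilde O(\sqrt\Gamma/\eta_G)$ steps this accumulates to $\tilde O(\sqrt\Gamma/C)\gg\Lambda$, so the runner-up coordinate is pushed past the knee $\Lambda$. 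Once there, $\sigma'$ saturates at $3\Lambda^2$ and the runner-up grows at the same additive rate as the leader, so their difference freezes while both climb to $\tilde\Theta(\sqrt\Gamma)$; the ratio $\max_i\langle p_i,q_{j,t}\rangle/\|q_{j,t}\|_2$ then drifts toward $1/\sqrt{m'}$ for however many $m'$ coordinates have saturated, not $1-C^{-0.2}$. Your telescoping bound therefore does not close the induction, and appealing to Lemma~\ref{lem:TPM} only amplifies an initial gap while everything is still in the cubic regime — it cannot help once the runner-up is in the linear regime. For what it's worth, the paper's own one-page proof controls only the component $\xi_{i,t}$ orthogonal to $\mathrm{span}\{p_j\}$ via Lemma~\ref{lem:comp} and never bounds the non-dominant within-span coordinates at all, so it likewise does not establish the $(1-C^{-0.2})$-alignment claim; you are not missing a trick the paper used, but your proposal should not claim the alignment claim follows from the displayed telescoping inequality, since it does not.
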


\begin{proof}[Proof of Lemma~\ref{lem:powerful_777}]

Let us denote $r_{i, t} = \max_{j \in [m_D]}\{ \langle p_j, q_{i, t} \rangle \}$. 

By the update rule, we can easily conclude that:
$$r_{i, t + 1} = r_{i, t} + \eta_G G_{i, t} \left( 1 - \frac{1}{C^{0.5}} \right) \sigma'(r_{i, t}) $$

On the other hand, let us write $q_{i, t} = \sum_{j \in [m_D]} \alpha_{i, j, t} q_j + \xi_{i, t}$, where $ \alpha_{i, j, t} \geq 0$. We know that:
\begin{align}
   \| \xi_{i, t + 1} \|_2 \leq \|  \xi_{i, t} \|_2 + \eta_G G_{i, t} \frac{m_D}{C} \sigma'(r_{i, t})
\end{align}

By the comparison Lemma~\ref{lem:comp} we can easily conclude that for every $t$, $$\| \xi_{i, t} \|_2 \leq \frac{1}{C^{0.5}} r_{i, t}$$

This implies that: there exists values $u_t \in [1/C^{2}, C^{2}]$ such that
\begin{align}
    a_{t + 1}  = a_t - \eta_D u_t \sum_{i \in [m_G]}\sigma(r_{i, t})
\end{align}

Comparing this with the update rule of $r_{i, t}$, we know that for every $t $ with $a_t \geq 0$, we must have: $$r_{i, t} = \tilde{O}\left( \sqrt{\Gamma} \right), \quad r_{i, T} = \tilde{\Theta}(\sqrt{\Gamma})$$

\end{proof}

\begin{lemma}[Auxiliary Lemma]~\label{lem:aux_1}

For every $g > 0$ we must have:
$\sig(-g x - b) x$ is a decreasing function of $x$ as long as $gx > 1$ and $gx + b > 0$.

\end{lemma}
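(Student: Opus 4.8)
The plan is to prove this by direct differentiation, reducing the monotonicity statement to a single pointwise inequality for the logistic function. Write $\phi(x) := \sig(-gx-b)\,x$. Since $\sig(-gx-b) = \tfrac{1}{1+e^{gx+b}}$, we have $\phi(x) = \dfrac{x}{1+e^{gx+b}}$, so that
\[
\phi'(x) \;=\; \frac{(1+e^{gx+b}) - g x\, e^{gx+b}}{(1+e^{gx+b})^2}
\;=\; \frac{1 + e^{gx+b}(1-gx)}{(1+e^{gx+b})^2}.
\]
Equivalently, using $\sig'(t) = \sig(t)\sig(-t)$ together with $1-\sig(-gx-b) = \sig(gx+b)$, this rearranges to the clean form $\phi'(x) = \sig(-gx-b)\bigl(1 - gx\,\sig(gx+b)\bigr)$. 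Hence the whole claim is equivalent to the pointwise inequality $gx\,\sig(gx+b) > 1$, i.e. $(gx-1)\,e^{gx+b} > 1$, holding on the region $\{\,gx>1,\ gx+b>0\,\}$.

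The second step is to verify that inequality, which I would do with two elementary monotonicity facts. First, $gx+b>0$ forces $\sig(gx+b) > \sig(0) = \tfrac12$ (and, if one wants a sharper lower bound near the boundary, $\sig(y) > 1-e^{-y}$ for $y>0$). Second, $x \mapsto gx\,\sig(gx+b)$ is increasing in $x$ on the region of interest, so it is enough to bound it as $gx$ and $gx+b$ approach their lower limits; alternatively, using $e^{gx+b}\ge 1+(gx+b)$ turns $(gx-1)e^{gx+b}>1$ into an elementary polynomial inequality in the two quantities $gx$ and $gx+b$, which one checks directly. Combining these gives $\phi'(x)<0$ on the stated region and finishes the argument.

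I do not expect any real obstacle here — the content is elementary calculus once the derivative is written in the factored form above. The only point that needs care is the exact constant in "$gx>1$": the crude estimate $\sig(gx+b)>\tfrac12$ only yields $\phi'(x)<0$ once $gx>2$, so to get a threshold genuinely close to $1$ one must use the finer lower bound on $\sig(gx+b)$ coming from $gx+b>0$. Since in every invocation of this lemma the argument $gx$ is in fact $\omega(1)$, the crude version already suffices for the applications; I would carry out the two steps above and then record whatever explicit threshold the chosen estimate produces.
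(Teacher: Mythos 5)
Your derivative computation and the reduction of the claim to the pointwise inequality $gx\,\sig(gx+b) > 1$ are correct, and this is the right way to attack the statement (the paper supplies no proof of its own). However, your second step does not go through, and the hope that a finer lower bound on $\sig(gx+b)$ will rescue the stated threshold $gx>1$ is misplaced. The obstruction is structural: under the hypotheses $gx>1$ and $gx+b>0$, nothing prevents $gx+b$ from being arbitrarily close to $0$ while $gx$ is arbitrarily close to $1$, and in that corner $\sig(gx+b) \to 1/2$ so $gx\,\sig(gx+b) \to 1/2 < 1$. A concrete counterexample: $g=1$, $x=1.1$, $b=-1.05$, so $gx=1.1>1$, $gx+b=0.05>0$, yet $gx\,\sig(gx+b)\approx 1.1\cdot 0.5125\approx 0.56<1$, hence $\phi'(x)>0$ and $\phi$ is \emph{increasing} there. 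Your alternative route via $e^{gx+b}\ge 1+(gx+b)$ only gives $(gx-1)e^{gx+b}\ge(gx-1)(1+gx+b)$, which near the boundary is merely positive, not $\ge 1$. So the lemma as written is false, and no sharpening of the estimate on $\sig(gx+b)$ alone can repair it.

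The honest output of your calculation is that $\phi$ is decreasing exactly where $gx\,\sig(gx+b)>1$, equivalently $gx > 1 + e^{-(gx+b)}$; a clean sufficient condition is $gx\ge 2$ together with $gx+b>0$, which you already identified as the "crude" bound. Your closing observation — that in the only invocation (inside Lemma~\ref{lem:powerful_666}) the quantities $a_t c_t^3,\, a_t d_t^3$ playing the role of $gx$ are bounded well away from $1$ (they lie in $[1,O(\log d)]$ there and the comparison is made in the regime $a_t>3(c_t+d_t)$) — is indeed the right way to salvage the application. But you should have followed your own derivative formula to its conclusion and flagged the constant "$1$" in the hypothesis as an error to be replaced by "$2$" (or by the exact condition $gx>1+e^{-(gx+b)}$), rather than leaving it as an open issue to be possibly fixed by a sharper estimate.
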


\begin{lemma}\label{lem:powerful_666}

For $a_t, b_t, c_t, d_t \in \mathbb{R}^d$ be defined as:  $a_0, c_0, d_0 = \frac{1}{\polylog(d)}$, $|b_t| \leq O(\log d)$ and $ |b_t | \leq \min \{ a_t c_t^3, a_t d_t^3 \}$.
\begin{align}
    a_{t + 1} &= a_t + \eta_D \frac{1}{2} \left( \left( 1 \pm \frac{1}{\polylog(d)} \right) \sig(- a_t c_t^3 - b_t) c_t^3 + \left( 1 \pm \frac{1}{\polylog(d)} \right) \sig(- a_t d_t^3 - b_t)d_t^3  \right)
    \\
     c_{t + 1} &= c_t + \eta_D \frac{3}{2} \left( \left( 1 \pm \frac{1}{\polylog(d)} \right) \sig(- a_t c_t^3 - b_t) c_t^2 a_t \right)
     \\
      d_{t + 1} &= d_t + \eta_D \frac{3}{2} \left( \left( 1 \pm \frac{1}{\polylog(d)} \right) \sig(- a_t d_t^3 - b_t)  d_t^2 a_t  \right)
\end{align}

Then we have: for every $t \in \left( \frac{\polylog(d)}{\eta_D}, \frac{\poly(d)}{\eta_D} \right]$, we must have:
\begin{align}  
    a_t &= \sqrt{\frac{2}{3}}\left(1 \pm \frac{1}{\polylog(d)} \right) c_t
    \\
    c_t &= \left(1 \pm \frac{1}{\polylog(d)} \right) d_t
\end{align}

\end{lemma}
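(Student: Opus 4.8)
The plan is to extract two approximate invariants of the coupled $(a_t,c_t,d_t)$ system --- an ``energy'' $3a_t^2-c_t^2-d_t^2$ and a ``ratio'' $c_t/d_t$ --- and combine them with a saturation bound. As preliminaries I would verify by induction on $t$ that $a_t,c_t,d_t>0$ and all three sequences are nondecreasing (since $\sig>0$ and, given positivity of the current iterate, every increment on the right of the three updates is nonnegative), and record that each increment is $O(\eta_D(a_t+c_t+d_t)^3)$, so that once we know $a_t,c_t,d_t\le\polylog(d)$ the sum of the squared increments over $\poly(d)/\eta_D$ steps is $o(1)$ (using $\eta_D\le 1/\poly(d)$ for a sufficiently large polynomial). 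I would use throughout that $|b_t|\le\min\{a_tc_t^3,a_td_t^3\}$ forces each sigmoid argument into $[0,2a_tc_t^3]$ resp.\ $[0,2a_td_t^3]$, so that $\sig(-a_tc_t^3-b_t),\sig(-a_td_t^3-b_t)\in(0,\tfrac12]$ and both are $\Omega(1)$ while $a_tc_t^3,a_td_t^3=O(1)$.

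\paragraph{Energy invariant.}
Expanding $c_{t+1}^2-c_t^2=2c_t(c_{t+1}-c_t)+(c_{t+1}-c_t)^2$ and plugging in the updates gives $2c_t(c_{t+1}-c_t)=3\eta_D(1\pm\tfrac1{\polylog(d)})\sig(-a_tc_t^3-b_t)c_t^3a_t$, the analogous identity for $d_t$, and $2a_t(a_{t+1}-a_t)=\eta_D a_t$ times exactly the bracket appearing in the $c$- and $d$-increments. Hence $3(a_{t+1}^2-a_t^2)-(c_{t+1}^2-c_t^2)-(d_{t+1}^2-d_t^2)$ vanishes up to the $\tfrac1{\polylog(d)}$ relative mismatches and the squared increments. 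Telescoping, bounding the mismatches by $\tfrac1{\polylog(d)}\sum_t[(c_{t+1}^2-c_t^2)+(d_{t+1}^2-d_t^2)]\le\tfrac1{\polylog(d)}\cdot O(\polylog(d))$ (once boundedness is in hand) and the squared increments by $o(1)$, and using $3a_0^2-c_0^2-d_0^2=O(1/\polylog(d))$, I would get $3a_t^2=c_t^2+d_t^2\pm\tfrac1{\polylog(d)}$ for all $t\le\poly(d)/\eta_D$.

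\paragraph{Saturation and a priori bounds.}
Fix the polynomial $d^k$ from the time window and a threshold $M=M(k)\log d$. While $a_tc_t^3,a_td_t^3=O(1)$, the updates dominate the cubic system $c_{t+1}\ge c_t+\Omega(\eta_D)c_t^2a_t$, $a_{t+1}\ge a_t+\Omega(\eta_D)c_t^3$, which with $a_t\ge c_t/\sqrt3-o(1)$ (from the energy invariant) makes $1/c_t^2$ decrease by $\Omega(\eta_D)$ each step, so $c_t$ and $d_t$ reach $\Omega(1)$ within $\polylog(d)/\eta_D$ iterations. Once $a_tc_t^3\ge M$, since $|b_t|\le O(\log d)$ we have $a_tc_t^3+b_t\ge M/2$, hence $\sig(-a_tc_t^3-b_t)\le e^{-M/2}\le d^{-(k+2)}$, so $c_t$ can move by $\le\eta_D d^{-(k+2)}\polylog(d)$ per step, i.e.\ by $o(1/\polylog(d))$ over the remaining $\le d^k/\eta_D$ steps, and likewise $d_t,a_t$. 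An induction on $t$ combining these two facts (with the constant in $M$ chosen large in terms of $k$) then gives $a_t,c_t,d_t\le\polylog(d)$ for all $t\le d^k/\eta_D$ --- which justifies the bounds used above --- and $c_t=\Omega(1)$ for every $t$ in the stated window.

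\paragraph{Ratio invariant (the hard step) and conclusion.}
Assume WLOG $c_0\ge d_0$ and set $\rho_t:=\log(c_t/d_t)$. With $\phi(x):=\sig(-a_tx^3-b_t)x$, the $c$- and $d$-updates give $\rho_{t+1}=\rho_t+\tfrac32\eta_D a_t\big((1\pm\tfrac1{\polylog(d)})\phi(c_t)-(1\pm\tfrac1{\polylog(d)})\phi(d_t)\big)+o(\rho_t/\poly(d))$; by the mean value theorem $\phi(c_t)-\phi(d_t)=\phi'(\xi)(c_t-d_t)$ with $\phi'(\xi)=\sig(-a_t\xi^3-b_t)\big(1-3a_t\xi^3\sig(a_t\xi^3+b_t)\big)$, and Lemma~\ref{lem:aux_1} is exactly what pins down the sign of $\phi'$. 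When $\phi'(\xi)\ge 0$ (the pre-saturation regime of $d_t$) one has $0\le\phi'(\xi)\le\sig(-a_td_t^3-b_t)$, $c_t-d_t=d_t(e^{\rho_t}-1)\le 2d_t\rho_t$, and $\eta_D a_td_t\sig(-a_td_t^3-b_t)=\tfrac23(1\pm o(1))(\log d_{t+1}-\log d_t)$, so $\rho_{t+1}\le\rho_t\big(1+2(1+o(1))(\log d_{t+1}-\log d_t)\big)+S_t$ with $S_t$ the $\tfrac1{\polylog(d)}$-mismatch term; when $\phi'(\xi)<0$ one gets the contraction $\rho_{t+1}\le\rho_t(1-o(1))+S_t$ instead. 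Telescoping, $\prod_t\big(1+2(1+o(1))(\log d_{t+1}-\log d_t)\big)\le (d_T/d_0)^{2(1+o(1))}\le\polylog(d)$ since $d_T\le\polylog(d)$ and $d_0\ge 1/\polylog(d)$, while $\sum_t S_t=o(1/\polylog(d))$ even after the telescoping amplification (it is $\le\tfrac1{\polylog(d)}\cdot O(\log(d_T/d_0))\cdot\polylog(d)$). Since the initialization supplies $\rho_0=\log(c_0/d_0)=O(1/\polylog(d))$ --- this is the reading of ``$a_0,c_0,d_0=\tfrac1{\polylog(d)}$'' that the statement requires, with a polylog strong enough to absorb the $\polylog(d)$ drift --- we conclude $\rho_t=O(1/\polylog(d))$, i.e.\ $c_t=(1\pm\tfrac1{\polylog(d)})d_t$, for all $t$ in the window. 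Finally, on the window $c_t=\Omega(1)$, so the additive error in the energy invariant becomes multiplicative: $3a_t^2=c_t^2+d_t^2\pm\tfrac1{\polylog(d)}=2c_t^2(1\pm\tfrac1{\polylog(d)})$, and taking square roots ($a_t>0$) gives $a_t=\sqrt{2/3}\,(1\pm\tfrac1{\polylog(d)})c_t$. The main obstacle is this ratio step: tracking $\rho_t$ simultaneously through the pre-saturation, transitional and deep-saturation regimes of $\phi'$, keeping the $\tfrac1{\polylog(d)}$ slop from swamping $\rho_t$, and extracting the telescoping bound so that a $\polylog(d)$-factor drift in $c_t/d_t$ is tolerable given that the two start within a $1\pm\tfrac1{\polylog(d)}$ factor.
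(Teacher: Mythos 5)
Your proposal is essentially correct, but it takes a genuinely different route from the paper's own proof, so it is worth comparing the two.

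The paper proves both balance relations by a direct one-step increment comparison. For $c_t$ vs.\ $d_t$: once $a_tc_t^3, a_td_t^3 \ge 1$, Lemma~\ref{lem:aux_1} makes $x\mapsto\sig(-a_tx-b_t)x$ (hence $x\mapsto\sig(-a_tx^3-b_t)x^2$) decreasing, so when $c_t>d_t$ the per-step gain of $c$ is at most a $(1+1/\polylog(d))$ multiple of that of $d$, and this is telescoped directly. For $a_t$ vs.\ $c_t+d_t$ the paper uses an analogous comparison (the $a$-increment is dominated once $a_t>3(c_t+d_t)$), getting two-sided crude bounds and then ``similarly'' the refined ratio. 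Your proof replaces both of these with (i) the exact first integral $3a_t^2-c_t^2-d_t^2$ of the unperturbed flow, which is a genuinely new observation not in the paper and gives the $\sqrt{2/3}$ constant immediately and transparently, and (ii) a Gr\"onwall bound on $\rho_t=\log(c_t/d_t)$ via the mean-value theorem, with the sign of $\phi'$ again controlled by Lemma~\ref{lem:aux_1}. What each buys: your conservation law is cleaner and more robust for the $a/c$ ratio, and your $\rho_t$ analysis makes explicit how the pre-saturation phase can inflate the ratio by a $\polylog(d)$ factor (something the paper glosses over); the paper's per-step comparison is more self-contained in that it never accumulates a drift term, so it needs less careful polylog hierarchy bookkeeping.

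Two small points you should tighten. First, the telescoped error in your energy identity is $\tfrac{1}{\polylog(d)}\bigl((c_T^2-c_0^2)+(d_T^2-d_0^2)\bigr)$, which scales with $c_T^2+d_T^2$ and so can be as large as $\polylog(d)/\polylog(d)$; you should state the conclusion as a \emph{relative} bound $3a_t^2=(c_t^2+d_t^2)\bigl(1\pm\tfrac1{\polylog(d)}\bigr)$ rather than an absolute $\pm\tfrac1{\polylog(d)}$ (your downstream use is unaffected since you only invoke it when $c_t=\Omega(1)$). Second, the $\polylog(d)$ amplification of both $\rho_0$ and the mismatch terms $\sum_tS_t$ by $(d_T/d_0)^{2(1+o(1))}$ does require the polylog exponent in the hypothesis $c_0,d_0=1/\polylog(d)$ (and in the $1\pm1/\polylog(d)$ perturbations) to strictly dominate the one in the conclusion --- you flag this explicitly, which is the right call, and the paper's own proof quietly makes the same assumption when it telescopes its one-step comparison from $c_0,d_0=1/\polylog(d)$ to an $\Omega(1)$ scale.
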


\begin{proof}[Proof of Lemma~\ref{lem:powerful_666}]

By the update formula, we can easily conclude that for $t \leq \frac{\poly(d)}{\eta_D}$, we have that $a_t, c_t, d_t \in \left[ \frac{1}{\polylog(d)}, \polylog(d) \right]$. This implies that for every $t \in \left[  \frac{\polylog(d)}{\eta_D},  \frac{\poly(d)}{\eta_D} \right]$, we have that 
$$a_tc_t^3, a_t d_t^3 \in [1, O(\log d) ]$$
Apply Lemma~\ref{lem:aux_1} we have that: As long as $a_t > 3 (c_t + d_t)$, we must have that 
$$\sig(- a_t c_t^3 - b_t) c_t^3 +  \sig(- a_t d_t^3 - b_t)d_t^3   <  \sig(- a_t c_t^3 - b_t) c_t^2 a_t + \sig(- a_t d_t^3 - b_t) d_t^2 a_t   $$

This implies that
$$\frac{a_{t + 1}}{3} - \frac{a_t}{3} < c_{t + 1} + d_{t + 1} - c_t - d_t$$

Note that initially, $a_0 ,c_0, d_0 = \frac{1}{\polylog(d)} $. This implies that when $t \geq\frac{\polylog(d)}{\eta_D} $, we must have that $a_t \leq 4(c_t + d_t)$, therefore $c_t + d_t = \Omega(1)$. Similarly, we can prove that $a_t \geq 0.1 \min\{ c_t , d_t\}$.

as long as $c_t > d_t$, we must have:
$$ \sig(- a_t c_t^3 - b_t) c_t^2 a_t <  \sig(- a_t d_t^3 - b_t) d_t^2 a_t $$

Which implies that:
\begin{align}
        \frac{ c_{t + 1} }{1 + 1/\polylog(d)} -  \frac{ c_{t } }{1 + 1/\polylog(d)} < d_{t + 1} - d_t
\end{align}

Note that initially, $c_0, d_0 = \frac{1}{\polylog(d)} $ and when $t \geq\frac{\polylog(d)}{\eta_D} $, $c_t + d_t = \Omega(1)$. This implies that for every $t \in \left[  \frac{\polylog(d)}{\eta_D},  \frac{\poly(d)}{\eta_D} \right]$, we have: $c_t = \left( 1 \pm \frac{1}{\polylog(d)}\right) d_t$. Which also implies that $c_t, d_t \leq O(\log d)$. 

Similarly, we can prove the bound for $a_t$.

\end{proof}

\section{Induction hypothesis}

\subsection{Case 1: Balanced generator and discriminator}

In this section we consider the case 2 in Lemma~\ref{lem:init0}. Here we give the induction hypothesis to prove the case of balanced generator and discriminator, this is the most difficult case and other cases are just simple modification of this one.  Without loss of generality (by symmetry), let us assume that $a^{(0)}  > 0$ and $a^{(0)} = \frac{1}{\polylog(d)}$ (this happens with probability $1 - o(1)$).

We divide the training into five stages: For a sufficiently large $C = \polylog(d)$
\begin{enumerate}
    \item Stage 1: Before one of the $\alpha(w_i, u_j, t) \geq 1/C $. Call this exact iteration $T_{B, 1}$.
    \item Stage 2: After $T_{B, 1}$, before $T_{B, 2} = T_{B, 1} + \frac{1}{ \eta_D 2^{\sqrt{\log (d)}}}$.
    
    \item Stage 3: After $T_{B, 2}$, before one of the $\alpha(v_i, u_j , t) \geq d^{-0.49}$. Call this exact iteration $T_{B, 3}$.
    
    \item Stage 4: After $T_{B, 3}$, before $a^{(t)} \leq \tilde{O}\left( \frac{1}{\Lambda^2 d^{1/4} } \right)$. Call this exact iteration $T_{B, 4}$.
    
    \item Stage 5: After $T_{B, 4}$, until convergence.

\end{enumerate}
We maintain the following things about $\alpha$ and $a, b$ during each stage:

\paragraph{Stage 1}: We maintain: For every $t \leq T_{B, 1}$:
\begin{enumerate}
    \item (B.1.0). For all but the $i^* \in [m_D]$, and for all $j \in [m_G]$ (Below $*$ can be $w_{i'}, v_{j'} , u_{\ell}$ for every $i' \in [m_D], j' \in [m_G] $ and $\ell \in [2]$).
    
    $$\forall * \not= u_1, u_2: | \alpha(w_i , * , t) -  \alpha(w_i , * , 0) |\leq \frac{1}{d^{0.9}},  \quad | \alpha(w_i , u_{\ell} , t) -  \alpha(w_i , u_{\ell} , 0) |\leq \frac{C}{\sqrt{d}} $$
     \item (B.1.1). For all $j \in [m_G]$:
    $$ | \alpha(v_j , * , t)  -  \alpha(v_j , * , 0)| \leq \frac{\polyloglog(d)}{d} $$
   $$|\alpha(v_j , u_{\ell} , t) |\leq  \frac{1}{d} $$
    
    \item (B.1.2). For  $i^* = i$, we have that: for all $* \not= u_{1}, u_{2}$:
    $$| \alpha(w_i , * , t) -  \alpha(w_i , * , 0) |\leq \frac{1}{d^{0.9}}$$
    \item (B.1.3). $a$ and $b$ remains nice: $$a^{(t)} \in (1 - 1/C, 1 + 1/C) a_0, |b^{(t)}| \leq \frac{1}{C}$$
\end{enumerate}

\paragraph{Stage 2}: For every $t \in [T_{B, 1}, T_{B, 2}]$.
\begin{enumerate}
    \item  (B.1.0), (B.1.1), (B.1.2) still holds.
    \item (B.2.1): For $i = i^*$, we have: $$a^{(t)}, \alpha(w_i, u_{\ell}, t) =  \tilde{\Theta}(1)$$
    %, \quad \forall \ell \in [2]: a^{(t)} \leq 0.9 | \alpha(w_i, u_{\ell}, t) |$$
\end{enumerate}

%Moreover, after $T_{B, 2}$ we have that $| \alpha(w_i, u_{\ell}, t) | = \Omega(\polyloglog(d))$. 

\paragraph{Stage 3}: For every $t \in [T_{B, 2}, T_{B. 3}]$.
\begin{enumerate}
    \item  (B.1.0),  (B.1.2) still holds.
    
  %  \item (B.3.1): For all but the $i^* \in [m_D]$: 
  %  $$| \alpha(w_i , * , t) -  \alpha(w_i , * , 0) | \leq \frac{C}{\sqrt{d}}$$
    
    \item (B.3.2): For every $j \in [m_G]$: For $* \not= w_{i^*}, u_1, u_2$, we have:
     $$ | \alpha(v_j , * , t)  -  \alpha(v_j , * , 0)| \leq \frac{C^3}{\sqrt{d}} \| v_j^{(t)} \|_2$$
     
     and 
 $$\alpha(v_i, u_{\ell}, t) \geq -O\left(\frac{1}{d} \right)$$
      
     Moreover, let $\alpha(t) := \max_{j \in [m_G], \ell \in [2]} \langle v_j^{(t)}, u_{\ell} \rangle$, we have that:
  %   Moreover,  we have that :
   $$ |  \alpha(v_j , w_{i^*} , t) | \leq \polyloglog(d) \alpha(t), \quad |\langle v_j^{(t)}, u_{\ell} \rangle| \leq O(\alpha(t))$$
   
   \item (B.3.3): Balanced update: for every $X$, 
   $$ \sig\left( - a^{(t)} \langle w_{i^*}^{(t)}, X \rangle^3  - b^{(t)}\right)  \in \left[ \frac{1}{\sqrt{d} \polylog(d)} , \frac{1}{\polylog(d)}\right] \sig\left( b^{(t)}\right) $$
   
   and 
   $$ \sig\left( - a^{(t)} \langle w_{i^*}^{(t)}, u_1 \rangle^3  - b^{(t)}\right) = \left( 1 \pm \frac{1}{\polylog(d)} \right) \sig\left( - a^{(t)} \langle w_{i^*}^{(t)}, u_2 \rangle^3  - b^{(t)}\right) $$
     
\end{enumerate}

\paragraph{Stage 4}: For every $t \in [T_{B, 3}, T_{B. 4}]$.
\begin{enumerate}
    \item  (B.3.1), (B.3.2) still holds.
    
    \item (B.4.1) for $i = i^*$, we have that 
    for all $* \not= u_{1}, u_{2}, w_i$:
    $$| \alpha(w_i , * , t) -  \alpha(w_i , * , 0) |\leq \frac{C}{\sqrt{d}}$$
    
    For all $* \in [u_1, u_2, w_i]$:
    $$\alpha (w_i, *, t) = \Theta(\alpha (w_i, *, T_{B, 3}))$$
    \item For every $i, j \in [m_G]$:  $\| v_i^{(t)} \|_2 = \tilde{\Theta}(\|v_j^{(t)} \|_2)$ and after $t = T_{B, 4}$, we have that for every $i \in [m_G]$,  $\| v_i^{(t)} \|_2 = \tilde{\Theta}(d^{1/4})$.
    
    \item $|a^{(t)}|, |b^{(t)}| = O(\log(d))$. 
     
\end{enumerate}

\paragraph{Stage 5}: For every $t \in [T_{B, 4}, T_{0}]$.
\begin{enumerate}

    \item  For every $i \in [m_D]$, 
      $$| \alpha(w_i, *, T_{B, 4}) - \alpha(w_i, *, t ) | \leq d^{-0.1}$$
    
    \item For every $i \in [m_G]$, 
    $$| \alpha(v_i, *, T_{B, 4}) - \alpha(v_i, *, t ) | \leq d^{0.2}$$
    %, (B.3.2), (B.4.1)  still holds.
   % \item 
    
    \item $|a^{(t)}| \leq \tilde{O}\left(\frac{1}{\Lambda^2 d^{1/4}} \right)$, and for every $z$: $$\langle w_{i^*}^{(t)}, G^{(t)}(z) \rangle = \tilde{\Omega}(d^{1/4})$$
     
\end{enumerate}

\subsection{Case 2: Generator is dominating}

We now consider another case where the generator's learning rate dominates that of the discriminator. This corresponds to case 3 in Lemma~\ref{lem:init0}. In this case, we divide the learning into four stages: For a sufficiently large $C = 2^{\sqrt{\log d}}$:
\begin{enumerate}
    \item Before $\alpha(v_{j_G}, w_{i_G}, t) \geq d^{-0.49}$. Call this iteration $T_{G, 1}$.
    \item After $T_{G, 1}$, before $\alpha(v_{j_G}, w_{i_G}, t) \geq \Lambda$. Call this iteration $T_{G, 2}$.
    \item After iteration $T_{G, 2}$, before $a_t \leq 0$. Call this iteration $T_{G, 3}$.
    \item After $T_{G, 3}$.
\end{enumerate}

We maintain the following induction hypothesis:
\paragraph{Stage 1}: In this stage, we maintain the following induction hypothesis: Let $\alpha(t) := \alpha(v_{j_G}, w_{i_G}, t)$, for every $t \leq T_{G, 1}$:

\begin{enumerate}
    \item (G.1.1). For all $i \in [m_D]$, and for all $j \in [m_G]$:
    $$| \alpha(w_i , * , t) -  \alpha(w_i , * , 0) |\leq \frac{C}{\sqrt{d}} $$
    \item (G.1.2). For all $j \in [m_G]$, for all $* \not= w_{i_G}$:
    $$ | \alpha(v_j , * , t)  -  \alpha(v_j , * , 0)| \leq \frac{C}{\sqrt{d}} \alpha(t)  $$
    %$$\alpha(v_j , u_{\ell} , t) \geq - \frac{C}{d^{1.5}} $$

\end{enumerate}

\paragraph{Stage 2}: In this stage, we maintain: for every $t \in [T_{G, 1}, T_{G, 2}]$
\begin{enumerate}
    \item (G.2.1). For every $i \in [m_D]$, we have:
    $$|\alpha(w_i, *, t ) - \alpha(w_i, *, 0 ) | \leq \frac{1}{C} $$
    \item (G.2.2). For every $j \in [m_G]$, $\alpha(v_i, w_{i_G}, t) \geq d^{-0.49}$.
    \item For every $i \in [m_G]$, we have: for every $* \not= w_{i_G}$:
    $$|\alpha(v_i, *, t ) - \alpha(v_i, *, 0 ) | \leq \frac{2}{C} |\alpha(v_i, w_{i_G}, t)| $$
\end{enumerate}

\paragraph{Stage 3}: In this stage, we maintain: For every $t \in [T_{G,2}, T_{G, 3}]$: 
\begin{enumerate}
    \item (G.2.1), (G.2.2) still holds.
    \item For every $i \in [m_G]$, we have: for every $* = v_r $ or $* = u_{\ell}$:
    $$|\alpha(v_i, *, t ) - \alpha(v_i, *, 0 ) | \leq \frac{2}{C} \|v_i^{(t)} \|_2$$
   % \item For every $j \in [m_G]$, we have: for every $r \in [m_D]$
  %  $$|\alpha(v_i, w_r, t ) - \alpha(v_i, w_r, 0 ) | \leq 2 |\alpha(v_i, w_{i_G}, t)| $$
    
\end{enumerate}

\paragraph{Stage 4}: In this stage, we maintain: For every $t \in [T_{G,3}, T_1]$:
\begin{enumerate}
    \item (G.2.1) still holds.
    \item For every $i \in [m_G]$, we have: 
    $$|\alpha(v_i, *, t ) - \alpha(v_i, *, T_{G, 3} ) | \leq \frac{1}{C} \|v_i^{(T_{G, 3})} \|_2 $$
    \item  $|\alpha_{t} | = \tilde{O}\left( \frac{1}{\Lambda^2 \sqrt{\eta_G/\eta_D}} \right)$, $ \|v_i^{(t)}\|_2 = \tilde{\Theta}(\sqrt{\eta_G/\eta_D})$, and for all $z \not= 0$, $\sum_{i \in [m_D]} h(G^{(t)}( z )) = \tilde{\Theta} ( \Lambda^2 \sqrt{\eta_G/\eta_D})$.
\end{enumerate}

%$$\alpha(w_i, u_j , t + 1) = (1 - \eta_D \lambda_D) \alpha(w_i, u_j , t ) $$
%$$+ \eta_D \sig(-f(X)) a \left( \alpha(w_i, w_i, t) \left\langle \frac{w_i^{(0)} }{\| w_i^{(0)} \|_2}, \frac{u_j}{\| u_j \|_2}  \right\rangle + \gamma \alpha(w_i, u_{3 - j}, t) + \alpha(w_i, u_{j}, t)  \pm \frac{c}{d}  \right)^2 \Gamma^3$$

%$$\alpha(v_i, w_j, t + 1)  = (1 - \eta_D \lambda_D) \alpha(w_i, u_j , t )$$
%$$ + \eta_D \sig(f(G(z)))? $$

\section{Proof of the learning process in balanced case}\label{app:sgda1}

For simplicity, we are only going to prove the case when $u_1 \bot u_2$ and $\Pr[s_1 = s_2 = 1] = \gamma$. The other case can be proved identically. 

\subsection{Stage 1}

In this stage, by the induction hypothesis we know that $\| v_i^{(t)} \|_2 \leq \tilde{O}(1/\sqrt{d})$. Therefore, the update of $w_i^{(t)}$ can be approximate as:
\begin{lemma} \label{lem:B1.1}
For every $t \leq T_{B, 1}$, we know that: when the random samples are $(X, z)$: 
\begin{align}
    w_i^{(t + 1)} = w_i^{(t)} +  \eta_D a^{(0)} \left(1 \pm \frac{1}{\polylog(d)} \right) \frac{3}{2}  \langle w_i^{(t)}, X \rangle^2 X \pm \eta_D \tilde{O} \left( \frac{1}{d^{1.5}} \right)
\end{align}

Moreover, we have that if $z_i = 1$:
\begin{align}
    v_i^{(t + 1)} 
    &= v_i^{(t)} +  \eta_G a^{(0)}  \frac{3}{2} \left(1 \pm \frac{1}{\polylog(d)} \right) \sum_{j \in [m_D]}   \left( \langle w_j^{(0)}, G^{(t)}(z) \rangle \pm \frac{1}{C^{0.5} d} \right)^2 w_j^{(t)}
    \\
    &= v_i^{(t)} + \eta_G a^{(0)}  \frac{3}{2} \left(1 \pm \frac{1}{\polylog(d)} \right) \sum_{j \in [m_D]}  \left( \langle w_j^{(0)}, G^{(t)}(z) \rangle \pm \frac{1}{C^{0.5} d} \right)^2 w_j^{(0)} \pm  \eta_G {O}\left(\frac{1}{C^{0.5}d^2} \right) 
\end{align}
\end{lemma}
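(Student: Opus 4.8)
The plan is to substitute the gradient formulas from the Gradient paragraph into the GD updates $w_i^{(t+1)} = w_i^{(t)} - \eta_D \nabla_{w_i} L(X,z)$ and $v_i^{(t+1)} = v_i^{(t)} - \eta_G \nabla_{v_i} L(X,z)$, and then replace each logistic gate $\sig(\cdot)$ by $\frac12$ and the activation derivative $\sigma'(\cdot)$ by $3(\cdot)^2$ (recall $\sigma(x)=x^3$), tracking the multiplicative and additive errors with the Stage-1 induction hypotheses (B.1.0)--(B.1.3). Expanding the discriminator update gives
\[
w_i^{(t+1)} = w_i^{(t)} + \eta_D\,\sig(-f^{(t)}(X))\,a^{(t)}\, 3\langle w_i^{(t)}, X\rangle^2 X \;-\; \eta_D\,\sig(f^{(t)}(G^{(t)}(z)))\,a^{(t)}\, 3\langle w_i^{(t)}, G^{(t)}(z)\rangle^2 G^{(t)}(z),
\]
and, from $\nabla_{v_i}L$ (whose sum ranges over the $m_D$ discriminator neurons), the generator update for $z_i = 1$ is $v_i^{(t+1)} = v_i^{(t)} + \eta_G\,\sig(f^{(t)}(G^{(t)}(z)))\,a^{(t)}\sum_{j\in[m_D]} 3\langle w_j^{(t)}, G^{(t)}(z)\rangle^2 w_j^{(t)}$.

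First I would control the two gates. By the definition of $T_{B,1}$, during Stage 1 every $\alpha(w_i,u_\ell,t)<1/C$, so (using $u_1 \bot u_2$) the signal-subspace contribution to $\langle w_i^{(t)}, X\rangle$ is $O(1/\polylog(d))$; together with (B.1.0), (B.1.2), which keep the projections of $w_i^{(t)}$ onto the random directions $w_{j}^{(0)}, v_{j}^{(0)}$ within $d^{-0.9}$ of their initial (and still $\tilde{O}(1)$-bounded) values, this yields $|\langle w_i^{(t)}, X\rangle| = O(1/\polylog(d))$, hence $|h^{(t)}(X)| \le m_D/\polylog(d)^3$; with (B.1.3) ($a^{(t)} = a^{(0)}(1\pm 1/\polylog(d))$, $|\tau_b b^{(t)}| = O(1/\polylog(d))$, $a^{(0)} = 1/\polylog(d)$) we get $|f^{(t)}(X)| = O(1/\polylog(d))$. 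Since $\sig$ is $\frac14$-Lipschitz with $\sig(0)=\frac12$, $\sig(-f^{(t)}(X)) = \frac12(1\pm 1/\polylog(d))$, and the same argument --- using $\|v_k^{(t)}\|_2 = \tilde{O}(d^{-1/2})$ from (B.1.1) to bound $\langle w_j^{(t)}, G^{(t)}(z)\rangle$ and hence $f^{(t)}(G^{(t)}(z))$ --- gives $\sig(f^{(t)}(G^{(t)}(z))) = \frac12(1\pm 1/\polylog(d))$. Plugging $\sig(\cdot)\,a^{(t)} = \frac12 a^{(0)}(1\pm 1/\polylog(d))$ and $\sigma' = 3(\cdot)^2$ into the two updates yields the claimed $\frac32$ prefactors, and the fake-sample term in the $w_i$-update has norm $\le \eta_D|a^{(t)}|\cdot 3\langle w_i^{(t)}, G^{(t)}(z)\rangle^2 \cdot\|G^{(t)}(z)\|_2 = \eta_D\tilde{O}(d^{-3/2})$ once $\langle w_i^{(t)}, G^{(t)}(z)\rangle$ and $\|G^{(t)}(z)\|_2$ are bounded via the decomposition of $G^{(t)}(z) = \sum_{k : z_k = 1} v_k^{(t)}$ and (B.1.1), which is absorbed into the stated error.

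Next I would handle the two substitutions in the generator update. For the first displayed form, replace $w_j^{(t)}$ by $w_j^{(0)}$ inside $\langle\cdot, G^{(t)}(z)\rangle$, which costs $\bigl|\langle w_j^{(t)}-w_j^{(0)}, G^{(t)}(z)\rangle\bigr|$; using (B.1.0)/(B.1.2) to control $w_j^{(t)}-w_j^{(0)}$ and (B.1.1) to control $G^{(t)}(z)$, this is at most $1/(C^{0.5}d)$ for $d$ large (and the substitution is harmless since the term being squared is already $O(1/(C^{0.5}d))$). For the second form, further replace the outer $w_j^{(t)}$ by $w_j^{(0)}$, adding $\eta_G|a^{(t)}|\,\tilde{O}(1)\sum_{j \in [m_D]}\bigl(\langle w_j^{(0)}, G^{(t)}(z)\rangle\pm 1/(C^{0.5}d)\bigr)^2\|w_j^{(t)}-w_j^{(0)}\|_2 = \eta_G O(1/(C^{0.5}d^2))$, as claimed.

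The main obstacle is the bookkeeping in the last step (and, to a lesser extent, in the fake-sample bound): a naive Cauchy--Schwarz estimate $\bigl|\langle w_j^{(t)}-w_j^{(0)}, G^{(t)}(z)\rangle\bigr|\le\|w_j^{(t)}-w_j^{(0)}\|_2\|G^{(t)}(z)\|_2$ is too lossy to reach $1/(C^{0.5}d)$. One must instead expand both $w_j^{(t)}-w_j^{(0)}$ and $G^{(t)}(z)$ in the basis $\{w_{j'}^{(0)}, v_{k'}^{(0)}, u_1, u_2\}$ and pair the large $\text{span}\{u_1,u_2\}$-component of $w_j^{(t)}-w_j^{(0)}$ (coefficients $\le C/\sqrt d$ for $j\ne i^*$, $\le 1/C$ for $j = i^*$, by (B.1.0)/(B.1.2) and the definition of $T_{B,1}$) against the $\text{span}\{u_1,u_2\}$-component of $G^{(t)}(z)$ (coefficients $\le \tilde{O}(1/d)$, by (B.1.1)), while controlling the remaining cross terms via the near-orthogonality of the random initializations (normalized overlaps $\tilde{O}(d^{-1/2})$) from Lemmas~\ref{lem:init0} and~\ref{lem:init}. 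Everything else is a one-step expansion of the update assuming the Stage-1 hypotheses at the current iterate $t$, which is all the induction requires.
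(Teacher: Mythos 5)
Your proposal is correct and follows essentially the same route as the paper: substitute the gradient formulas, replace each sigmoid gate by $\tfrac12(1\pm 1/\polylog(d))$ using the Stage-1 bounds on $f^{(t)}(X)$ and $f^{(t)}(G^{(t)}(z))$, absorb the fake-sample contribution in the $w$-update into the $\tilde O(\eta_D d^{-1.5})$ error, and then swap $w_j^{(t)}$ for $w_j^{(0)}$ inside and outside the inner product in the $v$-update. The subtlety you flag in the last paragraph (naive Cauchy--Schwarz loses a $\sqrt d/C^{0.5}$ factor, so one must pair the $u_1,u_2$-components of $w_j^{(t)}-w_j^{(0)}$ against the $\tilde O(1/d)$ coefficients of $G^{(t)}(z)$ in that span and bound the residual separately) is exactly what the paper's displayed estimate $\langle w_j^{(t)},G^{(t)}(z)\rangle = \langle w_j^{(0)},G^{(t)}(z)\rangle \pm \tilde O(d^{-1.35}) \pm O(\log d/(Cd))$ encodes; the only minor over-engineering is your appeal to near-orthogonality of initializations for the residual cross terms, which is unnecessary since the non-$\{u_1,u_2\}$ coefficient changes are already $\le d^{-0.9}$ and a direct Cauchy--Schwarz against $\|G^{(t)}(z)\|_2 = O(d^{-0.5})$ suffices.
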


Taking expectation of the above Lemma, we can easily conclude that:
\begin{align}
      \E[w_i^{(t + 1)}] &= \E[w_i^{(t)} ]+ \eta_D  a^{(0)}\left(1 \pm \frac{1}{\polylog(d)} \right) \frac{3}{4}  \left( \langle w_i^{(t)}, u_1 \rangle^2 u_1 + \langle w_i^{(t)}, u_2 \rangle^2 u_2  + \Theta(\gamma) \langle w_i^{(t)}, u_1 + u_2 \rangle^2 (u_1 + u_2)  \right) \label{eq:B1.33}
      \\
      &\pm \eta_D \tilde{O} \left( \frac{1}{d^{1.5}} \right)
\end{align}

and 
\begin{align}
    \E[ \langle w_j^{(0)}, G^{(t)}(z) \rangle^2 \mid z_i = 1 ] &= \langle w_j^{(0)}, v^{(t)}_i \rangle^2 \pm O\left( \frac{1}{m_G \polylog (d)} \sum_{i \in [m_G]} | \langle w_j^{(0)}, v^{(t)}_i \rangle  | \right)^2
\end{align}

Therefore, let $\zeta_t = \max_{i \in [m_G], j \in [m_D]} \langle v_i^{(t)}, w_j^{(0)} \rangle$, $\Upsilon_t = \max_{j \in [m_D], \ell \in [2]} \langle w_j^{(t)}, u_{\ell} \rangle$, we have that:
\begin{align}
    \E[\Upsilon_{t + 1}]&= \Upsilon + \eta_D a^{(0)} \frac{3}{4} \left( 1 \pm \frac{1}{\polylog(d)} \right)\Upsilon_t^2   \label{eq:b1.2}
    \\
    \E[\zeta_{t + 1}] &= \zeta_t + \eta_G  a^{(0)}\frac{3}{2m_G} \left( 1 \pm \frac{1}{\polylog(d)} \right) \zeta_t^2
\end{align}

\begin{proof}[Proof of Lemma~\ref{lem:B1.1}]

 By the gradient formula, we have:
 $$\nabla_{w_i} L(X, z) =- \sig(- f(X)) a \sigma'(\langle w_i, X\rangle) X + \sig(f(G(z))) a \sigma'(\langle w_i, G(z) \rangle) G(z) $$
$$ \nabla_{v_i} L(X, z) = -  1_{z_i = 1} \sig(f(G(z))) a \sum_{j \in [m_D]} \sigma'(\langle w_i, G(z) \rangle) w_i $$

At iteration $t$, by induction hypothesis, we have that $a^{(t)} = a^{(0)} (1 \pm 1/C)$. 

Moreover, by the induction hypothesis again, we have hat $|f(X)| = \tilde{O}(d^{-1.5})$ and $\|G(z) \|_2 \leq{O}(d^{-0.5})$. Together with $\| w_i^{(t)} \|_2 = \tilde{O}(1)$,  this implies that 
$$\|  \sig(f(G(z))) a \sigma'(\langle w_i, G(z) \rangle) G(z) \|_2 = \tilde{O}(d^{-1.5})$$

This proves the update formula for $w_i^{(t)}$. As for $v_i$, we observe that by the induction hypothesis and notice that w.h.p. over the randomness of initialization, $|\langle v_i^{(0)}, u_{\ell} \rangle| \leq \frac{\log d}{d} $, therefore, we can conclude that
\begin{align}
    \langle w_j^{(t)} , G^{(t)}(z) \rangle  =  \langle w_j^{(0)} , G^{(t)}(z) \rangle \pm \tilde{O} \left( \frac{1}{d^{1.35}} \right) \pm O \left( \frac{ \log d}{  C d} \right) =  \langle w_j^{(0)} , G^{(t)}(z) \rangle  \pm \frac{1}{C^{0.5 } d}
\end{align}

Note that by induction hypothesis, $ \| w_j^{(t)} - w_j^{(0)} \|_2 \leq \frac{1}{C}$ and $\langle w_j^{(0)} , G^{(t)}(z) \rangle \leq \frac{m_G (C^{0.1} + \log d )}{d}$. This implies that
 \begin{align}
      \langle w_j^{(t)}, G^{(t)}(z) \rangle^2 w_j^{(t)} &= \left(   \langle w_j^{(0)}, G^{(t)}(z) \rangle \pm \frac{1}{C^{0.5} d} \right)^2 w_j^{(t)}
      \\
      &=\left(   \langle w_j^{(0)}, G^{(t)}(z) \rangle \pm \frac{1}{C^{0.5} d} \right)^2 w_j^{(0)}+ {O} \left( \frac{1}{C^{0.5} d^{2}}  \right)
 \end{align}

\end{proof}

Now, apply Lemma~\ref{lem:powerful} and the fact that w.p. $1 - o(1)$, $\zeta_0 = \frac{\polyloglog(d)}{d}$, we have that:
\begin{lemma}\label{lem:B1.5}
\begin{align}
\sum_{t \leq T_1} \eta_G \zeta_t^2 \leq O\left(\frac{m_G \polyloglog(d) }{a^{(0)} d } \right)
\end{align}
\end{lemma}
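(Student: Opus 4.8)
The plan is to recognize Lemma~\ref{lem:B1.5} as a direct consequence of the tensor power method comparison in Lemma~\ref{lem:TPM}, applied to the generator side. The key observation is that the dynamics in Stage 1 decouple cleanly: the discriminator's top correlation $\Upsilon_t = \max_{j,\ell} \langle w_j^{(t)}, u_\ell\rangle$ grows according to Eq.~\eqref{eq:b1.2} with step $\eta_D a^{(0)} \tfrac34 (1 \pm \tfrac{1}{\polylog(d)}) \Upsilon_t^2$, while the generator's top correlation $\zeta_t = \max_{i,j} \langle v_i^{(t)}, w_j^{(0)}\rangle$ grows with step $\eta_G a^{(0)} \tfrac{3}{2m_G}(1 \pm \tfrac1{\polylog d})\zeta_t^2$. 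By Lemma~\ref{lem:init0} in case 2 (balanced) we have $\eta_D A > \eta_G B$, and more precisely $\eta_G = \tilde\Theta(\sqrt d)\,\eta_D$ with $A = \tilde\Theta(d^{-1/2})$, $B = \tilde\Theta(d^{-1}m_G^{-1} )$; one checks the effective ``rates'' are comparable up to $\polylog$ factors so that Lemma~\ref{lem:TPM}'s hypothesis $x_0 > (1+10\delta)y_0$ (with $x$ the discriminator quantity, $y$ the generator quantity, suitably normalized) holds with $\delta = 1/\polyloglog(d)$ coming from Lemma~\ref{lem:init} (the gap between the top correlation $i^*,\ell_D$ and the rest, and between the discriminator's growth and the generator's).

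First I would set up the comparison: define $x_t$ proportional to $\Upsilon_t$ and $y_t$ proportional to $\zeta_t$, rescaled by the respective step-size prefactors so that the quadratic recursions read $x_{t+1} \geq x_t + \eta S_t x_t^2$ and $y_{t+1} \leq y_t + \eta S_t(1+\delta)y_t^2$ for a common $\eta$ and a common bounded sequence $S_t$ (here $S_t$ absorbs the $(1 \pm 1/\polylog(d))$ fluctuation and the constant $\tfrac34$ vs $\tfrac{3}{2m_G}$ discrepancy, all of which are $\Theta(1)$ up to $\polylog$). The initial-condition inequality $x_0 > (1+10\delta)y_0$ follows because, after rescaling, $x_0 \propto \eta_D A$ and $y_0 \propto \eta_G B$ and case 2 gives $\eta_D A > \eta_G B(1 + 1/\polyloglog(d))$. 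Then I would take $T_0 = T_{B,1}$, the first iteration where one $\alpha(w_i,u_j,t) \geq 1/C$, i.e.\ where $\Upsilon_t$ (hence $x_t$) crosses a threshold $\tau = \Theta(1/C)$; this is exactly the stopping time in Lemma~\ref{lem:TPM}.

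The conclusion of Lemma~\ref{lem:TPM} gives $y_{T_0} \leq y_0/\polylog(d)$, i.e.\ $\zeta_{T_{B,1}}$ stays small, but what Lemma~\ref{lem:B1.5} actually asserts is the bound on $\sum_{t \leq T_1} \eta_G \zeta_t^2$. To get this, I would note that the generator update gives $\zeta_{t+1} - \zeta_t = \eta_G a^{(0)} \tfrac{3}{2m_G}(1\pm o(1))\zeta_t^2$, so $\sum_{t \leq T_1}\eta_G \zeta_t^2 = \tfrac{2m_G}{3a^{(0)}}(1 \pm o(1))(\zeta_{T_1} - \zeta_0) \leq \tfrac{2m_G}{3a^{(0)}}(1+o(1))\zeta_{T_1}$; combining with $\zeta_{T_1} \leq \zeta_0/\polylog(d) \cdot \polylog(d) $ — more carefully, with $\zeta_{T_1} = \tilde O(\zeta_0 \cdot \polylog d)$ or with the explicit bound $\zeta_0 = \polyloglog(d)/d$ from Lemma~\ref{lem:init0} together with the fact that $\zeta_{T_1}$ cannot have grown by more than a $\polylog$ factor (since $x_t$ reaches the threshold first and the dynamics are in the slow, pre-blowup regime) — yields $\sum_t \eta_G\zeta_t^2 = O(m_G \polyloglog(d)/(a^{(0)}d))$. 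Here I should also handle the expectation versus realization issue: the recursions in Eq.~\eqref{eq:b1.2} are in expectation, so strictly I would either run the argument on $\E[\Upsilon_t], \E[\zeta_t]$ and then transfer via a concentration/martingale argument (the per-step fluctuations are $\tilde O(d^{-1.5})$ from Lemma~\ref{lem:B1.1}, negligible over $T_1 = \tilde O(d^{0.49}/\eta_G)$ steps), or absorb the noise into the $(1 \pm 1/\polylog(d))$ slack.

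The main obstacle I anticipate is not any single inequality but the bookkeeping of which normalization makes the discriminator and generator recursions literally fit the template of Lemma~\ref{lem:TPM} with a \emph{shared} $S_t$ and $\eta$: the two sides have genuinely different prefactors ($\eta_D$ vs $\eta_G = \tilde\Theta(\sqrt d)\eta_D$, and $\tfrac34$ vs $\tfrac{3}{2m_G}$) and different initial magnitudes ($A \sim d^{-1/2}$ vs $B \sim d^{-1}m_G^{-1}$), so the rescaling that equalizes the quadratic coefficients must simultaneously preserve the strict ordering $x_0 > (1+10\delta)y_0$ — and this is precisely where case 2 of Lemma~\ref{lem:init0} (the $\eta_D A > \eta_G B(1 + 1/\polyloglog d)$ inequality) and the top-gap estimates of Lemma~\ref{lem:init} get used. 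A secondary subtlety is making sure $T_1$ in the statement is the same as (or controlled by) the $T_0$ produced by Lemma~\ref{lem:TPM}; since $T_{B,1}$ is defined as the crossing time of $\alpha(w_i,u_j,t) \geq 1/C$ and $\alpha(w_{i^*},u_{\ell_D},t) \leq \Upsilon_t \cdot(1+o(1))$ by the Stage-1 hypotheses (B.1.0)–(B.1.2), these coincide up to constants, so the ``last part'' of Lemma~\ref{lem:TPM} ($T_0 = O(1/(\eta H x_0))$) is not even needed — only the $y_{T_0} \leq y_0/\polylog(d)$ conclusion.
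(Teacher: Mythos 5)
Your approach — cast $\Upsilon_t$ and $\zeta_t$ as the two sequences in the tensor-power comparison Lemma~\ref{lem:TPM}, use the case-2 gap of Lemma~\ref{lem:init0} (amplified by Lemma~\ref{lem:init}) to get $\delta = 1/\polyloglog(d)$, conclude $\zeta_{T_1}$ stays within a modest factor of $\zeta_0$, then telescope the $\zeta$-recursion to convert $\sum_t \eta_G\zeta_t^2$ into $\tfrac{2m_G}{3a^{(0)}}(\zeta_{T_1}-\zeta_0)$ — is exactly the argument the paper has in mind. The paper's own proof is a single sentence citing ``Lemma~\ref{lem:powerful}'' and the value of $\zeta_0$; this reference appears to be a typo for Lemma~\ref{lem:TPM}, since the recursions \eqref{eq:b1.2} and the one below it, established immediately before, are precisely in the form that lemma accepts and Lemma~\ref{lem:powerful} concerns a different (later-stage) dynamical system. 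You also correctly identify the normalization bookkeeping (rescaling $\Upsilon_t, \zeta_t$ so the quadratic coefficients match) as the only real work, and the expectation-vs-realization issue.

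One quantitative slip to watch: you write that ``$\zeta_{T_1}$ cannot have grown by more than a $\polylog$ factor,'' and also offer $\zeta_{T_1} = \tilde{O}(\zeta_0 \cdot \polylog d)$. That is too loose. Plugging a $\polylog(d)$-factor growth into the telescoped bound gives $\sum_t\eta_G\zeta_t^2 = O\left(\tfrac{m_G\,\polylog(d)}{a^{(0)}d}\right)$, which does not match the target $O\left(\tfrac{m_G\,\polyloglog(d)}{a^{(0)}d}\right)$. The fix is exactly what Lemma~\ref{lem:TPM} gives you once you set $\delta = 1/\polyloglog(d)$: the conclusion $y_{T_0} \leq y_0/\poly(\delta)$ means the growth factor is $\poly(1/\delta) = \poly(\polyloglog(d))$, i.e., only a $\polyloglog(d)$ factor (after absorbing constant exponents). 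With $\zeta_0 = \polyloglog(d)/d$ this yields $\zeta_{T_1} \leq \polyloglog(d)/d$ still, and then the telescoped sum lands on the claimed $O(m_G\polyloglog(d)/(a^{(0)}d))$. So you need the quantitative output of the TPM lemma, not just the qualitative statement that ``$\zeta$ stays small''; the $\polyloglog$-versus-$\polylog$ distinction is load-bearing here.
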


In the end, we can show the following Lemma:
\begin{lemma}\label{lem:B1.7}

When $t = T_{B, 1}$, we have that: for both $\ell \in [2]$,
$$\alpha(w_{i^*}, u_{\ell}, t) = \frac{1}{\polylog(d)}$$

\end{lemma}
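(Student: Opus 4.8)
\textbf{Proof plan for Lemma~\ref{lem:B1.7}.}

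The plan is to track the growth of $\Upsilon_t = \max_{j \in [m_D], \ell \in [2]} \langle w_j^{(t)}, u_\ell \rangle$ together with the generator correlations $\zeta_t$, and to argue that at the stopping time $T_{B,1}$ (the first iteration where some $\alpha(w_i, u_j, t) \geq 1/C$, i.e.\ where $\Upsilon_t$ crosses $1/C$) the top discriminator neuron $i^*$ has already picked up $\Theta(1/\polylog(d))$ correlation with \emph{both} $u_1$ and $u_2$. First I would use Lemma~\ref{lem:B1.1} and Eq~\eqref{eq:B1.33} to write the per-coordinate recursion for $\alpha(w_{i^*}, u_\ell, t)$: since $u_1 \perp u_2$ in this case, the update of $\langle w_{i^*}^{(t)}, u_\ell \rangle$ in expectation is driven by $\frac{3}{4}\eta_D a^{(0)}(\langle w_{i^*}^{(t)}, u_\ell\rangle^2 + \Theta(\gamma)\langle w_{i^*}^{(t)}, u_1 + u_2\rangle^2)$ up to $(1 \pm 1/\polylog(d))$ factors and a negligible $\tilde O(\eta_D d^{-1.5})$ error, plus a concentration/martingale argument (Azuma, using $\|w_i^{(t)}\|_2 = \tilde O(1)$ and the sample-wise bound in Lemma~\ref{lem:B1.1}) to pass from expectations to the actual iterates with high probability.

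The key structural point is the cross term $\Theta(\gamma)\langle w_{i^*}^{(t)}, u_1 + u_2\rangle^2(u_1 + u_2)$: because $u_1, u_2$ are positively correlated through the overlap event $\{s_1 = s_2 = 1\}$, once $\langle w_{i^*}^{(t)}, u_{\ell_D}\rangle$ (the initially-dominant direction, $\ell_D$ from Lemma~\ref{lem:init}) starts to grow, it feeds a positive contribution into the growth of the \emph{other} coordinate $\langle w_{i^*}^{(t)}, u_{3-\ell_D}\rangle$ as well. So I would: (i) apply the tensor-power-method Lemma~\ref{lem:TPM} with $x_t = \alpha(w_{i^*}, u_{\ell_D}, t)$ against $y_t = \alpha(w_i, u_\ell, t)$ for all other $(i,\ell)$, using the initialization gap $A_{i_D, \ell_D} \geq A_{i,\ell}(1 + 1/\polyloglog(d))$ from Lemma~\ref{lem:init}, to conclude that $i^*$ is the first neuron to reach the threshold and that all competitors stay $o(1/C)$-small; simultaneously (ii) the "Moreover" clause of Lemma~\ref{lem:TPM} gives $T_{B,1} = \tilde O(1/(\eta_D a^{(0)} \Upsilon_0)) = \tilde O(\sqrt{d}/(\eta_D a^{(0)} \sqrt{\log\log d}))$; and (iii) I would check that $\langle w_{i^*}^{(t)}, u_{3-\ell_D}\rangle$ grows too, driven by the cross term: starting from $\langle w_{i^*}^{(0)}, u_{3-\ell_D}\rangle = \Theta(A)$ and with the extra additive drift $\Theta(\gamma \eta_D a^{(0)}) \langle w_{i^*}^{(t)}, u_{\ell_D}\rangle^2$ once $\langle w_{i^*}^{(t)}, u_{\ell_D}\rangle = \Theta(1/C)$, over the $\Theta(1/C^2 \cdot$ (number of late iterations)$)$ horizon this accumulates to $\Theta(1/\polylog(d))$. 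A comparison-lemma argument (Lemma~\ref{lem:comp}, with $a_t = \langle w_{i^*}^{(t)}, u_{\ell_D}\rangle$-type quantity and $b_t$ the other coordinate, sharing the driving sequence $C_t \sim \eta_D a^{(0)} \langle w_{i^*}^{(t)}, u_{\ell_D}\rangle$ up to constants) makes this precise and yields the matching lower bound $\alpha(w_{i^*}, u_{3-\ell_D}, t) = 1/\polylog(d)$.

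I would also need to confirm that the generator does not interfere during Stage 1: by Lemma~\ref{lem:B1.5}, $\sum_{t \leq T_{B,1}} \eta_G \zeta_t^2 \leq O(m_G \polyloglog(d)/(a^{(0)} d))$, so the total change in any $\langle w_j^{(0)}, v_i^{(t)}\rangle$ and hence the feedback of $G$ onto the $w_i$ updates stays within the error budget $\tilde O(\eta_D d^{-1.5})$ already absorbed in Lemma~\ref{lem:B1.1}; this is exactly why the Stage-1 hypotheses (B.1.0)--(B.1.3) are self-consistent, and those in turn validate the approximations used above. Finally, the upper bound $\alpha(w_{i^*}, u_\ell, T_{B,1}) = 1/\polylog(d)$ (as opposed to something larger) is immediate from the definition of $T_{B,1}$: at $T_{B,1} - 1$ every $\alpha(w_i, u_j, \cdot) < 1/C = 1/\polylog(d)$, and a single step changes it by at most $\eta_D \cdot \tilde O(1) \cdot (1/C)^2 \ll 1/C$.

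\textbf{Main obstacle.} The hard part is the lower bound on the \emph{second} coordinate $\alpha(w_{i^*}, u_{3-\ell_D}, T_{B,1})$: one must show the positive cross-correlation $\gamma$ term actually drags it up to $1/\polylog(d)$ within the tensor-power-method time horizon, rather than it merely staying at its $\Theta(A) = \tilde\Theta(1/\sqrt d)$ initial value. This requires carefully splitting $[T_{B,1}]$ into an early sub-phase (where $\langle w_{i^*}, u_{\ell_D}\rangle$ itself is still small and the cross term is negligible) and a late sub-phase (where $\langle w_{i^*}, u_{\ell_D}\rangle = \Theta(1/C)$ and the cross term dominates the drift of the second coordinate), estimating the length of the late sub-phase from the $x_t^2$-type blow-up in Lemma~\ref{lem:TPM}, and then integrating the drift — all while keeping the $(1 \pm 1/\polylog(d))$ multiplicative slack and the martingale concentration error under control.
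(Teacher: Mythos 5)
Your proposal matches the paper's argument: the paper likewise reads off from Eq.~\eqref{eq:B1.33} (via the positive‐overlap event $X = u_1+u_2$, which has probability $\geq 1/\polylog(d)$) the cross‐drift inequality $\alpha(w_{i^*}, u_{\ell}, t+1) \geq \alpha(w_{i^*}, u_{\ell}, t) + \tilde{\Omega}(\eta_D)\bigl(\alpha(w_{i^*}, u_{3-\ell}, t) - 1/d\bigr)^2$, and then concludes that once the dominant coordinate hits $1/C$ the other must already be $\tilde{\Omega}(1)$. Your early/late sub-phase split with the late horizon of length $\Theta(C/\eta_D)$ estimated from the tensor-power blow-up is exactly the integration step the paper leaves implicit in that one-line conclusion, so you've identified the right obstacle and the right tool.
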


\begin{proof}[Proof of Lemma~\ref{lem:B1.7}]

By the update formula in Eq~\eqref{eq:B1.33}, and the fact that $\Pr[X = u_1 + u_2] \geq \frac{1}{\polylog(d)}$ and the induction hypothesis, we know that for $i = i^*$, for $t \leq T_{B, 1}$ we have that:
$$\alpha(w_i, u_{\ell}, t + 1) \geq \alpha(w_i, u_{\ell}, t) + \tilde{\Omega}(\eta_D) \times  \left(\alpha(w_i, u_{3 - \ell}, t)  - \frac{1}{d} \right)^2$$

This implies that at the end of Stage 1, when $\alpha(w_i, u_{3 - \ell}, t) \geq \frac{1}{C}$, we must have $\alpha(w_i, u_{\ell}, t) \geq \tilde{\Omega}(1)$ as well. 
\end{proof}

%This Lemma directly imply the following Lemma:
%\begin{lemma}
%For every $t \leq T_{B, 1}$, for every $i \in [m_D]$:
%\begin{align}
%    \sum_{t \leq T_{B, 1}} \langle w_i^{(t)}, u_1 \rangle^2 +  \langle w_i^{(t)}, u_2 \rangle^2 \leq O\left( \frac{1}{C a^{(0)}}  \right)
%\end{align}
%\end{lemma}

\subsection{Stage 2 and Stage 3}

At this stage, by the induction hypothesis, we can approximate the function value as:
\begin{align}
   & \sum_{i \in [m_D]}  \sigma \left( \langle w_i^{(t)}, X \rangle \right) =  \langle w_{i^*}^{(t)}, X \rangle^3  \pm \tilde{O}\left( \frac{1}{d^{1.5}} \right)
    \\
    & \left| \sum_{i \in [m_D]}  \sigma \left( \langle w_i^{(t)}, G^{(t)}(z) \rangle \right)  \right| \leq   \tilde{O} \left( \|  G^{(t)}(z) \|_2 \right)^3 \leq \tilde{O} \left( \frac{1}{d^{1.45}} \right)
\end{align}

Therefore, at this stage, we can easily approximate the update of $W_D^{(t)}$ as:
\begin{lemma} \label{lem:update_w_222}

When the sample is $(X, z)$, we have: for every $t \in (T_{B, 1}, T_{B, 3}]$, the following holds:
\begin{align}
a^{(t + 1)} &= a^{(t)} + \eta_D \left( 1 \pm \tilde{O}\left(\frac{1}{d} \right)\right)\sig\left(-a^{(t)} \langle w_{i^*}^{(t)}, X \rangle^3 - b^{(t)} \right) \langle w_{i^*}^{(t)}, X \rangle^3
\\
&\pm \eta_D   \tilde{O}\left( \frac{1}{d^{1.45}} \right) \sig\left( b^{(t)}\right)
\\
w_{i}^{(t + 1)} &= w_{i}^{(t)} + 3 \eta_D\left( 1 \pm \tilde{O}\left(\frac{1}{d} \right)\right)\sig\left(-a^{(t)} \langle w_{i}^{(t)}, X \rangle^3  - b^{(t)}\right) a^{(t)}\langle w_{i}^{(t)}, X \rangle^2 X
\\
&\pm \eta_D  \tilde{O}\left( \frac{1}{d^{1.45}} \right) \sig\left( b^{(t)}\right)
\\
  b^{(t)} &= b^{(t)} + \eta_D \tau_b  \left( 1 \pm \tilde{O}\left(\frac{1}{d} \right)\right)  \sig\left(- a^{(t)} \langle w_{i^*}^{(t)}, X \rangle^3  - b^{(t)}\right) 
  \\
  &-   \eta_D \tau_b \left( 1 \pm \tilde{O}\left(\frac{1}{d^{1.45}} \right)\right) \sig\left( b^{(t)}\right)
\end{align}

\end{lemma}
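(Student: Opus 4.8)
The plan is to prove Lemma~\ref{lem:update_w_222} by starting from the exact gradient formulas for $\nabla_a L$, $\nabla_b L$, and $\nabla_{w_i} L$ given in the Notations section, and then systematically discarding the contributions coming from the generator samples $G^{(t)}(z)$ using the bounds already recorded at the top of Stage 2/3. Concretely, recall that in the displays just before the lemma we have $\sum_{i \in [m_D]} \sigma(\langle w_i^{(t)}, X\rangle) = \langle w_{i^*}^{(t)}, X\rangle^3 \pm \tilde O(d^{-1.5})$ and $|\sum_{i\in[m_D]}\sigma(\langle w_i^{(t)}, G^{(t)}(z)\rangle)| \leq \tilde O(d^{-1.45})$; the second bound follows from the induction hypothesis that $\|v_i^{(t)}\|_2 \leq \tilde O(d^{-0.5})$, hence $\|G^{(t)}(z)\|_2 \leq \tilde O(d^{-0.5})$, and the $1$-homogeneity-cubed scaling of $\sigma$. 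First I would substitute $h(X) = \langle w_{i^*}^{(t)}, X\rangle^3 \pm \tilde O(d^{-1.5})$ and $h(G^{(t)}(z)) = \pm \tilde O(d^{-1.45})$ into $\nabla_a L = -\sig(-f(X)) h(X) + \sig(f(G(z))) h(G(z))$, noting also that $f(X) = a^{(t)} h(X) + \tau_b b^{(t)}$, so $\sig(-f(X)) = \sig(-a^{(t)}\langle w_{i^*}^{(t)}, X\rangle^3 - b^{(t)})(1 \pm \tilde O(d^{-1.5}))$ since perturbing the argument of $\sig$ by $\tilde O(d^{-1.5})$ changes its value multiplicatively by at most $1 \pm \tilde O(d^{-1.5})$ (as $\sig' \le 1$ and $\sig \ge$ something bounded below here). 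The $\sig(f(G(z))) h(G(z))$ term is then bounded by $\tilde O(d^{-1.45})\sig(f(G(z)))$, and since $f(G(z)) = a^{(t)} h(G(z)) + \tau_b b^{(t)} = \tau_b b^{(t)} \pm \tilde O(d^{-1.45})$, we may replace $\sig(f(G(z)))$ by $\sig(b^{(t)})$ up to the stated factors. This yields the $a^{(t+1)}$ update.

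Next I would do the analogous substitution for $w_i^{(t+1)}$: in $\nabla_{w_i} L = -\sig(-f(X)) a \sigma'(\langle w_i, X\rangle) X + \sig(f(G(z))) a \sigma'(\langle w_i, G(z)\rangle) G(z)$, the first term gives exactly $-\sig(-a^{(t)}\langle w_{i^*}^{(t)}, X\rangle^3 - b^{(t)}) a^{(t)} \cdot 3\langle w_i^{(t)}, X\rangle^2 X$ up to the multiplicative $(1 \pm \tilde O(1/d))$ wobble from approximating $f(X)$ inside $\sig$, using $\sigma'(s) = 3s^2$. For the generator term, $\|\sigma'(\langle w_i^{(t)}, G^{(t)}(z)\rangle) G^{(t)}(z)\|_2 = \| 3\langle w_i^{(t)}, G^{(t)}(z)\rangle^2 G^{(t)}(z)\|_2 \leq \tilde O(d^{-1.5})$ (again using $\|G^{(t)}(z)\|_2 \leq \tilde O(d^{-0.5})$ and $|a^{(t)}| = \tilde O(1)$), and bounding $\sig(f(G(z)))$ by $(1\pm\tilde O(d^{-1.45}))\sig(b^{(t)})$ as before gives the stated error term. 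The $b^{(t+1)}$ update is the simplest: $\nabla_b L = -\sig(-f(X)) + \sig(f(G(z)))$, multiply by $\tau_b$ for the scaled bias, and apply the same two approximations to each $\sig$ term.

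The main technical point to be careful about — and the place where I expect the bookkeeping to demand the most attention rather than genuine difficulty — is the conversion between \emph{additive} perturbations in the argument of $\sig$ and \emph{multiplicative} error factors on its output. For the $w_i$ and $a$ updates one wants a multiplicative $1 \pm \tilde O(1/d)$ bound, which requires that $\sig(-a^{(t)}\langle w_{i^*}^{(t)}, X\rangle^3 - b^{(t)})$ is itself not too small relative to the $\tilde O(d^{-1.5})$ additive wobble; here one invokes the induction hypothesis (B.1.3)/(B.2.1) that $a^{(t)}, \langle w_{i^*}^{(t)}, u_\ell\rangle = \tilde\Theta(1)$ and $|b^{(t)}| \le \tilde O(1)$, so the argument of $\sig$ is $\tilde\Theta(1)$ in magnitude and $\sig$ of it is bounded below by $1/\polylog(d)$, comfortably dominating $\tilde O(d^{-1.5})$. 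For the generator-side terms, which are only controlled \emph{additively} (they are genuinely tiny, $\tilde O(d^{-1.45})$, compared to the main terms), one only needs the crude bound $\sig(f(G(z))) = \Theta(\sig(b^{(t)}))$ up to $(1 \pm \tilde O(d^{-1.45}))$, which is immediate from $f(G(z)) = \tau_b b^{(t)} \pm \tilde O(d^{-1.45})$. I would close by remarking that all of these approximations hold simultaneously for \emph{every} realized sample $(X,z)$ — not just in expectation — so the per-step update identity in the lemma is deterministic given the induction hypothesis, which is exactly the form needed for the subsequent stage-by-stage analysis.
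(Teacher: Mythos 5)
Your proposal is correct and follows exactly the route the paper itself implicitly takes (the paper states this lemma without proof, immediately after recording the two displayed bounds $\sum_i \sigma(\langle w_i^{(t)}, X\rangle) = \langle w_{i^*}^{(t)}, X\rangle^3 \pm \tilde O(d^{-1.5})$ and $|\sum_i \sigma(\langle w_i^{(t)}, G^{(t)}(z)\rangle)| \le \tilde O(d^{-1.45})$, which is precisely what you substitute into the gradient formulas). Two small remarks: the multiplicative control of $\sig$ under an additive wobble does not actually need a lower bound on $\sig$ --- since $\sig'(x)/\sig(x) = 1 - \sig(x) \le 1$, one always has $\sig(x+\epsilon) = (1 \pm O(\epsilon))\sig(x)$ for small $\epsilon$, so (B.3.3) is needed only to absorb the $\sig(-f(X)) \cdot \tilde O(d^{-1.5})$ cross term into $\tilde O(d^{-1.45})\sig(b^{(t)})$, not for the multiplicative factor itself; and the bound $\|G^{(t)}(z)\|_2 \le \tilde O(d^{-0.5})$ is slightly too strong for Stage~3 (the induction hypothesis only gives $\tilde O(d^{-0.49})$ there, which is exactly why the paper's generator-side error is $d^{-1.45}$ rather than $d^{-1.5}$), but since the lemma's claimed error is the weaker $d^{-1.45}$ this does not affect correctness.
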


%To do this, we prove the following Lemma:
%\begin{lemma} \label{lem:B2.3}
%For every $t \in [T_{B, 2}, T_{B, 3}]$, we must have that for every $X$,
%$$ \sig\left( - a^{(t)} \langle w_{i^*}^{(t)}, X \rangle^3  - b^{(t)}\right)  \in \left[ \frac{1}{\sqrt{d} \polylog(d)} , \frac{1}{\polylog(d)}\right] \sig\left( b^{(t)}\right) $$

%and $a^{(t)}, \alpha(w_{i^*}, u_{\ell}, t) \in \left[ %\Omega(\log^{0.01}(d)),  O(\log(d) )\right]$.
%\end{lemma}

Moreover, the update formula also let us bound $a^{(t)}, \alpha(w_{i^*}, u_1, t)$ as:
\begin{lemma} \label{lem:B2.2}
Let $\alpha_t , a_t$ be updated as: for $t = T_{B, 2}$,  $\alpha_{t} =\alpha(w_{i^*}, u_1, t) $ and $a_t = a^{(t)}$, such that
$$a_{t + 1} = a_{t} +  \eta_G  \sig( - a_t \alpha_t^{3} - b_t) \alpha_t^3 $$
$$\alpha_{t + 1} = \alpha_{t} + \frac{3}{2} \eta_G a_t \sig( - a_t \alpha_t^{3} - b_t) \alpha_t^2 $$
Where $b_t$ be updated as: for $t = T_{B, 2}$, $b_{t} = b^{(t)}$ and update as: $$b_{t + 1} = b_t - \eta_D \tau_b \sig(b_t)$$

Then we have: for every $t \in [ T_{B, 2}, T_{B, 3} ]$
$$a_t = \left( 1 \pm \frac{1}{\polylog(d)} \right) a^{(t)}, \quad \alpha_t =  \left( 1 \pm \frac{1}{\polylog(d)} \right) \alpha(w_{i^*}, u_1, t)  $$

$$\sig(b^{(t)}) = \left( 1 \pm \frac{1}{\polylog(d)} \right) \sig(b_t)$$

Moreover, when $t = T_{B, 3}$, we have: $a_{t} \leq 0.819 \alpha_{t}$.
\end{lemma}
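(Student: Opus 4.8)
The plan is to split Lemma~\ref{lem:B2.2} into two essentially independent pieces. The three $(1\pm 1/\polylog(d))$ identities are a \emph{tracking} statement: the true trajectory $(a^{(t)},\alpha(w_{i^*},u_1,t),b^{(t)})$ stays within a $(1\pm 1/\polylog(d))$ window of the scalar recursion. The inequality $a_t\le 0.819\,\alpha_t$ is a statement purely about the scalar recursion, and the key to it is a \emph{conserved quantity}.

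\textbf{Tracking.} Starting from Lemma~\ref{lem:update_w_222}, I would project the $w_{i^*}$-update onto $u_1$ and take expectation over the sample $(X,z)$. By (B.1.0) and (B.3.2) all components of $w_{i^*}^{(t)}$ and of $G^{(t)}(z)$ outside $\{u_1,u_2\}$ are $\tilde O(d^{-1/2})$, so (using $u_1\bot u_2$) $\langle w_{i^*}^{(t)},X\rangle = \alpha(w_{i^*},u_\ell,t)\pm o(d^{-1/2})$ when $X=u_\ell$ and $\langle w_{i^*}^{(t)},u_1+u_2\rangle = (1\pm o(1))(\alpha(w_{i^*},u_1,t)+\alpha(w_{i^*},u_2,t))$; the balanced-update hypothesis (B.3.3) then keeps the $u_1$ and $u_2$ contributions matched and controls the shared factor $\sig(-a^{(t)}\langle w_{i^*}^{(t)},X\rangle^3-b^{(t)})$. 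Taking the expectation, using $\Pr[X=u_1]=\Pr[X=u_2]$, the coefficient-$3$ $w_{i^*}$-update and coefficient-$1$ $a$-update of Lemma~\ref{lem:update_w_222} become exactly the $\tfrac32\eta$ and $\eta$ of the scalar recursion (its step size equal to $\eta_D$ up to the $\Theta(\gamma)$-size contribution of the $u_1+u_2$ samples), with the remaining samples entering only as a $(1\pm1/\polylog(d))$ factor and an additive $\eta_D\tilde O(d^{-1.45})$, and $b^{(t)}$ matches because the generator term in $\nabla_b L$ is negligible here. A martingale/Azuma bound controls the deviation of the stochastic iterates from their drift; since the effective number of steps satisfies $\eta_D(T_{B,3}-T_{B,1})=\polylog(d)$, all the accumulated errors stay negligible, which gives the three identities and re-establishes $a^{(t)},\alpha(w_{i^*},u_\ell,t)=\tilde\Theta(1)$ and $|a^{(t)}|,|b^{(t)}|=O(\log d)$ as needed.

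\textbf{The bound $a_t\le 0.819\,\alpha_t$.} Write $S_t := \sig(-a_t\alpha_t^3-b_t)\in(0,1)$. The scalar updates $a_{t+1}-a_t = \eta S_t\alpha_t^3$ and $\alpha_{t+1}-\alpha_t = \tfrac32\eta S_t a_t\alpha_t^2$ carry the same step size and the same factor $S_t$, so $\Phi_t := \alpha_t^2-\tfrac32 a_t^2$ changes only by the second-order term $\eta^2 S_t^2\alpha_t^4(\tfrac94 a_t^2-\tfrac32\alpha_t^2)$ per step; since $\sum_t \eta S_t\alpha_t^3 = a_{T_{B,3}}-a_{T_{B,2}} = O(\log d)$ and $a_t,\alpha_t=O(\log d)$, the total drift of $\Phi_t$ over Stage~3 is $O(\eta_D\polylog(d))=o(1)$. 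At $t=T_{B,2}$ the scalar map is initialized at the true values, and from the Stage~2 analysis — where $a$ is \emph{frozen} near $a^{(0)}=1/\polylog(d)$ throughout Stage~1, so $\Phi_{T_{B,1}}=\alpha(w_{i^*},u_1,T_{B,1})^2-\tfrac32(a^{(0)})^2=O(\polylog(d)^{-2})$, and the same $\Phi$ is conserved through Stage~2 — one gets $|\Phi_{T_{B,2}}|=O(\polylog(d)^{-2})$. Both $a_t$ and $\alpha_t$ are monotone increasing (positive increments), and because the window $[T_{B,2},T_{B,3}]$ (the time for the generator correlations to climb to $d^{-0.49}$) is far longer than the $\polylog(d)/\eta_D$ steps after which $\alpha_t$ saturates, $\alpha_{T_{B,3}}=\tilde\Theta(1)\gg|\Phi_{T_{B,2}}|^{1/2}$. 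Hence $\tfrac32 a_{T_{B,3}}^2 = \alpha_{T_{B,3}}^2-\Phi_{T_{B,3}} \le (1+o(1))\,\alpha_{T_{B,3}}^2$, so $a_{T_{B,3}}\le \sqrt{2/3}\,(1+o(1))\,\alpha_{T_{B,3}} \le 0.817\,\alpha_{T_{B,3}} < 0.819\,\alpha_{T_{B,3}}$ — the gap between $0.819$ and $\sqrt{2/3}=0.8165\ldots$ being exactly what absorbs all the slack (including the $(1\pm1/\polylog(d))$ between the scalar and true trajectories).

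\textbf{Main obstacle.} The conservation step is essentially two lines once one spots $\Phi_t=\alpha_t^2-\tfrac32 a_t^2$; the laborious part is the tracking — carrying all the Stage~3 induction hypotheses simultaneously, handling the coupling of $a^{(t)}$, $\alpha(w_{i^*},u_1,t)$ and $b^{(t)}$ through the shared $\sig$-factor, and checking that the (B.3.3)-type $u_1$/$u_2$ balance survives one step of the update — together with the quantitative input that $\alpha(w_{i^*},u_1,\cdot)$ already reaches $\tilde\Theta(1)$ well before $T_{B,3}$, which is what makes $\Phi_{T_{B,3}}$ negligible compared with $\alpha_{T_{B,3}}^2$.
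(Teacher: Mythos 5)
Your proposal is correct, and for the crucial constant it takes a genuinely different route from the paper. The paper gets $a_{T_{B,3}}\le 0.819\,\alpha_{T_{B,3}}$ (with $\sqrt{2/3}\approx 0.8165$) by invoking Lemma~\ref{lem:powerful_666}, whose proof is a fixed--point attraction argument: Lemma~\ref{lem:aux_1} (monotonicity of $x\mapsto\sig(-gx-b)\,x$) is used to show that if $a_t$ is too large relative to $c_t+d_t$, or $c_t$ too large relative to $d_t$, the corresponding increments are ordered so the imbalance decays, and after $\polylog(d)/\eta_D$ steps the trajectory is pinned to the line $a=\sqrt{2/3}\,c$. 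You instead identify the first-order invariant $\Phi_t=\alpha_t^2-\tfrac32 a_t^2$ of the scalar recursion and argue that $\Phi$ is $o(\alpha_{T_{B,3}}^2)$ (small at $T_{B,2}$, second-order drift summing to $o(1)$), so the $\sqrt{2/3}$ ratio falls out immediately once $\alpha_{T_{B,3}}=\tilde\Theta(1)$. This is arguably cleaner --- it explains $\sqrt{2/3}$ as a conservation law of the $(a,\alpha)$ flow rather than as an attracting fixed point of the ratio --- and it gives a two-sided pinning just as Lemma~\ref{lem:powerful_666} does. The tracking portion is essentially the paper's (project onto $u_1$, reduce via Lemma~\ref{lem:update_w_222} and (B.3.3), compare $b^{(t)}$ against $b_t$); the paper runs it deterministically via the Lemma~\ref{lem:aux_1} monotonicity whereas you propose a martingale bound, but both close. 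One point worth being explicit about: $\Phi=\alpha_1^2-\tfrac32 a^2$ is conserved only for the \emph{single-$\alpha$} scalar recursion. In the raw expectation dynamics the $a$-increment carries both $\sig(-a\alpha_1^3-b)\alpha_1^3$ and $\sig(-a\alpha_2^3-b)\alpha_2^3$, so the exactly conserved quantity is $\alpha_1^2+\alpha_2^2-3a^2$; reducing to $\Phi$ uses the $u_1/u_2$ balance of Lemma~\ref{lem:B2.5}, which the paper obtains as an output of the very same Lemma~\ref{lem:powerful_666}. You do route this through (B.3.3), so the argument is consistent, but the balance step deserves to be flagged since it is doing real work in making $\Phi$ nearly conserved along the true trajectory and in keeping $\Phi_{T_{B,2}}$ small.
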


\begin{proof}[Proof of Lemma~\ref{lem:B2.2}]

By the update formula in Lemma~\ref{lem:update_w_222} and the bound in induction hypothesis  (B.3.3), we can simplify the update of $a^{(t)}, b^{(t)}$ and $\alpha(w_i, u_{\ell}, t)$ as: for $i = i^*$, when $X = u_{\ell}$:
\begin{align}
a^{(t + 1)} &= a^{(t)} + \eta_D \left( 1 \pm \frac{1}{\polylog(d)}\right)\sig\left(- a^{(t)}  \alpha(w_i, u_{\ell}, t)^3 - b^{(t)} \right)  \alpha(w_i, u_{\ell}, t)^3
\\
\alpha(w_i, u_{\ell}, t + 1) &= \alpha(w_i, u_{\ell}, t )
\\
&+ 3 \eta_D\left( 1 \pm \frac{1}{\polylog(d)}\right)\sig\left(- a^{(t)}  \alpha(w_i, u_{\ell}, t)^3  - b^{(t)}\right) a^{(t)} \alpha(w_i, u_{\ell}, t)^2
\\
  b^{(t)} &= b^{(t)}   -   \eta_D \tau_b \left( 1 \pm \frac{1}{\polylog(d)}\right) \sig\left( b^{(t)}\right)
\end{align}

\t
By the last inequality, we know that when $ \sig(b^{(t)}) > \left( 1 + \frac{1}{\polylog(d)} \right) \sig(b_t) $, then $b^{(t)}$ must be decreasing faster than $b_t$, otherwise if $ \sig(b^{(t)}) > \left( 1 - \frac{1}{\polylog(d)} \right) \sig(b_t) $, then $b_t$ must be decreasing faster than $b^{(t)}$, which proves the bound of $b^{(t)}$. Moreover, the update formula of $b_t$ also gives us that for every $t \leq \poly(d)$, we have: $|b_t|  = O(\log d) $. This implies that for every $Z$: 
$$\sig(Z + b^{(t)}) = \left( 1 \pm \frac{1}{\polylog(d)}\right) \sig(Z + b_t)$$
To obtain the bound of $a^{(t)}$ and $\alpha(w_i^*, u_1, t)$, notice that when $X = u_1 + u_2  $, we have that:
$$\sig\left(- a^{(t)}  (\alpha(w_i, u_{1}, t) + \alpha(w_i, u_{2}, t))^3 - b^{(t)} \right)  \leq \min_{\ell \in [2]}\sig\left(- a^{(t)}  \alpha(w_i, u_{\ell}, t)^3 - b^{(t)} \right)  $$

Therefore, we can conclude:
\begin{align}
\E[a^{(t + 1)}] &= a^{(t)}
\\
&+ \eta_D \frac{1}{2}\left( 1 \pm \frac{1}{\polylog(d)}\right)\left( \sum_{\ell \in [2]}\sig\left(-a^{(t)}  \alpha(w_i, u_{\ell}, t)^3 - b_t \right)  \alpha(w_i, u_{\ell}, t)^3 \right)
\\
\E[\alpha(w_i, u_{\ell}, t + 1)] &= \alpha(w_i, u_{\ell}, t ) 
\\
&+ \frac{3}{2} \eta_D\left( 1 \pm \frac{1}{\polylog(d)}\right)\sig\left(- a^{(t)}  \alpha(w_i, u_{\ell}, t)^3  - b_t \right) a^{(t)} \alpha(w_i, u_{\ell}, t)^2
\end{align}

Using Lemma~\ref{lem:B1.7} we can conclude that 
$$\alpha(w_{i^*}, u_{\ell}, T_{B, 1}) = \frac{1}{\polylog(d)}$$

and now apply Lemma~\ref{lem:powerful_666}, we have: $a^{(t)} = \Theta(\alpha(w_{i^*}, u_{1}, t) )$ and 
$$\alpha(w_{i^*}, u_{1}, t) = \left[\alpha(w_{i^*}, u_{1}, t) \right] \left( 1 \pm \frac{1}{\polyloglog(d)} \right) $$

This implies that:
\begin{align}
\E[a^{(t + 1)}] &= a^{(t)} + \eta_D \left( 1 \pm \frac{1}{\polylog(d)}\right)\left( \sig\left(-a^{(t)}  \alpha(w_i, u_{1}, t)^3 - b_t \right)  \alpha(w_i, u_{\ell}, t)^3 \right)
\\
\E[\alpha(w_i, u_{1}, t + 1)] &= \alpha(w_i, u_{1}, t )
\\
&+ \frac{3}{2} \eta_D\left( 1 \pm \frac{1}{\polylog(d)}\right)\sig\left(- a^{(t)}  \alpha(w_i, u_{1}, t)^3  - b_t \right) a^{(t)} \alpha(w_i, u_{1}, t)^2
\end{align}

Apply Lemma~\ref{lem:powerful_666} again, we know that when 
$$a^{(t)}  \alpha(w_i, u_{1}, t)^3 > \left(1 \pm \frac{1}{\polylog(d)} \right) a_t \alpha_t^3$$

We must have that $a^{(t)} \geq a_t$ and $ \alpha(w_i, u_{1}, t) >  \alpha_t$. Therefore, apply Lemma~\ref{lem:aux_1} we know that  in this case:
$$ \sig\left(-a^{(t)}  \alpha(w_i, u_{1}, t)^3 - b_t \right)  \alpha(w_i, u_{\ell}, t)^3 \leq \sig( - a_t \alpha_t^{3} - b_t) \alpha_t^3 $$

and 
$$\sig\left(- a^{(t)}  \alpha(w_i, u_{1}, t)^3  - b_t \right) a^{(t)} \alpha(w_i, u_{1}, t)^2 \leq  \sig( - a_t \alpha_t^{3} - b_t) a_t \alpha_t^2 $$

Combine this with the update rule we can directly complete the proof. 

%Now, consider the case when $a^{(t)} = a_t \left( 1 + \frac{1}{c} \right)$ for some $c = \polylog(d)$, we know that $\frac{a^{(t + 1)}}{a_{t + 1}} > \frac{a^{(t)}}{a_t} $
%only if 
%$$\sig\left(-a^{(t)}  \alpha(w_i, u_{\ell}, t)^3 - b_t \right)  \alpha(w_i, u_{\ell}, t)^3 > \sig( - a_t \alpha_t^{3} - b_t) \alpha_t^3 $$

%However, this is impossible when $a^{(t)},\alpha(w_i, u_{\ell}, t)  = \Omega(\log^{0.01}(d))$ by Lemma~\ref{lem:aux_1}. 

%Consider two updates with for $t = T_{B, 2}$, $a_t^{+}= a_t^{-} = a_t$ and $\alpha_{t}^+ = \alpha_t^{-} = \alpha_t$ with
%$$a_{t + 1}^+ = a_{t}^+ +  \eta_G  \left( 1 + \frac{1}{\polylog(d)} \right) \sig( - a_t^+ (\alpha_t^+)^{3} - b_t) (\alpha_t^+)^3 $$
%$$\alpha_{t + 1}^+ = \alpha_{t}^+ + \frac{3}{2} \eta_G a_t^+ \sig( - a_t^+ (\alpha_t^{+})^{3} - b_t) (\alpha_t^+)^2 $$

%and
%$$a_{t + 1}^- = a_{t}^- +  \eta_G  \left( 1 - \frac{1}{\polylog(d)} \right) \sig( - a_t^- (\alpha_t^-)^{3} - b_t) (\alpha_t^-)^3 $$
%$$\alpha_{t + 1}^- = \alpha_{t}^- + \frac{3}{2} \eta_G a_t^- \sig( - a_t^- (\alpha_t^{-})^{3} - b_t) (\alpha_t^-)^2 $$

\end{proof}

The Lemma~\ref{lem:B2.2} immediately implies that the $\alpha(w_{i^*}, u_{\ell}, t)$ will be balanced after a while:
\begin{lemma}\label{lem:B2.5}
We have that for every $t \in [T_{B, 2}, T_{B, 3}]$, the following holds:
$$\alpha(w_{i^*}, u_{1}, t) = \left[\alpha(w_{i^*}, u_{1}, t) \right] \left( 1 \pm \frac{1}{\polyloglog(d)} \right) $$

and $$\alpha(w_{i^*}, u_{1}, t)  \geq \log^{0.1} (d)$$
%, \quad \sig(a^{(t)} \langle w_{i^*}^{(t)}, X \rangle^3  + b^{(t)}) \leq \frac{1}{\polylog(d)}$$
\end{lemma}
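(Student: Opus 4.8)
Both assertions will follow as direct corollaries of Lemma~\ref{lem:B2.2} and the cubic comparison Lemma~\ref{lem:powerful_666}, the only genuinely new ingredient being a short monotonicity argument for the lower bound $\alpha(w_{i^*},u_1,t)\ge\log^{0.1}(d)$. The first step is to reduce the joint evolution of $(a^{(t)},\alpha(w_{i^*},u_1,t),\alpha(w_{i^*},u_2,t),b^{(t)})$ to the recursion of Lemma~\ref{lem:powerful_666}. Specialising Lemma~\ref{lem:update_w_222} to the neuron $i=i^*$ and to the equi-probable signal inputs $X=u_1,u_2$: by (B.1.0) and (B.1.2) the coordinates of $w_{i^*}^{(t)}$ outside $\mathrm{span}\{u_1,u_2\}$ are $\pm d^{-0.9}$, so $h(X)=\langle w_{i^*}^{(t)},X\rangle^3\pm\tilde O(d^{-1.5})$ and $h(G^{(t)}(z))$ is negligible; by the first part of (B.3.3) together with Lemma~\ref{lem:aux_1} the remaining input $X=u_1+u_2$ contributes only a $\frac{1}{\polylog(d)}$ fraction of what $X=u_\ell$ contributes to the updates (since $a^{(t)}\alpha(w_{i^*},u_\ell,t)^3\ge\polyloglog(d)$ puts $y\mapsto\sig(-a^{(t)}y^3-b^{(t)})y^2$ past its peak at $y=\alpha(w_{i^*},u_\ell,t)$, with exponential decay beyond it); and the second part of (B.3.3) makes the two $\sig$-coefficients for $u_1,u_2$ agree up to $1\pm\frac{1}{\polylog(d)}$. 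After passing from the per-sample to the in-expectation update (an extra $1\pm\frac{1}{\polylog(d)}$ over the $\le\poly(d)$ iterations of the stage, by the same Azuma argument as in Stage~1), one is left with exactly the recursion written in the proof of Lemma~\ref{lem:B2.2}, which is of the form of Lemma~\ref{lem:powerful_666} with $a_t\leftrightarrow a^{(t)}$, $c_t\leftrightarrow\alpha(w_{i^*},u_1,t)$, $d_t\leftrightarrow\alpha(w_{i^*},u_2,t)$, $b_t\leftrightarrow b^{(t)}$; its hypotheses hold ($a_0,c_0,d_0=1/\polylog(d)$ by Lemma~\ref{lem:B1.7}, $|b^{(t)}|=O(\log d)$ from the self-limiting $b$-update started at $|b^{(t)}|\le 1/C$, and $|b^{(t)}|\le\min\{a^{(t)}c_t^3,a^{(t)}d_t^3\}$ from (B.3.3)), so Lemma~\ref{lem:powerful_666} gives, for all $t$ in the window $(\polylog(d)/\eta_D,\poly(d)/\eta_D]\supseteq[T_{B,2},T_{B,3}]$, both $a^{(t)}=\sqrt{2/3}\,(1\pm o(1))\,\alpha(w_{i^*},u_1,t)$ and the balance $\alpha(w_{i^*},u_1,t)=(1\pm\frac{1}{\polyloglog(d)})\alpha(w_{i^*},u_2,t)$, which is the first claim.

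For $\alpha(w_{i^*},u_1,t)\ge\log^{0.1}(d)$ I would use monotonicity plus a saturation-ceiling computation. The recursion shows $\alpha(w_{i^*},u_1,t)$ is non-decreasing (every increment is nonnegative because $a^{(t)}>0$ throughout the window), so it suffices to establish the bound at $t=T_{B,2}$. Stage~2 is inserted precisely so that the discriminator finishes saturating the real signal before the generator starts moving: over its $\tilde\Omega(1/\eta_D)$ iterations the blow-up dynamics drives $a^{(t)}\alpha(w_{i^*},u_1,t)^3$ up to the bias-determined ceiling $a^{(t)}\alpha(w_{i^*},u_1,t)^3=\tilde\Theta(\log d)$ (this is also what makes (B.3.3) valid at $T_{B,2}$, via $|b^{(T_{B,2})}|=\tilde\Theta(\log d)$). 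Combining this with the balance $a^{(t)}=\Theta(\alpha(w_{i^*},u_1,t))$ gives $\alpha(w_{i^*},u_1,T_{B,2})^4=\tilde\Theta(\log d)$, hence $\alpha(w_{i^*},u_1,T_{B,2})=\tilde\Theta((\log d)^{1/4})\ge\log^{0.1}(d)$; monotonicity extends this to all of $[T_{B,2},T_{B,3}]$.

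The main obstacle is the bookkeeping around the four interlocking $\polylog$-sized scalars $a^{(t)},\alpha(w_{i^*},u_1,t),\alpha(w_{i^*},u_2,t),b^{(t)}$: one must verify that the window $[T_{B,2},T_{B,3}]$ genuinely lies in the regime where both Lemma~\ref{lem:powerful_666}'s conclusion and hypothesis (B.3.3) are in force --- i.e., that by $T_{B,2}$ the cubic dynamics has already blown up to its $\tilde\Theta(\log d)$ ceiling, and that $T_{B,3}\le\poly(d)/\eta_D$ because the generator has not yet caught up (so Stage~3 is still running). The per-sample versus in-expectation concentration over $\poly(d)$ iterations must also be carried along, but it is routine. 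Once these are pinned down, the balance and the $\log^{0.1}(d)$ lower bound follow immediately from Lemma~\ref{lem:powerful_666} and monotonicity of the cubic dynamics.
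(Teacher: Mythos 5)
Your reduction of the balance claim to Lemma~\ref{lem:powerful_666} via Lemma~\ref{lem:B2.2} is exactly what the paper does: the proof of Lemma~\ref{lem:B2.2} already sets up the recursion in the form of Lemma~\ref{lem:powerful_666} with $a_t, c_t, d_t, b_t$ identified with $a^{(t)}, \alpha(w_{i^*},u_1,t), \alpha(w_{i^*},u_2,t), b^{(t)}$ and invokes it; the paper then states Lemma~\ref{lem:B2.5} as an immediate corollary. So the first half of your argument is fine (including the observation that the displayed equality is a typo and should have $u_2$ on the right).

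The gap is in the second half. You assert $|b^{(T_{B,2})}| = \tilde\Theta(\log d)$ and conclude $a^{(T_{B,2})}\alpha^3 = \tilde\Theta(\log d)$, hence $\alpha = \tilde\Theta((\log d)^{1/4})$, without deriving either of these from the given material; that derivation is the entire content of the second claim and cannot be waved past. Lemma~\ref{lem:powerful_666} as stated only gives the ratio relations and (in its proof) the weak two-sided bound $a_tc_t^3 \in [1, O(\log d)]$, whose lower endpoint would only yield $\alpha = \Omega(1)$, not $\log^{0.1}(d)$; the extra strength must come from a separate accounting of how far $b^{(t)}$ has drifted by $T_{B,2}$. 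You also mischaracterize the length of Stage~2 as $\tilde\Omega(1/\eta_D)$; the paper sets $T_{B,2} - T_{B,1}$ in terms of $2^{\sqrt{\log d}}$, and if one actually solves the bias recursion $b^{(t+1)} \approx b^{(t)} - \eta_D\tau_b\sig(b^{(t)})$ over that window one gets $|b^{(T_{B,2})}| \approx \Theta(\sqrt{\log d})$, not $\tilde\Theta(\log d)$, and hence $\alpha \approx \Theta((\log d)^{1/8})$ (still above $\log^{0.1}(d)$, but by a narrower margin than your numbers suggest, which is presumably why the paper chose the exponent $0.1$). So the shape of your argument --- monotonicity in $t$, reduce to $t = T_{B,2}$, then a bias-determined saturation ceiling --- looks right, but the ceiling computation is the one step that actually has to be carried out, and at present it is asserted with the wrong order and with a mismatched stage length; you also need to check that the ``fake'' contribution from $G^{(t)}(z)$ to the $u_\ell$-coordinate really is negligible throughout $[T_{B,2},T_{B,3}]$ before you can claim the increments are nonnegative.
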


Using Lemma~\ref{lem:B2.5}, we also have the Lemma that approximate the update of $ v_i^{(t)} $ as:
\begin{lemma} \label{lem:B2.1}

Let us define $\alpha(t) := \max_{j \in [m_G], \ell \in [2]} \langle v_j^{(t)}, u_{\ell} \rangle$.
For  every $t \in  [T_{B, 2}, T_{B, 3}]$, we have:  for $j \not= i^*$:
\begin{align}\label{eq:B2.10}
    \langle w_j^{(t)}, G^{(t)}(z ) \rangle^2  w_j^{(t)} =\langle w_j^{(t)}, G^{(t)}(z ) \rangle^2  w_j^{(0)} \pm \frac{C^2}{d^{0.5}} \langle w_j^{(t)}, G^{(t)}(z ) \rangle^2
\end{align}

For $j = i^*$:
\begin{align}\label{eq:B2.11}
    \langle w_j^{(t)}, G^{(t)}(z ) \rangle^2  w_j^{(t)} =\langle w_j^{(t)}, G^{(t)}(z ) \rangle^2 \left(   w_j^{(0)} + \alpha(w_j, u_1, t) u_1 + \alpha(w_j, u_2, t) u_2  \right) \pm \frac{C}{d^{0.9}} \langle w_j^{(t)}, G^{(t)}(z ) \rangle^2
\end{align}

Now, for $\langle w_j^{(t)}, G^{(t)}(z ) \rangle$ we have: For $j \not= i^*$:
\begin{align}
    \mathbb{E}_z [\langle w_j^{(t)}, G^{(t)}(z ) \rangle^2  \mid z_i = 1] &= \left( 1 \pm \frac{1}{\polylog(d)} \right) \left( \langle w_j^{(0)}, v^{(t)}_i \rangle \pm \frac{C^2}{ \sqrt{d}} \alpha(t) \right)^2 \pm \frac{\alpha(t)^2}{\polylog(d)}
\end{align}

For $j= i ^*$:
\begin{align}
    \mathbb{E}_z [\langle w_j^{(t)}, G^{(t)}(z ) \rangle^2  \mid z_i = 1] &= \left( 1 \pm \frac{1}{\polyloglog(d)} \right)  \alpha(w_{i^*}, u_1, t)^2 \left\langle (u_1 + u_2), v^{(t)}_i \right\rangle^2 
    \\
    &\pm \frac{\alpha(t)^2}{\polyloglog(d)}  \alpha(w_{i^*}, u_1, t)^2 
\end{align}

Moreover, the update of Sigmoid can be approximate as:
$$ \sig \left(f^{(t)} \left( G^{(t)}(z) \right) \right) = \left( 1 \pm \tilde{O}\left(\frac{1}{d^{1.45}} \right)\right) \sig\left( b^{(t)}\right)$$

\end{lemma}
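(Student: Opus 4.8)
The plan is to read Lemma~\ref{lem:B2.1} as a bookkeeping lemma: each display replaces $w_j^{(t)}$, or the scalar $\langle w_j^{(t)}, G^{(t)}(z)\rangle^2$, or the Sigmoid prefactor, by an explicit approximation whose error is controlled by the Stage~1--3 induction hypotheses, after which $\mathbb{E}_z$ is taken using the linearity of the generator, $G^{(t)}(z) = \sum_{k \in [m_G]} 1_{z_k = 1} v_k^{(t)}$. I will use the span decompositions of $w_j^{(t)}$ and $v_k^{(t)}$ from the Notations section together with: (B.1.0) (for $j \neq i^*$ the displacement of $w_j^{(t)}$ off $w_j^{(0)}$ has mass $\le C/\sqrt{d}$ on $u_1, u_2$ and $\le d^{-0.9}$ on all other basis directions); (B.1.2) (for $j = i^*$ every coefficient except the $u_1, u_2$ ones moves by $\le d^{-0.9}$, in particular the $w_{i^*}^{(0)}$-self-coefficient stays $1 \pm d^{-0.9}$); Lemma~\ref{lem:B2.5} ($\alpha(w_{i^*}, u_1, t) = (1 \pm \frac{1}{\polyloglog(d)})\alpha(w_{i^*}, u_2, t) \ge \log^{0.1}(d)$); and (B.3.1)/(B.3.2) (for every $k$: $|\langle v_k^{(t)}, u_\ell\rangle| \le O(\alpha(t))$, $|\alpha(v_k, w_{i^*}, t)| \le \polyloglog(d)\,\alpha(t)$, and $v_k^{(t)} - v_k^{(0)}$ is supported on $w_{i^*}^{(0)}, u_1, u_2$ up to a residual of norm $\le \frac{C^3}{\sqrt{d}}\|v_k^{(t)}\|_2$).

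For the two pointwise displays \eqref{eq:B2.10}--\eqref{eq:B2.11}, write $w_j^{(t)} = w_j^{(0)} + \Delta_j$. For $j \neq i^*$, (B.1.0) gives $\|\Delta_j\|_2 \le \tilde{O}(C/\sqrt{d})$, and multiplying by the scalar $\langle w_j^{(t)}, G^{(t)}(z)\rangle^2$ yields \eqref{eq:B2.10}. For $j = i^*$, keep the $u_1, u_2$ parts of $\Delta_{i^*}$ explicit (these are the large ones) and bound the remainder of $\Delta_{i^*}$, i.e.\ the $w_{i^*}^{(0)}$-shift together with all other basis directions, in norm by $\tilde{O}(d^{-0.9}) \le C d^{-0.9}$ via (B.1.2); this gives \eqref{eq:B2.11}.

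For the two expectation displays, condition on $z_i = 1$ and split $G^{(t)}(z) = v_i^{(t)} + R_z$ with $R_z := \sum_{k \neq i} 1_{z_k = 1} v_k^{(t)}$, so $\langle w_j^{(t)}, G^{(t)}(z)\rangle^2 = \langle w_j^{(t)}, v_i^{(t)}\rangle^2 + 2\langle w_j^{(t)}, v_i^{(t)}\rangle\langle w_j^{(t)}, R_z\rangle + \langle w_j^{(t)}, R_z\rangle^2$. The main term is $\langle w_j^{(t)}, v_i^{(t)}\rangle^2$: substituting the decomposition of $w_j^{(t)}$ from the previous step and of $v_i^{(t)}$ from (B.3.1)/(B.3.2) gives, for $j \neq i^*$, $\langle w_j^{(t)}, v_i^{(t)}\rangle = \langle w_j^{(0)}, v_i^{(t)}\rangle \pm \frac{C^2}{\sqrt{d}}\alpha(t)$ (the $C/\sqrt{d}$-mass of $\Delta_j$ on $u_1, u_2$ meets $|\langle u_\ell, v_i^{(t)}\rangle| \le O(\alpha(t))$, the $d^{-0.9}$-mass elsewhere being negligible), and, for $j = i^*$, using $u_1 \perp u_2$, $\|u_\ell\| = 1$, Lemma~\ref{lem:B2.5}, and the fact that $\langle w_{i^*}^{(0)}, v_i^{(t)}\rangle$ plus the residual of $\Delta_{i^*}$ contribute at most $\frac{\alpha(t)}{\polyloglog(d)}\,\alpha(w_{i^*}, u_1, t)$ (it is $\alpha(w_{i^*}, u_1, t) \ge \log^{0.1}(d)$ that makes these a $1/\polyloglog(d)$ fraction), $\langle w_{i^*}^{(t)}, v_i^{(t)}\rangle = (1 \pm \frac{1}{\polyloglog(d)})\alpha(w_{i^*}, u_1, t)\langle u_1 + u_2, v_i^{(t)}\rangle \pm \frac{\alpha(t)}{\polyloglog(d)}\alpha(w_{i^*}, u_1, t)$; squaring yields the stated main terms. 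The cross term and $\mathbb{E}_z\langle w_j^{(t)}, R_z\rangle^2$ are absorbed into the additive error $\frac{\alpha(t)^2}{\polylog(d)}$ (times $\alpha(w_{i^*}, u_1, t)^2$ when $j = i^*$): each $|\langle w_j^{(t)}, v_k^{(t)}\rangle|$ is estimated exactly as for $k = i$, every term of $R_z$ carries weight $\Pr[z_k = 1] = 1/m_G$ while its second moment additionally involves the pair correlations $\Pr[s_1 = s_2 = 1] = \gamma$, and the $u_\ell$-parts of the $v_k^{(t)}$ are single-neuron dominated by (B.3.1) whereas the $w_j^{(0)}$-parts of the $v_k^{(0)}$ are nearly orthogonal, so the sum stays below a $1/\polylog(d)$ fraction of the main term. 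For the Sigmoid display, write $f^{(t)}(G^{(t)}(z)) = a^{(t)} h^{(t)}(G^{(t)}(z)) + \tau_b b^{(t)}$; by the bounds on $\sum_i \sigma(\langle w_i^{(t)}, G^{(t)}(z)\rangle)$ displayed at the start of this subsection, $|h^{(t)}(G^{(t)}(z))| = \tilde{O}(d^{-1.45})$, and with $|a^{(t)}| = \polylog(d)$ this gives $f^{(t)}(G^{(t)}(z)) = \tau_b b^{(t)} \pm \tilde{O}(d^{-1.45})$; since $|b^{(t)}| = O(\log d)$ by Lemma~\ref{lem:B2.2} and $\tau_b = \lambda$ is small, $\sig(\tau_b b^{(t)}) = 1/\polylog(d)$, so a first-order expansion of $\sig$ (using $|\sig'| \le 1/4$) converts the additive error into the claimed multiplicative factor $1 \pm \tilde{O}(d^{-1.45})$.

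\textbf{The main obstacle.} The delicate part is keeping the errors in the two expectation displays below the stated precision, which is only $1/\polyloglog(d)$ in the $j = i^*$ case. That case extracts the signal $\alpha(w_{i^*}, u_1, t)^2 \langle u_1 + u_2, v_i^{(t)}\rangle^2$, which rests on the near-equality $\alpha(w_{i^*}, u_1, t) \approx \alpha(w_{i^*}, u_2, t)$ from Lemma~\ref{lem:B2.5}, so the argument is sensitive both to how tightly that equality holds and to checking that $\langle w_{i^*}^{(0)}, v_i^{(t)}\rangle$ and the $R_z$-contributions are genuinely a $1/\polyloglog(d)$ fraction of the signal rather than merely $o(1)$; and the bound on $\mathbb{E}_z\langle w_j^{(t)}, R_z\rangle^2$ is the one place where the latent-distribution structure ($\Pr[z_k = 1] = 1/m_G$, $\Pr[s_1 = s_2 = 1] = \gamma$) must be exploited carefully rather than bounded crudely.
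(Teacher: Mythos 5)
Your proposal is correct and follows essentially the same route as the paper: decompose $w_j^{(t)} = w_j^{(0)} + \Delta_j$ (keeping the $u_1, u_2$ components of $\Delta_{i^*}$ explicit), invoke Lemma~\ref{lem:B2.5} to merge the $u_1$ and $u_2$ parts into a single $\alpha(w_{i^*}, u_1, t)\langle u_1 + u_2, v_i^{(t)}\rangle$ signal term, and control the residual from other active generator neurons. The only cosmetic difference is that the paper linearizes the unsquared inner product $\langle w_j^{(t)}, G^{(t)}(z)\rangle$ and tracks the contribution of additional active neurons through $\|z\|_1 - 1$ before squaring and taking expectation, whereas you expand the squared quantity directly via the split $G^{(t)}(z) = v_i^{(t)} + R_z$; these are the same computation presented in two orders.
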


\begin{proof}[Proof of Lemma \ref{lem:B2.1}]
 
 The first half of the lemma regarding $w_j^{(t)}$ follows trivially from the induction hypothesis, we only need to look at 
 $\langle w_j^{(t)}, G^{(t)}(z ) \rangle$.

 We know that  for $j \not= i^* $, we have that by the induction hypothesis,
\begin{align}
    \langle w_j^{(t)} , G^{(t)}(z) \rangle  =  \langle w_j^{(0)} , G^{(t)}(z) \rangle \pm \tilde{O} \left( \frac{1}{d^{1.35}} \right) \pm  O\left( \frac{C m_G \alpha(t) }{ \sqrt{d}} \right)
   % \\
    %&= 
\end{align}

 For $j = i^*$, we have that 
 \begin{align}
    \langle w_j^{(t)} , G^{(t)}(z) \rangle  &=  \langle w_j^{(0)} , G^{(t)}(z) \rangle + \alpha(w_{j}, u_1, t) \langle u_1, G^{(t)}(z) \rangle + \alpha(w_{j}, u_2, t) \langle u_2, G^{(t)}(z) \rangle    \pm \tilde{O} \left( \frac{1}{d^{1.35}} \right)
    \\
    &= \alpha(w_j, u_1, t) \langle u_1 + u_2, G^{(t)}(z) \rangle \pm \frac{1}{\polyloglog(d)} \alpha(w_j, u_1, t) \alpha(t) \| z\|_1   \pm \tilde{O} \left( \frac{1}{d^{1.35}} \right)
    \\
    &= \alpha(w_j, u_1, t) \langle u_1 + u_2,v_i^{(t)} \rangle \pm \tilde{O} \left( \frac{1}{d^{1.35}} \right) \pm  O( \alpha(w_j, u_1, t)  \alpha(t) ) (\| z \|_1 - 1)
\end{align} 

Taking expectation we can complete the proof.
 
\end{proof}

%Combine this Lemma with the induction hypothesis, we have that: Let $\alpha'(t):= \max_{j\in[m_G]} \left\{ - \langle v_j^{(t)}, u_{\ell} \rangle \right\}$, we have:
%\begin{align}
%\E[v_i^{(t + 1)}] &= v_i^{(t)} + \eta_G \left( 1 \pm \frac{1}{\polyloglog(d)} \right) \frac{1}{m_G}  \alpha(w_{i^*}, u_1, t)^2 \left\langle (u_1 + u_2), v^{(t)}_i \right\rangle^2 (u_1 + u_2) 
%\\
%&+ \tilde{\Theta}(\eta_G) \alpha(w_{i^*}, u_1, t)^2 \left( \alpha(t) - \tilde{O}(\alpha'(t)) \right)^2 (u_1+ u_2)
%&+  \eta_G \frac{\alpha(w_{i^*}, u_1, t)^2 \alpha(t)^2}{m_G \polyloglog(d)} w_i^{(0)} \pm \frac{\tilde{O}(1)}{\sqrt{d}} \alpha(t)^2
%\end{align}

%Where the second term comes from when $\Pr[z_i = z_j = 1]$. 

With Eq~\eqref{eq:B2.10} and Eq~\eqref{eq:B2.11} in lemma~\ref{lem:B2.1}, together with the induction hypothesis, we immediately obtain
\begin{lemma}\label{lem:B3.3}

For every $t \in [T_{B, 2}, T_{B, 3}]$, we have that: for every $i \in [m_G]$:
\begin{align}
    v_i^{(t)} &= v_i^{(T_{B, 2})} + \sum_{\ell \in [2]} \alpha_{i, \ell}^{(t)}u_{\ell} + \sum_{j \in [m_D]} \beta_{i, j }^{(t)}  w_{j}^{(0)} \pm \xi_{i, t}
\end{align}

Where $\alpha_{i, \ell}^{(t)}, \beta_{i, j }^{(t)}  > 0$ and  $\alpha_{i, \ell}^{(t)} = (1 \pm o(1)) \alpha_{i, 3 - \ell}^{(t)}$; $\| \xi_{i, t } \|_2^2 \leq \tilde{O}(1/d) \left( \sum_{\ell, j}(\alpha_{i, \ell}^{(t)})^2 + (\beta_{i, j }^{(t)})^2 \right)$
\end{lemma}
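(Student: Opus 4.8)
The plan is to read the decomposition straight off the per-step update direction of $v_i^{(t)}$ and then sum over the iterations in $[T_{B,2},t)$. From the gradient formula for $v_i$, each update of $v_i^{(t)}$ adds a vector proportional to $\sig\!\big(f^{(t)}(G^{(t)}(z))\big)\,a^{(t)}\sum_{j\in[m_D]}\langle w_j^{(t)},G^{(t)}(z)\rangle^2\,w_j^{(t)}$ (the contribution is zero unless $z_i=1$, and is scaled by $\Pr[z_i=1]$ in the population dynamics), and the scalar prefactor here is \emph{positive} since $\sig(\cdot)>0$, $\langle\cdot\rangle^2\ge 0$, and $a^{(t)}=\tilde\Theta(1)>0$ throughout Stages 2--3 (by Lemma~\ref{lem:B2.2}, and because these stages end before $a$ becomes small at $T_{B,4}$). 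Now I would substitute Lemma~\ref{lem:B2.1}: Eq~\eqref{eq:B2.10} replaces $\langle w_j^{(t)},G^{(t)}(z)\rangle^2 w_j^{(t)}$ by $\langle w_j^{(t)},G^{(t)}(z)\rangle^2 w_j^{(0)}$ for $j\ne i^*$ up to an error of norm at most $\frac{C^2}{\sqrt d}\langle w_j^{(t)},G^{(t)}(z)\rangle^2$, and Eq~\eqref{eq:B2.11} replaces $\langle w_{i^*}^{(t)},G^{(t)}(z)\rangle^2 w_{i^*}^{(t)}$ by $\langle w_{i^*}^{(t)},G^{(t)}(z)\rangle^2\big(w_{i^*}^{(0)}+\alpha(w_{i^*},u_1,t)u_1+\alpha(w_{i^*},u_2,t)u_2\big)$ up to an error of norm at most $\frac{C}{d^{0.9}}\langle w_{i^*}^{(t)},G^{(t)}(z)\rangle^2$. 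Hence each step of $v_i^{(t)}$ equals a nonnegative combination of $\{w_j^{(0)}\}_{j\in[m_D]}$ and $\{u_1,u_2\}$ (the $u_\ell$ terms arising only from the $j=i^*$ summand) plus a per-step error vector.

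Summing these per-step increments from $T_{B,2}$ to $t$ gives exactly $v_i^{(t)}=v_i^{(T_{B,2})}+\sum_{\ell\in[2]}\alpha_{i,\ell}^{(t)}u_\ell+\sum_{j\in[m_D]}\beta_{i,j}^{(t)}w_j^{(0)}\pm\xi_{i,t}$, where $\beta_{i,j}^{(t)}$ is the sum over steps $t'<t$ of $(\text{positive prefactor})_{t'}\cdot\langle w_j^{(t')},G^{(t')}(z)\rangle^2$, $\alpha_{i,\ell}^{(t)}$ is the analogous sum carrying the additional factor $\alpha(w_{i^*},u_\ell,t')$, and $\xi_{i,t}$ collects the per-step error vectors; all $\alpha_{i,\ell}^{(t)},\beta_{i,j}^{(t)}$ are positive because the prefactors are positive and, by Lemma~\ref{lem:B2.5}, so are the $\alpha(w_{i^*},u_\ell,t')$. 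For the near-equality $\alpha_{i,1}^{(t)}=(1\pm o(1))\alpha_{i,2}^{(t)}$: Lemma~\ref{lem:B2.5} gives $\alpha(w_{i^*},u_1,t')=(1\pm\frac{1}{\polyloglog(d)})\alpha(w_{i^*},u_2,t')$ at \emph{every} step $t'$, and the two sums defining $\alpha_{i,1}^{(t)}$ and $\alpha_{i,2}^{(t)}$ share the same nonnegative termwise prefactors, so the ratio bound is inherited by the sums.

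It remains to bound $\xi_{i,t}$. By the triangle inequality, $\|\xi_{i,t}\|_2\le\frac{C^2}{\sqrt d}\sum_{j\ne i^*}\beta_{i,j}^{(t)}+\frac{C}{d^{0.9}}\beta_{i,i^*}^{(t)}\le\frac{C^2}{\sqrt d}\sum_{j\in[m_D]}\beta_{i,j}^{(t)}$, and Cauchy--Schwarz over the $m_D$ indices then yields $\|\xi_{i,t}\|_2^2\le\frac{C^4 m_D}{d}\sum_{j\in[m_D]}(\beta_{i,j}^{(t)})^2=\tilde O(1/d)\sum_j(\beta_{i,j}^{(t)})^2\le\tilde O(1/d)\big(\sum_{\ell}(\alpha_{i,\ell}^{(t)})^2+\sum_j(\beta_{i,j}^{(t)})^2\big)$, using $C=\polylog(d)$ and absorbing the $m_D$ factor into $\tilde O(\cdot)$, which is the claimed bound. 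The proof is essentially bookkeeping; the only point that needs care is the observation that each per-step error is dominated by the per-step $w_j^{(0)}$-contribution (rather than by the possibly $\log$-sized $u_\ell$-contribution), so that the accumulated error is controlled purely through the $\beta$'s --- which is precisely what Eqs~\eqref{eq:B2.10}--\eqref{eq:B2.11} deliver --- together with keeping the step prefactors sign-definite via $a^{(t)}>0$.
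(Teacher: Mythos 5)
Your proof is correct and follows exactly the route the paper intends: the paper states Lemma~\ref{lem:B3.3} as an immediate consequence of Eqs~\eqref{eq:B2.10}--\eqref{eq:B2.11} in Lemma~\ref{lem:B2.1} together with the induction hypothesis, and you supply the bookkeeping that the paper leaves implicit --- sign-definiteness of each per-step prefactor (from $a^{(t)}>0$, $\sig(\cdot)>0$, $\sigma'\geq 0$), the termwise near-equality of the two $u_\ell$-coefficients via Lemma~\ref{lem:B2.5}, and the fact that each per-step error is dominated by the corresponding $w_j^{(0)}$-increment, which after Cauchy--Schwarz gives the claimed $\|\xi_{i,t}\|_2^2 \leq \tilde{O}(1/d)\sum_j (\beta_{i,j}^{(t)})^2$.
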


%, as long as $\sig\left(a^{(t)} \langle w_{i^*}^{(t)}, X \rangle^3  + b^{(t)}\right) \geq \frac{1}{d^{(t)}}$ for all $X$ and $\sig\left( b^{(t)}\right) \geq \frac{1}{d^{0.1}}$,

%\paragraph{The exact formula of Sigmoid:} To derive the update of $v_i^{(t)}$, we need a rather exact formula of $\sig(b^{(t)})$.

We now can immediately control the update of $v_i^{(t)}$ using the following sequence:
\begin{lemma} \label{lem:B3.8}
Let $v_t$ be defined as: for $t = T_{B, 2}$
$$v_{t} = \max_{i \in [m_G]} \langle v_i^{(t)}, u_1 + u_2 \rangle \left( 1 + \frac{1}{\polyloglog(d)} \right)$$

and the update of $v_t$ is given as: for $\alpha_t$ defined as in Lemma~\ref{lem:B2.2}
$$v_{t+ 1} = v_t + \frac{3}{m_G} \sig(b_t) \alpha_t^2 v_t^2 $$

Then we must have: for every $t \in [T_{B, 2}, T_{B, 3}]$: 
\begin{align}
   \max_{i \in [m_G]}\langle v_i^{(t)}, u_1 + u_2 \rangle \leq  v_{t}
\end{align}

On the other hand, if for $t = T_{B, 2}$, 
$$v_{t} = \max_{i \in [m_G]} \langle v_i^{(t)}, u_1 + u_2 \rangle \left( 1 - \frac{1}{\polyloglog(d)} \right)$$

Then we must have: for every $t \in [T_{B, 2}, T_{B, 3}]$: 
\begin{align}
   \max_{i \in [m_G]}\langle v_i^{(t)}, u_1 + u_2 \rangle \geq v_{t}
\end{align}

%Moreover, when $t = T_{B, 3}$, we must have that for all $i \in [m_G]$ and $j \in [2]$, 
%$$ \langle v_i^{(t)}, u_j \rangle \geq 0$$

\end{lemma}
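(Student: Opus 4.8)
The plan is to collapse the coupled stochastic dynamics of the $m_G$ vectors $v_i^{(t)}$ onto a single scalar tensor-power-method recursion by projecting every $v_i$-update onto the direction $u_1+u_2$, and then to sandwich $\max_{i}\langle v_i^{(t)},u_1+u_2\rangle$ between the comparison sequence $v_t$ (upper bound) and a single-coordinate copy of that recursion (lower bound), using the monotonicity of $x\mapsto x+c x^2$ on $x\ge 0$.

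Concretely, I would start from $v_i^{(t+1)}=v_i^{(t)}+\eta_G 1_{z_i=1}\,\sig\!\big(f^{(t)}(G^{(t)}(z))\big)a^{(t)}\sum_{j\in[m_D]}\sigma'(\langle w_j^{(t)},G^{(t)}(z)\rangle)w_j^{(t)}$ and take $\langle\,\cdot\,,u_1+u_2\rangle$ of both sides. Lemma~\ref{lem:B2.1} replaces $\sig(f^{(t)}(G^{(t)}(z)))$ by $(1\pm\tilde O(d^{-1.45}))\sig(b^{(t)})$ and shows that the $j\neq i^*$ summands contribute only an $o(1)$-relative error (their inner products $\langle w_j^{(0)},G^{(t)}(z)\rangle$ are $\tilde O(d^{-1})$-type, vastly smaller than the $j=i^*$ term $\asymp \alpha(w_{i^*},u_1,t)^2\langle u_1+u_2,v_i^{(t)}\rangle^2$); Lemma~\ref{lem:B3.3} makes the correction vector $\xi_{i,t}$ negligible and guarantees the $u$-components of $v_i^{(t)}$ dominate its other components; Lemma~\ref{lem:B2.5} gives $\alpha(w_{i^*},u_1,t)=(1\pm o(1))\alpha(w_{i^*},u_2,t)\ge\log^{0.1}(d)$ so that $\langle w_{i^*}^{(t)},u_1+u_2\rangle=(1\pm o(1))2\alpha(w_{i^*},u_1,t)$ and $a^{(t)}=\Theta(\alpha(w_{i^*},u_1,t))$. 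Taking the conditional expectation over $z$ (the $\Pr[z_i=1]=1/m_G$ case, absorbing the $(\|z\|_1-1)$ multi-activation corrections into the error as in Lemma~\ref{lem:B2.1}), all of this collapses to, for every $i$,
$$\langle v_i^{(t+1)},u_1+u_2\rangle=\langle v_i^{(t)},u_1+u_2\rangle+\eta_G\,\frac{3}{m_G}\Big(1\pm\tfrac{1}{\polyloglog(d)}\Big)\sig(b^{(t)})\,\alpha(w_{i^*},u_1,t)^2\,\langle v_i^{(t)},u_1+u_2\rangle^2,$$
and then Lemma~\ref{lem:B2.2} lets me swap $\sig(b^{(t)})\mapsto\sig(b_t)$ and $\alpha(w_{i^*},u_1,t)\mapsto\alpha_t$ at the cost of another $(1\pm 1/\polylog(d))$ inside the bracket. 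The stochastic-vs-expectation gap over the $\poly(d)$ steps of the stage is handled by a routine Azuma bound, since each increment is of size $\eta_G/\poly(d)$.

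Given this scalar recursion, the upper bound is a direct induction on $t$: if $\langle v_i^{(t)},u_1+u_2\rangle\le v_t$ for all $i$ and the coefficient above is $\le \frac{3}{m_G}\sig(b_t)\alpha_t^2$ — which holds because $v_t$ carries the multiplicative buffer $(1+\tfrac{1}{\polyloglog(d)})$ seeded at $t=T_{B,2}$ — then monotonicity of $x\mapsto x+c x^2$ on $x\ge 0$ gives $\langle v_i^{(t+1)},u_1+u_2\rangle\le v_{t+1}$; the base case is the definition of $v_{T_{B,2}}$. The lower bound is the mirror image but applied to a single fixed index $i_0$ attaining $\max_i\langle v_i^{(T_{B,2})},u_1+u_2\rangle$: with the $(1-\tfrac{1}{\polyloglog(d)})$ buffer the recursion coefficient for $v_{i_0}$ is $\ge\frac{3}{m_G}\sig(b_t)\alpha_t^2$, so the same monotone induction yields $\langle v_{i_0}^{(t)},u_1+u_2\rangle\ge v_t$ for all $t\in[T_{B,2},T_{B,3}]$, hence $\max_i\langle v_i^{(t)},u_1+u_2\rangle\ge v_t$. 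This is exactly the tensor-power-method comparison pattern of Lemma~\ref{lem:TPM}, specialized to the case where both sequences share the same (time-varying) second-order coefficient up to a $(1\pm 1/\polyloglog(d))$ factor.

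The main obstacle is the lower-bound direction of the reduction step: one must verify that the $j=i^*$ term really dominates the increment \emph{from below}, i.e.\ that the $j\neq i^*$ summands, the error vector $\xi_{i,t}$, and the $(\|z\|_1-1)$ corrections cannot partially cancel it, so that the increment is $\Theta$ of the $i^*$-term and not merely bounded above by it. This forces simultaneous use of the quantitative separations $\alpha(w_{i^*},u_1,t)\ge\log^{0.1}(d)$ (Lemma~\ref{lem:B2.5}), the $\tfrac{C^3}{\sqrt d}\|v_j^{(t)}\|_2$-smallness of the non-$u$, non-$w_{i^*}$ components, and $|\alpha(v_j,w_{i^*},t)|\le\polyloglog(d)\,\alpha(t)$ from induction hypothesis (B.3.2). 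A secondary, purely bookkeeping issue is that the $(1\pm 1/\polyloglog(d))$ relative errors must not compound over the $\tilde\Theta\big(1/(\eta_G\sig(b)\alpha^2 v_{T_{B,2}})\big)$ steps of the stage; this is resolved by the fact that for a recursion $v_{t+1}=v_t+\eta_G c_t v_t^2$ the quantity $1/v_t$ changes additively by $\approx\eta_G c_t$ each step, so a relative error on $c_t$ only induces a relative error on $1/v_0-1/v_t$, which together with the $\polyloglog$ buffer on $v_{T_{B,2}}$ keeps the comparison valid.
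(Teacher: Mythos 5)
Your proof follows essentially the same route as the paper's: project each $v_i$-update onto $u_1+u_2$, use Lemmas~\ref{lem:B2.1}, \ref{lem:B2.5}, \ref{lem:B3.3} to collapse the dynamics into a scalar tensor-power recursion for $\beta(t):=\max_j\langle v_j^{(t)},u_1+u_2\rangle$ with a $(1\pm 1/\polyloglog(d))$ multiplicative error on the coefficient, and then sandwich $\beta(t)$ between the two seeds of the comparison sequence $v_t$ via the TPM comparison pattern (which the paper cites through Lemma~\ref{lem:TPM}). One point worth tightening: the $j\neq i^*$ contributions and the error term from Lemma~\ref{lem:B2.1} are not uniformly ``$\tilde O(d^{-1})$-type'' --- they include an additive $\pm\frac{\alpha(t)^2}{\polyloglog(d)}$ which, \emph{a priori}, need not be small relative to $\beta(t)^2$; the paper closes this by observing that $\alpha(v_j,u_\ell,t)\geq -O(1/d)$ from the induction hypothesis gives $\beta(t)\geq\alpha(t)-\polyloglog(d)/\sqrt{d}$, hence $\alpha(t)^2\lesssim\beta(t)^2$ up to $\polyloglog$ factors, turning that additive error into a relative one that the $(1\pm 1/\polyloglog)$ bracket can absorb --- you should make this conversion explicit before running the monotone induction.
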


\begin{proof}[Proof of Lemma~\ref{lem:B3.8}]

In the setting of Lemma~\ref{lem:B2.1}, let us define $beta(t):= \max_{j \in [m_G]} \langle v_j^{(t)}, u_1 + u_2 \rangle$. We have that:
\begin{align}
   & \beta(t + 1) = \beta(t) + \eta_G \alpha_t \left( 1 \pm \frac{1}{\polylog(d)} \right) \frac{3}{m_G}\beta(t)^2 \pm \frac{1}{\polyloglog(d)} \alpha(t)^2
\end{align}

By the induction hypothesis we know that for all $j \in [m_G], \ell \in [2]$:
\begin{align}
    \langle v_j^{(t)}, u_{\ell} \rangle \geq   \langle v_j^{(0)}, u_{\ell} \rangle - O\left(\frac{1}{d} \right) \geq - \frac{\log\log^2 d}{d}
\end{align}

This implies that
$\beta(t) \geq \alpha(t) - \frac{\log\log^2(d)}{\sqrt{d}}$ and $\beta(T_{B, 2}) \geq \frac{1}{d}$, $\alpha(T_{B, 2}) \leq\frac{\polyloglog(d)}{\sqrt{d}} $. This implies that:
\begin{align}
   & \beta(t + 1) = \beta(t) + \eta_G \alpha_t \left( 1 \pm \frac{1}{\polyloglog(d)} \right) \frac{3}{m_G}\beta(t)^2
\end{align}

This completes the proof by applying Lemma~\ref{lem:TPM}. 
\end{proof}

Now, by the comparison Lemma~\ref{lem:TPM}, we know that one of the following event would happen (depending on the initial value of $v_t$ at iteration $T_{B , 2}$):
\begin{lemma}\label{lem:B3.4}
With probability $1 - o(1)$, one of the following would happen:
\begin{enumerate}
    \item $T_{B, 3} \geq T_0$. 
    \item $T_{B, 3} < T_0$, moreover, at iteration $T_{B, 3}$, we have that $\sig(b_t) \geq \frac{1}{\polylog(d)}$.
\end{enumerate}
\end{lemma}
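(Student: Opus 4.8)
The plan is to run the tensor-power-method dichotomy (Lemma~\ref{lem:TPM}) on the two coupled scalar sequences $v_t$ (tracking the generator's correlation $\max_{i\in[m_G]}\langle v_i^{(t)},u_1+u_2\rangle$, controlled above and below by Lemma~\ref{lem:B3.8}) and the discriminator's quantity $\sig(b_t)\alpha_t^2$ (controlled via Lemmas~\ref{lem:B2.2} and~\ref{lem:B2.5}). First I would recall from Lemma~\ref{lem:B2.5} that for $t\in[T_{B,2},T_{B,3}]$ we have $\alpha_t = \alpha(w_{i^*},u_1,t)\ge \log^{0.1}(d)$ and is balanced across $\ell\in[2]$, so $\alpha_t$ is monotonically nondecreasing and bounded below; consequently the effective growth rate $S_t := \frac{3}{m_G}\sig(b_t)\alpha_t^2$ appearing in the update $v_{t+1} = v_t + \eta_G S_t v_t^2$ is, up to $\polylog$ factors, governed entirely by $\sig(b_t)$, which is itself monotonically decreasing in $t$ (from $b_{t+1} = b_t - \eta_D\tau_b\sig(b_t)$). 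The key structural fact is therefore: either $\sig(b_t)$ stays $\ge 1/\polylog(d)$ until $v_t$ crosses the threshold $d^{-0.49}$ defining $T_{B,3}$, or $\sig(b_t)$ has already decayed below $1/\polylog(d)$, in which case $S_t$ is too small for $v_t$ to ever reach the threshold before the global stopping time $T_0$, i.e. $T_{B,3}\ge T_0$.

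The key steps, in order: (i) From Lemma~\ref{lem:B3.8}, sandwich $\max_i\langle v_i^{(t)},u_1+u_2\rangle$ between two scalar sequences driven by $S_t = \tilde\Theta(\sig(b_t)\alpha_t^2)$ with matching $(1\pm 1/\polyloglog(d))$ constants, so that the first hitting time of the threshold $d^{-0.49}$ for the true quantity and for the surrogate $v_t$ agree up to the same slack — this is exactly the setting of the "$x_0 > (1+10\delta)y_0$" comparison in Lemma~\ref{lem:TPM} with $\delta = 1/\polyloglog(d)$, and uses the initialization gap from Lemma~\ref{lem:init} (the top correlation exceeds the second by a $(1+1/\polyloglog(d))$ factor) plus Lemma~\ref{lem:B1.5} to bound how much the $v_i$'s can have moved in Stage~1. (ii) Observe that $S_t \in [0,C]$ is automatic since $\sig(b_t)\le 1$, $\alpha_t = O(\log d)$ (Lemma~\ref{lem:B2.5}), and $m_G = \polylog(d)$; apply Lemma~\ref{lem:TPM} to conclude that at $T_{B,3}$ the second-largest generator correlation is still $\le v_0/\poly(\delta)$, which is needed to keep the later induction hypotheses (B.3.2) consistent. (iii) Split on the value of $\sig(b_{T_{B,2}})\alpha_{T_{B,2}}^2$ relative to $\tilde\Theta(m_G\,d^{0.02})$: a straightforward summation $\sum_t \eta_G S_t$ either diverges past the threshold — giving $T_{B,3} < T_0$ with a lower bound on $\sig(b_{T_{B,3}})$ coming from monotonicity of $\sig(b_t)$ and the fact that $v_t$ only grows by a controlled multiplicative amount per step — or the total is bounded, forcing $v_t$ to stay below $d^{-0.49}$ forever (case 1). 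Finally, I would check that in case 2 the moment $v_t$ crosses $d^{-0.49}$ it has crossed in $\tilde O(1/(\eta_G \sig(b)\alpha^2))$ steps by the "moreover" clause of Lemma~\ref{lem:TPM} (with $H$ the running lower bound on $S_t$), which pins down $\sig(b_{T_{B,3}})\ge 1/\polylog(d)$.

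The main obstacle I expect is step (iii): controlling $\sig(b_{T_{B,3}})$ from below in the second case. The subtlety is that $\sig(b_t)$ is decreasing throughout $[T_{B,2},T_{B,3}]$ while $v_t$ is accelerating (quadratic growth), so one has to argue that the generator's blow-up happens fast enough — on a timescale over which $\sum \eta_D\tau_b\sig(b_s)$ is still $o(1)$, so $b_t$ and hence $\sig(b_t)$ has not collapsed. This requires carefully coupling the rate at which $v_t$ traverses $[d^{-0.49}\cdot(\text{starting value}),\, d^{-0.49}]$ with the rate at which $b_t$ moves, using $\eta_G = \tilde\Theta(\sqrt d)\eta_D$; the ratio $\eta_G/\eta_D$ is precisely what guarantees the generator wins the race before the discriminator saturates, and keeping track of the $\polyloglog$ versus $\polylog$ slacks so they don't swamp each other is the delicate bookkeeping. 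Everything else — the sandwiching, the $S_t\in[0,C]$ bound, the application of Lemma~\ref{lem:TPM} — is routine given the earlier lemmas.
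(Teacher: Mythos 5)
The paper itself gives no detailed proof of Lemma~\ref{lem:B3.4}; the only argument offered is the one-line pointer immediately preceding it: ``Now, by the comparison Lemma~\ref{lem:TPM}, we know that one of the following event would happen (depending on the initial value of $v_t$ at iteration $T_{B,2}$).'' Your plan takes the same route: sandwich $\max_i\langle v_i^{(t)},u_1+u_2\rangle$ via the surrogate sequence $v_t$ from Lemma~\ref{lem:B3.8}, note that its growth rate is $S_t=\tilde\Theta(\sig(b_t)\alpha_t^2)$ with $\alpha_t$ controlled by Lemma~\ref{lem:B2.5} and $\sig(b_t)$ monotonically decaying by Lemma~\ref{lem:B2.2}, and then invoke the tensor-power-method dichotomy Lemma~\ref{lem:TPM} (including its ``Moreover'' timing clause) to split into the two cases. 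This is exactly the intended argument.

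One caveat on step (iii): you propose splitting on whether $\sig(b_{T_{B,2}})\alpha_{T_{B,2}}^2 \gtrless \tilde\Theta(m_G d^{0.02})$, but by Lemma~\ref{lem:B2.5} one has $\alpha_t=O(\log d)$ and trivially $\sig(b_t)\le 1$, so $\sig(b_t)\alpha_t^2=O(\polylog(d))$ can never exceed $d^{0.02}$ --- that split is vacuous as stated. The paper's phrasing (``depending on the initial value of $v_t$'') indicates the dichotomy is driven by the initial magnitude of the surrogate $v_{T_{B,2}}$ entering Lemma~\ref{lem:TPM}, not by the size of the rate $S_t$; $v_{T_{B,2}}=\tilde\Theta(1/d)$ has enough multiplicative slack from the random initialization to determine whether the hitting time $\tilde O(1/(\eta_G H v_{T_{B,2}}))$ lands before or after $T_0$. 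Your observation that $\sig(b_t)$ is monotone and that the crossing must be ``won'' before $\sum_s\eta_D\tau_b\sig(b_s)$ is $\Omega(1)$ is the right mechanism for pinning down $\sig(b_{T_{B,3}})\ge 1/\polylog(d)$ in case~2, and correctly identifies where the $\eta_G/\eta_D = \tilde\Theta(\sqrt d)$ ratio enters; fix the threshold in the split and the plan is sound and consistent with the paper's (terse) argument.
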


%and $$\alpha(v_i, u_1, t) = \left( 1 \pm \frac{1}{\polyloglog(d)} \right) \alpha(v_i, u_2, t) $$

In the end, we can easily derive an upper bound on the sum of $\sig$ as below, which will be used to prove the induction hypothesis. 
\begin{lemma} \label{lem:B3.1}
For every $t \in (T_{B, 1},  T_{B, 3}]$, we have that: for every $X, z$:
\begin{align}
  &\sum_{t\in (T_{B, 1},  T_{B, 3}] }\eta_D  \sig\left(a^{(t)} \langle w_{i^*}^{(t)}, X \rangle^3  + b^{(t)} \right) = \tilde{O}(1)
  \\
 & \sum_{t\in (T_{B, 1},  T_{B, 3}] } \eta_D \tau_b \left(b^{(t)}\right) \leq \tilde{O}(1)
\end{align}

\end{lemma}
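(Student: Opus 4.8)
The plan is to treat the two sums separately. The bias sum will follow from telescoping the $b$-update; the data sum will be reduced to the claim that Stage 3 lasts only $\tilde O(1/\eta_D)$ iterations, which is then obtained from the tensor-power-method machinery.

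\emph{The bias sum.} I start from the $b$-update in Lemma~\ref{lem:update_w_222}. There the data term of $\nabla_b$ is $\sig(-a^{(t)}\langle w_{i^*}^{(t)},X\rangle^3-b^{(t)})(1\pm\tilde O(d^{-1}))$ and the fake term is $(1\pm\tilde O(d^{-1.45}))\sig(b^{(t)})$. On $[T_{B,2},T_{B,3}]$ the data term is $\le\frac1{\polylog(d)}\sig(b^{(t)})$ by (B.3.3), and on the short window $(T_{B,1},T_{B,2}]$ I only use the crude $\tau_b\sig(\cdot)\le\tau_b$ together with the length $\eta_D(T_{B,2}-T_{B,1})=2^{-\sqrt{\log d}}$. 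Hence $b^{(t)}-b^{(t+1)}\ge(1-o(1))\eta_D\tau_b\sig(b^{(t)})$ on Stage 3 while the total increase of $b$ on Stage 2 is $o(1)$; summing and telescoping gives $\sum_{(T_{B,1},T_{B,3}]}\eta_D\tau_b\sig(b^{(t)})\le(1+o(1))(b^{(T_{B,1})}-b^{(T_{B,3})})+o(1)$. Finally $b^{(T_{B,1})}\le\frac1C$ by (B.1.3), and $b^{(T_{B,3})}\ge-O(\log d)$ because Lemma~\ref{lem:B2.2} gives $\sig(b^{(t)})=(1\pm\frac1{\polylog(d)})\sig(b_t)$ with $|b_t|=O(\log d)$. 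So the bias sum is $\tilde O(1)$.

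\emph{Reduction of the data sum.} Split $(T_{B,1},T_{B,3}]=(T_{B,1},T_{B,2}]\cup(T_{B,2},T_{B,3}]$. On the first piece I use only $\sig\le1$, so its contribution is $\le\eta_D(T_{B,2}-T_{B,1})=2^{-\sqrt{\log d}}=o(1)$ by the definition of $T_{B,2}$. On Stage 3, (B.3.3) gives $\sig(-a^{(t)}\langle w_{i^*}^{(t)},X\rangle^3-b^{(t)})\le\frac1{\polylog(d)}\sig(b^{(t)})\le\frac1{\polylog(d)}$ for every $X$, hence $\sig(a^{(t)}\langle w_{i^*}^{(t)},X\rangle^3+b^{(t)})=1-\sig(-a^{(t)}\langle w_{i^*}^{(t)},X\rangle^3-b^{(t)})\in[1-\frac1{\polylog(d)},1]$; therefore the Stage-3 contribution is $\le\eta_D(T_{B,3}-T_{B,2})$. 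The whole data bound is thus reduced to showing $\eta_D(T_{B,3}-T_{B,2})=\tilde O(1)$.

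\emph{Length of Stage 3.} I work in the case $T_{B,3}<T_0$ of Lemma~\ref{lem:B3.4} (the case $T_{B,3}\ge T_0$ is governed by the Stage-5 estimates and is handled analogously). By Lemma~\ref{lem:B3.8}, $\beta(t):=\max_{i\in[m_G]}\langle v_i^{(t)},u_1+u_2\rangle$ obeys a tensor-power recursion $\beta(t+1)=\beta(t)+\eta_G S_t\beta(t)^2$ with $S_t=\tilde\Theta(\frac1{m_G}\sig(b_t)\alpha_t^2 a^{(t)})$, where $\alpha_t=\alpha(w_{i^*},u_1,t)$ and $a^{(t)}=\Theta(\alpha_t)$ by Lemma~\ref{lem:B2.2}. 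In this case $\sig(b_t)\ge\frac1{\polylog(d)}$ on all of Stage 3 (it is $\ge\frac1{\polylog(d)}$ at $T_{B,3}$ by Lemma~\ref{lem:B3.4} and $b_t$ is decreasing) and $\alpha_t\ge\log^{0.1}(d)$ by Lemma~\ref{lem:B2.5}, so $S_t\ge H:=\frac1{m_G\polylog(d)}$. Since $T_{B,3}$ is the first time $\beta(t)$ reaches $\Theta(d^{-0.49})$, the ``moreover'' clause of Lemma~\ref{lem:TPM} gives $T_{B,3}-T_{B,2}\le O(\frac1{\eta_G H\beta(T_{B,2})})$. Now $\beta(T_{B,2})=(1\pm o(1))\beta(0)$ since $v_i^{(t)}$ barely moves before $T_{B,2}$ by (B.1.1), and $\beta(0)=\max_i\langle v_i^{(0)},u_1+u_2\rangle\ge\tilde\Omega(d^{-1/2})$ with probability $1-o(1)$ — the same max-over-$m_G$-neurons concentration used for Lemma~\ref{lem:init0}/\ref{lem:init} ($\max_i\langle v_i^{(0)},u_1\rangle=\tilde\Theta(d^{-1/2})$, and, using $u_1\perp u_2$, one of the top $\omega(1)$ such neurons also has $\langle v_i^{(0)},u_2\rangle\ge0$). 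Plugging in $\eta_G=\eta_D\Gamma$ with $\Gamma=\tilde\Theta(\sqrt d)$ (into which the $1/m_G$ of $S_t$ is absorbed) yields $\eta_D(T_{B,3}-T_{B,2})\le O(\frac1{\Gamma H\beta(T_{B,2})})=O(\frac1{\tilde\Theta(\sqrt d)\cdot\tilde\Omega(d^{-1/2})})=\tilde O(1)$, which together with the $o(1)$ from Stage 2 proves the data bound.

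I expect the Stage-3 length estimate to be the crux: the bound is genuinely tight, closing only because the $\tilde\Theta(\sqrt d)$ ratio $\eta_G/\eta_D$, the $\tilde\Theta(d^{-1/2})$ size of the initial correlation $\beta(T_{B,2})$, and the $\frac1{\polylog(d)}$ lower bound on the growth rate $S_t$ together multiply to $\tilde\Omega(1)$. The two delicate inputs are the lower bound $\beta(0)\ge\tilde\Omega(d^{-1/2})$ (sharper than the $\ge 1/d$ appearing in the proof of Lemma~\ref{lem:B3.8}) and the uniform control $\sig(b_t)\ge\frac1{\polylog(d)}$ on Stage 3 via Lemma~\ref{lem:B3.4} and Lemma~\ref{lem:B2.5}; a polynomial loss in either would break the $\tilde O(1)$ claim.
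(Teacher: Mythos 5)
The paper states Lemma~\ref{lem:B3.1} without proof (``we can easily derive\dots''), so I am judging your argument on its own merits. Your treatment of the second (bias) sum is correct and is surely the intended route: by (B.3.3) the real-data term in the $b$-update is at most $\frac{1}{\polylog(d)}\sig(b^{(t)})$ on $[T_{B,2},T_{B,3}]$, so $b^{(t)}$ decreases by $(1-o(1))\eta_D\tau_b\sig(b^{(t)})$ per step there; the window $(T_{B,1},T_{B,2}]$ contributes only $\tau_b 2^{-\sqrt{\log d}}$ by its length; and telescoping against $|b^{(t)}|=O(\log d)$ (which you correctly extract from the coupling in Lemma~\ref{lem:B2.2}) gives $\tilde O(1)$.

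The first sum is where you have a genuine gap. Reading the sign literally, you correctly note that (B.3.3) forces $\sig(a^{(t)}\langle w_{i^*}^{(t)},X\rangle^3+b^{(t)})=1-o(1)$, so the claim reduces to $\eta_D(T_{B,3}-T_{B,2})=\tilde O(1)$, and your proof of that rests entirely on $\beta(T_{B,2})\ge\tilde\Omega(d^{-1/2})$. That input is false in this paper's setting: the proof of Lemma~\ref{lem:B1.1} explicitly uses $|\langle v_i^{(0)},u_\ell\rangle|\le\frac{\log d}{d}$ w.h.p.\ (the generator neurons have norm $\tilde O(d^{-1/2})$, since $\|G(z)\|_2\le O(d^{-1/2})$ there), (B.1.1) keeps $|\alpha(v_j,u_\ell,t)|\le\frac{1}{d}$ through Stage 1, and the proof of Lemma~\ref{lem:B3.8} itself only asserts $\beta(T_{B,2})\ge\frac{1}{d}$. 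With $\beta(T_{B,2})=\tilde\Theta(d^{-1})$, the tensor-power bound gives $T_{B,3}-T_{B,2}=\tilde\Theta(1/(\eta_G\beta(T_{B,2})))=\tilde\Theta(\sqrt d/\eta_D)$ --- consistent with Stage 1 also lasting $O(\sqrt d/(\eta_D a^{(0)}))$ --- so $\eta_D(T_{B,3}-T_{B,2})=\tilde\Theta(\sqrt d)$ and your bound misses by a $\sqrt d$ factor; indeed with the literal $+$ sign the first display is then simply false. The resolution is that the sign in the statement is a typo: the quantity that actually appears in the gradient (Lemma~\ref{lem:update_w_222}) and that is needed for the Stage-2/3 induction is $\sig(-a^{(t)}\langle w_{i^*}^{(t)},X\rangle^3-b^{(t)})$, which by (B.3.3) is at most $\frac{1}{\polylog(d)}\sig(b^{(t)})$, so the first sum follows immediately from the second one you already proved (or, alternatively, by telescoping the bounded quantity $a^{(t)}$ against its increments $\eta_D\sig(-f(X))\langle w_{i^*}^{(t)},X\rangle^3$ with $\langle w_{i^*}^{(t)},X\rangle^3=\tilde\Omega(1)$ on these stages). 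You should have flagged the sign rather than building an argument that requires an initialization scale the paper rules out.
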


We will also show the following Lemma regarding all the $v_i^{(t)}$ at iteration $T_{B, 3}$:
\begin{lemma} \label{lem:B3.7}
For all $i \in [m_G]$, if we are in case 2 in Lemma~\ref{lem:B3.4}, we have that:
\begin{align}
    \langle v_i^{(t)}, u_1 \rangle,   \langle v_i^{(t)}, u_2 \rangle  = \tilde{\Omega}(d^{-0.49})
\end{align}
\end{lemma}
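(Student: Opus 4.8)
The plan is to use the lower bound half of Lemma~\ref{lem:B3.8} together with the tensor power method bound from Lemma~\ref{lem:TPM} to track how much each $\langle v_i^{(t)}, u_\ell\rangle$ has grown by iteration $T_{B,3}$, and then transfer this growth from the ``max over $i$'' quantity to every individual $i \in [m_G]$ using the symmetry of the generator neurons. First I would recall that being in case~2 of Lemma~\ref{lem:B3.4} means $T_{B,3}<T_0$ and $\sig(b^{(t)}) \geq \frac{1}{\polylog(d)}$ at iteration $T_{B,3}$; combined with Lemma~\ref{lem:B2.5} (which gives $\alpha(w_{i^*},u_1,t)\geq \log^{0.1}(d)$ and the balance $\alpha(w_{i^*},u_1,t)=(1\pm\frac{1}{\polyloglog(d)})\alpha(w_{i^*},u_2,t)$) this ensures that throughout Stage~3 the effective coefficient $\frac{3}{m_G}\sig(b_t)\alpha_t^2$ driving the $v_t$-recursion in Lemma~\ref{lem:B3.8} stays bounded below by $\frac{1}{\polylog(d)}$. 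Meanwhile, at the end of Stage~3 the defining condition of $T_{B,3}$ forces $\max_{i,\ell}\alpha(v_i,u_\ell,t)\ge d^{-0.49}$, so $\max_{i}\langle v_i^{(t)},u_1+u_2\rangle = \tilde\Omega(d^{-0.49})$.

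The key steps, in order, are: (1) invoke the second (lower-bound) conclusion of Lemma~\ref{lem:B3.8}, so that $\max_i \langle v_i^{(t)}, u_1+u_2\rangle \ge v_t$ where $v_t$ follows the scalar recursion $v_{t+1}=v_t+\frac{3}{m_G}\sig(b_t)\alpha_t^2 v_t^2$; (2) apply Lemma~\ref{lem:TPM} (or simply integrate the Riccati-type recursion directly, since the increments are all positive and the coefficient $S_t=\frac{3}{m_G}\sig(b_t)\alpha_t^2$ is $\ge H$ for $H=\frac{1}{\polylog(d)}$) to conclude that once $v_t$ reaches order $d^{-0.49}$ the ``time to blow up'' is controlled and, more importantly, that $v_{T_{B,3}} = \tilde\Omega(d^{-0.49})$ — here I use that $T_{B,3}$ is by definition the first iteration at which some $\alpha(v_i,u_\ell,\cdot)$ hits $d^{-0.49}$, so the max coordinate is $\tilde\Theta(d^{-0.49})$ there; (3) use the decomposition of $v_i^{(t)}$ from Lemma~\ref{lem:B3.3}, which says $v_i^{(t)} = v_i^{(T_{B,2})} + \sum_\ell \alpha_{i,\ell}^{(t)} u_\ell + \sum_j \beta_{i,j}^{(t)} w_j^{(0)} \pm \xi_{i,t}$ with $\alpha_{i,\ell}^{(t)}=(1\pm o(1))\alpha_{i,3-\ell}^{(t)}$ and all coefficients positive, to see that $\langle v_i^{(t)},u_1\rangle$ and $\langle v_i^{(t)},u_2\rangle$ are within a $(1\pm o(1))$ factor of each other for each fixed $i$; (4) finally, use that all generator neurons see i.i.d.\ updates driven by the same discriminator signal $w_{i^*}^{(t)}$ whenever $z_i=1$, together with the initialization bound $C=\frac{\Theta(\sqrt{\log\log d})}{d}$ from Lemma~\ref{lem:init} and induction hypothesis (B.3.2) (which forces $|\alpha(v_j,w_{i^*},t)| \le \polyloglog(d)\,\alpha(t)$ and $|\langle v_j^{(t)},u_\ell\rangle|\le O(\alpha(t))$), to conclude $\langle v_i^{(t)},u_\ell\rangle = \tilde\Theta(\alpha(t)) = \tilde\Theta(\max_j\langle v_j^{(t)},u_1+u_2\rangle)$ for every $i$ — i.e., the neurons are all within polylog factors of one another, which upgrades the ``some $i$'' lower bound of step~(2) to ``every $i$''.

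The main obstacle I expect is step~(4): showing that the growth is \emph{uniform} across all $m_G$ generator neurons rather than concentrated on one. The difficulty is that at the start of Stage~3 the initial correlations $\langle v_i^{(T_{B,2})}, u_\ell\rangle$ differ across $i$ by up to polyloglog factors (they inherit the $\frac{\Theta(\sqrt{\log\log d})}{d}$-scale initialization plus whatever drift accumulated in Stages~1--2), and the Riccati dynamics $v_{t+1}=v_t+S_t v_t^2$ amplifies such gaps. The resolution is that (a) the blow-up time of the Riccati equation is $\Theta(1/(H\,v_0))$, so a polyloglog gap in $v_0$ only translates to a polyloglog gap in the hitting times, hence at $T_{B,3}$ all the $\langle v_i^{(t)},u_\ell\rangle$ that were within polyloglog of the max at $T_{B,2}$ remain $\tilde\Omega(d^{-0.49})$; and (b) induction hypothesis (B.3.2) already encodes $|\langle v_j^{(t)},u_\ell\rangle|\le O(\alpha(t))$ and the near-equality $\langle v_j^{(t)},u_1\rangle \approx \langle v_j^{(t)},u_2\rangle$, and a matching \emph{lower} bound $\langle v_j^{(t)},u_\ell\rangle \ge \tilde\Omega(\alpha(t))$ follows because every neuron's update, conditioned on $z_j=1$, is identical in expectation and the event $z_j=1$ has probability $\gamma = \Omega(1)$ (or $\frac{1}{\polylog(d)}$), so over the $\tilde\Theta(1/\eta_D)$ iterations of Stage~3 each neuron gets a $1-o(1)$ fraction of the updates the leading neuron gets. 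Once this uniformity is in hand, combining with step~(3)'s per-neuron $u_1$–$u_2$ balance gives the claimed $\langle v_i^{(t)},u_1\rangle, \langle v_i^{(t)},u_2\rangle = \tilde\Omega(d^{-0.49})$ for all $i\in[m_G]$.
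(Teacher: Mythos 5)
The proposal goes wrong at step~(4), and the error is fatal because it contradicts the very lemma you invoke in step~(2). You correctly identify the obstacle — the Riccati recursion $v_{t+1}=v_t+S_t v_t^2$ amplifies gaps — but your claimed resolution~(a) is false. Lemma~\ref{lem:TPM} says exactly the opposite of what you assert: if two sequences evolve under the same driving signal $S_t$ with $x_0 > (1+10\delta)y_0$, then when $x_t$ first hits $\tau$, $y$ is still at $y_0/\poly(\delta)$, i.e.\ essentially stuck at its initialization. Concretely, the Riccati solution behaves like $v_0/(1-Hv_0 t)$; a $1+1/\polyloglog(d)$ initialization gap means that at the leader's blow-up time the laggard is at roughly $v_0\,\polyloglog(d)=\tilde\Theta(1/d)$, which is nowhere near $d^{-0.49}$. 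So if each neuron truly followed its own autonomous quadratic recursion, Lemma~\ref{lem:B3.7} would be \emph{false}. Your fallback~(b) is also incorrect: the updates to distinct $v_j$'s are not ``identical in expectation conditioned on $z_j=1$'' in the sense you need — when $z_j=1$ but the leading neuron is inactive, $G(z)$ is small and the pushback on $v_j$ is tiny, so a laggard does \emph{not} get a $1-o(1)$ fraction of the leader's cumulative growth just by being sampled with comparable frequency.

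The mechanism the paper actually relies on is different and is what makes the lemma true. The generator neurons are \emph{coupled} through $G(z)=\sum_{j:z_j=1}v_j^{(t)}$, so the gradient pushback on $v_i$ is proportional to $\sigma'(\langle w_{i^*}^{(t)}, G(z)\rangle)$, which on the event $\{z_i=z_{j_{\max}}=1\}$ (probability $\geq 1/\polylog(d)$) is driven by the \emph{leading} neuron's correlation, not $v_i$'s own. This is why the paper's lower bound on the per-step increment reads $\eta_G\frac{1}{\polylog(d)}\alpha(t)^2$ with $\alpha(t)$ the \emph{max}: every neuron's expected growth is within polylog factors of the max's growth at every step. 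The correct tool is then the comparison Lemma~\ref{lem:comp} (increments agree up to a bounded ratio, so hitting $d^{-0.49}$ for the max implies $\tilde\Omega(d^{-0.49})$ for each), not Lemma~\ref{lem:TPM}. Your step~(3) (the per-neuron $u_1$–$u_2$ balance via Lemma~\ref{lem:B3.3}) is fine and matches the paper's intent, but it cannot repair step~(4); without the coupling argument the proof does not close.
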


\begin{proof}[Proof of Lemma~\ref{lem:B3.7}]

Since with probability at least $1/\poly(d)$, $z_i  = z_j = 1$, so we have: By the update Lemma~\ref{lem:B2.1} of $v$, we know that for all $j \in [m_G]$: Let $\alpha(t)$ be defined as in Lemma~\ref{lem:B2.1}:
\begin{align}
   \E[ \alpha(v_j, u_{\ell}, t + 1)] &\geq \alpha(v_j, u_{\ell}, t) + \eta_G \frac{1}{\polylog(d)} \left(\alpha(t) - \tilde{O}\left( \frac{1}{d} \right) \right)^2 \pm \tilde{O}\left( \frac{1}{d^{0.8}} \right) \alpha(t)^2
   \\
    \E[ \alpha(v_j, u_{\ell}, t + 1)]  &\leq \alpha(v_j, u_{\ell}, t) + \eta_G \polylog(d) \alpha(t)^2
\end{align}

By Lemma~\ref{lem:B3.4} we know that $\alpha(t) = \tilde{\Theta}(d^{-0.49})$ at iteration $t = T_{B, 3}$, which implies what we want to prove. 
\end{proof}

\subsection{Stage 4 and 5}

In Stage 4 we can easily calculate that by induction hypothesis,  for every $i \in [m_G]$ and for every $j \in [m_D]$, $j \not= i^*$:
$$|\langle v_i^{(t)}, w_j^{(t)} \rangle | \leq \tilde{O} \left( \frac{\| v_i^{(t)} \|_2 }{\sqrt{d}} \right)$$

Let $$S_{i, t} = \E_{z} \left[ \sig \left(a^{(t)} \sigma \left( \langle w_{i^*}^{(t)}, G^{(t)}  (z) \rangle \right) + b^{(t)}  \right)  \mid z_i = 1 \right]$$
Note that by induction hypothesis, $|a^{(t)}|, b^{(t)} = O(\log(d))$. Which implies that as long as $\max_{i \in [m_G]}\| v_i^{(t) } \|_2 \leq \frac{1}{\polylog(d)}$ or for all $i \in [m_G]$, $a^{(t)} \sigma'(\langle v_i^{(t)}, w_{i^*} \rangle ) \geq \tilde{\Omega}(\log(d))$, we have that: for all $z, z'$ we have that:
$$ \sig \left(a^{(t)} \sigma \left( \langle w_{i^*}^{(t)}, G^{(t)}  (z) \rangle \right) + b^{(t)}  \right)    = \Theta(1) \times  \sig \left(a^{(t)} \sigma \left( \langle w_{i^*}^{(t)}, G^{(t)}  (z') \rangle \right) + b^{(t)}  \right)  $$

We can immediately obtain the following Lemma: 
\begin{lemma}\label{lem:B4.9}

The update of $v_i^{(t)}$ is given as: 
\begin{align}
   \E[ v_{i}^{(t + 1)}] =  v_i^{(t)} +  \tilde{\Theta}(\eta_G) a^{(t)} S_{i, t} \left(   \left( \sigma'(\langle w_{i^*}^{(t)}, v_i^{(t)} \rangle  + \sum_{j \in [m_G]}\gamma_{i, j, t} \sigma'(\langle w_{i^*}^{(t)}, v_j^{(t)} \rangle ) \right)w_i^{(t)} \pm \tilde{O} \left( \frac{\| v_i^{(t)} \|_2 }{\sqrt{d}} \right)^2 \right)
\end{align}

Where $\gamma_{i,j,t} > 0$; $\gamma_{i, j, t} = \tilde{\Theta}(1)$ if $\max_{i \in [m_G]}\| v_i^{(t) } \|_2 \leq \frac{1}{\polylog(d)}$ or for all $i \in [m_G]$, $a^{(t)} \sigma'(\langle v_i^{(t)}, w_{i^*} \rangle ) \geq \tilde{\Omega}(\log(d))$, and $\gamma_{i, j, t} = \tilde{O}(1)$ otherwise. 

\end{lemma}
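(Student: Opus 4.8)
The plan is to derive the update of $v_i^{(t)}$ directly from the exact gradient formula and then discard every term that the induction hypothesis forces to be lower order. Recall from the gradient section that
$$\nabla_{v_i} L(X,z) = -\,1_{z_i=1}\,\sig(f(G(z)))\,a \sum_{j \in [m_D]} \sigma'(\langle w_i^{(t)}, G(z)\rangle)\, w_i^{(t)},$$
so after taking expectation over $(X,z)$ and using that all $\Pr[z_i=1]$ are equal, the update of $v_i^{(t)}$ is $-\eta_G$ times $\E[\nabla_{v_i}L]$, which is $\Theta(\eta_G)\, a^{(t)}\, \E_z[\sig(f^{(t)}(G^{(t)}(z)))\sum_j \sigma'(\langle w_j^{(t)},G^{(t)}(z)\rangle)\,w_j^{(t)} \mid z_i=1]$. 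First I would invoke the structural fact established at the top of Stage 4, namely $|\langle v_i^{(t)},w_j^{(t)}\rangle| \le \tilde O(\|v_i^{(t)}\|_2/\sqrt d)$ for $j\neq i^*$, together with (B.3.1)/(B.3.2), to show that only the $j=i^*$ neuron contributes a non-negligible term $\sigma'(\langle w_{i^*}^{(t)},G^{(t)}(z)\rangle) w_{i^*}^{(t)}$; all the $j\neq i^*$ terms are collectively $\tilde O(\|v_i^{(t)}\|_2/\sqrt d)^2$ in norm (using $\sigma'(x)=3x^2$ and summing the squared small inner products), which is exactly the error term claimed in the statement.

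Next I would expand $\langle w_{i^*}^{(t)},G^{(t)}(z)\rangle$. Since $w_{i^*}^{(t)} = w_{i^*}^{(0)} + \alpha(w_{i^*},u_1,t)u_1 + \alpha(w_{i^*},u_2,t)u_2 + (\text{tiny})$ by (B.4.1), and $G^{(t)}(z) = \sum_{r: z_r=1} v_r^{(t)}$, we get $\langle w_{i^*}^{(t)},G^{(t)}(z)\rangle = \sum_{r:z_r=1}\langle w_{i^*}^{(t)}, v_r^{(t)}\rangle$ up to lower-order terms, where each $\langle w_{i^*}^{(t)}, v_r^{(t)}\rangle$ is a well-controlled quantity. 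Then $\sigma'(\langle w_{i^*}^{(t)},G^{(t)}(z)\rangle) = 3(\sum_{r:z_r=1}\langle w_{i^*}^{(t)},v_r^{(t)}\rangle)^2$ expands into a sum of cross-terms $\langle w_{i^*}^{(t)},v_r^{(t)}\rangle\langle w_{i^*}^{(t)},v_{r'}^{(t)}\rangle$, and taking the conditional expectation over $z$ given $z_i=1$ produces, for each $j$, a coefficient $\gamma_{i,j,t}>0$ multiplying $\sigma'(\langle w_{i^*}^{(t)},v_j^{(t)}\rangle)$; the diagonal $j=i$ term has coefficient $\Theta(1)$ (it always appears when $z_i=1$) giving the explicit $\sigma'(\langle w_{i^*}^{(t)},v_i^{(t)}\rangle)$ term, and the off-diagonal coefficients $\gamma_{i,j,t}$ come from $\Pr[z_j=1\mid z_i=1]$ weights and the mild correlation structure, hence $\gamma_{i,j,t}>0$ always. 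Here I also need to pull out the $\sig(f^{(t)}(G^{(t)}(z)))$ factor: by the observation just above the lemma, whenever $\max_i\|v_i^{(t)}\|_2 \le 1/\polylog(d)$ or $a^{(t)}\sigma'(\langle v_i^{(t)},w_{i^*}\rangle)\ge \tilde\Omega(\log d)$ for all $i$, this sigmoid factor is the same up to $\Theta(1)$ across all $z$, so it can be absorbed into the $\tilde\Theta(\eta_G)$ prefactor and $S_{i,t}$ while keeping $\gamma_{i,j,t}=\tilde\Theta(1)$; outside those regimes the sigmoid can vary with $z$, but it is still bounded above and below (since $|a^{(t)}|,|b^{(t)}| = O(\log d)$ and all relevant arguments are $O(\log d)$), which only degrades the uniform lower bound on $\gamma_{i,j,t}$ to $\tilde O(1)$ — matching the two cases in the statement.

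Finally I would collect the error terms: the contributions of neurons $j\neq i^*$ give the $\tilde O(\|v_i^{(t)}\|_2/\sqrt d)^2$ term; the discrepancy between $\langle w_{i^*}^{(t)},G^{(t)}(z)\rangle$ and $\sum_r\langle w_{i^*}^{(t)},v_r^{(t)}\rangle$ coming from the $w_{i^*}^{(0)}$ and tiny-drift parts of $w_{i^*}^{(t)}$ is $\tilde O(d^{-1.35})$-type, which is dominated by the stated error since $\|v_i^{(t)}\|_2 = \tilde\Theta(d^{1/4})$; and replacing $w_{i^*}^{(t)}$ by itself in the final direction needs no approximation. The expectation over the random sample $X$ is trivial because $X$ only enters through $f(X)$ which does not appear in $\nabla_{v_i}L$. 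The main obstacle I anticipate is bookkeeping the $\sig(f^{(t)}(G^{(t)}(z)))$ factor carefully enough to justify the dichotomy on $\gamma_{i,j,t}$: one must check that in the "balanced saturation" regime this sigmoid really is $z$-independent up to constants (so the cross-terms keep a $\tilde\Theta(1)$ lower bound), and that in the complementary regime it is at least still two-sided bounded so that $\gamma_{i,j,t}$ stays positive and $\tilde O(1)$ — this is where the precise ranges of $a^{(t)}, b^{(t)}$, and $\langle w_{i^*}^{(t)}, v_i^{(t)}\rangle$ from the Stage 4 induction hypothesis must be used most delicately.
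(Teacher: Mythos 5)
Your proposal is correct and follows the same route the paper takes: isolate the $j=i^*$ neuron via the Stage-4 structural bound $|\langle v_i^{(t)},w_j^{(t)}\rangle|\le\tilde O(\|v_i^{(t)}\|_2/\sqrt d)$ for $j\ne i^*$, expand $\sigma'(\langle w_{i^*}^{(t)},G^{(t)}(z)\rangle)$ as a square of $\sum_{r:z_r=1}\langle w_{i^*}^{(t)},v_r^{(t)}\rangle$ so the conditional expectation over $z\mid z_i=1$ yields the coefficients $\gamma_{i,j,t}$ from $\Pr[z_j=1\mid z_i=1]$, and absorb the $\sig(f^{(t)}(G^{(t)}(z)))$ factor into $S_{i,t}$ using the $z$-uniformity observation directly above the lemma, with the dichotomy on $\gamma_{i,j,t}$ coming from whether that uniformity is two-sided or only one-sided. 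The paper's own proof is essentially a one-line pointer to these same ingredients. Two small remarks: (i) the ``discrepancy between $\langle w_{i^*}^{(t)},G^{(t)}(z)\rangle$ and $\sum_r\langle w_{i^*}^{(t)},v_r^{(t)}\rangle$'' you allude to does not actually arise, since $G^{(t)}(z)=\sum_{r:z_r=1}v_r^{(t)}$ makes the two identical; the real reassignment you need is of the cross-terms $\langle w_{i^*}^{(t)},v_r^{(t)}\rangle\langle w_{i^*}^{(t)},v_{r'}^{(t)}\rangle$ onto the diagonal $\sigma'$'s, which works because all these inner products are positive and mutually $\tilde\Theta(1)$ of one another by the Stage-4 induction hypothesis, so they are absorbed with $\tilde\Theta(1)$ coefficients; and (ii) in the ``otherwise'' regime the sigmoid is not uniformly bounded below by a constant (it can be $1/\poly(d)$), but this only matters for the $\tilde\Theta(1)$ lower bound on $\gamma_{i,j,t}$, which the lemma precisely does not claim there — positivity and the $\tilde O(1)$ upper bound are enough and do hold.
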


Here the additional $\sigma'(\langle w_{i^*}^{(t)}, v_j^{(t)} \rangle )$ part comes from $\Pr[z_i, z_j = 1] = \frac{1}{\polylog(d)}$. The remaining part of this stage follows from simply apply Lemma~\ref{lem:powerful}.

In stage 5, we bound the update of $a^{(t)}, b^{(t)} $ as:

Let $$S_t = \E_{z} \left[ \sig \left(a^{(t)} \sigma \left( \langle w_{i^*}^{(t)}, G^{(t)}  (z) \rangle \right) + b^{(t)}  \right) \right]$$

In this stage, with the induction hypothesis, we can easily approximate the sigmoid as:
\begin{lemma}
For $t \geq T_{B, 4}$, the sigmoid can be approximate as: For every $X, z$
$$\sig(- f^{(t)} (X)) =\left( 1 \pm \frac{1}{\polylog(d)} \right) \sig( - b^{(t)}) $$

$$\sig( f^{(t)} (G^{(t)}(z))) =\left( 1 \pm \frac{1}{\polylog(d)} \right)\sig \left(a^{(t)} \sigma \left( \langle w_{i^*}^{(t)}, G^{(t)}  (z) \rangle \right) + b^{(t)}  \right)$$
\end{lemma}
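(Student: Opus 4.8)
The plan is to reduce both estimates to one elementary fact about the logistic function and then check that, in Stage~5, the quantity inside each sigmoid on the left-hand side differs from the quantity inside the corresponding sigmoid on the right-hand side by only an additive $o(1/\polylog(d))$. The elementary fact is: for any $x\in\mathbb{R}$ and $|\epsilon|\le \tfrac14$ one has $\sig(x+\epsilon)=(1\pm 4|\epsilon|)\sig(x)$. This follows by writing $q=e^{-x}\ge 0$, so that $\sig(x+\epsilon)/\sig(x)=(1+q)/(1+qe^{-\epsilon})$; since $e^{-\epsilon}\in[1-2|\epsilon|,1+2|\epsilon|]$ for $|\epsilon|\le\tfrac12$, the denominator lies in $(1+q)\cdot[1-2|\epsilon|,1+2|\epsilon|]$, hence the ratio lies in $[1-2|\epsilon|,\,1/(1-2|\epsilon|)]\subseteq[1-4|\epsilon|,1+4|\epsilon|]$. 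Crucially this bound is \emph{uniform in $x$}, so I only ever need to control the \emph{additive} perturbation $\epsilon$, not the size of $x$.

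For real data, recall $f^{(t)}(X)=a^{(t)}\sum_{i\in[m_D]}\sigma(\langle w_i^{(t)},X\rangle)+b^{(t)}$ (folding $\tau_b$ into $b^{(t)}$, as in the Stage-2/3 notation), so the difference between the two sigmoid arguments is exactly $-a^{(t)}\sum_{i\in[m_D]}\sigma(\langle w_i^{(t)},X\rangle)$. By the carried-over induction hypotheses on the non-$i^*$ discriminator neurons one has $\langle w_i^{(t)},X\rangle=\tilde O(d^{-1/2})$ for $i\neq i^*$, while Lemma~\ref{lem:B2.5}, Lemma~\ref{lem:powerful_666} (at $T_{B,3}$) and Stage-5 item~1 (the displacement of $\alpha(w_{i^*},\cdot,t)$ is $\le d^{-0.1}$) give $\langle w_{i^*}^{(t)},X\rangle=O(\log d)$; hence $\sum_i\sigma(\langle w_i^{(t)},X\rangle)=O(\polylog(d))$. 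Combining this with Stage-5 item~3, $|a^{(t)}|\le\tilde O(1/(\Lambda^2 d^{1/4}))$, the perturbation is $\le\tilde O(\polylog(d)/(\Lambda^2 d^{1/4}))=o(1/\polylog(d))$, and the first claim follows by applying the elementary fact with $x=-b^{(t)}$.

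For generated data, the difference between the two sigmoid arguments is $a^{(t)}\sum_{j\neq i^*}\sigma(\langle w_j^{(t)},G^{(t)}(z)\rangle)$. The near-orthogonality bound stated at the start of Stage~4, $|\langle v_i^{(t)},w_j^{(t)}\rangle|\le\tilde O(\|v_i^{(t)}\|_2/\sqrt d)$ for $j\neq i^*$, together with $\|v_i^{(t)}\|_2=\tilde\Theta(d^{1/4})$ (Stage-4 item~3) and the fact that $G^{(t)}(z)$ is a sum of $O(1)$ (up to $\polylog(d)$) many of the $v_i^{(t)}$, gives $|\langle w_j^{(t)},G^{(t)}(z)\rangle|=\tilde O(d^{-1/4})$, so $\sigma(\langle w_j^{(t)},G^{(t)}(z)\rangle)=\tilde O(d^{-3/4})$ and, summing the $m_D=\polylog(d)$ terms, $\sum_{j\neq i^*}\sigma(\langle w_j^{(t)},G^{(t)}(z)\rangle)=\tilde O(d^{-3/4})$. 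Hence the perturbation is $\le |a^{(t)}|\,\tilde O(d^{-3/4})\le\tilde O(1/(\Lambda^2 d))=o(1/\polylog(d))$, and the second claim follows by applying the elementary fact with $x=a^{(t)}\sigma(\langle w_{i^*}^{(t)},G^{(t)}(z)\rangle)+b^{(t)}$.

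The only real obstacle is bookkeeping rather than mathematics: one must confirm which Stage-5 induction-hypothesis items (and which carried-over items (B.1.0), (B.3.2), \ldots) remain valid at the current iteration, and verify that every error term carries enough negative powers of $d$, beyond $\polylog(d)$, to clear the $1/\polylog(d)$ threshold. The single genuinely quantitative input is that the non-$i^*$ discriminator neurons are negligible on $G^{(t)}(z)$ — this is exactly where the $\tilde O(d^{-1/4})$ near-orthogonality and the $\tilde\Theta(d^{1/4})$ growth of $\|v_i^{(t)}\|_2$ are played against each other — and no estimate beyond those already established in earlier stages is needed.
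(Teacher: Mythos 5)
The paper provides no explicit proof of this lemma---it is stated as an immediate consequence of the Stage-5 induction hypothesis (``with the induction hypothesis, we can easily approximate the sigmoid as\ldots'')---so there is no paper-internal proof to compare against. Your reduction is the natural one: the uniform additive-stability bound $\sig(x+\epsilon)=(1\pm 4|\epsilon|)\sig(x)$ for $|\epsilon|\le\tfrac14$ is correct (and crucially uniform in $x$), and you correctly identify the additive perturbations as $-a^{(t)}\sum_{i}\sigma(\langle w_i^{(t)},X\rangle)$ (real data) and $a^{(t)}\sum_{j\neq i^*}\sigma(\langle w_j^{(t)},G^{(t)}(z)\rangle)$ (generated data), both of which are then suppressed by $|a^{(t)}|\le\tilde O(1/(\Lambda^2 d^{1/4}))$.

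One bookkeeping point deserves to be made explicit rather than waved at: you invoke the \emph{Stage-4} estimate $|\langle v_i^{(t)},w_j^{(t)}\rangle|\le\tilde O(\|v_i^{(t)}\|_2/\sqrt d)$ for $j\neq i^*$ while working in Stage 5, but the Stage-5 hypothesis (item 2) only guarantees $|\alpha(v_i,*,t)-\alpha(v_i,*,T_{B,4})|\le d^{0.2}$, so as stated it permits $|\langle v_i^{(t)},w_j^{(t)}\rangle|$ to grow to $\tilde O(d^{0.2})$. Your quoted intermediate bound $\tilde O(d^{-3/4})$ on $\sigma(\langle w_j^{(t)},G^{(t)}(z)\rangle)$ is therefore not directly licensed by the stated induction hypothesis. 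What saves the argument is slack: even at the worst drift allowed, $\sigma(\tilde O(d^{0.2}))=\tilde O(\Lambda^2 d^{0.2})$ in the linear regime of $\sigma$, and then $|a^{(t)}|\cdot m_D\cdot\tilde O(\Lambda^2 d^{0.2})=\tilde O(d^{-0.05})$, which is still $o(1/\polylog(d))$. Making this degraded-but-sufficient chain explicit would tighten the argument; as written, the conclusion is correct but the cited intermediate estimate overstates what Stage 5 gives you.
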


Then by the update rule, we can easily conclude that:
\begin{lemma}
For $t \geq T_{B, 4}$, the update of $a^{(t)}, b^{(t)} $ is given as:
$$a^{(t + 1)} =  a^{(t)}  + \tilde{O}(\eta_D)  \sig \left(  - b^{(t)} \right) - \tilde{\Omega}(\eta_D) S_t \Lambda^2d^{1/4} $$

$$\E[b^{(t + 1)} ]= b^{(t)} + \eta_D \tau_b \left( 1 \pm \frac{1}{\polylog(d)} \right) \sig \left( -  b^{(t)} \right) - \eta_D \tau_b \left( 1 \pm \frac{1}{\polylog(d)} \right) S_t  $$
\end{lemma}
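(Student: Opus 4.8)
The plan is to plug the sigmoid approximation of the preceding lemma into the exact gradient formulas for $\nabla_a L$ and $\nabla_b L$ and then control the factors $h(X)$ and $h(G^{(t)}(z))$ that multiply them. Writing the expected one-step updates as $a^{(t+1)} = a^{(t)} - \eta_D \E_{X,z}[\nabla_a L(X,z)]$ and $b^{(t+1)} = b^{(t)} - \eta_D \tau_b \E_{X,z}[\nabla_b L(X,z)]$ and substituting the gradient expressions, each update splits into a ``real'' term carrying $\sig(-f^{(t)}(X))$ and a ``fake'' term carrying $\sig(f^{(t)}(G^{(t)}(z)))$; by the preceding lemma the first is $(1\pm\polylog(d)^{-1})\sig(-b^{(t)})$ and the second is $(1\pm\polylog(d)^{-1})\sig\!\big(a^{(t)}\sigma(\langle w_{i^*}^{(t)}, G^{(t)}(z)\rangle) + b^{(t)}\big)$, so all that remains is to estimate the accompanying values of $h$, using also that $\sigma$ is in its linear regime $\sigma(x) = (1\pm o(1))3\Lambda^2 x$ once $|x| = \omega(\Lambda)$.

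For the real term, on any $X$ in the support of the data distribution the Stage~5 hypothesis (the $w_i^{(t)}$ move by at most $d^{-0.1}$ after $T_{B,4}$, combined with (B.1.0) and Lemma~\ref{lem:B2.5}) gives $h(X) = \langle w_{i^*}^{(t)}, X\rangle^3 \pm \tilde O(d^{-1.5}) = \tilde\Theta(1)$, so the real contribution to $a^{(t+1)}$ is $\tilde O(\eta_D)\sig(-b^{(t)})$ and — since $\nabla_b L$ carries no factor of $h$ — the real contribution to $b^{(t+1)}$ is $\eta_D \tau_b (1\pm\polylog(d)^{-1})\sig(-b^{(t)})$. For the fake term one first shows that on every $z$ in the support $\langle w_{i^*}^{(t)}, G^{(t)}(z)\rangle = \tilde\Theta(d^{1/4}) = \omega(\Lambda)$: the lower bound $\tilde\Omega(d^{1/4})$ is the Stage~5 hypothesis, the matching upper bound follows from $\|v_i^{(t)}\|_2 = \tilde\Theta(d^{1/4})$ for all $i$, and the contributions of the $w_j^{(t)}$ with $j\neq i^*$ to $h(G^{(t)}(z))$ are negligible because $|\langle v_i^{(t)}, w_j^{(t)}\rangle| \leq \tilde O(\|v_i^{(t)}\|_2/\sqrt d)$; since $\sigma$ is then in its linear part, $h(G^{(t)}(z)) = (1\pm o(1))\,3\Lambda^2\langle w_{i^*}^{(t)}, G^{(t)}(z)\rangle \pm \tilde O(\Lambda) = \tilde\Theta(\Lambda^2 d^{1/4})$ uniformly in $z$. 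Using in addition that the weights $\sig\!\big(a^{(t)}\sigma(\langle w_{i^*}^{(t)}, G^{(t)}(z)\rangle) + b^{(t)}\big)$ are mutually $\Theta(1)$-comparable over $z$ (the per-sample comparability already established in the Stage~4 analysis, valid because $|a^{(t)}|,|b^{(t)}| = O(\log d)$), one may pull $h(G^{(t)}(z))$ out of the expectation, obtaining $\E_z[\sig(f^{(t)}(G^{(t)}(z)))\,h(G^{(t)}(z))] = \tilde\Theta(\Lambda^2 d^{1/4})\, S_t$ while $\E_z[\sig(f^{(t)}(G^{(t)}(z)))] = (1\pm\polylog(d)^{-1})S_t$ by definition of $S_t$. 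Assembling the real and fake pieces with the correct signs (the real term enters both $a^{(t+1)}$ and $b^{(t+1)}$ with a $+$, the fake term with a $-$) yields exactly the two claimed formulas.

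The main obstacle is the uniformity used in the previous paragraph: to legitimately factor $h(G^{(t)}(z))$ out of the expectation defining $S_t$ one needs the two-sided bound $h(G^{(t)}(z)) = \tilde\Theta(\Lambda^2 d^{1/4})$ for \emph{every} $z$ in the support, which rests on the earlier-stage facts that all generator neurons have comparable norms $\tilde\Theta(d^{1/4})$, that each is well aligned with $w_{i^*}^{(0)}$ and nearly orthogonal to the other $w_j^{(t)}$, and that $\langle w_{i^*}^{(t)}, G^{(t)}(z)\rangle$ sits comfortably inside — but not far beyond — the linear part of $\sigma$. The residual dependence on $\|z\|_1$ (how many generator neurons fire) moves $h(G^{(t)}(z))$ only by a $\Theta(1)$ factor, which is precisely why the $a$-update is stated with $\tilde\Theta/\tilde\Omega$ rather than a sharp $(1\pm o(1))$ constant, whereas the $b$-update — where no $h$ appears — keeps the tighter $(1\pm\polylog(d)^{-1})$ error inherited directly from the sigmoid approximation.
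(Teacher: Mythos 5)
Your proof is correct and takes essentially the same approach the paper intends: the paper states this lemma right after the sigmoid-approximation lemma with only the phrase ``Then by the update rule, we can easily conclude,'' and you have supplied exactly the omitted bookkeeping --- substitute the sigmoid approximations into the exact gradient formulas for $\nabla_a L$ and $\nabla_b L$, bound $h(X)=\tilde\Theta(1)$ from the Stage-4/5 induction hypotheses on $w_{i^*}^{(t)}$, bound $h(G^{(t)}(z))=\tilde\Theta(\Lambda^2 d^{1/4})$ uniformly over nonzero $z$ from $\|v_i^{(t)}\|_2=\tilde\Theta(d^{1/4})$, the near-orthogonality to $w_j^{(t)}$ for $j\neq i^*$, and the linear regime of $\sigma$, then use the $\Theta(1)$-comparability of the per-sample sigmoid weights to relate $\E_z[\sig(\cdot)h]$ to $S_t$, and assemble with the right signs.

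One small point worth being explicit about: when you say you ``pull $h(G^{(t)}(z))$ out of the expectation,'' note that $h(G^{(t)}(0))=0$, so $h$ is not literally two-sided bounded on all of the support. This is harmless precisely because the $z=0$ sample contributes $0$ to the fake part of the $a$-update anyway, and the $\Theta(1)$-comparability you invoke (which holds because $a^{(t)}\sigma(\langle w_{i^*}^{(t)},G^{(t)}(z)\rangle)=\tilde O(1)$ in this stage) guarantees that restricting the expectation to $z\neq 0$ changes $S_t$ by only a $\Theta(1)$ factor, so the claimed $\tilde\Theta/\tilde\Omega$ conclusion survives. You already have both ingredients in hand; you just didn't flag that they are needed together to dispose of $z=0$.
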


This Lemma, together with the induction hypothesis, implies that:
\begin{lemma} \label{lem:B4.2}
We have:
\begin{align}
    &\sum_{t \geq T_{B, 4}} S_t \leq  \tilde{O} \left(\frac{1}{ \eta_D\tau_b \Lambda^2 d^{1/4}}  \right)
    \\
    & \sum_{t \geq T_{B, 4}} \sig(b^{(t)}) \leq \tilde{O} \left( \frac{1}{\eta_D \tau_b} \right)
\end{align}
\end{lemma}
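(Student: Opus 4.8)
The idea is to telescope the two update recursions from the preceding Lemma over the stage-$5$ window $t\in[T_{B,4},T_0]$ and then combine the two resulting inequalities; the only external inputs are the a-priori bounds $|a^{(t)}|\le\tilde O(1/(\Lambda^2 d^{1/4}))$ and $|b^{(t)}|=O(\log d)$ from the stage-$4$/stage-$5$ induction hypotheses, together with the consequence $h(G^{(t)}(z))=\tilde\Theta(\Lambda^2 d^{1/4})$ of the stage-$5$ bound $\langle w_{i^*}^{(t)},G^{(t)}(z)\rangle=\tilde\Omega(d^{1/4})$, which is precisely what puts the factor $\Lambda^2 d^{1/4}$ into the $a^{(t)}$ recursion.

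First I would sum the $a^{(t)}$ recursion over the window. Since $a^{(t)}$ stays bounded in absolute value by $\tilde O(1/(\Lambda^2 d^{1/4}))$, the telescoped net change $|a^{(T_0)}-a^{(T_{B,4})}|$ is $\tilde O(1/(\Lambda^2 d^{1/4}))$, and isolating the positively-signed term $\tilde\Omega(\eta_D)S_t\Lambda^2 d^{1/4}$ gives $\eta_D\Lambda^2 d^{1/4}\sum_t S_t\le \tilde O(\eta_D)\sum_t\sig(-b^{(t)})+\tilde O(1/(\Lambda^2 d^{1/4}))$, i.e. $\sum_t S_t\le \tilde O(1/(\Lambda^2 d^{1/4}))\sum_t\sig(-b^{(t)})+\tilde O(1/(\eta_D\Lambda^4 d^{1/2}))$. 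Next I would sum the $b^{(t)}$ recursion over the same window, first replacing its conditional-expectation form by the realized trajectory via the routine Azuma/Freedman bound, since the per-step increment is $O(\eta_D\tau_b)$ and the horizon is $\poly(d)/\eta_D$, so the martingale deviation is lower order. Using $|b^{(t)}|=O(\log d)$ the telescoped change is $O(\log d)$, which gives $\eta_D\tau_b\,|\sum_t(\sig(-b^{(t)})-S_t)|\le\tilde O(1)$, hence $\sum_t\sig(-b^{(t)})\le\sum_t S_t+\tilde O(1/(\eta_D\tau_b))$.

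Finally I would substitute the second inequality into the first. The factor multiplying $\sum_t S_t$ on the right is $\tilde O(1/(\Lambda^2 d^{1/4}))=o(1)$, so it is absorbed into the left-hand side; since $\tau_b=\lambda\le\Lambda^2 d^{1/4}$ the error term $1/(\eta_D\Lambda^4 d^{1/2})=1/(\eta_D(\Lambda^2 d^{1/4})^2)$ is dominated by $1/(\eta_D\tau_b\Lambda^2 d^{1/4})$, and we conclude $\sum_{t\ge T_{B,4}}S_t\le\tilde O(1/(\eta_D\tau_b\Lambda^2 d^{1/4}))$, the first claim. Feeding this back into the $b$-telescoping inequality gives $\sum_{t\ge T_{B,4}}\sig(-b^{(t)})\le\tilde O(1/(\eta_D\tau_b))$; and since $b^{(t)}\le 0$ throughout stage $5$ (inherited from the stage $3$ and stage $4$ dynamics, where the fake-sample loss term drives the bias below zero while the real-sample term is negligible once $a^{(t)}\langle w_{i^*}^{(t)},u_\ell\rangle^3$ is large), monotonicity of $\sig$ gives $\sig(b^{(t)})\le\sig(-b^{(t)})$ and hence the second claim.

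The main obstacle is precisely this coupling: each of the two target sums appears inside the estimate for the other, so neither can be bounded in isolation, and the estimate closes only because the feedback gain is suppressed by the factor $\Lambda^2 d^{1/4}$ coming from the large correlation $\langle w_{i^*}^{(t)},G^{(t)}(z)\rangle=\tilde\Omega(d^{1/4})$ established in the stage-$5$ hypothesis (intuitively, every unit of ``fake-sample'' mass $S_t$ the discriminator still misclassifies pays $\Lambda^2 d^{1/4}$ times as much signal into $a^{(t)}$ as the bias can absorb, forcing $S_t$ down by that factor). The remaining work is the bookkeeping of the $\tilde O(\cdot)$ polylog factors, checking $\tau_b\le\Lambda^2 d^{1/4}$, and verifying that the stage-$5$ bounds on $|a^{(t)}|$ and $|b^{(t)}|$ persist up to $T_0$ so that the telescoping is legitimate.
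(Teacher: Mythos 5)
Your proposal is essentially identical to the paper's proof: you telescope the $a^{(t)}$ and $b^{(t)}$ recursions over $[T_{B,4},T_0]$, bound the net changes by the stage-5 a-priori bounds $|a^{(t)}|\le\tilde O(1/(\Lambda^2 d^{1/4}))$ and $|b^{(t)}|=\tilde O(1)$, and then close the coupled pair of inequalities by exploiting the large factor $\Lambda^2 d^{1/4}$ that multiplies $\sum S_t$ in the $a$-recursion, exactly as the paper does with its $Q$ and $R$. Two minor remarks: the paper actually bounds $\sum\sig(-b^{(t)})$ rather than $\sum\sig(b^{(t)})$ as the lemma statement literally reads (almost certainly a sign typo, given that the later application of the lemma uses the $\sig(-b^{(t)})$ bound), so your extra step claiming $b^{(t)}\le 0$ to convert between the two is compensating for what is really a typo rather than a mathematical necessity; and your remark about handling the conditional-expectation recursion via Azuma/Freedman is a legitimate rigor point that the paper elides but which does not change the argument.
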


\begin{proof}[Proof of Lemma~\ref{lem:B4.2}]

Let us denote $R = \sum_{t = T_{B, 4}}^{T_0}  \sig \left(  - b^{(t)} \right) $ and $Q =  \sum_{t = T_{B, 4}}^{T_0}  S_t$

Sum the update up for $t = T_{B, 4}$ to $T_{0}$, we have that:
\begin{align}
  a^{(T_0)} - a^{(T_{B, 4})} &= \tilde{O}(\eta_D) R - \tilde{\Omega}(\eta_D) Q \Lambda^2 d^{1/4}
  \\
  \E[b^{(T_0)}] - b^{(T_{B, 4})} &= \Theta(\eta_D \tau_b) R - \Theta(\eta_D \tau_b) Q
\end{align}

By the induction hypothesis that $|a^{(t)} | \leq \frac{\tilde{O}(1)}{\Lambda^2 d^{1/4}}$ and $|b^{t}| = \tilde{O}(1)$, we have that:
\begin{align}
& | \tilde{O}(\eta_D) R - \tilde{\Omega}(\eta_D) Q \Lambda^2 d^{1/4}| \leq \frac{\tilde{O}(1)}{\Lambda^2 d^{1/4}}
\\
& |\Theta(\eta_D \tau_b) R - \Theta(\eta_D \tau_b) Q|  \leq \tilde{O}(1)
\end{align}

Thus, we have:
\begin{align}\tilde{\Omega}(\eta_D) Q \Lambda^2 d^{1/4} &\leq \tilde{O}(\eta_D) R + \frac{\tilde{O}(1)}{\Lambda^2 d^{1/4}}  \leq \tilde{O}(\eta_D) \left( \frac{1}{\eta_D \tau_b} + Q \right)+ \frac{\tilde{O}(1)}{\Lambda^2 d^{1/4}} 
\end{align}

Therefore we have that $\tilde{\Omega}(\eta_D) Q \Lambda^2 d^{1/4} \leq \tilde{O}\left( \frac{1}{\eta_D \tau_b} + \frac{1}{\eta_D \Lambda^2 d^{1/4}}  \right)$, which implies that
$$ Q \leq \tilde{O}\left(\frac{1}{ \eta_D\tau_b \Lambda^2 d^{1/4}}  \right)$$

Similarly, we can show that
\begin{align}
R &\leq \tilde{O } \left( \frac{1}{\eta_D \tau_b} + Q \right)  \leq \tilde{O } \left( \frac{1}{\eta_D \tau_b} + \frac{1}{(\Lambda^2 d^{1/4} )^2} + \frac{1}{\Lambda^2 d^{1/4} } R  \right)
\end{align}

This implies that $R \leq \tilde{O} \left( \frac{1}{\eta_D \tau_b} \right)$.

\end{proof}

\subsection{Proof of the induction hypothesis and the final theorem}

The final theorem follows immediately from the induction hypothesis ($v$ part) together with Lemma~\ref{lem:B3.3}.

Now it remains to prove the induction hypothesis. We will assume that all the hypothesises are true until iteration $t$, then we will prove that they are true at iteration $t + 1$. 

\paragraph{Stage 1}.

To prove the induction hypothesis at Stage 1, for $w$, we have that by Lemma~\ref{lem:B1.1},  we know that: for $* \not= u_1, u_2$, 
\begin{align}
    |\alpha(w_i, *, t + 1) - \alpha(w_i, *, t ) | \leq \eta_D \tilde{O}\left( \frac{1}{d^{1.5}} \right)
\end{align}

By $T_1 \leq {O}\left( \frac{\sqrt{d}}{\eta_D a^{(0)}} \right)$ we can conclude that 
\begin{align}
    |\alpha(w_i, *, t + 1) - \alpha(w_i, *, 0 ) | \leq \eta_D \tilde{O}\left( \frac{1}{d^{1.5}} \right) \times {O}\left( \frac{\sqrt{d}}{\eta_Da^{(0)} } \right) \leq \frac{1}{d^{0.9}}
\end{align}

On the $v$ part, again by Lemma~\ref{lem:B1.1}, we know that for $* \notin  \{w_j \}_{j \in [m_D]}$:
\begin{align}
    |\alpha (v_i, *, t) -\alpha(v_i, *, 0 ) | \leq \eta_G \left( \frac{1}{C^{0.5} d^2} \right) \times {O}\left( \frac{\sqrt{d}}{\eta_D a^{(0)}} \right) \leq \frac{1}{d}
\end{align}

On the other hand, we know that for $w_j$:
\begin{align}
    |\alpha (v_i, w_j, t + 1) -\alpha(v_i, w_j, t ) | \leq \eta_G \left( 1 + \frac{1}{\polylog(d)} \right) \frac{3}{2 m_G} \zeta_t^2
\end{align}
Apply Lemma~\ref{lem:B1.5} we complete the proof using Lemma~\ref{lem:comp}.

As for the $a^{(t)}, b^{(t)}$ part, we know that:
\begin{align}
   |  a^{(t + 1)} - a^{(t)} | \leq O\left( \eta_D m_D \Upsilon_t^3 \right), |b^{(t)}| \leq \tau_b \eta_D T_1
\end{align}

Combine with the update rule of $\Upsilon$ in Eq~\eqref{eq:b1.2}, we complete the proof.

\paragraph{Stage 2 and 3}
%By Lemma~\ref{lem:powerful} and the fact that w.p. $1-o(1)$, $zeta_0 = \frac{\polyloglog(d)}{d}$ we can complete the proof. 

For the $w$ part, we know that by Lemma~\ref{lem:update_w_222}, we have that for every $* \not= u_1, u_2$
\begin{align}
   | \alpha(w_i, *, t + 1) -  \alpha(w_i, *, t ) | \leq \eta_D \tilde{O}\left( \frac{1}{d^{1.45}} \right)  \sig(b^{(t)})
\end{align}

Now, by Lemma~\ref{lem:B3.1} we have that:
$$\sum_{t\in (T_{B, 1},  T_{B, 3}] } \eta_D \tau_b \left(b^{(t)}\right) \leq \tilde{O}(1)$$

This implies that 
$$ | \alpha(w_i, *, t + 1) -  \alpha(w_i, *, T_{B, 1} ) | \leq \eta_D \tilde{O}\left( \frac{1}{d^{1.45}} \right)  \times \frac{1}{\eta_b \eta_D} \leq \frac{1}{d^{0.9}}$$

For the $v$ part for $t \leq T_{B, 2}$, since $T_{B ,2} - T_{B , 1}  = \tilde{O}(d^{o(1)}/\eta_D)$, we can easily prove it for $t \leq T_{B, 2}$ as in stage 1. On the other hand, for $t \in (T_{B ,2}, T_{B, 3}]$: By Lemma~\ref{lem:B2.1} and Lemma~\ref{lem:B2.5}, we have that define
$$\alpha(t)  := \max_{j \in [m_G], \ell \in [2]} \langle v_j^{(t)}, u_{\ell} \rangle, \quad \beta(t) := \max_{j \in [m_G], j' \in [m_G], j \not= j'; i \in [m_D], i \not= i^*} |\langle v_j^{(t)}, w_i^{(0)} \rangle |  + |\langle v_j^{(t)}, v_{j'}^{(0)} \rangle|$$

We have that:
\begin{align}
    \E[\alpha(t + 1)] \geq  \alpha(t) +\eta_G \Omega\left(\frac{1}{m_G}  \right) \sig(b^{(t)}) \alpha(t)^2 \log^{0.1}(d)
\end{align}

and
\begin{align}
    \E[\beta(t + 1)] \leq  \beta(t) +\eta_G O\left(\frac{1}{m_G}  \right) \sig(b^{(t)}) \left(\beta(t)^2 + \frac{C^2}{\sqrt{d}} \alpha(t)^2 \right)
\end{align}

By Lemma~\ref{lem:init} and Lemma~\ref{lem:TPM} we can show that $\beta(t) = O\left( \beta(0) + \frac{C^2}{\sqrt{d} } \alpha(t)  \right)$, which complete the proof that for all $* \not= w_{i^*}, u_1, u_2$: 
$$|\alpha(v_j, *, t)| \leq \frac{C^3}{\sqrt{d}} \| v_j^{(t)} \|_2$$.

\paragraph{Stage 4 and 5}

At stage 4 we simply use Lemma~\ref{lem:powerful}, the only remaining part is to show that $|b^{(t)}| = O(\log(d))$. To see this, we know that by the update formula:

$$\nabla_b L(X, z) = - \sig(- f(X))  + \sig(f(G(z)))  $$

By our induction hypothesis, we know that $a^{(t)} \left( \sum_{i \in [m_D]} \sigma(\langle w_i^{(t)}, X \rangle) \right)  > 0$

and  $a^{(t)} \left( \sum_{i \in [m_D]} \sigma(\langle w_i^{(t)},G(z) \rangle) \right)  > 0$ . Therefore, $b < O(\log(d))$ is immediate. Now it remains to show that $b > - O(\log d)$: By the update formula, we have:
$$-b^{(t+ 1)} \leq  -b^{(t)} + \eta_D \tau_b \sum_{i \in [m_G]} S_{i, t}$$

and by Lemma~\ref{lem:B4.9} and the proof in Lemma~\ref{lem:powerful}, we have that:
\begin{align}
   \sum_{i \in [m_G]}\langle v_i^{(t + 1)} , w_{i^*}^{(0)}  \rangle \geq  \sum_{i \in [m_G]}\langle v_i^{(t )} , w_{i^*}^{(0)}  \rangle + a^{(t)} \tilde{\Omega}(\eta_G) \left(\sum_{i \in [m_G]} S_{i, t} \right) \left( \sum_{i \in [m_G]} \sigma'(\langle v_i^{(t )} , w_{i^*}^{(0)}  \rangle ) \right)
\end{align}

Compare this two updates we can easily obtain that $|b^{(t)}| = O(\log(d))$.

At stage 5, we have that since $|a^{(t)}| = \tilde{O} \left( \frac{1}{d^{1/4} \Lambda^2} \right)$: For every $j \in [m_G], i \in [m_D]$
\begin{align}
    \| v^{(t + 1)}_j  - v^{(t)}_j \|_2 &\leq \tilde{O}(\eta_G) S_t \Lambda^2 \times \frac{1}{\Lambda^2 d^{1/4}}
    \\
    \| w^{(t + 1) }_i - w^{(t)}_i \|_2 &\leq \tilde{O}(\eta_D) \sig(-b^{(t)}) \frac{1}{d^{1/4} \Lambda^2} &
\end{align}

Apply Lemma~\ref{lem:B4.2} we have that:
\begin{align}
    &\|v^{(t + 1)}_j - v^{(T_{B, 4})}_j \|_2 \leq \tilde{O}(\eta_G)  \times \frac{1}{d^{1/4}}  \times \frac{1}{\eta_D \tau_b \Lambda^2 d^{1/4}} \leq d^{0.15}
    \\
    & \| w^{(t + 1) }_i - w^{(T_{B, 4})}_i \|_2 \leq \tilde{O}(\eta_D) \frac{1}{d^{1/4} \Lambda^2}  \times \frac{1}{\eta_D \tau_b} \leq \frac{1}{d^{0.1}}
\end{align}

Which proves the induction hypothesis. 

\section{Proof of the learning process in other cases}\label{app:sgda2}

We now consider other cases, in case 1 of Lemma~\ref{lem:init0}, the proof is identical to case 2, the only difference is at Stage 3, we have that $T_{B, 3} > T_0$.

In case 2, the Stage 1 is identical to the Stage 1, 2, 3 in the balanced case. For Stage 3, its identical to Stage 4 in the balanced case (the only difference is to apply Lemma~\ref{lem:powerful_777} and the case 2 of Lemma~\ref{lem:powerful233} instead of Lemma~\ref{lem:powerful}). For Stage 4, its identical to Stage 5 in the balanced case. 

At Stage 2, by the induction hypothesis, we know that for $j \not= i_G$, we have that $|\langle v_j^{(t)}, w_j^{(t)} \rangle| \leq \tilde{O}\left(\frac{1}{C} \| v_j^{(t)} \|_2 \right)$. Thus, we can approximate the update of $w, v$ as: 
\begin{align}
    w_i^{(t + 1)} &= w_i^{(t)} \pm \tilde{O}(\eta_D \sum_{j \in [m_G]} \| v_{j}^{(t)} \|_2^2) \Lambda
    \\
    v_j^{(t + 1)} &= v_j^{(t)} + \tilde{\Theta}(\eta_G  \sum_{j \in [m_G]} \| v_{j}^{(t)} \|_2^2 )  w_{i_G}^{(0)} \pm \frac{1}{C^2}\tilde{\Theta}(\eta_G  \sum_{j \in [m_G]} \| v_{j}^{(t)} \|_2^2 )
\end{align}

Using the fact that $\eta_G \geq \tilde{\Omega}(\sqrt{d}) \eta_D$ in case 3 we immediately proves the induction hypothesis.

The proof of the theorem follows immediately from the induction hypothesis on $v$ in this case $v$ only learns noises (linear combinations of $w_i^{(0)}$).

\section{Normalized SGD}\label{app:nsgda_app}

In this section we look at the update of normalized SGD. 

Let us define:
$$i_1^* = \argmax_{i \in [m_D]} \{ \langle w_i^{(0)}, u_1 \rangle \}$$
$$i_2^* = \argmax_{i \in [m_D]} \{ \langle w_i^{(0)}, u_2 \rangle \}$$

Let us define:
$$g_j^* = \argmax_{i \in [m_D]} \{ \langle v_j^{(0)}, w_i^{(0)} \rangle \} $$

Then we first show the following Lemma about initialization:
\begin{lemma} \label{lem:init2}
With probability at least $1 - o(1)$ over the randomness of the initialization, the following holds:
\begin{enumerate}
    \item For all $\ell \in [2]$, for all $i \in [m_D]$ such that $i \not= i^*_{\ell}$,  we have:
    $$\langle w_{i_{\ell}^*}^{(0)}, u_{\ell} \rangle \geq \left( 1- \frac{1}{\polyloglog(d)} \right) \langle w_{i}^{(0)}, u_{\ell} \rangle$$
    \item For all $j \in  [m_G]$, we have that for all $i \in [m_D]$ such that $i \not= g_j^*$, 
    $$\langle v_j^{(0)}, w_{g_j^*}^{(0)} \rangle \geq \left( 1 - \frac{1}{\log^4(d)} \right) \langle v_j^{(0)}, w_{i}^{(0)} \rangle$$
    
    \item $\{g_j^*\}_{j \in [m_G]} = [m_D]$. 
    
\end{enumerate}
\end{lemma}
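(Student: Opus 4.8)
\textbf{Proof proposal for Lemma~\ref{lem:init2}.}

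The plan is to treat all three claims as consequences of standard Gaussian extreme-value / anti-concentration estimates applied to the inner products at initialization, exactly in the style of Proposition~G.1 of~\cite{az1} already invoked for Lemma~\ref{lem:init0} and Lemma~\ref{lem:init}. First I would record the distributional facts: since $w_i^{(0)}$ are i.i.d.\ isotropic (scaled) Gaussians and $u_1,u_2$ are fixed unit vectors, the scalars $\langle w_i^{(0)},u_\ell\rangle$ are, up to the normalization $\|w_i^{(0)}\|_2$ (which concentrates to $1\pm o(1)$ with probability $1-o(1)$ by a $\chi^2$ tail bound), i.i.d.\ $\mathcal N(0,1/d)$ across $i$ for each fixed $\ell$. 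Likewise, conditioning on the $w_i^{(0)}$'s and using their near-orthogonality $\langle w_i^{(0)},w_{i'}^{(0)}\rangle = \tilde O(1/\sqrt d)$, the scalars $\langle v_j^{(0)},w_i^{(0)}\rangle$ over $i\in[m_D]$ are jointly Gaussian with covariance $\tfrac1d(I + E)$ where $\|E\|$ is small, so they behave like $m_D$ near-independent $\mathcal N(0,1/d)$ variables.

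For part 1, fix $\ell$ and apply the max-gap estimate for $m_D = \mathrm{poly}(d)$ i.i.d.\ (half-)Gaussians: the top order statistic is $\Theta(\sqrt{\log m_D}/\sqrt d) = \Theta(\sqrt{\log\log d}/\sqrt d)$ (consistent with $A = \Theta(\sqrt{\log\log d})/\sqrt d$ in Lemma~\ref{lem:init}), and the multiplicative gap between the largest and second-largest among $\mathrm{poly}(d)$ samples is $1 \pm \Theta(1/\log m_D) = 1 \pm \Theta(1/\log\log d)$ with probability $1-o(1)$; a union bound over $\ell\in[2]$ costs only a factor $2$. This is essentially the same computation as the first display in Lemma~\ref{lem:init}, just phrased for the raw inner products rather than the $A_{i,\ell}$'s. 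For part 2, fix $j$ and apply the same max-gap estimate to the $m_D$ near-Gaussian variables $\langle v_j^{(0)},w_i^{(0)}\rangle$; the looser $1-1/\log^4 d$ slack is comfortably implied by the $1\pm\Theta(1/\log m_D)$ gap, and the small correlation $E$ is absorbed since it perturbs each variable by a relative $o(1/\log^4 d)$; then union-bound over $j\in[m_G]$.

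For part 3 — that the map $j\mapsto g_j^*$ is \emph{onto} $[m_D]$ — the argument is a coupon-collector / occupancy computation. For a fixed $i\in[m_D]$, $\Pr[g_j^* = i]$ over the randomness of $v_j^{(0)}$ (with the $w$'s fixed and near-orthogonal) is $\tfrac{1}{m_D}(1\pm o(1))$ by exchangeability of the $m_D$ near-i.i.d.\ coordinates; since the $v_j^{(0)}$, $j\in[m_G]$, are independent and $m_G \gg m_D \log m_D$ (this is the regime the paper works in — e.g.\ $m_G = \mathrm{poly}(d)$ much larger than $m_D$), the probability that some $i$ is never the argmax is at most $m_D(1-\tfrac{1-o(1)}{m_D})^{m_G} = m_D e^{-\Omega(m_G/m_D)} = o(1)$, so a union bound over $i$ gives $\{g_j^*\}_{j} = [m_D]$ with probability $1-o(1)$.

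The main obstacle is part 3, specifically making precise that $\Pr[g_j^*=i] = (1\pm o(1))/m_D$ \emph{uniformly in $i$} in the presence of the off-diagonal covariance $E$ with $\|E\| = \tilde O(1/\sqrt d)$: one must check that Gaussian comparison (Slepian / Gaussian min-max, or a direct coupling to the exactly-i.i.d.\ case) controls the argmax distribution up to $o(1)$ relative error, and that the occupancy bound still goes through with $m_G$ only polynomially larger than $m_D$. Everything else is a routine union bound over the (polynomially many) indices, combined verbatim with the Gaussian extreme-value facts already used to prove Lemma~\ref{lem:init0} and Lemma~\ref{lem:init}. All three high-probability events are then intersected, still leaving total failure probability $o(1)$.
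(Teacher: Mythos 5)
The paper states Lemma~\ref{lem:init2} without giving a proof of its own (it follows the same pattern as Lemma~\ref{lem:init0} and Lemma~\ref{lem:init}, which are also left to a corollary of Proposition~G.1 of \cite{az1}), so there is no reference argument to compare against; your proposed route — Gaussian order-statistics estimates for parts~1--2, an occupancy/coupon-collector argument for part~3 — is the natural one, and the exchangeability observation (the law of $\{\langle v_j^{(0)}, w_i^{(0)}\rangle\}_{i\in[m_D]}$ is invariant under permuting $i$, giving $\Pr[g_j^*=i]=1/m_D$ exactly) is the right starting point. One side remark: as literally written, the displayed inequalities in parts~1--2 are vacuous, since $i_\ell^*$ and $g_j^*$ are \emph{argmaxes} and multiplying the right side by $1-\epsilon<1$ only weakens the trivial bound $\langle w_{i_\ell^*}^{(0)},u_\ell\rangle \geq \langle w_i^{(0)},u_\ell\rangle$; you correctly read them as multiplicative-gap statements (with $1+1/\polyloglog(d)$ and $1+1/\log^4(d)$), which is what Lemma~\ref{lem:init} asserts for the $A$- and $B$-scores and is surely the intent.

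The one genuine gap is in part~3, and it is a parameter-regime issue rather than a technique issue. The coupon-collector bound $m_D(1-(1-o(1))/m_D)^{m_G}=o(1)$ requires $m_G \gtrsim m_D\log m_D$, and you justify this by asserting $m_G = \poly(d)\gg m_D$; but that is not visible in the excerpt, and in fact the constants in Lemmas~\ref{lem:init0}--\ref{lem:init} (e.g.\ $B=\polyloglog(d)/(dm_G)$ matching $B=\Theta(\sqrt{\log\log d})/d$) suggest $m_G$ is at most polylogarithmic, while simultaneously the union bound over $j\in[m_G]$ in part~2 forces $m_G$ to be small relative to $\log^4 d / \sqrt{\log\log d}$. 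So part~3 is not a free consequence of "routine union bounds": it pins down a nontrivial window $m_D\log m_D \lesssim m_G \lesssim \log^{4-o(1)}d$, and whether the paper's actual choices of $m_D,m_G$ land in that window has to be checked explicitly. The second delicate point you flag yourself — turning the unconditional $\Pr[g_j^*=i]=1/m_D$ into a conditional bound $\Pr[g_j^*=i\mid w]\geq c/m_D$ uniformly over typical $w$, so that the $m_G$ trials become conditionally independent — is a real step, but a Gaussian-comparison or direct perturbation argument off the $\|E\|=\tilde O(1/\sqrt d)$ covariance correction does close it; I would just insist that this be carried out with a quantitative $o(1/m_D)$ error rather than a bare $o(1)$, since it sits inside the $(1-p_i(w))^{m_G}$ exponent.
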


We now divide the training stage into two: For a sufficiently large $C  = \polylog(d)$, consider the case when $\eta_G = \eta_D * C^{-0.6}$. 
\begin{enumerate}
    \item Stage 1: When both $\alpha(w_{i*_1}, u_1, t), \alpha(w_{i*_2}, u_2, t) \leq \frac{1}{C^{0.95}} $. Call this iteration $T_{N, 1}$.
    \item Stage 2: After $T_{N, 1}$, before $T_1$
\end{enumerate}

\subsection{Induction Hypothesis} 

We will use the following induction hypothesis:  for a

\paragraph{Stage 1:} for every $t \leq T_{N, 1}$: Let $\alpha(t) := \max_{\ell \in [2]}\alpha(w_{i_{\ell}^*}, u_{\ell}, t)  $, $\beta(t) :=  \max_{i \in [m_G]}\alpha(v_i, w_{g_i^*}, t)$. 
\begin{enumerate}
%    \item Balanced increment: $\alpha(t) := \alpha(w_{i_1^*}, u_1, t) \in \left[ \frac{1}{\log^2 d}, \log^2 d \right] (\alpha(w_{i_2^*}, u_2, t)) \in \left[ \tilde{\Omega}(d^{-0.5}), \frac{1}{C } \right]$.
    \item Domination: For every $i \in [m_G]$, we have:
    $$|\alpha(v_i, *, t) - \alpha(v_i, *, 0) | \leq \min \left\{ \frac{1}{C} \alpha(t), \beta(t ) \right\}$$

    For every $i \in [m_D]$, $i \not= i_1^*, i_2^*$, we have that for $* \not= u_1, u_2$:
    $$|\alpha(w_i, *, t) - \alpha(w_i, *, 0) | \leq \frac{1}{C} \alpha(t) $$
    and 
    $$|\alpha(w_i, u_1, t)| , | \alpha(w_i, u_2, t) | \leq \alpha(t)$$
    
    For  $i_1^*$, we have that for every $* \not= u_1$, 
     $$|\alpha(w_{i_1^*}, *, t) - \alpha(w_{i_1^*}, *, 0) | \leq \frac{1}{C} \alpha(t)$$
    
     For  $i_2^*$, we have that for every $* \not= u_2$, 
     $$|\alpha(w_{i_2^*}, *, t) - \alpha(w_{i_2^*}, *, 0) | \leq \frac{1}{C} \alpha(t)$$
     
   %  \item Correlation of $w$ and $v$:
  %   For every $i \in [m_D], j \in [m_G]$, we have that:
 %  $$|\alpha (w_i, v_j, t) \leq $$
     
     \item (N.1.2): Growth rate: we have that  for every $i \in [m_D]$
     $$\alpha(w_{i_{\ell}^*}, u_{\ell}, t) \in \left( \Omega\left(\frac{1}{m_D } \right), 1 \right) \eta_D t $$
     
     and for every $i \in [m_G]$: $$\alpha(v_i, w_{g_i^*}, t ) \in \left(\Omega\left(\frac{1}{m_G^2} \right), 1 \right) \eta_G t$$
     
     Therefore by our choice of $\eta_D, \eta_G$ we have that $\beta(t) \in C^{-0.6} \left[ \frac{1}{\log^{5} d}, \log^5d \right] \times \alpha(t)$.
     \item $a^{(t)} = [0.5, 1] a^{(0)}, |b^{(t)}| \leq \frac{1}{d^{0.1}}$. 
\end{enumerate}

\paragraph{Stage 2:}  We maintain: For every $t \in [T_{N, 1}, T_1]$:
\begin{enumerate}
\item (N.1.2) still holds.
\item $ \alpha(w_{i_{\ell}^*}, u_{\ell}, t) \in \left[ \frac{1}{C}, \polylogloglog(d) \right]$,  $\beta(t) \in C^{-0.6} \left[ \frac{1}{\log^{5} d}, \log^5d \right] \times \alpha(t)$, $a^{(t)} = \Omega(\alpha(w_{i_{\ell}^*}, u_{\ell}, t))$.
    \item $w_i$'s are good:  For every $i \not= i_1^*, i_2^*$, for every $*$:
    $$|\alpha(w_i, *, t) - \alpha(w_i, *, 0)| \leq \frac{1}{C} \alpha(t)$$
    
    and for $\ell \in [2]$: for every $* \not= u_{\ell} $, we have:
     $$|\alpha(w_i, *, t) - \alpha(w_i, *, 0)| \leq \frac{1}{C} \alpha(t)$$
     
     \item $v_i$'s are good: For every $i \in [m_G]$ and every $j \in [m_D]$, $j \not= g_i^*$: 
     $$\langle v^{(t)}_i, w_{g_i^*}^{(t)} \rangle \geq C^{0.9} | \langle v^{(t)}_i, w_{j}^{(t) } \rangle|$$
     
     and for $g_{i}^* \not= i_{\ell}^*$, we have that:
     $$\langle v^{(t)}_i, w_{g_i^*}^{(t)} \rangle \geq C^{0.9} | \langle v^{(t)}_i, u_{\ell} \rangle|$$
     
     For $g_i^* = i_{\ell}$ , we have that $\langle v^{(t)}_i, u_{\ell} \rangle \geq - \frac{1}{C^{0.5}} \beta(t)$
\end{enumerate}

\subsection{Stage 1 training}

With the induction hypothesis, we can show the following Lemma:
\begin{lemma}\label{lem:N1.1}

For $t \leq T_{N, 1}$, for $\varepsilon_t := \frac{(\alpha(t) + d^{-0.5})^2 }{C^{1.5}} + C^{0.5} (\alpha(t) + d^{-0.5} )^3 $, when the sample is $X \in \left\{ u_1, u_2 \right\}$,  the update of $w_i^{(t)}$ can be approximate as: 
\begin{align}
w_i^{(t + 1)} &= w_i^{(t)} + \eta_D  \left(1 \pm \frac{1}{\polylog(d)} \right)  \frac{\sigma'(\langle w_i^{(t)}, X \rangle) X \pm\varepsilon_t }{ \sqrt{\sum_{j \in [m_D] } \sigma'(\langle w_j^{(t)}, X \rangle)^2 \| X \|_2^2  }} 
\end{align}

Which can be further simplified as:
\begin{align}
\E[\langle w_i^{(t + 1)} , u_{\ell} \rangle] &= \langle w_i^{(t )} , u_{\ell} \rangle + \eta_D \frac{1}{2} \left(1 \pm \frac{1}{\polylog(d)} \right)  \frac{\sigma'(\langle w_i^{(t)}, u_{\ell} \rangle)  \pm\varepsilon_t  }{ \sqrt{\sum_{j \in [m_D] } \sigma'(\langle w_j^{(t)}, u_{\ell} \rangle)^2  }} \pm \eta_D \gamma
\end{align}

When $z = e_i$, the update of $v$ can be approximate as: For $\delta_t := O\left( \frac{1}{C^{0.94}} (\beta(t) + \frac{1}{d} )^2 \right) $: 
\begin{align}
v_i^{(t + 1)} &= v_i^{(t)} + \eta_G \left( 1 \pm \frac{1}{\polylog(d)} \right) \frac{ \sum_{j \in [m_D]}\sigma'(\langle v_i^{(t)}, w_j^{(0)} \rangle) w_j^{(0)}  \pm \delta_t }{\| \sum_{j \in [m_D]} \sigma'(\langle v_i^{(t)}, w_j^{(0)} \rangle) w_j^{(0)}  \|_2}
\end{align}

Where we have:
$$ \langle v_i^{(t)}, w_j^{(t)} \rangle  =\langle v_i^{(t)}, w_j^{(0)} \rangle \pm  \frac{1}{C^{0.9}} \beta(t)  $$
\end{lemma}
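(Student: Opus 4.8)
The plan is to prove Lemma~\ref{lem:N1.1} by carefully tracking the normalization factor in the normalized SGD update and showing that all the ``cross terms'' (contributions from directions other than the dominant $u_\ell$ for $w$, or $w_{g_i^*}^{(0)}$ for $v$) are lower order. First I would write down the exact normalized gradient step: $w_i^{(t+1)} = w_i^{(t)} + \eta_D \nabla_{w_i} L / \|\nabla_{w_i} L\|_2$ (and similarly for $v$ with $\eta_G$). Using the gradient formula from the Notations section together with the Stage~1 induction hypothesis (in particular $a^{(t)} = [0.5,1]a^{(0)}$, $|b^{(t)}| \le d^{-0.1}$, $\|v_i^{(t)}\|_2$ small, and $|\alpha(w_i,*,t) - \alpha(w_i,*,0)| \le \tfrac1C\alpha(t)$), I would argue that $\sig(-f(X)), \sig(f(G(z))) = \Theta(1)$ up to $(1 \pm 1/\polylog(d))$ factors, so that the discriminator gradient is dominated by the ``real data'' term $-\sig(-f(X)) a \sigma'(\langle w_i, X\rangle) X$, with the generator term contributing only $\tilde O(\|G(z)\|_2^3) = \tilde O(d^{-1.5})$, which I fold into $\varepsilon_t$ or the $\gamma$ term. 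That gives the stated form $w_i^{(t+1)} = w_i^{(t)} + \eta_D(1 \pm \tfrac{1}{\polylog(d)}) \frac{\sigma'(\langle w_i^{(t)}, X\rangle) X \pm \varepsilon_t}{\sqrt{\sum_j \sigma'(\langle w_j^{(t)}, X\rangle)^2 \|X\|_2^2}}$ once the common $\Theta(1)$ scalars cancel between numerator and denominator.

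Next I would project onto $u_\ell$ and take expectation over $X \in \{u_1,u_2\}$. Since $\Pr[X = u_1] = \Pr[X=u_2] = \tfrac12$ (plus the mixture event which I absorb into the error since $u_1 \perp u_2$ in the case being proved), the $u_\ell$ component picks up a factor $\tfrac12$, and $\langle w_i^{(t)}, u_\ell\rangle$ changes by $\eta_D \tfrac12 (1\pm\tfrac{1}{\polylog(d)}) \frac{\sigma'(\langle w_i^{(t)}, u_\ell\rangle) \pm \varepsilon_t}{\sqrt{\sum_j \sigma'(\langle w_j^{(t)}, u_\ell\rangle)^2}}$. To justify the $\varepsilon_t$ bound I would decompose $\langle w_i^{(t)}, X\rangle = \alpha(w_i, u_\ell, t) + (\text{rest})$ where the ``rest'' is the sum over the random directions $w_j^{(0)}/\|w_j^{(0)}\|$, $v_j^{(0)}/\|v_j^{(0)}\|$, which by initialization has magnitude $\tilde O(d^{-0.5})$ plus the drift $\tfrac1C\alpha(t)$ from the induction hypothesis; then Taylor-expanding $\sigma'(\cdot) = 3(\cdot)^2$ around $\alpha(w_i,u_\ell,t)$ yields a multiplicative error of order $(\alpha(t) + d^{-0.5})^2/C^{1.5} + C^{0.5}(\alpha(t)+d^{-0.5})^3$, matching the definition of $\varepsilon_t$ (the $C^{1.5}, C^{0.5}$ powers come from the relation between the drift scale $\tfrac1C\alpha(t)$, the signal $\alpha(t)$, and the noise level). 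The residual $\eta_D\gamma$ captures the generator-side $\tilde O(d^{-1.5})$ contribution.

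For the $v_i^{(t)}$ update I would do the analogous computation using $\nabla_{v_i} L(X,z) = -1_{z_i=1}\sig(f(G(z))) a \sum_{j\in[m_D]} \sigma'(\langle v_i, G(z)\rangle) w_i$ — here I'd first replace $\langle v_i^{(t)}, w_j^{(t)}\rangle$ by $\langle v_i^{(t)}, w_j^{(0)}\rangle$ up to $\tfrac{1}{C^{0.9}}\beta(t)$, using that $\|w_j^{(t)} - w_j^{(0)}\|_2 \le \tilde O(\alpha(t))$ from (N.1.2)/domination and $\|v_i^{(t)}\|_2 = \tilde O(\beta(t) + d^{-0.5})$, and also replace $G^{(t)}(z) = \sum_{k: z_k=1} \sigma(v_k^{(t)})$ by $v_i^{(t)}$ on the event $z = e_i$. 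Then $\nabla_{v_i} L \approx -\Theta(1)\sum_{j} \sigma'(\langle v_i^{(t)}, w_j^{(0)}\rangle) w_j^{(0)}$, the $\Theta(1)$ cancels in the normalized step, and the error from the substitutions becomes $\delta_t = O(C^{-0.94}(\beta(t) + d^{-1})^2)$ after the Taylor expansion of $\sigma'$ (the exponent $0.94$ again coming from combining the $C^{-0.9}\beta(t)$ perturbation with the signal $\beta(t)$).

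I expect the main obstacle to be bookkeeping the normalization denominator precisely enough: because the step is $\nabla/\|\nabla\|$, any multiplicative error in the numerator must be controlled uniformly so it does not blow up when divided by a potentially small $\|\nabla\|$, and one must verify $\|\nabla_{w_i}L\|_2 = \Theta(a^{(0)} \sqrt{\sum_j \sigma'(\langle w_j^{(t)},X\rangle)^2})$ — i.e.\ that the dominant term in $\|\nabla\|$ is indeed the real-data term and the generator term does not spuriously inflate it. This requires that $\sum_j \sigma'(\langle w_j^{(t)}, X\rangle)^2 = \tilde\Omega(\text{something} \gg d^{-3})$, which follows from the growth-rate lower bound $\alpha(w_{i_\ell^*}, u_\ell, t) = \Omega(\eta_D t / m_D)$ in (N.1.2) once $t$ is not too tiny, and for the very first iterations from the initialization scale $A = \Theta(\sqrt{\log\log d}/\sqrt d)$ in Lemma~\ref{lem:init}. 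Handling that uniformly across all $t \le T_{N,1}$ — including the transient initial regime — is the delicate point; everything else is routine Taylor expansion and collecting of error terms.
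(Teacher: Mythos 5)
Your proposal matches the paper's proof in both structure and substance: you (i) note that $|f(X)|, |f(G(z))| \le 1/\polylog(d)$ so the sigmoid factors are $\tfrac12(1\pm\tfrac{1}{\polylog(d)})$ and cancel between numerator and normalizer; (ii) bound the generator-side term of $\nabla_{w_i} L$ and show it is dominated by the signal $\sigma'(\langle w_{i_\ell^*}^{(t)}, X\rangle) \gtrsim (\alpha(t)/m_D + d^{-1/2})^2$ so it can be absorbed into $\varepsilon_t$ after normalization; (iii) obtain the $\langle v_i^{(t)}, w_j^{(t)}\rangle \approx \langle v_i^{(t)}, w_j^{(0)}\rangle$ substitution from the induction hypothesis and Taylor-expand $\sigma'$ to land the $\delta_t$ error; and (iv) correctly flag the delicate point, namely lower-bounding the normalizer uniformly so that $\varepsilon_t/\|\nabla\|$ stays $1/\polylog(d)$, including at the very first iterations where only the $\Theta(\sqrt{\log\log d}/\sqrt d)$ initialization scale is available. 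Two small imprecisions, neither fatal: your claim that the generator term is $\tilde O(\|G(z)\|_2^3) = \tilde O(d^{-1.5})$ is only true at the very start of Stage 1; once $\beta(t)$ grows past $d^{-1/2}$ (which it does well before $T_{N,1}$), the correct magnitude is $\sim m_G^3(\beta(t) + d^{-1/2})^3$, which is what the paper bounds and what actually lands inside $\varepsilon_t$ — the $d^{-1.5}$ estimate would mislead if you tried to put it into the additive $\eta_D\gamma$ term rather than the normalized $\varepsilon_t$ slot. And for $\langle v_i^{(t)}, w_j^{(t)} - w_j^{(0)}\rangle$, the paper uses the sharper three-term split $\langle v_i^{(0)}, w_j^{(t)} - w_j^{(0)}\rangle + \langle v_i^{(t)} - v_i^{(0)}, w_j^{(t)} - w_j^{(0)}\rangle$ (exploiting that $v_i^{(0)}$ is a random Gaussian direction nearly orthogonal to $\mathrm{span}\{u_1, u_2\}$), whereas your Cauchy--Schwarz on $\|v_i^{(t)}\|\cdot\|w_j^{(t)} - w_j^{(0)}\|$ happens to give the same $C^{-0.9}\beta(t)$ bound here because $\alpha(t) \le C^{-0.95}$ in Stage 1, but is strictly looser and would not survive in a setting where $\alpha(t)$ were allowed to be larger.
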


\begin{proof}[Proof of the update Lemma~\ref{lem:N1.1}]

 By the induction hypothesis, We have that:
 \begin{align}
     \langle v_i^{(t)}, w_j^{(t)} \rangle  &=\langle v_i^{(t)}, w_j^{(0)} \rangle +  \langle v_i^{(t)}, w_j^{(t)} -  w_j^{(0)} \rangle 
     \\
     &= \langle v_i^{(t)}, w_j^{(0)} \rangle +  \langle v_i^{(0)}, w_j^{(t)} -  w_j^{(0)} \rangle +  \langle v_i^{(t)} -v_i^{(0)} , w_j^{(t)} -  w_j^{(0)} \rangle
     \\
     &=  \langle v_i^{(t)}, w_j^{(0)} \rangle \pm \tilde{O}\left( \frac{1}{\sqrt{d}} \alpha(t) \right)
     \pm O\left( \frac{1}{C^{0.94}} \beta(t) \right)
     \\
     &=  \langle v_i^{(t)}, w_j^{(0)} \rangle \pm \frac{1}{C^{0.9}} \beta(t)
 \end{align} 
 Here we use the fact that $\|w_j^{(0)} - w_j^{(t)} \|_2 \leq O\left( \frac{1}{C^{0.95}} \right)$ from the induction hypothesis. 
 
 Consider the update of $w_i$, we have that: at stage 1, we must have $|f(X)|, |f(G(z))| \leq \frac{1}{\polylog(d)}$. Therefore, 
\begin{align}
    \nabla_{w_i} L(X, z) &= \left( 1 \pm \frac{1}{\polylog(d)} \right) a^{(t)} \sigma'(\langle w_i^{(t)}, X \rangle) X - a^{(t)}  \sigma'(\langle w_i^{(t)}, G^{(t)}(z) \rangle) G^{(t)}(z)
\end{align}

By the induction hypothesis, we have that by $\beta(t) \leq \alpha(t)$, it holds that:
$$\|  \sigma'(\langle w_i^{(t)}, G^{(t)}(z) \rangle) G^{(t)}(z) \|_2 \leq  O\left( \frac{\alpha(t)}{C} + \frac{1}{C \sqrt{d}} \right)^2 m_G^2 \times m_G \left( \frac{1}{\sqrt{d}} + \frac{\alpha(t)}{C} \right)  \leq \epsilon_t $$

On the other hand, we must have that when $X = u_{\ell}$, we have
\begin{align}
    \left( 1 \pm \frac{1}{\polylog(d)} \right)  \sigma'(\langle w_{i^*_{\ell}}^{(t)}, X \rangle)  \geq \left( \frac{1}{ m_D} \alpha(t) + \frac{1}{\sqrt{d}} \right)^2  \geq \polylog(d) \epsilon_t
\end{align}

This completes the proof of the $w_i$ part.  For $v_i$ part the proof is the same using the fact that $\|w_j^{(0)} - w_j^{(t)} \|_2 \leq O\left( \frac{1}{C^{0.95}} \right)$ from the induction hypothesis.

\end{proof}

%Using this Lemma, it is easy to show that 

\subsection{Stage 2 training}

In this stage, we can maintain the following simple update rule: For $w_i$:
\begin{lemma}\label{lem:N2.1}
For every $t \in (T_{N, 1}, T_1]$, we have that:
for every $i \in [m_D]$,
for $i = i^*_{\ell}$:
$$\E[w_i^{(t + 1)}] = w_i^{(t)} + \Theta (\eta_D ) u_{\ell} \pm \eta_D \frac{1}{C^{1.501}} \pm \eta_D \gamma$$

and for $i \not=i^*_1, i^*_2 $, 
$$\E[w_i^{(t + 1)} ]= w_i^{(t)} \pm \eta_D \frac{1}{C^{1.501}} \pm \eta_D \gamma$$

For $v_i$:
\begin{align}
    \E[v_i^{(t + 1)}] = v_i^{(t)} + \left( 1 \pm \frac{1}{\polylog(d)} \right) \frac{1}{m_G}\eta_G \frac{w_{g_i^*}^{(t)} }{\|w_{g_i^*}^{(t)} \|_2} \pm \eta_G \frac{1}{C^{1.5}}
\end{align}

\end{lemma}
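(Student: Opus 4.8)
The plan is to repeat, under the Stage~2 induction hypothesis, the reduction already used in the proof of Lemma~\ref{lem:N1.1}. Writing the discriminator gradient as $\nabla_{w_i}L(X,z)=-\sig(-f^{(t)}(X))\,a^{(t)}\sigma'(\langle w_i^{(t)},X\rangle)X+\sig(f^{(t)}(G^{(t)}(z)))\,a^{(t)}\sigma'(\langle w_i^{(t)},G^{(t)}(z)\rangle)G^{(t)}(z)$, the scalars $\sig(-f^{(t)}(X))$ and $a^{(t)}$ multiply every coordinate of the block, so they cancel in the normalized step $\nabla_{w_i}L/\|\nabla_{W_D}L\|_2$, provided the ``fake'' contribution is lower order. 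This is exactly where the Stage~2 hypothesis is used: it gives $\|v_j^{(t)}\|_2=\tilde\Theta(\beta(t))$ with $\beta(t)=\tilde O(C^{-0.6}\alpha(t))$ and $\alpha(t)\le\polylogloglog(d)$, so $\|G^{(t)}(z)\|_2$ and every $\langle w_j^{(t)},G^{(t)}(z)\rangle$ are bounded by an inverse polylog, whence $\sig(f^{(t)}(G^{(t)}(z)))\,a^{(t)}\sigma'(\langle w_j^{(t)},G^{(t)}(z)\rangle)G^{(t)}(z)$ is negligible next to the data term and is folded into the error term just as $\varepsilon_t$ is in Lemma~\ref{lem:N1.1}. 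Conditioned on the sample being $X\in\{u_1,u_2,u_1+u_2\}$, this leaves the clean normalized form $w_i^{(t+1)}=w_i^{(t)}+\eta_D\left(1\pm\frac{1}{\polylog d}\right)\frac{\sigma'(\langle w_i^{(t)},X\rangle)X}{\|X\|_2\sqrt{\sum_{j\in[m_D]}\sigma'(\langle w_j^{(t)},X\rangle)^2}}$.

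I would then evaluate this direction coordinate by coordinate, starting with $X=u_\ell$. For $i=i_\ell^*$: the Stage~2 hypothesis gives $\alpha(w_{i_\ell^*},u_\ell,t)\in[1/C,\polylogloglog d]$ and $a^{(t)}=\Omega(\alpha(w_{i_\ell^*},u_\ell,t))>0$, while for every $j\ne i_\ell^*$ (including $j=i_{3-\ell}^*$) the same hypothesis keeps $\langle w_j^{(t)},u_\ell\rangle$ of order $d^{-1/2}+C^{-1}\alpha(t)$; combining this with the growth-rate invariant (N.1.2), which holds the two signal coordinates within a $\poly(m_D)$ factor of each other, shows that $\sigma'(\langle w_{i_\ell^*}^{(t)},u_\ell\rangle)$ dominates $\sqrt{\sum_j\sigma'(\langle w_j^{(t)},u_\ell\rangle)^2}$ up to a $1\pm o(1)$ factor, so the normalized step is $(1\pm o(1))u_\ell/\|u_\ell\|_2$ and, after the factor $\Pr[X=u_\ell]=\Theta(1)$, contributes $\Theta(\eta_D)u_\ell$ to $\E[w_{i_\ell^*}^{(t+1)}]$. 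For $i\ne i_1^*,i_2^*$: the numerator $\sigma'(\langle w_i^{(t)},u_\ell\rangle)$ has size $O((d^{-1/2}+C^{-1}\alpha(t))^2)$ while the denominator is $\Omega(\alpha(w_{i_\ell^*},u_\ell,t)^2)=\Omega((\alpha(t)/m_D)^2)$, so the normalized step has norm $O(m_D^2C^{-2})\le C^{-1.501}$. Finally the sample $X=u_1+u_2$, of probability $\gamma$, contributes at most $\eta_D\gamma$ to $\E[w_i^{(t+1)}]$ since every normalized step has Euclidean norm at most $\eta_D$, and any data-noise directions fit into the same budget. Averaging over the law of $X$ yields both $w$-formulas.

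For the generator, conditioning on $z=e_i$ makes $\nabla_{v_k}L=0$ for $k\ne i$, so $\|\nabla_{V_G}L\|_2=\|\nabla_{v_i}L\|_2$, and by the same linearization as in Lemma~\ref{lem:N1.1} one has $\nabla_{v_i}L\propto-\sum_{j\in[m_D]}\sigma'(\langle v_i^{(t)},w_j^{(t)}\rangle)w_j^{(t)}$. Item~4 of the Stage~2 hypothesis gives $\langle v_i^{(t)},w_{g_i^*}^{(t)}\rangle\ge C^{0.9}|\langle v_i^{(t)},w_j^{(t)}\rangle|$ for $j\ne g_i^*$ and $\ge C^{0.9}|\langle v_i^{(t)},u_\ell\rangle|$, and since $\|w_j^{(t)}\|_2=\Theta(1)$ for all $j$ (the initialization component has norm $\Theta(1)$), the term $j=g_i^*$ dominates both numerator and normalizer: $-\nabla_{v_i}L/\|\nabla_{v_i}L\|_2=(1\pm o(1))w_{g_i^*}^{(t)}/\|w_{g_i^*}^{(t)}\|_2\pm \tilde O(m_DC^{-1.8})$. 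Multiplying by $\Pr[z_i=1]=1/m_G$, using $\Pr[z_i=z_j=1]=1/\polylog d$ to absorb the rare event that a second latent coordinate is active, and bounding the residual by $C^{-1.5}$ for $C$ a sufficiently large polylog, gives the stated $v$-update.

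\textbf{The main obstacle} is the denominator control for the $w$-update: one has to prove that at time $t$ the single term $\sigma'(\langle w_{i_\ell^*}^{(t)},u_\ell\rangle)$ genuinely dominates $\sum_j\sigma'(\langle w_j^{(t)},u_\ell\rangle)^2$, which is exactly where the growth-rate invariant (N.1.2)---that the two top coordinates rise together within a $\poly(m_D)$ factor while every other $\langle w_j^{(t)},u_\ell\rangle$ remains at its $\tilde O(d^{-1/2})$ initialization scale---must be used quantitatively; a secondary subtlety is checking, for every realization of $(X,z)$ rather than only in expectation, that the fake term $\sigma'(\langle w_j^{(t)},G^{(t)}(z)\rangle)G^{(t)}(z)$ and the off-signal directions are small enough to fit inside the $C^{-1.501}$ and $\gamma$ error budgets, which forces explicit use of the induction bounds $\|v_j^{(t)}\|_2=\tilde O(C^{-0.6})$ and $\langle w_j^{(t)},u_\ell\rangle=O(C^{-1}\alpha(t))$ for $j\ne i_\ell^*$.
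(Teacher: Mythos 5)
Your reduction is the same one the paper uses — linearize the sigmoid, discard the fake term via the Stage 2 bounds on $\beta(t)$ and $\langle w_j^{(t)},G^{(t)}(z)\rangle$, then read off the dominant direction in numerator and normalizer. One concrete thing you miss that the paper treats explicitly: the normalizer in Stage 2 is no longer just $\sqrt{\sum_j \sigma'(\langle w_j^{(t)},X\rangle)^2\|X\|_2^2}$, because the $a$-gradient term (proportional to $\sig(-f)\,h(X)$) contributes at a comparable scale once $\alpha(t)$ leaves the $C^{-0.95}$ regime. The paper adds this term, then controls it using the invariant $a^{(t)} = \Omega(\alpha(w_{i_\ell^*},u_\ell,t))$, which is exactly why the final coefficient is stated as $\Theta(\eta_D)$ rather than $(1\pm o(1))\eta_D$. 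Your derivation writes the normalizer as if it were the Stage 1 formula and asserts the normalized step equals $(1\pm o(1))\,u_\ell/\|u_\ell\|_2$; that is slightly too strong, and you then fold the slack back in via $\Pr[X=u_\ell]$, which is not where the $\Theta$ actually comes from. If you carry the $a$-gradient term in the denominator and discharge it with $a^{(t)}=\Omega(\alpha(t))$, the $\Theta$ falls out correctly.

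The rest is in agreement. Your bound on the fake-sample contribution should be made quantitative to match the error budget: you say ``bounded by an inverse polylog,'' but what is really needed is the chain $|\langle w_i^{(t)},v_j^{(t)}\rangle|\leq \log^{5}\!\beta(t)$, hence $\|\sigma'(\langle w_i^{(t)},G^{(t)}(z)\rangle)v_j^{(t)}\|_2\leq C^{0.01}\beta(t)^3 \leq C^{-1.51}\alpha(t)^2$, using $\beta(t)\in C^{-0.6}[\log^{-5}d,\log^5 d]\,\alpha(t)$ and $\alpha(t)\leq\polylogloglog(d)$. For the $v$-update your treatment parallels the paper's: item 4 of the Stage 2 hypothesis gives the $C^{0.9}$ domination, the off-terms sum to $O(m_D C^{-1.8}\polylogloglog(d))$ relative to the lead term, and $\|w_{g_i^*}^{(t)}\|_2=\Omega(1)$ lets you normalize. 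Your explicit appeal to $\Pr[z_i = z_j = 1]=1/\polylog(d)$ to absorb the multi-coordinate event is a reasonable addition that the paper leaves implicit.
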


\begin{proof}[Proof of Lemma~\ref{lem:N2.1}]
This Lemma can be proved identically to Lemma~\ref{lem:N1.1}: By the induction hypothesis, we have
\begin{align}
   | \langle w_i^{(t)} , v_j^{(t)} \rangle | \leq \log^5 \beta(t)
\end{align}

Therefore, 
$$ \|\sigma'( \langle w_i^{(t)} ,G^{(t)}(z) \rangle) v_j^{(t) } \|_2 \leq C^{0.01} \beta(t)^3 \leq \frac{1}{C^{1.51}} \alpha(t)^2 $$

Which implies that:
\begin{align}
w_i^{(t + 1)} &= w_i^{(t)} + \eta_D    \frac{\sigma'(\langle w_i^{(t)}, X \rangle) X \pm \frac{1}{C^{1.51} \alpha(t)^2} }{ \sqrt{\sum_{j \in [m_D] } (\sigma'(\langle w_j^{(t)}, X \rangle)^2 \| X \|_2^2 + \sum_{j \in [m_D] } \left( \sigma'(\langle w_j^{(t)}, X \rangle)^2 \right)^3  }}
\end{align}

Where $\sum_{j \in [m_D] } \left( \sigma'(\langle w_j^{(t)}, X \rangle)^2 \right)^3$ comes from the gradient of $a^{(t)}$. By the induction hypothesis we have that $a^{(t)} = \Omega(\alpha(w_{i_{\ell}^*}, u_{\ell} , t))$, so we have
\begin{align}
w_i^{(t + 1)} &= w_i^{(t)} + \Theta(\eta_D)    \frac{\sigma'(\langle w_i^{(t)}, X \rangle) X \pm \frac{1}{C^{1.51} \alpha(t)^2} }{ \sqrt{\sum_{j \in [m_D] } (\sigma'(\langle w_j^{(t)}, X \rangle)^2 \| X \|_2^2  }}
\end{align}

On the other hand, by the induction hypothesis, for $\ell \in 2[$: For $i = i^*_{\ell}$:
$\langle w_i^*, u_{\ell} \rangle \geq  \frac{1}{m_D} \alpha(t)$, and for $i \not= i^*_1, i^*_2$:
$|\langle w_i^*, X \rangle| \leq  O\left( \frac{1}{C} \alpha(t) \right)$. 

This implies that: for $i = i^*_{\ell}$:
$$\E[w_i^{(t + 1)}] = w_i^{(t)} + \Theta(\eta_D) u_{\ell} \pm \eta_D \frac{1}{C^{1.5}} \pm \eta_D \gamma$$

and for $i \not=i^*_1, i^*_2 $, 
$$\E[w_i^{(t + 1)}] = w_i^{(t)} \pm \eta_D \frac{1}{C^{1.5}}\eta_D \gamma$$

Where the additional $\gamma$ factor comes from the case when $X = u_1 + u_2$ or $X = 0$.

On the other hand, we also know that:
\begin{align}
    &\sum_{i \in [m_D]} \sigma'( \langle w_i^{(t)} , v_j^{(t)}\rangle) w_i^{(t)}
    \\
    &= \sigma'( \langle w_{g_j^*}^{(t)} , v_j^{(t)}\rangle) w_{g_j^*}^{(t)} \pm m_D \left(\frac{1}{C^{0.9}}  \right)^2 \langle w_{g_j^*}^{(t)} , v_j^{(t)}\rangle^2 \polylogloglog(d)
    \\
    &= \sigma'( \langle w_{g_j^*}^{(t)} , v_j^{(t)}\rangle) w_{g_j^*}^{(t)} \pm  \sigma'( \langle w_{g_j^*}^{(t)} , v_j^{(t)}\rangle)\frac{1}{C^{1.6}}
\end{align}

Notice that $\| w_i^{(t)}  \|_2 = \Omega(1)$ so we complete the proof. 
\end{proof}

\subsection{Proof of the induction hypothesis}

Now it remains to prove the induction hypothesis:
\paragraph{Stage 1:}

In this stage, we will use the update Lemma~\ref{lem:N1.1}. By the induction hypothesis we know that for $X = u_{\ell}$,
\begin{align}
    \langle w_j^{(t)}, X\rangle &= \alpha(w_j, u_{\ell}, t) + \alpha(w_j, w_j, 0) \left\langle \frac{w_j^{(0)}}{\|w_j^{(0)}\|_2}, u_{\ell} \right\rangle \pm {O} \left( \frac{1}{C^{0.5}  \sqrt{d}} \right)
\end{align}

This implies that
$$\sum_{j} \sigma'(\langle w_j^{(t)}, X\rangle)^2 \|X\|_2^2 \geq \left( \frac{1}{m_D}\alpha(t) + \frac{1}{\sqrt{d}} \right)^2$$

Now, apply Lemma~\ref{lem:N1.1} we know that:
\begin{align}
  &  \alpha(t + 1) \geq \alpha(t)+ \Omega\left( \frac{1}{m_D} \right) \eta_D
  \\
  &
     \forall * \not= u_1, u_2: |\alpha(w_i, *, t + 1)| \leq |\alpha(w_i, *, t )| +\eta_D \frac{\epsilon_t}{\left( \frac{1}{m_D}\alpha(t) + \frac{1}{\sqrt{d}} \right)} \leq  |\alpha(w_i, *, t )| +\eta_D \frac{1}{C^{1.4}}
\end{align}

Compare these two updates we can prove the bounds on $w_j$ for $* \not= u_1, u_2$. For $* = u_1, u_2$, we can see that: By Lemma~\ref{lem:N1.1}, there exists  $S_{t, \ell} \in (0, \poly(d)]$ such that for $\ell \in [2]$ such that for every $i \in [m_D]$:
\begin{align}
   & \sum_{i \in [m_D]}  \langle w_i^{(t )}, u_{\ell} \rangle^4 = \frac{1}{S_{t, \ell}^2 }
    \\
    &\E[\langle w_i^{(t + 1)}, u_{\ell} \rangle ]= \langle w_i^{(t )}, u_{\ell} \rangle + \eta_D \left( 1 \pm \frac{1}{\polylog(d)} \right) S_{t, \ell} \langle w_i^{(t )}, u_{\ell} \rangle^2 \pm \eta_D \frac{1}{\polylog(d)} 
\end{align}

Apply Lemma~\ref{lem:powerful}  and Lemma~\ref{lem:init2} we can complete the proof that  $$|\alpha(w_i, u_1, t)| , | \alpha(w_i, u_2, t) | \leq \alpha(t), \quad \alpha(w_{i_{\ell}}, u_{\ell}, t) \geq \frac{\eta_D}{3m_D} $$

and at iteration $t = T_{N, 1}$, we have that: for all $i \not= i_1^*, i_2^*$, for all $\ell \in [2]$
 $$|\alpha(w_i, u_{\ell}, t)|  \leq \frac{1}{C}\alpha(t)$$
 
 Moreover, when $i = i_{\ell^*}$, $|\alpha(w_i, u_{3 - \ell}, t)| \leq \frac{1}{C}\alpha(t)$

The $v$ part can be proved similarly:
We have that there exists $S_{t, i} \in (0, \poly(d)]$ where $i \in [m_G]$ such that:
\begin{align}
   & \sum_{j \in [m_D]}  \langle v_i^{(t )}, w_{j}^{(0)} \rangle^4 = \frac{1}{S_{t, i}^2 }
    \\
    &\E[\langle v_i^{(t + 1)}, w_{j}^{(0)} \rangle ]= \langle v_i^{(t )}, w_{j}^{(0)}\rangle + \eta_G \frac{1}{m_G} \left( 1 \pm \frac{1}{\polylog(d)} \right) S_{t, i} \langle v_i^{(t )}, w_{j}^{(0)}\rangle^2 \pm \eta_G \frac{\log^5 (d)}{C}
\end{align}

Apply Lemma~\ref{lem:powerful}  and Lemma~\ref{lem:init2}, we have that 
$$|\alpha(v_i, w_j, t)| \leq \beta(t), \quad  \alpha(v_i, w_{g_i^*}, t) \geq \frac{\eta_G}{3m_G^2}$$. Moreover, at iteration $t = T_{N, 1}$, for all $i \in [m_G]$, $j \in [m_D], j \not= g_i^*$:
$$|\langle v_i^{(t)}, w_j^{(0)} \rangle| \leq C^{-0.95}\langle v_i^{(t)}, w_{g_i^*}^{(0)} \rangle$$

Similarly, we can show that for all $* \not= w_j$, $|\alpha(v_i, *, t)| \leq \beta(t)$ and at iteration $t  = T_{N, 1}$: 
$$|\alpha(v_i, *, t)| \leq \frac{1}{C^{0.95}} \beta(t)$$

Using the fact that $\|w_j^{(t)} - w_j^{(0)} \|_2 \leq \frac{1}{C^{0.94}}$
\begin{align}
    \langle v_i^{(t)}, w_j^{(t)} \rangle &=  \langle v_i^{(t)}, w_j^{(0)} \rangle +  \langle v_i^{(t)}, w_j^{(t)} - w_j^{(0)} \rangle = \langle v_i^{(t)}, w_j^{(0)} \rangle \pm  \frac{\beta(t)}{C^{0.93}} 
\end{align}

Notice that $\langle v_i^{(t)}, w_j^{(t)} \rangle  \geq \beta(t) C^{-0.01}$ so we show that  at iteration $t  = T_{N, 1}$:
$$|\langle v_i^{(t)}, w_j^{(t)} \rangle| \leq C^{-0.91} \langle v_i^{(t)}, w_{g_i^*}^{(t)} \rangle$$

Similarly, we can show that for every $\ell \in [2]$, $$|\langle v_i^{(t)}, w_j^{(t)} \rangle| \leq C^{-0.91} \langle v_i^{(t)}, u_{\ell} \rangle$$
\paragraph{Stage 2:}

It remains to prove that  for all $t \in [T_{N,1}, T_{N, 2}]$, we have that
$$|\langle v_i^{(t)}, w_j^{(t)} \rangle| \leq C^{-0.9} \langle v_i^{(t)}, w_{g_i^*}^{(t)} \rangle$$

The rest of the induction hypothesis follows trivially from Lemma~\ref{lem:N2.1}. (for the relationship between $a^{(t)}$ and $\alpha(w_{i_{\ell}^*}, u_{\ell}, t)$ we can use Lemma~\ref{lem:comp}).

To prove this, we know that by the update formula:
\begin{align}
    \langle  v_i^{(t + 1)}, w_j^{(t + 1)} \rangle &=   \langle  v_i^{(t + 1)}, w_j^{(t )} \rangle  + \langle  v_i^{(t + 1)}, w_j^{(t + 1)} - w_j^{(t )} \rangle 
    \\
    &= \langle  v_i^{(t )}, w_j^{(t )} \rangle  +\langle  v_i^{(t  + 1)} -  v_i^{(t )}, w_j^{(t )} \rangle +  \langle  v_i^{(t + 1)}, w_j^{(t + 1)} - w_j^{(t )} \rangle 
\end{align}

Taking expectation, we have that
\begin{align}
    \E[\langle  v_i^{(t + 1)}, w_j^{(t + 1)} \rangle]  &= \langle  v_i^{(t )}, w_j^{(t )} \rangle + \eta_G \left( 1 \pm \frac{1}{\polylog(d)} \right) \frac{1}{m_G} \frac{\langle w_{g_i^*}^{(t)}, w_j^{(t) } \rangle }{\|w_{g_i^*}^{(t)} \|_2} \pm \eta_G \frac{1}{C^{1.409}}
    \\
    &+ \sum_{\ell \in [2]}\frac{\eta_D}{2}\langle  v_i^{(t + 1)},  u_{\ell}  \rangle 1_{j = i_{\ell}^*}  \pm \eta_D \frac{1}{C^{1.501}}
\end{align}

and 
\begin{align}
    \E[\langle  v_i^{(t + 1)}, u_{\ell} \rangle]  &= \langle  v_i^{(t )},u_{\ell} \rangle + \eta_G \left( 1 \pm \frac{1}{\polylog(d)} \right) \frac{1}{m_G} \frac{\langle w_{g_i^*}^{(t)}, u_{\ell}\rangle }{\|w_{g_i^*}^{(t)} \|_2} \pm \eta_G \frac{1}{C^{1.5}}
\end{align}

by the induction hypothesis we know that  for every $t\leq T_{N, 2}$, we have that $\langle  w_j^{(t)}, w_{j - 1}^{(t)} \rangle \leq \frac{1}{C^{0.95}} $ and $\|w_j^{(t)} \|_2 = [\Omega(1), \polylogloglog(d)]$, we know that: when $j = g_i^*$
\begin{align}
     \E[\langle  v_i^{(t + 1)}, w_j^{(t + 1)} \rangle]  \geq  \langle  v_i^{(t )}, w_j^{(t )} \rangle  + \eta_G \frac{1}{2m_G \polylogloglog(d)} 
\end{align}

When $j \not= g_i^*$: using the fact that $\eta_G = \eta_D C^{-0.6}$, we have:
\begin{align}
     \E[|\langle  v_i^{(t + 1)}, w_j^{(t + 1)} \rangle|]  \leq | \langle  v_i^{(t )}, w_j^{(t )} \rangle | + \eta_G \frac{1}{C^{0.9001}} 
\end{align}

When $i_{\ell}^* \not= g_i^*$, we have that:
\begin{align}
     \E[|\langle  v_i^{(t + 1)}, u_{\ell} \rangle|] \leq |\langle  v_i^{(t )}, u_{\ell} \rangle| +  \eta_G \frac{1}{C^{0.95}}
\end{align}

Thus we complete the proof.

\subsection{Proof of the final theorem}

To prove the final theorem, notice that by Lemma~\ref{lem:N2.1}, we have that for every $t \in (T_{N, 1}, T_1]$, 
for $i = i^*_{\ell}$:
$$\E[w_i^{(t + 1)}] = w_i^{(t)} +\Theta(\eta_D)  u_{\ell} \pm \eta_D \frac{1}{C^{1.501}} \pm \eta_D \gamma$$

Together with the induction hypothesis, this implies that when $\|w_i^{(t)}\|_2 \geq \log\log\log(d)$, we have that $\langle w_i^{(t)}, u_{\ell} \rangle \geq (1 - o(1)) \|w_i^{(t)}\|_2$. Together with the update formal of $v_j^{(t)}$ we know that when $g_j^* = i^*_{\ell}$, we have that
\begin{align}
    \E[v_j^{(t + 1)}] = v_j^{(t)} + \left( 1 \pm \frac{1}{\polylog(d)} \right) \frac{1}{m_G}\eta_G \frac{w_{g_j^*}^{(t)} }{\|w_{g_j^*}^{(t)} \|_2} \pm \eta_G \frac{1}{C^{1.5}}
\end{align}

Together with the induction hypothesis, we know that when $\|w_i^{(t)}\|_2 = \polyloglog(d)$, we have that:
$\langle v_j^{(t)}, u_{\ell} \rangle \geq (1 - o(1)) \|v_j^{(t)}\|_2$. This proves the theorem.

\end{document}